\documentclass{article}

\PassOptionsToPackage{numbers, compress}{natbib}
\usepackage[preprint]{neurips_2026}
\usepackage{enumitem}

\usepackage{mathtools}
\usepackage{wrapfig} 
\usepackage{tcolorbox}
\usepackage{algorithm}
\usepackage{algorithmic}

\usepackage[utf8]{inputenc} % allow utf-8 input
\usepackage[T1]{fontenc}    % use 8-bit T1 fonts
\usepackage{hyperref}       % hyperlinks
\usepackage{url}            % simple URL typesetting
\usepackage{booktabs}       % professional-quality tables
\usepackage{amsfonts}       % blackboard math symbols
\usepackage{nicefrac}       % compact symbols for 1/2, etc.
\usepackage{microtype}      % microtypography
\usepackage{xcolor}         % colors
\usepackage{amsmath}
\usepackage{amsthm}
\usepackage{amssymb}
\DeclareMathOperator*{\argmax}{arg\,max}
\DeclareMathOperator*{\argmin}{arg\,min}

\newtheorem{theorem}{Theorem}
\newtheorem{proposition}{Proposition}
\newtheorem{lemma}{Lemma}
\newtheorem{corollary}{Corollary}
\newtheorem{definition}{Definition}
\newtheorem{remark}{Remark}

\newtheorem*{theorem*}{Theorem}
\newtheorem*{proposition*}{Proposition}
\newtheorem*{lemma*}{Lemma}
\newtheorem*{corollary*}{Corollary}
\newtheorem*{definition*}{Definition}
\newtheorem*{remark*}{Remark}
\newtheorem*{example*}{Example}
\newtheorem{assumption}{Assumption}

\newtheorem*{assumption*}{Assumption}

\usepackage{tikz}
\usetikzlibrary{arrows.meta,decorations.pathreplacing,positioning,patterns,calc}
\usepackage{pgfplots}\pgfplotsset{compat=1.18}
\usepackage{booktabs}            % folded-section table
\definecolor{rejcol}{RGB}{214,236,224}
\definecolor{bandcol}{RGB}{250,222,196}
\definecolor{tickcol}{RGB}{40,52,84}
\definecolor{fencecol}{RGB}{198,58,38}
\definecolor{ghostcol}{RGB}{150,160,180}
\definecolor{driftcol}{RGB}{28,108,178}
\definecolor{swapcol}{RGB}{120,70,160}
\usetikzlibrary{arrows.meta} % Required for Stealth, Stealth[...], etc.
\usetikzlibrary{positioning}
\usetikzlibrary{shapes.geometric}

\newcommand{\loc}{\mathrm{loc}}
\newcommand{\lfdr}{\mathrm{lfdr}}
\newcommand{\FDR}{\mathrm{FDR}}

\newcommand{\vx}{\mathbf{x}}

\author{%
  Binyamin Perets \\
  Technion -- Israel Institute of Technology \\
 \And
Shie Mannor \\
Technion -- Israel Institute of Technology \\
NVIDIA \\
}

\title{Finite Resources False Discovery Rate Control in Structured Hypothesis Spaces}

\begin{document}

\maketitle

\begin{abstract}
 Scientific discovery relies on large-scale hypothesis testing. However, the capacity to identify true discoveries while controlling false discovery faces major challenges: obtaining relevant reference data (the null distribution) is resource-intensive, leaving finite-data uncertainty, and the procedure should account for the inherent structure in the hypothesis space, when such structure exists. Here, we present a framework for controlling the false discovery rate both when each hypothesis is evidenced only by a finite count of null draws, leaving its p-value uncertain, and when the hypothesis space carries arbitrary structure, requiring only that the structure be represented through a suitable reproducing kernel. We present two decision rules that are both robust to structural mis-specification, yet offer a distinct trade-off between exact FDR control and statistical power. The first rule guarantees exact FDR control; the second maximizes power by adapting mirror-statistic control into count space, utilizing an analytical framework to assess FDR control when exact mirror symmetry is relaxed. Furthermore, the tractability gained by the RKHS framework allows us to directly investigate finite-data uncertainties, which we leverage to suggest a policy for the efficient allocation of null distribution samples.
\end{abstract}

\section{Introduction}
Consider a scientist analyzing a cohort of patients, the standard scientific pipeline is: given observational data $Z_{\text{obs}}$, the researcher formulates a collection of hypotheses and tests each by comparing $Z_{\text{obs}}$ against samples drawn from reference patients, yielding a p-value for each hypothesis. While conceptually straightforward, this approach faces some fundamental challenges: First, testing many hypotheses simultaneously inflates the risk of false discoveries. Second, when hypotheses exhibit dependence structure, (for example, spatial proximity) sophisticated methods might be required to control for false discoveries. Third, obtaining appropriate amount of samples (usually matched to the experiment's covariates) for each hypothesis presents a severe practical barrier: even with 1,000 hypotheses and a target FDR of 0.05, guidelines suggest approximately 10,000 reference($H_0$) samples \textbf{per hypothesis} \citep{zhang2019adaptivemontecarlomultiple}. We address this through two distinct components. First, we introduce a count-based likelihood that uses the finite null sample directly, avoiding the need for a point estimate of the latent $p$-value. Second, we impose a prior on the hypothesis space structure, allowing information to pool so that each hypothesis borrows statistical strength from its neighbors. While both components reduce the sampling cost on their own, they complement each other to further lower the per-hypothesis sampling requirement, as we empirically validate. To the best of our knowledge, this paper presents the first unified framework to address these three challenges simultaneously, serving as the first finite-sample extension of the Bayesian FDR framework that actively leverages structured hypothesis spaces \citep{us}. 

The remainder of this paper follows: Section~\ref{sec:rel_work} reviews related work; Section~\ref{sec:method} presents the problem setup, generative model, and non-spatial solution; 
Section~\ref{sec:decision_rules} presents the spatial estimator and two decision rules whose FDR control depends on a mirror (flip-invariance) assumption: Rule~1 controls FDR exactly without it, while Rule~2 requires it for exactness and otherwise up to a slack; we develop an analytical framework to characterize this slack, and show that Rule~2 dominates Rule~1 in power under specification; Section~\ref{sec:uncertainty} treats per-hypothesis uncertainty and suggest a adaptive allocation policy; and Section~\ref{sec:evaluations} presents the empirical evaluation.

\section{Preliminary Definitions and Related works}
\label{sec:rel_work}
\textbf{False Discovery Rate (FDR)} is the expected proportion of false positives ($V$) among all rejected hypotheses($R$):  $ \text{FDR} = E \left[ \frac{V}{R} \mid R > 0 \right] P(R>0) $.  \textbf{Local False Discovery Rate (lfdr):} The posterior probability that a \textbf{specific hypothesis} is null given its observed statistic $z$ by modeling the statistics as a two-group mixture \citep{Efron2004}:
$
\text{lfdr}(z) = P(H_0 \mid Z=z) = \frac{\alpha_0 f_0(z)}{\alpha_0 f_0(z) + \alpha_1 f_1(z)}
$ where $\alpha_0, \alpha_1$ are priors and $f_0(z), f_1(z)$ are the densities of the null and alternative distributions. 
For \textbf{related hypotheses}, the literature has transitioned from treating structure as a nuisance \citep{benjamini2001control}, to leveraging it to boost statistical power. Structured FDR methods can be categorized into three main branches. First, explicit dependency models incorporate probabilistic relationships through joint or conditional distributions, yet they often require sparse, pre-defined dependency graphs. Second, adaptive p-value weighting methods, such as LAWS \citep{Cai2021} and STRAW\citep{wang2023strawstructureadaptiveweightingprocedure}, utilize local spatial neighbors to re-weight p-values before correction, while AdaPT \citep{lei2018adapt} leverages auxiliary covariates to adaptively learn rejection thresholds. Third, regularization-based approaches, most notably SmoothFDR \citep{tansey2016falsediscoveryratesmoothing}, enforce smoothness by applying Total Variation (TV) penalties to the prior null probabilities ($\alpha_0$) over graph edges, which is limited to discrete graphs. In this work we leverage the regularized FDR over arbitrarily structured hypothesis spaces \citep{us} framework, which directly models the prior null probability $\alpha(\vx)$ over RKHS. Our second decision rule adapts the symmetry principle underlying the knockoff filter \citep{Barber2015}, where FDR is controlled provided the null statistics are sign-symmetric; \citep{barber2020robust} study how this control degrades when that symmetry is only approximate. We realize the same principle in count space, where the traditional sign flip is replaced by the reflection $k \leftrightarrow m-k$, acting as the count-space mirror. We then quantify the FDR when this exact symmetry is perturbed due to estimating the scoring surface from the counts. \textbf{On the scope of "resource allocation" (for disambiguation):} Online FDR control \citep{foster2008alpha, aharoni2014generalized, ramdas2018saffron} streams hypotheses in time, allocating an $\alpha$-budget across the stream. Closer to us, \citep{ao2025onlineresourceallocationaverage} casts finite-sample FDR as allocation of a null-sampling budget across hypotheses, but over an unstructured space, with no pooling across related hypotheses. Our $N$ hypotheses are fixed and decided jointly, and structure is a first-class modeling object.

\section{Problem Setup and Methods}
\label{sec:method}
We consider a fixed set of $N$ hypotheses, each associated with a location $\loc_i$ in a domain $\mathcal{X}$. Formally, $\mathcal{X}$ can be any space admitting a symmetric positive-definite kernel $K: \mathcal{X} \times \mathcal{X} \to \mathbb{R}$ and an associated RKHS $\mathcal{H}_K$. Each hypothesis has a latent state $\theta_i \in \{0, 1\}$, where $\theta_i = 0$ denotes the null $H_{0,i}$ and $\theta_i = 1$ the alternative. For each test, we evaluate a statistic $Z_i$ and draw $m_i$ independent samples from its corresponding null distribution. The number of null draws at least as extreme as $Z_i$, denoted $k_i$, along with the sample budget $m_i$ and location $\loc_i$, constitutes the observable data tuple $(\loc_i, m_i, k_i)$ for hypothesis $i$. 
We build on the RKHS Bayesian-FDR framework of \citep{us}, in which the prior null probability is a smooth function over an arbitrary kernel geometry. Our departure is the finite-resource regime: each hypothesis provides only a finite count of null draws, so its evidence enters as a marginal count likelihood rather than a resolved $p$-value. The full generative model is a three-level hierarchy,
\begin{equation*}
\loc_i \;\xrightarrow{\;\alpha(\cdot)\;}\; \theta_i \;\xrightarrow{\;f_{\theta_i}\;}\; p_i^* \;\xrightarrow{\;\mathrm{Binomial}(m_i,\,\cdot)\;}\; k_i ,
\end{equation*}
developed below and used to derive the decision rules of Section~\ref{sec:decision_rules}.

\textbf{Structural layer.} The probability that hypothesis $i$ is null depends on its location through an unknown function $\alpha : \mathcal{X} \to [0, 1]$: $\Pr(\theta_i = 0 \mid \loc_i) = \alpha(\loc_i).$

\textbf{Latent p-value layer.} Conditional on $\theta_i$, the hypothesis generates a latent p-value $p_i^*$ from the corresponding component of a two-group mixture:$p_i^* \mid \theta_i = 0 \sim f_0 \quad ; \quad  p_i^* \mid \theta_i = 1 \sim f_1,$ where by definition $f_0 \sim U[0, 1]$, and $f_1 \in (0,1)$ is concentrated near zero. This is the standard two-group model \citep{Efron2004}; it is the layer at which the hypothesis-specific notion of significance lives.

\textbf{Observation layer.} The latent $p_i^*$ is not observed. What is observed is the count $k_i$ of null draws at least as extreme as $Z_i$, which by construction follows     $k_i \mid p_i^*, m_i \sim \mathrm{Binomial}(m_i, p_i^*).$ 

\subsection{The non-spatial case}
\label{sec:method_nonspatial}
The lfdr is naturally defined in terms of the latent, continuous $p$-value $p_i^*$. Because $p_i^*$ is unobserved, standard methods substitute a noisy plug-in estimate, $\hat p_i = (k_i+1)/(m_i+1)$, and treat it as exact. However, ignoring finite-sample uncertainty is a flaw in large-scale testing. For example, in GWAS \citep{Johnson2010}, reaching FDR significance for $10^6$ markers requires $p < 10^{-7}$, which would demand $m \gtrsim 10^8$ null draws just to resolve the plug-in estimate. The calibration budget $m_i$ acts as a strict bound on the derivable evidence. We abandon the plug-in approach entirely and \textbf{ explicitly marginalize out $p_i^*$} integrating the observation model against the latent density $f_j(p)$, which yields exact count-based likelihoods under each mixture component:
\begin{equation}
\label{eq:P_convolution}
    P_j(k, m) = \int_0^1 \binom{m}{k} p^k (1-p)^{m-k}\, f_j(p)\, dp, \qquad j \in \{0, 1\}.
\end{equation}

The integrand is the binomial probability of the count $k$ at a fixed $p$, weighted by the latent density $f_j(p)$; integrating over $p$ averages this likelihood under $f_j$, removing the latent $p$-value rather than fixing it at a plug-in estimate. For a general $f_j$ this integral has no closed form and would require numerical quadrature at each hypothesis. The Beta family is the natural choice: $\mathrm{Beta}(a,b)$ with $a<1,\,b>1$ concentrates the alternative near $p=0$ as required, and its conjugacy to the binomial reduces \eqref{eq:P_convolution} to closed form. With $f_0 = \mathrm{Beta}(1,1) = U[0,1]$ and $f_1 = \mathrm{Beta}(a, b)$, Beta-Binomial conjugacy gives:

\begin{equation}
    P_0(k, m) = \frac{1}{m+1}, \qquad P_1(k, m) = \binom{m}{k}\, \frac{B(k+a,\, m-k+b)}{B(a, b)}
    \label{eq:P_components}
\end{equation}

where $B(\cdot, \cdot)$ is the Beta function. We note for later use that the null pmf is uniform in $k$, hence invariant under the reflection $k \leftrightarrow m-k$; this invariance is the symmetry that Rule~2's mirror construction requires (Section~\ref{sec:decision_rules}). While we assume a uniform theoretical null, empirical nulls that deviate from uniformity \citep{Efron2004} can be addressed by modeling $f_0$ as a mixture of Beta components, preserving conjugacy. For brevity, we denote $P_{0,i} = P_0(k_i, m_i)$ and $P_{1,i} = P_1(k_i, m_i; b)$. Theorem~\ref{app:thm2} (Appendix~\ref{supp:consistency_proofs}) shows that the marginal lfdr recovers the classical continuous lfdr at rate $O(1/m)$. Notably, the discretization rate is faster than the $O(1/\sqrt{m})$ sampling noise in $\hat p = (k+1)/(m+1)$, so working in count space doesn't cost asymptotically given the irreducible error match the sampling noise itself. The count-space local FDR then follows directly from the marginal mixture:
\begin{equation}
    \lfdr_{\mathrm{marg}}(k_i, m_i) = \frac{\hat{\bar\alpha}\, P_{0,i}}{\hat{\bar\alpha}\, P_{0,i} + (1-\hat{\bar\alpha})\, P_{1,i}},
    \label{eq:lfdr_marg}
\end{equation}

\paragraph{Global mixture parameters and Propagating $\hat b$.}
By Theorem~\ref{thm:spatial_marginal}, the marginal distribution of $(k_i, m_i)$ reduces to the non-spatial two-group mixture, so the global parameters $\bar\alpha$ and $b$ can be estimated from the marginal counts alone, independently of the spatial structure. Hence, we can estimate both via any standard null-proportion procedure (e.g central matching or empirical null fitting \citep{Efron2004}) with $a < 1$ held fixed; it encodes the canonical alternative-density shape (singularity at $p = 0$, monotone decrease on $(0,1]$).

\subsection{The spatial case}
\label{sec:method_spatial}

We now lift the constant $\bar\alpha$ from Section~\ref{sec:method_nonspatial} to a function $\alpha : \mathcal{X} \to [0,1]$, recovering the structural layer of the compositional model. The likelihood at hypothesis $i$ becomes a location-dependent two-group mixture in count space:
\begin{equation}
    P(k_i, m_i \mid \loc_i) = \alpha(\loc_i)\, P_{0,i} + (1 - \alpha(\loc_i))\, P_{1,i},
    \label{eq:spatial_mixture_pointwise}
\end{equation}
where $P_{0,i}$ and $P_{1,i}$ are the count-space likelihoods of \eqref{eq:P_components}. Theorem~\ref{thm:spatial_marginal} (Appendix~\ref{supp:spatial_marginal}) shows that the population average of \eqref{eq:spatial_mixture_pointwise} recovers the non-spatial mixture, so the global parameters $(\hat{\bar\alpha}, \hat b)$ from Section~\ref{sec:method_nonspatial} are reused here. We fit $\hat\alpha(\loc)$ as the regularized maximum-likelihood estimator
\begin{equation}
    \hat\alpha = \argmin_{\alpha \in \mathcal{H}_K}\; -\sum_{i=1}^N \log\!\big[\alpha(\loc_i)\, P_{0,i} + (1-\alpha(\loc_i))\, P_{1,i}\big] \;+\; \lambda\, \|\alpha - \hat{\bar\alpha}\|^2_{\mathcal{H}_K} \;+\; \gamma\, \Lambda_{[0,1]}(\alpha).
    \label{eq:spatial_objective}
\end{equation}
The data term is the negative log of \eqref{eq:spatial_mixture_pointwise} summed over hypotheses. The RKHS penalty $\|\alpha - \hat{\bar\alpha}\|^2_{\mathcal{H}_K}$ controls how much $\alpha$ may deviate from the global rate $\hat{\bar\alpha}$ from Section~\ref{sec:method_nonspatial}, with the smoothness scale set by the kernel $K$ , so a hypothesis with no nearby spatial information is pulled toward $\hat{\bar\alpha}$. The term $\gamma\, \Lambda_{[0,1]}(\alpha)$ is a soft penalty enforcing $\alpha(\loc) \in [0, 1]$. Technical machinery, including the form of $\Lambda_{[0,1]}$, the solver, and convergence behavior, follows \citep{us}.  Importantly, although the optimization is over the infinite-dimensional space $\mathcal{H}_K$, the Representer Theorem \citep{Kimeldorf1970} guarantees a finite-dimensional minimizer of the form $\hat\alpha(\loc) = \hat{\bar\alpha} + \sum_{j=1}^{N} c_j\, K(\loc, \loc_j)$, reducing the problem to optimization over $\mathbf{c} \in \mathbb{R}^N$. The resulting objective is strictly convex in $\mathbf{c}$, so the global minimizer is unique. Moreover, \eqref{eq:spatial_objective} is minimized by Natural Gradient Descent \citep{amari1998natural} on $\mathbf{c}$, which yields an update rule that requires no inversion of the kernel Gram matrix $K$ ( Appendix~\ref{supp:kernel_cancel}).

\section{Decision Rules and FDR Control}
\label{sec:decision_rules}
For the spatial setting, we expand over the two decision rules: \emph{Rule 1: Model-Free} uses $\hat\alpha$ as a pre-selection gate, later applying Sun--Cai's threshold rule on the marginalized non-spatial model. \emph{Rule 2: Mirror Statistics} uses $\widehat{\lfdr}_{\mathrm{spatial}}$ directly as the rejection score, importing the mirror-statistic machinery of covariate-adaptive selection \citep{Barber2015} into the count-space setting. While we present a variant of Rule~2 with an exact FDR control guarantee(Rem.~\ref{rem:loo}, App.~\ref{sec:folded_exact}), as discussed below, we relax this guarantee by introducing a slack factor to increase statistical power and reduce computation time, followed by an analysis bounding the resulting FDR in terms of this slack. 

The derivation of the rules follows: \textbf{We first establish Rule~1}, whose FDR control owes nothing to the mirror assumption: it holds regardless of how the spatial model is specified, at a cost in power. \textbf{For Rule~2}, and also as an independent contribution, we address the mirror assumption in the structured-hypothesis-space setting and its effect on Rule~2's FDR control, which is achievable up to a slack. In principle the slack can be removed by a leave-one-out (LOO) construction, which restores exact control. We discuss the LOO limitations and note that the slack is largest for isolated hypotheses, which is not a serious loss, since isolated hypotheses revert to the global mean $\bar\alpha$ by construction. 
\textbf{Furthermore,} Theorem~\ref{thm:power_dominance} shows that under correct specification Rule~2 is more powerful than Rule~1, presenting an inherent control--power tradeoff. \textbf{Finally,} we develop an \emph{analytical framework} for the breakdown of the mirror assumption, built on a helper problem which we address as the \emph{folded construction}, that satisfies the assumption exactly and can be used for analyzing and bound the slack, allowing a user to investigate, on each specific dataset, how far the mirror assumption is from holding and the cost of its violation. We formalize the assumption that separates the two rules:
\begin{definition}[Mirror assumption]
\label{def:mirror}
A scoring rule $i \mapsto T_i = g(\hat\alpha(\loc_i), k_i, m_i)$ satisfies the
\emph{mirror assumption} if, for every null $i$, the surface value
$\hat\alpha(\loc_i)$ entering $T_i$ is invariant under the flip
$k_i \leftrightarrow m_i - k_i$; equivalently, $\hat\alpha(\loc_i)$ is
measurable with respect to the folded statistic
$\breve k_i = \min(k_i, m_i - k_i)$.
\end{definition}

\paragraph{Rule 1: Model-Free.}
\textbf{(1)} Form the gate
$S = \{i : \widehat{\lfdr}_{\mathrm{spatial}}(\hat\alpha(\loc_i), k_i, m_i) \leq \tau\}$ for a chosen $\tau \in (0,1)$. \textbf{(2)} Compute $\lfdr_{\mathrm{marg}}(k_i, m_i)$ for every $i \in \{1, \ldots, N\}$ from the non-spatial fit. \textbf{(3)} Apply the running-average rule \citep{san-cai2007} within $S$ on the marginal lfdr: order $S$ and reject the largest prefix $\mathcal{R}_1 \subseteq S$ whose running average satisfies $\frac{1} {|\mathcal{R}_1|}\sum_{i \in \mathcal{R}_1} \lfdr_{\mathrm{marg}}(k_i, m_i) \leq \tau$.

\begin{theorem}[Rule~1 FDR control]
\label{thm:rule1_fdr}
Under Assumption~\ref{assn:null} and the procedure above, $\FDR(\mathcal{R}_1) \leq \tau$ for any choice of gate threshold $\tau$ and any spatial estimator $\hat\alpha$, exactly (no asymptotic slack).
\end{theorem}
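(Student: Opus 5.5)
The plan is to reduce FDR control to a conditional statement about the marginal lfdr, using the Bayesian (Sun--Cai) interpretation of FDR as an expected average of posterior null probabilities. Assumption~\ref{assn:null} is taken to say that, marginally over locations, the data $(k_i,m_i)$ follow the non-spatial two-group mixture of \eqref{eq:P_components} with parameters $(\bar\alpha,b)$, as guaranteed by Theorem~\ref{thm:spatial_marginal}. It is also taken to say that $\lfdr_{\mathrm{marg}}(k_i,m_i)=\Pr(\theta_i=0\mid k_i,m_i)$ is the correct posterior null probability in that model. The key identity is the tower property. For any rejection set $\mathcal{R}$ that is a function of the data,
\[
\mathbb{E}[V] = \mathbb{E}\Big[\sum_{i\in\mathcal{R}} \Pr(\theta_i=0\mid \text{data})\Big].
\]

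First, I will decouple the gate from the spatial estimator. The gate $S$ and the ordering inside it are measurable functions of the observed data $\{(\loc_i,m_i,k_i)\}$, whatever $\hat\alpha$ is, since $\hat\alpha$ is itself fitted from these data. Hence $\mathcal{R}_1$ is data-measurable, and the identity above applies. The spatial model enters only through the selection of $S$, never through the weights $\lfdr_{\mathrm{marg}}$ that appear in the rejection criterion. This is the source of the robustness to mis-specification of $\hat\alpha$.

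Second, I will use the running-average constraint. By construction,
\[
\sum_{i\in\mathcal{R}_1}\lfdr_{\mathrm{marg}}(k_i,m_i)\le \tau\,|\mathcal{R}_1|
\]
holds deterministically on every realization. Combining this with the tower identity gives the marginal-FDR bound $\mathbb{E}[V]\le \tau\,\mathbb{E}[R]$. To obtain the ratio form, I will condition on the full data and compute
\[
\mathbb{E}\Big[\tfrac{V}{R\vee1}\,\Big|\,\text{data}\Big] = \tfrac{1}{|\mathcal{R}_1|}\sum_{i\in\mathcal{R}_1}\Pr(\theta_i=0\mid\text{data}).
\]
This step is valid because $\mathcal{R}_1$ is fixed given the data. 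Taking outer expectations then yields $\FDR\le\tau$ exactly, with no asymptotics.

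The main obstacle is that $\Pr(\theta_i=0\mid\text{all data})$ must coincide with $\lfdr_{\mathrm{marg}}(k_i,m_i)$. In the spatial generative model, the full posterior depends on the neighbours' data through $\alpha(\loc_i)$ and does not depend on $k_i$ alone. I would handle this by reading Assumption~\ref{assn:null} as treating the $\theta_i$ as exchangeable draws from the marginal mixture once the location is marginalized, i.e. the $(\theta_i,k_i)$ are independent across $i$ given $m_i$. Under that reading, conditioning on the other hypotheses and on the $\hat\alpha$-dependent selection adds no information about $\theta_i$ beyond $k_i$.

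If this reading is too strong, my fallback is to condition only on $(k_i,m_i)$ together with the event $i\in\mathcal{R}_1$, and to argue via Theorem~\ref{thm:spatial_marginal} that the marginal posterior is the relevant one. I expect the proof to be most delicate at this point. It is also where the words ``any spatial estimator'' must be justified: the estimator may shape which hypotheses are considered, but never the evidence weights.
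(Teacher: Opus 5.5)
Your skeleton is sound. Since $\mathcal R_1$ is a function of the data $\mathcal D=\{(\loc_j,k_j,m_j)\}_j$, one has $\mathbb E[V/(R\vee1)\mid\mathcal D]=\frac{1}{|\mathcal R_1|\vee1}\sum_{i\in\mathcal R_1}\Pr(\theta_i=0\mid\mathcal D)$. The running-average constraint then closes the argument \emph{if} $\Pr(\theta_i=0\mid\mathcal D)=\lfdr_{\mathrm{marg}}(k_i,m_i)$. But that identification is the whole theorem, and it fails exactly where you suspected. The gate reads $\hat\alpha(\loc_i)$, so the conditioning must include the locations. Under the generative model the triples $(\loc_j,\theta_j,k_j)$ are independent across $j$, so
\[
\Pr(\theta_i=0\mid\mathcal D)=\lfdr\big(\alpha(\loc_i),k_i,m_i\big),
\]
which coincides with $\lfdr_{\mathrm{marg}}(k_i,m_i)$ only where $\alpha(\loc_i)=\bar\alpha$. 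Your proposed reading does not repair this. Independence of $(\theta_i,k_i)$ across $i$ is not what is missing; you would need $\theta_i\perp\loc_i\mid(k_i,m_i)$, i.e.\ $\alpha$ constant. That deletes the spatial structure and makes the gate uninformative. (Assumption~\ref{assn:null} only fixes the null pmf of the counts, so it cannot be read this way.) Your fallback, conditioning on $(k_i,m_i)$ and $\{i\in\mathcal R_1\}$, is precisely where the selection bias lives. A gate that favours null-rich locations pushes $\Pr(\theta_i=0\mid k_i,m_i,\,i\in\mathcal R_1)$ above $\lfdr_{\mathrm{marg}}(k_i,m_i)$.

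This cannot be patched, because for an arbitrary $\hat\alpha$ the statement is false. Let $\alpha=1$ on a region $A$ with $\mu(A)=\tfrac12$ and $\alpha=\tfrac12$ off $A$, so $\bar\alpha=\tfrac34$. Use the exact global parameters, take $m_i\equiv m$, and set $\hat\alpha=0$ on $A$ and $\hat\alpha=1$ off $A$. The gate is then exactly $\{i:\loc_i\in A\}$, which contains only true nulls. For $a<1$, $\lfdr_{\mathrm{marg}}(0,m)\to0$ as $m$ grows (Remark~\ref{rem:boundary}). For instance $a=0.3$, $b=3$, $m=100$ gives $\lfdr_{\mathrm{marg}}(0,100)\approx0.08<\tau=0.1$. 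Every gated null with $k_i=0$ is then rejected, so $\FDR(\mathcal R_1)=\Pr(\mathcal R_1\neq\emptyset)\to1$ as $N\to\infty$.

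What your argument does establish, with the true $(\bar\alpha,b)$ (plug-in estimates already spoil ``exactly''), is $\FDR(\mathcal R_1)=\mathbb E\big[\frac{1}{|\mathcal R_1|\vee1}\sum_{i\in\mathcal R_1}\lfdr(\alpha(\loc_i),k_i,m_i)\big]$. By monotonicity of the lfdr in its prior argument, this is $\le\tau$ when the gate admits only locations with $\alpha(\loc_i)\le\bar\alpha$. It is also trivially $\le\tau$ when the gate is correctly specified, since every admitted posterior is then already $\le\tau$. Either way validity rests on the gate, the opposite of the robustness claimed.

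Do not look for the missing step in the paper's proof. It bounds $\FDR(\mathcal R_1)$ by the FDR of the ungated Sun--Cai set $\mathcal R$ via $\mathcal R_1\subseteq\mathcal R$. But the FDP is not monotone under passing to a subset: in the example above, $\mathcal R$ depends only on the counts and is valid, while $\mathcal R_1$ consists of nulls only.
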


Here, the spatial estimator enters only through the binary gate, never as the rejection score: a pre-selection that does not depend on the rejection ordering cannot distort the
running-average FDP estimate, so $\hat\alpha$ may be arbitrarily mis-specified without affecting validity, only the power changes. When the spatial model is informative, \textbf{this pays off}: $S$ concentrates alternatives, the running average of $\lfdr_{\mathrm{marg}}$ grows slowly, and the rule rejects deeper into the ordering. Proof in App.~\ref{supp:rule1_proof}.

\paragraph{Rule 2: Mirror Statistics in Count Space.}
We score each hypothesis with the spatial lfdr, pair it with its count-space mirror, and apply the Barber--Cand\`es step-up threshold rule \citep{Barber2015}:
\begin{equation}
    \hat t_q \;=\; \max\!\left\{\, t \in \{T_i\} \cup \{\tilde T_i\} \;:\; \frac{1 + \#\{i : \tilde T_i \leq t\}}{1 \vee \#\{i : T_i \leq t\}} \;\leq\; \tau \,\right\},
    \qquad
    \mathcal{R}_2 = \{i : T_i \leq \hat t_q\}.
    \label{eq:rule2_threshold}
\end{equation}
\begin{align*}
&T_i \;=\; \widehat{\lfdr}_{\mathrm{spatial}}\!\left(\hat\alpha(\loc_i),\, k_i,\, m_i\right),
\qquad
\tilde T_i \;=\; \widehat{\lfdr}_{\mathrm{spatial}}\!\left(\hat\alpha(\loc_i),\, m_i - k_i,\, m_i\right).
\end{align*}
The mirror operation $k \leftrightarrow m - k$ is the count-space realization of the symmetry behind the Barber--Cand\`es construction: under the null the pmf $P_0(k \mid m) = 1/(m+1)$ is invariant under the flip for nulls at any \emph{fixed} $\hat\alpha$ (App.~\ref{supp:rule2_proof}). However, for the data-dependent $\alpha$, the invariance holds only approximately, and we quantify the violation via the \emph{flip-one-out stability} of the spatial estimator
\begin{equation}
    \beta_N \;\coloneqq\;
    \max_{i \in \mathcal{H}_0}
    \big\| \hat{\boldsymbol\alpha}^{(i)} - \hat{\boldsymbol\alpha} \big\|_\infty,
    \label{eq:beta_N_def}
\end{equation}
where $\hat{\boldsymbol\alpha}^{(i)}$ denotes the spatial estimator refit with $k_i$ replaced by $m_i - k_i$.

\begin{theorem}[Rule~2 error control for the deployed plug-in]
\label{thm:rule2_fdr}
Fit $\hat\alpha,\hat b$ on the raw counts and run \eqref{eq:rule2_threshold}, and let $\delta = L\,\beta_N + L_b\,|\Delta\hat b| = O(\max(1/N,1/\lambda))$ be the single-flip score displacement, with $\beta_N,|\Delta\hat b|$ the flip-one-out stabilities of surface and shape and $L,L_b$ the Lipschitz constants of $\widehat{\lfdr}_{\mathrm{spatial}}$ in $\alpha,b$. Under Assumptions~\ref{assn:null}--\ref{assn:reg}, the structural conditions of App.~\ref{supp:rule2_proof} (Assumptions~\ref{assn:shoulder}--\ref{assn:boundary}), and a flip-stability hypothesis on the data-dependent procedure,
\[
  \mathrm{mFDR}(\mathcal R_2)\le\tau+O(\delta)
  \qquad\text{and}\qquad
  \FDR(\mathcal R_2)\le\tau+C_{\mathrm{BC}}\,\delta ,
\]
the averaged bound under threshold stability (Assumption~\ref{assn:threshold_stability}) and the
per-realization bound under ranking stability (Assumption~\ref{assn:ranking_stability}), with
$C_{\mathrm{BC}}=4$ when a constant fraction of nulls falls below the threshold. Neither hypothesis
is implied by the structural conditions; without them an unconditional fallback
$\mathrm{mFDR}(\mathcal R_2)\le\tau+O(\underline s)$ holds at the score-grid scale $\underline s$,
and with a flip-invariant score ($\delta=0$, the folded construction) both reduce to
$\FDR(\mathcal R_2)\le\tau$ exactly. Precise statements and constants are in
App.~\ref{supp:rule2_proof} (Thms~\ref{thm:rule2_mfdr}, \ref{thm:rule2_mfdr_uncond},
\ref{thm:rule2_fdr_realization}; Prop.~\ref{prop:folded_exact}).
\end{theorem}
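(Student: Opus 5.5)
The plan is to reduce everything to the exact case, which is the folded construction, and then pay for departing from it through the single-flip displacement $\delta$. \textbf{Step 1 (exact case).} Suppose the score is flip-invariant, meaning $\hat\alpha$ and $\hat b$ are measurable with respect to the folded statistics $\breve k$, as in Definition~\ref{def:mirror}. Condition on $\breve{\mathbf k}$ and on all non-null counts. For every null, Assumption~\ref{assn:null} together with \eqref{eq:P_components} gives $P_0(k\mid m)=1/(m+1)$. Hence $k_i$ and $m_i-k_i$ are exchangeable given $\breve k_i$, and the null pairs $(T_i,\tilde T_i)$ are independently swapped with probability $1/2$. This is exactly the coin-flip structure of \citep{Barber2015}. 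The cases $k_i=m_i/2$ give ties; these only help, since they make $T_i=\tilde T_i$. We then run the standard Barber--Cand\`es optional-stopping argument. The process $V^+(t)/(1+V^-(t))$ is a supermartingale in the reverse-time filtration generated by $\breve{\mathbf k}$ and the revealed order, and it has expectation at most $1$. Combined with the $+1$ in \eqref{eq:rule2_threshold}, this yields $\FDR(\mathcal R_2)\le\tau$. This step is Prop.~\ref{prop:folded_exact}.

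\textbf{Step 2 (displacement).} Now consider the deployed plug-in. For a null $i$, compare the actual scores with the oracle pair $(T_i^{(i)},\tilde T_i^{(i)})$, computed from a surface that treats coordinate $i$ symmetrically. One concrete choice is to average over the fit and the flipped refit $\hat{\boldsymbol\alpha}^{(i)}$. The Lipschitz bounds on $\widehat{\lfdr}_{\mathrm{spatial}}$ in $\alpha$ and in $b$ give $|T_i-T_i^{(i)}|,|\tilde T_i-\tilde T_i^{(i)}|\le\delta$. The rate $\delta=O(\max(1/N,1/\lambda))$ follows from a standard leave-one-out stability argument for strictly convex regularized M-estimators. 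Flipping one count changes one loss term. By the representer form and strong convexity with modulus of order $\lambda$, the change in $\mathbf c$ is $O(1/\lambda)$ in RKHS norm, and a bounded kernel converts this to a sup-norm bound. For $\hat b$ the change is $O(1/N)$, since $\hat b$ is a global average. Assumption~\ref{assn:reg} is used to bound the Hessian of the data term.

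\textbf{Step 3 (mFDR).} Under the coin-flip symmetry of the oracle scores we have $\mathbb E\#\{i\in\mathcal H_0:T^{(i)}_i\le t\}=\mathbb E\#\{i\in\mathcal H_0:\tilde T^{(i)}_i\le t\}$. Shifting the threshold by $\delta$ gives
\[
\mathbb E V(t)\le \mathbb E\#\{\tilde T_i\le t+2\delta\}.
\]
Assumption~\ref{assn:shoulder} bounds the null score density near the threshold, so this differs from $\mathbb E\#\{\tilde T_i\le t\}$ by $O(\delta)\,|\mathcal H_0|$. Threshold stability (Assumption~\ref{assn:threshold_stability}) then lets us replace the data-dependent $\hat t_q$ by a deterministic proxy. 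Assumption~\ref{assn:boundary} keeps $\mathbb E R$ of order $N$. Dividing gives $\mathrm{mFDR}\le\tau+O(\delta)$.

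If no stability hypothesis is available, we cannot localize at $\hat t_q$. Instead we take a union over the finite score grid, whose spacing is $\underline s$. The counts are discrete, so a perturbation below the grid spacing cannot reorder scores, and the same computation gives the fallback $\tau+O(\underline s)$.

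\textbf{Step 4 (per-realization FDR), the main obstacle.} Under ranking stability (Assumption~\ref{assn:ranking_stability}), perturbations of size $\delta$ change the joint ordering only within a $\delta$-band around $\hat t_q$. We then couple the plug-in procedure with the oracle procedure of Step 1. Three sources of error appear, each at most $\delta$ times the fraction of nulls below the threshold: nulls crossing the threshold, mirrors crossing it, and movement of $\hat t_q$ itself. A fourth term of the same size enters through the $+1$ correction. This produces $C_{\mathrm{BC}}=4$ when that fraction is bounded below.

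The hard part is that the supermartingale argument of Step 1 needs exact conditional independence of the null swaps. The plug-in breaks this jointly across all nulls, not one at a time. I would first try a hybrid argument that swaps oracle scores in one null at a time and uses the stopping-time bound on each hybrid. If that fails to control the accumulation over $|\mathcal H_0|$ steps, I would retreat to the band-counting bound above. That bound is why ranking stability has to be assumed rather than derived.
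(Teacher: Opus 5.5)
Your Step~1 is the paper's own argument (Prop.~\ref{prop:folded_exact}). The strong-convexity stability bound in Step~2 is a sound, arguably cleaner, substitute for the paper's first-order influence-operator bound $\|\boldsymbol K\boldsymbol H^{-1}\boldsymbol K\|_{\mathrm{op}}\le\|\boldsymbol K\|_{\mathrm{op}}/2\lambda$.

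The first real gap is in Step~3. Your identity $\mathbb E\#\{T_i^{(i)}\le t\}=\mathbb E\#\{\tilde T_i^{(i)}\le t\}$ holds only for a fixed $t$. Assumption~\ref{assn:threshold_stability} does not supply a deterministic proxy for $\hat t_q$; it only says that a single flip moves $\hat t_q$ by at most $C_0\delta$. Turning that into closeness to a fixed value needs bounded-differences concentration over the $m_0$ independent orientations. That gives fluctuations of order $\sqrt{m_0}\,\delta$, and at best a slack of order $\sqrt N\,\delta$.

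The paper never freezes the threshold. It uses the exact identity $\mathbb E V-\mathbb E\tilde V=\sum_{i\in\mathcal H_0}\mathbb E[\mathbb 1(i\ \text{active})(1-2B_i)]$ at the realized cutoff (Lemma~\ref{lem:slack}). It then applies the measure-preserving flip of null $i$ to this whole functional, $\hat t_q$ included. Threshold stability therefore enters only as a single-flip bound, and each term is at most $\tfrac12\Pr(\eta_i\le(1+C_0)\delta)$ (Lemma~\ref{lem:proximity}). The sum is controlled by Assumption~\ref{assn:boundary}. Assumption~\ref{assn:shoulder} cannot do this job: it bounds the grid step from below, which gives no count that scales with the window width. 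The proof closes with the pointwise bound $\tilde V\le A\le\tau R$ (Lemma~\ref{lem:det_ratio}) and the separate hypothesis $\mathbb E R=\Theta(N)$, which Assumption~\ref{assn:boundary} does not provide. Your per-null oracle pair works once $\hat t_q$ is put inside the flipped functional. Likewise, the fallback is not a union over the score grid, which would cost a factor of the number of candidate thresholds. It is the same proximity argument with the window $c\delta$ widened to $O(\underline s)$.

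Step~4 does not go through, and the missing idea is specific. There is no orientation-independent ``oracle procedure of Step~1'' within $\delta$ of the plug-in:
\begin{itemize}
\item your averaged surface is symmetric in coordinate $i$ only;
\item the folded surface differs from the plug-in by the gap $\hat\alpha^{\mathrm f}-\hat\alpha^{\mathrm u}$, which need not be $O(\delta)$;
\item a one-null-at-a-time hybrid accumulates up to $m_0$ single-flip displacements, i.e.\ $O(m_0\delta)$, which is exactly the over-count the paper's argument is built to avoid.
\end{itemize}
Your paraphrase of Assumption~\ref{assn:ranking_stability}, as a statement about where the ordering can change, is also not the assumption, and it gives no grip on the martingale. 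The assumption is the sure bound $|\epsilon_j|=|\Pr(B_{(j)}=1\mid\mathcal F_j)-S_j/j|\le\rho_0\delta$ at visited ranks.

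With it, the paper argues in three steps.
\begin{itemize}
\item[(i)] For $M_j=(1+j)/(1+S_j)$, the one-step defect is exactly $\mathbb E[M_{j-1}\mid\mathcal F_j]-M_j=j\epsilon_j/(S_j(1+S_j))$.
\item[(ii)] The terminal value needs no symmetry of the ordering. Since $B_i$ depends only on $k_i$, $S_{m_0}\sim\mathrm{Bin}(m_0,1/2)$, and $\mathbb E M_{m_0}\le2$ holds verbatim for the plug-in.
\item[(iii)] Optional stopping gives $\mathbb E M_J\le2+\rho_0\delta\sum_{j>J}j/(S_j(1+S_j))\approx2+4\rho_0\delta\ln(m_0/J)$ on $\{S_j=\Theta(j)\}$. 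Then $\FDR\le\tau\,\mathbb E[M_J-1]$ by \eqref{eq:fdp_decomposition}.
\end{itemize}
The constant $4$ is the fair-coin weight $j/(S_j(1+S_j))\approx4/j$ at $S_j\approx j/2$. It comes with a logarithmic factor, $C_{\mathrm{BC}}=4\rho_0\overline{\ln(m_0/J)}$ (Theorem~\ref{thm:rule2_fdr_realization}); it is not a count of four error sources.
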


The two stability hypotheses require that flipping a single null move the data-dependent cutoff (\emph{threshold stability}, Assumption~\ref{assn:threshold_stability}) or the boundary ranking (\emph{ranking stability}, Assumption~\ref{assn:ranking_stability}) by $O(\delta)$ rather than by an $O(1)$ jump; they are the rank- and threshold-space forms of the single input masking would remove, and the only genuinely load-bearing conditions in the theorem. The bound degrades gracefully around them. If the scoring surface is held fixed independently of the data being tested (an oracle, or the leave-one-out and folded constructions of Rem.~\ref{rem:loo}) the flip cannot move it, the slack vanishes, and $\FDR \le \tau$ holds exactly with no further assumptions. We do not use these constructions, because withholding each hypothesis's own count from its score discards exactly the local information that gives Rule~2 its power, a loss we quantify as a Fisher-information tax in App.~\ref{sec:folded_exact} (eq.~\ref{eq:dpi}). We instead fit the surface on the full data and pay the additive slack above; the per-flip displacement $\delta$ is mild and computable, set by the influence operator of the regularized fit, and the discrimination and bounded-boundary conditions (Assumptions~\ref{assn:shoulder},~\ref{assn:boundary}) confine the resulting slack to hypotheses adjacent to the threshold rather than charging it across all of them. What those conditions do \emph{not} deliver is the flip-stability of the cutoff and ranking; that is the hard input, and when it fails the unconditional grid-scale fallback above is what survives.

A single flip moves the surface at $\loc_i$ by an amount set by the local curvature, so the violation is concentrated at hypotheses with few neighbors and shrinks as the design infills around them, down to the regularization floor (Section~\ref{sec:folded_exact}). The count-space mirror symmetry holds exactly on the lattice $\{0, 1, \ldots, m_i\}$, so discreteness adds no further slack to FDR beyond this term, and in fact \emph{reduces} it: a flip smaller than the local grid spacing crosses no decision (Lemma~\ref{lem:dead_zone}), so only near-threshold hypotheses contribute, a margin the continuous-data constructions lack. The spacing is set by the Beta--Binomial likelihood-ratio increment, which is bounded away from zero wherever signal and null are distinguishable (App.~\ref{supp:rule2_proof}), so the margin is genuine. Our analysis contributes a count-space mechanism absent in continuous-data FDR control: because the test scores lie on a discrete lattice, a perturbation of the fitted surface smaller than the lattice spacing changes no decision, so the false-discovery slack from learning the surface on its own data is confined to hypotheses adjacent to the threshold rather than charged across all of them.

\noindent\emph{{Proof walkthrough:}}
The argument turns on the null count distributed symmetrically about $m_i/2$ (Assumption~\ref{assn:null}), so $k_i \stackrel{d}{=} m_i - k_i$ and the side a null lands on, $B_i = \mathbb{I}(k_i > m_i/2)$, is a fair coin, which is what the Barber--Cand\`es reverse-martingale needs to certify $\FDR \le \tau$ exactly, but only if the statistic used to rank and threshold does not itself depend on which side that coin showed. That independence is exactly Definition~\ref{def:mirror}: the scoring surface must be invariant to each null's own flip $k_i \leftrightarrow m_i - k_i$. Here the loss of invariance is a consequence of learning the surface from the data. A fixed, non-adaptive $\alpha$ and $b$ would yield an exact mirror, since a score that does not read the data is unmoved by the flip, whereas our $\hat\alpha$ is fit on the counts, so flipping $k_i$ changes the fitted surface at $\loc_i$ and Definition~\ref{def:mirror} holds only approximately. The standard remedy is \emph{masking}: fit the scoring function on flip-invariant information alone, so it cannot move when a coin flips. This is how the covariate-adaptive and knockoff procedures (AdaPT, knockoffs, Barber--Cand\`es) obtain exact finite-sample control; in count space it is realized by the leave-one-out and folded constructions (Rem.~\ref{rem:loo}, App.~\ref{sec:folded_exact}), which are flip-invariant by design and so pay no slack. \textbf{We deliberately do not deploy them:} masking discards information, while the purpose of Rule~2 is to retain power. We therefore fit on the raw counts and accept a controlled relaxation of the guarantee. The flip-invariant constructions remain as the exact baseline against which that slack is measured (Prop.~\ref{prop:folded_exact}).
The argument shares a common base and then splits by the error notion. Steps~1--3 establish exact control, $\FDR \le \tau$, in the idealized case where the scoring surface is independent of the coins $\{B_i\}_{i\in\mathcal{H}_0}$; the exchangeability~\eqref{eq:step1_exchange} is asserted in that sense and \textbf{does not hold for the plug-in}. Step~4 bounds, by an influence argument, the displacement $\delta$ of every score under a single flip. From there the two guarantees of Theorem~\ref{thm:rule2_fdr} follow by separate routes: the averaged bound~(a) from a slack identity that reduces the mFDR gap to a sum of each null's \emph{own} distance to the cutoff, and the per-realization bound~(b) from a perturbed reverse-martingale whose one-step defect is the displacement $\delta$. Both routes are localized by the discreteness of the count lattice: a perturbation below the score-grid spacing crosses no decision (Lemma~\ref{lem:dead_zone}), confining the slack to a thin boundary layer of near-threshold hypotheses rather than charging it across all of them.

\begin{theorem}[Optimality of the spatial lfdr score]
\label{thm:power_dominance}
Assume the compositional model of Section~\ref{sec:method} holds with
true null-probability function $\alpha^*(\loc)$, and that $(\hat\alpha, \hat b, a) = (\alpha^*, b^*, a^*)$ are correctly specified. Run Rule~1 and Rule~2 at a common marginal FDR (mFDR) level $\tau$, for any $\tau$. Then Rule~2 (Barber--Cand\`es step-up on the spatial lfdr score $T_i$) achieves at least as many true discoveries in expectation as Rule~1 (gated marginal lfdr) at that level. If $\alpha^*(\loc)$ is non-constant on a set of positive measure, the inequality is strict at every level for which the gate's coarsening binds. 
\end{theorem}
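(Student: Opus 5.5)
Under correct specification I would treat both rules as selection procedures acting on the same posterior null probabilities and reduce the comparison to a Neyman--Pearson (fractional-knapsack) argument on marginal FDR. With $(\hat\alpha,\hat b,a)=(\alpha^*,b^*,a^*)$, the spatial score $T_i=\widehat{\lfdr}_{\mathrm{spatial}}(\alpha^*(\loc_i),k_i,m_i)$ equals the true posterior $\Pr(\theta_i=0\mid \loc_i,k_i,m_i)$. For any selection $\mathcal R$ measurable with respect to the data, $\mathbb E|\mathcal R\cap\mathcal H_0|=\mathbb E\sum_{i\in\mathcal R}T_i$ and $\mathbb E|\mathcal R\cap\mathcal H_1|=\mathbb E\sum_{i\in\mathcal R}(1-T_i)$, by the tower property. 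The mFDR constraint $\mathbb E V\le\tau\,\mathbb E R$ therefore becomes the linear constraint $\mathbb E\sum_{i\in\mathcal R}(T_i-\tau)\le 0$. Maximizing expected true discoveries under this constraint is a knapsack in which item $i$ has ``gain'' $1-T_i$ and ``cost'' $T_i-\tau$. By the standard exchange argument, the optimum, possibly randomized at one tie level, thresholds $T_i$: all hypotheses with $T_i\le t^*$ are taken, where $t^*$ makes the constraint tight. This is the Sun--Cai oracle optimality, which I would invoke in this form.

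Next I would place both rules inside this class. Rule~1 selects by $\lfdr_{\mathrm{marg}}(k_i,m_i)$ within the gate $S$. Its selection is therefore a function of the coarser information $(\mathbb I\{T_i\le\tau\},k_i,m_i)$ and is feasible at level $\tau$ by Theorem~\ref{thm:rule1_fdr}. Hence it is some feasible $\mathcal R$, and its expected true-discovery count is at most the knapsack optimum $V^*(\tau)$. For Rule~2, I would argue that the Barber--Cand\`es step-up on $T_i$ is a thresholding of $T_i$ at the data-dependent cutoff $\hat t_q$. In expectation it attains the tight mFDR level with the oracle ordering, so its expected true discoveries equal $V^*(\tau)$, up to the randomization at ties. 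Under exact specification the mirror estimate $\#\{\tilde T_i\le t\}$ is an unbiased (conservative by the $+1$) proxy for the null count, since the null pmf is flip invariant (Section~\ref{sec:method_nonspatial}). This is how I would tie $\hat t_q$ to $t^*$.

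For strictness, suppose $\alpha^*$ is non-constant on a positive-measure set and the gate's coarsening binds. Then there is positive probability of a pair $i,j$ in which the Rule~1 ordering by $\lfdr_{\mathrm{marg}}$ inverts the ordering by $T$: $i$ is rejected and $j$ is not, yet $T_j<T_i$. This happens because equal counts at locations with different $\alpha^*$ receive identical marginal scores but different posteriors. Swapping such mass strictly raises the gain at fixed cost, because $(1-T)/(T-\tau)$ is strictly decreasing in $T$. Rule~1 is therefore strictly suboptimal, while Rule~2 is optimal.

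\textbf{The main obstacle} is the Rule~2 step. The knapsack optimum is a fixed-threshold oracle, whereas Barber--Cand\`es uses a random cutoff $\hat t_q$ with a $+1$ correction and controls mFDR only conservatively, so ``Rule~2 attains $V^*$'' is not literally true in finite samples. I would first try to run the comparison at the level of the realized mFDR of each rule: interpret ``common level'' as calibrated so that both procedures spend the same expected null budget, and apply the knapsack inequality to Rule~1 against the best $T$-threshold with Rule~2's expected rejection profile. If that fails, I would fall back on an asymptotic statement as $N\to\infty$, where $\hat t_q\to t^*$ by a Glivenko--Cantelli argument on the empirical score distributions.
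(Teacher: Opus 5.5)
Your proposal is essentially the paper's proof: your knapsack is its Sun--Cai frontier (Step~1), correct specification puts Rule~2's sublevel sets $\{T_i\le t\}$ on that frontier (Step~2), Rule~1 is bounded as just another feasible data-measurable rule (Step~3), and strictness comes from the same one-for-one swap (Step~4). The obstacle you flag is real, and the paper does not overcome it for the literal random-cutoff Barber--Cand\`es output either: it takes your first fallback, comparing power functions with Rule~2 at level $\tau$ read as the fixed sublevel set $\{T_i\le t\}$ whose mFDR equals $\tau$ (Remark~(a) after its proof), which is the better choice, since on your asymptotic route the alternatives' mirrors inflate $\#\{\tilde T_i\le t\}$ beyond the null count, so $\hat t_q$ settles at a cutoff more conservative than $t^*$ unless the nominal level is recalibrated.
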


\noindent\emph{Proof sketch (full proof in Appendix~\ref{supp:power_dominance}).}
At any mFDR level, the most powerful rejection region is a sublevel set of the true posterior null-probability \citep{san-cai2007}, and under correct specification Rule~2's score $T_i$ is exactly that posterior (Bayes on the mixture), so Rule~2 ranks hypotheses by the optimal statistic. Rule~1 uses location only through the binary gate $G_i = \mathbb{I}[\widehat{\lfdr}_{\mathrm{spatial}} \le c]$ and otherwise ranks by the location-free marginal lfdr, making its decision a measurable coarsening of $T_i$; compared at a matched mFDR level it is therefore no more powerful than Rule~2. The loss is strict wherever the gate collapses hypotheses that the posterior would rank apart, which occurs on a set of positive probability whenever $\alpha^*$ is non-constant on a set of positive measure.

\paragraph{Analyzing the violation: the folded construction.}
The slack in Theorem~\ref{thm:rule2_fdr} is the price of fitting $\hat\alpha$ on the signed count $k_i$, which the flip changes. To analyze it we pair the problem with a \emph{folded construction}: the same estimator fit on the folded counts $\breve k_i = \min(k_i, m_i - k_i)$, the part of $k_i$ the flip leaves fixed. Because its inputs are flip-invariant, the folded surface cannot move when a null is flipped, it satisfies Definition~\ref{def:mirror} exactly and controls FDR with no slack, so the gap between the two surfaces is itself the object of study: it localizes and bounds the violation on any given dataset (Section~\ref{sec:folded_exact}). The folded construction is not, however, a drop-in replacement. Folding identifies a count with its mirror, so a large count ( strong evidence \emph{for} the null) is merged with the small count that looks like signal, and that distinction is lost. We show this costs a fixed fraction of the per-hypothesis information, worst at small $m$; it is therefore a tool for analysis and diagnosis, not an alternative rule.

\section{Uncertainty and Smart Allocation}
\label{sec:uncertainty}
\subsection{Per-hypothesis confidence intervals}
\label{sec:per_hypothesis_ci}

For a per-hypothesis confidence interval on the lfdr, we synthesize the sources of uncertainty efficiently by constructing $1 - \gamma/2$ marginal intervals for each component and combining them via a union bound. Monotonicity of the lfdr in both arguments (Lemma~\ref{lem:lfdr_monotone}, Appendix~\ref{supp:lfdr_monotone}) makes this combination cheap: the extrema of the lfdr over the uncertainty rectangle fall on opposite corners, so the joint interval reduces to two function evaluations. For the spatial prior, the inverse-Hessian (Laplace) variance $\sigma_i^2 = \mathbf{k}_i^\top \hat H^{-1}\mathbf{k}_i$ (App.~\ref{supp:hat_H}) yields the Gaussian interval $[\alpha_i^{\text{low}}, \alpha_i^{\text{high}}] \;\coloneqq\; \hat\alpha(\loc_i) \pm z_{1-\gamma/4}\,\sigma_i.$ For the latent p-value, we use the $\gamma/4$ and $1-\gamma/4$ quantiles of the Beta--Binomial posterior $p_i^* \mid k_i, m_i \sim \mathrm{Beta}(k_i+1, m_i-k_i+1)$ ($[10^{-10}, 1-10^{-10}]$ when $m_i = 0$) to obtain $[p_i^{\text{low}}, p_i^{\text{high}}]$. By the monotonicity argument above, the joint interval is $\lfdr_i^{\text{low}} = \lfdr(\alpha_i^{\text{low}}, p_i^{\text{low}})$ and $\lfdr_i^{\text{high}} = \lfdr(\alpha_i^{\text{high}}, p_i^{\text{high}})$ 
and the union bound gives joint coverage $\Pr\!\big(\lfdr_i^{\text{true}} \in [\lfdr_i^{\text{low}}, \lfdr_i^{\text{high}}]\big) \geq 1 - \gamma$ \textbf{asymptotically}, fusing the spatial Hessian-based uncertainty with the Binomial count uncertainty into a single interval.

\paragraph{Floors of uncertainty}
\label{sec:variance_floors}
The per-hypothesis uncertainty $\sigma_i^2 = \mathbf{k}_i^\top \hat H^{-1} \mathbf{k}_i$ has an irreducible floor: local resampling at $\loc_i$ grows the data-driven Fisher information $\hat v_i$ in $\hat H = K^\top \mathrm{diag}(\hat v) K + 2\lambda K$, shrinking the data contribution to $\sigma_i^2$, but the regularization term $2\lambda K$ is fixed by the fact that we observe at finitely many locations $\{\loc_j\}_{j=1}^N$ and the prior must bridge the gaps between them. We call this the \emph{variance floor} for hypothesis $i$ (Appendix~\ref{supp:variance_floor}). Its magnitude tracks spatial support: small when $\loc_i$ has many near neighbors, and as large as $1/(2\lambda)$ when $\loc_i$ is isolated and $\sigma_i^2$ collapses to the irreducible uncertainty around the global prior $\hat{\bar\alpha}$ (Appendix~\ref{supp:reg_dominance}). Operationally, the question is not how to drive $\sigma_i^2$ to zero but how close it already is to its floor: hypotheses with room above the floor can be sharpened by more sampling, while those already at the floor cannot, regardless of effort. This gap-to-floor distinction is the basis for the allocation rule in Section~\ref{sec:allocation}.

\begin{figure*}[t]
    \centering
    \begin{tikzpicture}[
        scale=0.5, transform shape,
        node distance=1.5cm and 2cm,
        every node/.style={font=\large},
        box/.style={rectangle, draw, thick, minimum width=2.8cm, minimum height=1cm, align=center},
        decision/.style={diamond, draw, thick, aspect=2.2, align=center},
        arrow/.style={-{Stealth[length=3mm]}, thick},
        dashedarrow/.style={-{Stealth[length=3mm]}, thick, dashed},
        label/.style={font=\normalsize, fill=white, inner sep=2pt},
    ]
    % Column 1: Input
    \node[box] (all) at (0,0) {All Hypotheses};
    % Column 2: Compute CI
    \node[box, minimum width=3.5cm] (compute) at (4.5, 0) {Compute $W_i$};
    \draw[arrow] (all.east) -- (compute.west);
    \node[fill=white, below=0.15cm of compute] {\textit{(includes blending)}};
    % Column 3: Classification
    \node[box] (decided) at (8.5, 2.5) {Decided $\mathcal{D}$};
    \node[box] (undecided) at (8.5, 0) {Undecided $\mathcal{U}$};
    \node[box] (hopeless) at (8.5, -2.5) {Hopeless};
    \draw[arrow] (compute.east) -- ++(0.2,0) |- (decided.west);
    \draw[arrow] (compute.east) -- (undecided.west);
    \draw[arrow] (compute.east) -- ++(0.2,0) |- (hopeless.west);
    % Labels
    \node[fill=white, below=0.15cm of decided] {$\ell_U <\tau$ or $\ell_L >\tau$};
    \node[fill=white, below=0.15cm of undecided] {$\ell_L < \tau < \ell_U$};
    \node[fill=white, below=0.15cm of hopeless] {\textit{(excluded)}};
    % Column 4: Priority
    \node[box, minimum width=3cm] (priority) at (12.5, 0) {Priority $S_i$};
    \draw[arrow] (undecided.east) -- (priority.west);
    % Column 5: Select
    \node[box] (select) at (16.5, 0) {Sample $n$};
    \draw[arrow] (priority.east) -- (select.west);
    % Column 6: Helper check
    \node[decision] (bottleneck) at (20.5, 0) {Bottlenecked?\\(high $r_i$)};
    \draw[arrow] (select.east) -- (bottleneck.west);
    % Column 7: Add helpers
    \node[box, minimum width=2.5cm] (addhelper) at (24.5, 2.5) {Add helper\\from $\mathcal{D}$};
    \node[box, minimum width=2.5cm] (output) at (24.5, -1.5) {Final batch};
    \draw[arrow] (bottleneck.north) |- node[label] {yes} (addhelper.west);
    \draw[arrow] (bottleneck.south) |- node[label] {no} (output.west);
    % Dashed arrow
    \draw[dashedarrow] (decided.east) -- (addhelper.west);
    % Arrow from add helper to output
    \draw[arrow] (addhelper.south) -- (output.north);
    % Legend - centered
    \node[draw, fill=white, align=center, inner sep=5pt] at (12.25, -4.5) {$\ell = \text{lfdr}$, $W_i = \text{CI width}$, $r_i = \text{spatial frac.}$};
    \end{tikzpicture}
    \caption{High view of the allocation policy main blocks.}
    \label{fig:allocation}
\end{figure*}

\subsection{From tractable uncertainty to allocation}
\label{sec:allocation}

The uncertainty analysis raises a natural question: how should a finite null-calibration budget be allocated across hypotheses? Rather than spending equal effort $m$ at every location, we can direct resources where uncertainty has room to shrink and where the resulting lfdr is most likely to influence a rejection decision. As a proof-of-concept, we present an allocation policy for Rule~1, whose marginal-lfdr threshold mechanic interacts directly with the allocation policy; the same framework applies to Rule~2 with minor modifications. The full implementation, closed-form saturation and headroom expressions, numerical safeguards, and the micro/macro update schedule, are at Appendix~\ref{supp:allocation}.

At any allocation step, each hypothesis lfdr confidence interval $[\lfdr_i^{\text{low}}, \lfdr_i^{\text{high}}]$ (Section~\ref{sec:per_hypothesis_ci}) classify it into one of three classes: \emph{\textbf{Decided}} hypotheses have a CI entirely on one side of $\hat\tau_q$. \emph{\textbf{Ambiguous}} hypotheses have a CI straddling $\hat\tau_q$: this is where allocation effort is directed. \emph{Hopeless} hypotheses still straddle $\hat\tau_q$ even as $\sigma_i^2$ approaches its variance floor (Section~\ref{sec:variance_floors}); the irreducible uncertainty rules out a definitive decision, so they are removed from the priority queue but remain available for helper sampling on behalf of neighbors. Figure~\ref{fig:allocation} summarizes this classification.

\paragraph{Priority score among ambiguous hypotheses.} Among the ambiguous hypotheses, we want to allocate the next batch of null draws to those whose uncertainty has both the most room to shrink \emph{and} the largest impact on the lfdr at $\loc_i$ or its kernel-coupled neighbors. The variance $\sigma_i^2$ from Section~\ref{sec:uncertainty} decomposes naturally into a local (Fisher) channel and a spatial (Hessian/regularization) channel, each possessing its own floor. Two ingredients per channel govern how much an additional sample helps: a \emph{saturation factor} $\text{sat}$, measuring how far the channel is from its floor, and a \emph{headroom factor} $\kappa$, measuring how much of the current variance is attributable to that channel. The suggested priority score combines them as:
\begin{equation}
\label{eq:headroom_priority}
S_i \;=\; \underbrace{\beta \cdot \text{sat}_{F,i} \cdot \kappa_{F,i} \cdot W_i^2}_{\text{local (Fisher) channel}} \;+\; \underbrace{(1-\beta) \cdot \text{sat}_{H,i} \cdot \kappa_{H,i} \cdot \sum_{j \in \mathcal{A}} K_{ij}^2 \, W_j^2}_{\text{spatial (Hessian) channel}},
\end{equation}
with $W_i = \lfdr_i^{\text{high}} - \lfdr_i^{\text{low}}$ the CI width at $i$, the kernel-weighted sum aggregates contributions from the set of ambiguous neighbors $\mathcal{A}$, and $\beta \in [0,1]$ balances the two channels. Each channel contributes only when both its factors are positive: a hypothesis whose Fisher channel has saturated drops its local term automatically, regardless of how much CI width remains, and similarly for the spatial channel. The closed-form definitions of $\text{sat}_F$, $\text{sat}_H$, $\kappa_F$, and $\kappa_H$ are detailed in Appendix~\ref{supp:allocation}. The mixing parameter $\beta$ dictates how much weight the allocation policy places on local Fisher information versus spatial borrowing. Setting $\beta$ too high concentrates samples in dense regions and underexplores small clusters of alternatives (the cluster-size bias of \citep{us}); setting it too low over-relies on spatial borrowing and risks missing locally distinguishable signals. We adapt $\beta$ during a short burn-in based on the estimated null fraction $\hat\alpha_0$, biasing toward more local sampling when most hypotheses appear null, and more spatial borrowing when alternatives are clustered. While decided and hopeless hypotheses are not scored by \eqref{eq:headroom_priority}, they may still carry residual local uncertainty that, if reduced, would propagate through the kernel and shrink the variance of nearby ambiguous hypotheses. When an ambiguous hypothesis $i$ is bottlenecked by spatial uncertainty (i.e., its Hessian channel dominates), the policy compares the local benefit of sampling $i$ directly against the spatial benefit of sampling a non-ambiguous neighbor $j$, substituting $j$ if the latter is larger. This mechanism ensures that non-ambiguous hypotheses continue to contribute as long as their spatial influence is useful, without reopening their own rejection decisions. The exact $\beta$-tuning rule and the helper-sampling benefit comparison are provided in Appendix~\ref{supp:allocation}.
Two pieces of supporting machinery are needed to run this policy in practice. First, a hypothesis classification rule based on the union-bound joint lfdr CI of Section~\ref{sec:per_hypothesis_ci} formalizes when a hypothesis becomes decided or hopeless; over a run of $K$ allocation rounds a union bound controls, with probability at least $1-K\gamma$, the event that any definitive decision is incorrect or is later contradicted (Appendix~\ref{supp:stopping}). In practice, this bound is too conservative, hence as a practical decision we keep "decided" hypotheses "decided" and sample them only trough the "hopeless" sampling mechanism. Second, the sequential allocation requires re-solving \eqref{eq:spatial_objective} as new data arrives at each step, which we handle by Sherman--Morrison micro-updates between full re-optimizations every $T$ batches (Appendix~\ref{supp:micro_macro}).

\section{Evaluations}
\label{sec:evaluations}

We evaluate the framework based on two complementary objectives, each targeting a distinct theoretical contribution. The first objective assesses the finite-resource lfdr machinery from Sections~\ref{sec:method_nonspatial} and~\ref{sec:uncertainty} using 10 semi-synthetic real-world high-dimensional datasets from ADbench \citep{han2022adbenchanomalydetectionbenchmark}, an anomaly detection benchmark drawn from real-world data and known for containing difficult, high-dimensional cases, together with the AlpacaEval 2.0 LLM-as-judge benchmark \citep{raju2024constructingdomainspecificevaluationsets}, where we notably achieved a significant improvement in actionable discoveries. The second objective evaluates the adaptive allocation policy from Section~\ref{sec:allocation} for which we reused~\citep{han2022adbenchanomalydetectionbenchmark}.

\subsection{Finite-resource lfdr quality}
\label{sec:exp_finite_resource}

\paragraph{ADBench.} After subsampling a total of $1{,}000$ points per dataset, we used clustering to identify spatial regions and assign baseline null or alternative labels to each. Points not captured by any cluster are designated as ``background'', which reflects spatially isolated null hypotheses. We then flip $20\%$ of all labels at random to simulate corruption. Latent $p$-values are drawn as $p_i^* \sim U(0,1)$ for nulls (including background) and $p_i^* \sim \mathrm{Beta}(0.3, 3)$ for alternatives. We define the oracle as RegFDR \citep{us} evaluated on the true $\{p_i^*\}$. We compare each of the decision rules against two baselines: RegFDR with point-estimated $p$-values, ignoring finite-$m$ uncertainty, and Benjamini--Hochberg \citep{benjamini2001control} which ignores both uncertainty and spatial structure. 
To validate the FDR control to wide range of specifications we evaluate every method across a grid of structural
settings. For each of the $10$ datasets we sweep two structural factors: the signal density $a$, sampled at $8$ values from $\mathrm{Beta}(\cdot,\cdot)$ restricted to the interval $[0.1, 0.8]$, and the latent-field smoothness $\nu \in \{d/2 + k : k = 1,\dots,5\}$, where $d$ is the 
dataset dimension. This yields $10 \times 8 \times 5 = 400$ configurations per method. All methods are evaluated on identical realizations and share every non-method-specific hyperparameter (regularization strength, target FDR level $q$, total sampling budget). Solid lines report the mean across the $8$ values of $\alpha$; shaded bands show the corresponding 0.95 quantile envelope.

As Figure~\ref{fig:unc_inference} shows, both proposed rules maintain FDR control across the full sweep of specifications, with FDR averages lying well below the nominal target $0.1$. The same is decisively not true for BH and RegFDR
applied to point-estimated p-values, both of which violate FDR control substantially in the finite-$m$ regime. This is a striking outcome given that RegFDR is the oracle we use as labels: the excess FDR is attributable entirely to the joint effect of the misspecifications and point-estimation error in the p-values. Comparing the two proposed rules, Rule~2 (count-space mirror) achieves consistently high power while operating at a higher FDR level than Rule~1, closer to the budget but still safely within it. Rule~1 (gated Sun--Cai) controls FDR very conservatively, leaving substantial budget unused, and exhibits power collapse under stronger misspecifications.

\begin{figure}
    \centering
    \includegraphics[width=\linewidth]{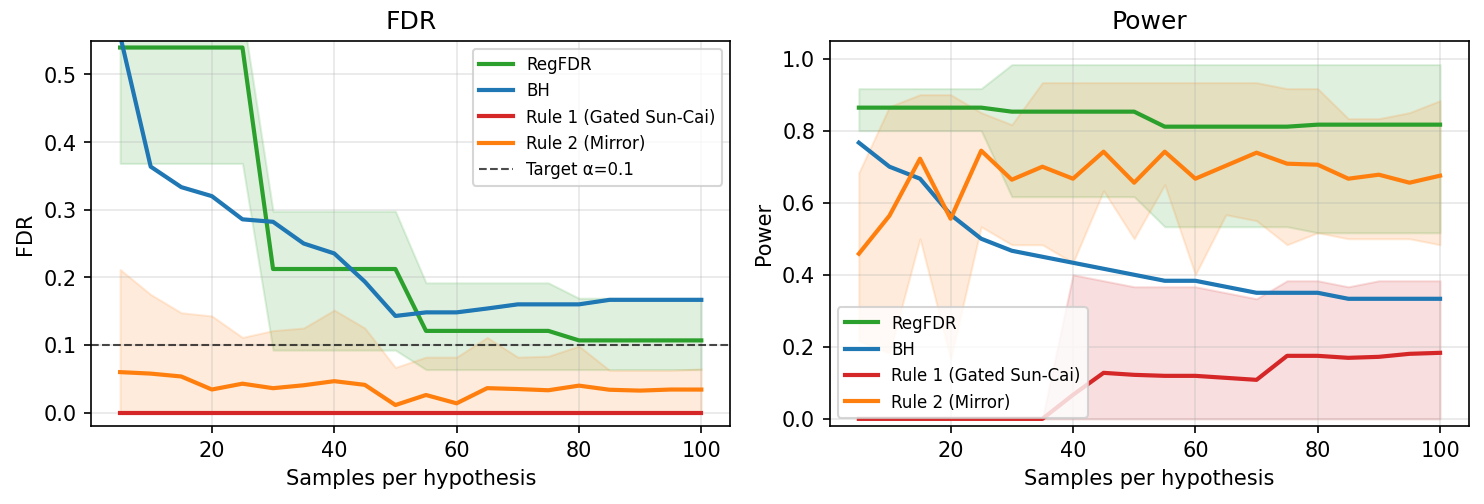}
    \caption{\textbf{Sensitivity of FDR Control and Power to Specifications.} FDR and power across number of null samples. The variance areas are built over the 10 datasets and perturbations over the problem parameters: the signal density $a$ and the Mat\'ern smoothness $\nu$. The dashed line marks the target FDR level $\tau = 0.1$.}
    \label{fig:unc_inference}
\end{figure}

\paragraph{LLM-as-judge benchmark (AlpacaEval 2.0).}
\begin{wraptable}{r}{0.42\textwidth}
\centering
\small
\setlength{\tabcolsep}{4pt}
\vspace{-1em}
\begin{tabular*}{\linewidth}{l@{\extracolsep{\fill}}cc}
\toprule
Method & Disc. & FDR / Power \\
\midrule
BH($\tilde p_i$) & 48 & 0.021 / 0.092 \\
Rule~1 & 167 & 0.042 / 0.312 \\
Rule~2 & 168 & 0.042 / 0.315 \\
\bottomrule
\end{tabular*}
\caption{LLM evaluation. $\tilde p_i = 1 - \frac{\mathrm{wins}_i}{m} $.}
\label{tab:llm_results}
\end{wraptable}
We test the framework on a real LLM-as-a-judge benchmark in a setting with \emph{no spatial structure} or any information borrowed across hypotheses. The benchmark consists of $N = 805$ prompts from AlpacaEval~2.0, on which a judge evaluates $12$ challenger models against a fixed baseline. The null hypothesis is that the prompt does not systematically distinguish model quality, so any preference over the baseline is random. To avoid data leakage between the test statistic
and the ground truth, we partition the $12$ challengers into two disjoint groups of size $m = 6$: Group~A supplies the count $k_i$ (number of Group~A challengers failing to defeat the baseline), and Group~B independently establishes the evaluation ground truth by majority vote. Full setup in Appendix~\ref{supp:llm_setup}. The natural baseline of BH on standard discrete p-values is incapable of yielding any rejection at $m = 6$, since the minimum attainable $p$-value $1/(m+1) \approx 0.143$ exceeds any conventional FDR target. We therefore compare against the uncalibrated heuristic $\tilde p_i = 1 - \mathrm{wins}_i / m$ fed into BH. As Table~\ref{tab:llm_results} shows, our framework substantially
outperforms this heuristic, with the two decision rules produce nearly identical results, as expected when no spatial information is available for them to differ on.

\begin{figure*}
    \centering
    \includegraphics[width=1\linewidth]{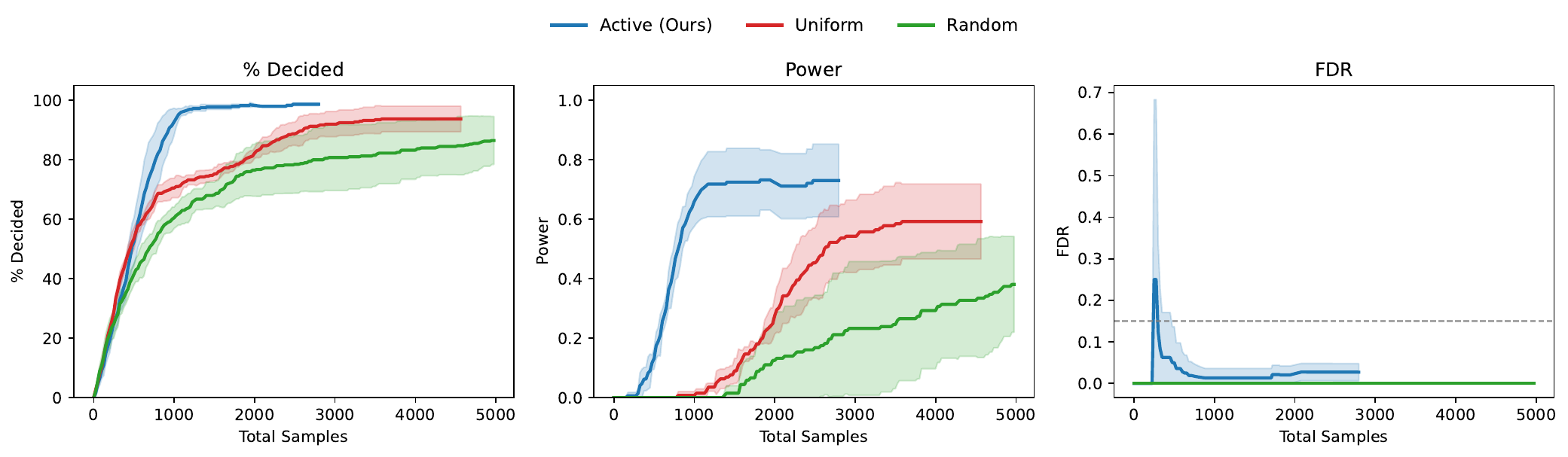}
    \caption{\textbf{Comparing allocations on ADbench.}  three allocation strategies. (a) Fraction of hypotheses with a decided lfdr CI. (b) Power. (c) False discovery rate; the dashed line marks the target $\tau = 0.15$.}
    \label{fig:results}
\end{figure*}
\subsection{Adaptive allocation}

We compare three allocation strategies on the ADbench datasets: \emph{Active}, the policy of Section~\ref{sec:allocation}, prioritizing ambiguous hypotheses by remaining capacity; \emph{Uniform}, which samples ambiguous hypotheses uniformly; and \emph{Random}, which samples uniformly across all hypotheses, ignoring prior decisions. Each method receives the same total null-draw budget; we report metrics across 4 random seeds and the 10 datasets simultaneously. As Figure~\ref{fig:results} shows, Active makes more decisions per unit budget (panel a) and reaches higher power (panel b) than both baselines, both during transient and saturation. 
\begin{wrapfigure}{r}{0.5\textwidth}
    \centering
    \vspace{-1em}
    \includegraphics[width=\linewidth]{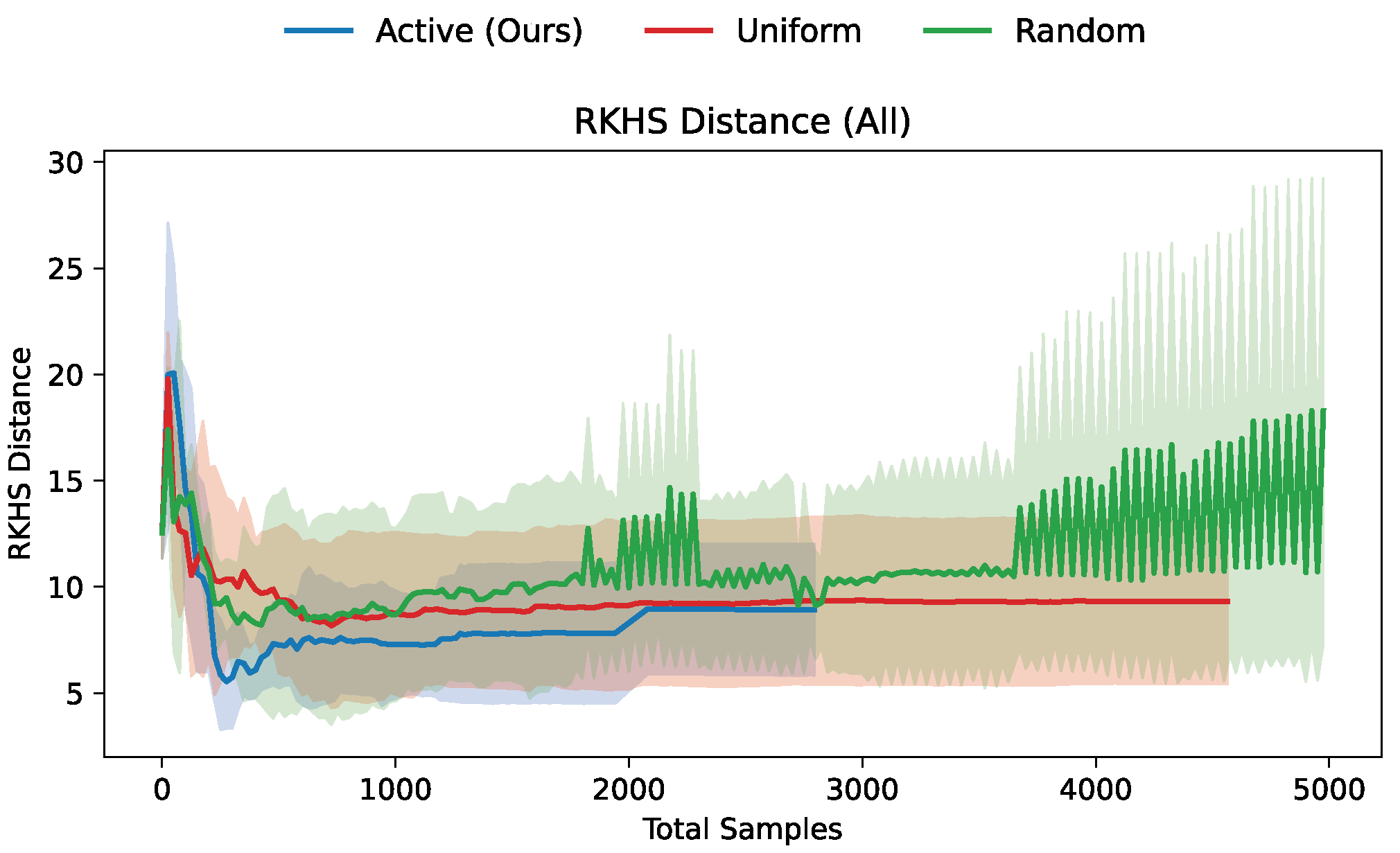}
        \caption{Distance to oracle($\|\hat\alpha - \alpha^{\mathrm{oracle}}\|_{\mathcal{H_K}}$).}
    \label{fig:distances}
\end{wrapfigure}
The hypotheses uniquely discovered by Active are likely true alternatives where spatial-channel borrowing successfully extracts signals unresolved by local Fisher information alone. Target level $\tau = 0.15$ is achieved at saturation but exceeded transiently for small budgets (panel c), we address this in the Limitations section below. Figure~\ref{fig:distances} reports the RKHS distance between the learned $\hat\alpha$ and the oracle $\alpha^{\mathrm{oracle}}$ used to generate the data. It confirms that Active also converges faster to the oracle in RKHS distance.

\section{Conclusions and Limitations}
This work presents a framework for FDR control under three fundamental challenges: modeling and marginalizing p-value uncertainty arising from finite null sampling, exploiting structure in the hypothesis space, and providing tractable uncertainty quantification that later support efficient allocation. We presented two decision rules and characterized the relationship between them. Our empirical results demonstrate substantial improvements over baselines, with better performance under finite resources and efficient sample allocation. We believe the intersection with kernel theory, specifically hard setups like hierarchical graphs (via hyperbolic kernels), has strong potential for addressing challenging open problems.

\textbf{Limitations.} (1)~We assume uniform p-values under $H_0$, which might not align with empirical null estimation \citep{Efron2004}; this can be addressed by modeling the null as a Beta mixture, which preserves the Beta-Binomial conjugacy. (2)~The variance floors create a distinctive bandit setting in which the goal shifts from minimizing uncertainty to recognizing when the floor has been reached. We believe this novel problem warrants further investigation. (3) Our approach to the two-group model relies on spatial marginalization, which assumes i.i.d. sampling across the domain. Because our active allocation strategy violates this assumption at small sample sizes, we currently require a purely exploratory burn-in phase, which may not be strictly necessary, presenting a promising direction for future research.

\newpage
\section{Societal Impact}
This paper presents work whose goal is to advance the field of machine learning. There are many potential societal consequences of our work, none of which we feel must be specifically highlighted here.

\bibliographystyle{abbrvnat}
\bibliography{ref.bib}
%%%%%%%%%%%%%%%%%%%%%%%%%%%%%%%%%%%%%%%%%%%%%%%%%%%%%%%%%%%%

\newpage
\appendix

\section{Assumptions used throughout the appendix}
\label{supp:assumptions}

The proofs in this appendix invoke the following four standing
assumptions. They are imposed on the data-generating process and the
estimator; they are not novel to this paper but are stated here so
that downstream theorems can refer to them by label.

\paragraph{Uniform null.}
The exact discrete-uniform null on the counts
($k_i \mid m_i, H_{0,i} \sim \mathrm{Uniform}\{0,\dots,m_i\}$) is stated as
Assumption~\ref{assn:null} in App.~\ref{supp:rule2_proof}
(\S\ref{sec:rule2_assumptions}); the proofs in this appendix invoke it under
that label.

\begin{assumption}[Bounded kernel and Lipschitz lfdr]
\label{assn:kernel}
The kernel $K$ is symmetric positive definite, normalized so that
$K_{ii} = 1$, and satisfies $\|K\|_{\mathrm{op}} = O(1)$ and
$\|\mathbf{k}_i\|_2 = O(1)$ uniformly in $i$. The lfdr function
$\lfdr(\alpha, k, m)$ is Lipschitz in $\alpha$ on the support permitted by
Assumption~\ref{assn:bounded}.
\end{assumption}

\begin{assumption}[Bounded mixture residuals]
\label{assn:bounded}
There exist constants $0 < c_1 \leq c_2$ such that for every $i$,
the mixture density $f_i = \hat\alpha(\loc_i) P_{0,i} + (1 -
\hat\alpha(\loc_i)) P_{1,i} \geq c_1$ and the residual weight
$w_i = (P_{1,i} - P_{0,i})/f_i$ satisfies $|w_i| \leq c_2$.
\end{assumption}

\begin{assumption}[Regularization and kernel non-degeneracy]
\label{assn:reg}
The regularization parameter satisfies $\lambda \geq \lambda_0 > 0$
for some constant $\lambda_0$ independent of $N$. The kernel design matrix is non-degenerate, $\lambda_{\min}(\boldsymbol K) > 0$.
\end{assumption}

These assumptions are mild and standard for kernel-regularized
M-estimation in finite-sample empirical Bayes, and we verify their
relevance for the experiments on a per-dataset basis.

\section{Natural Gradient Derivation for Beta-Binomial Likelihoods}
\label{supp:gradient}

We provide the complete derivation of the natural gradient for our discrete observation setting, confirming that the kernel cancellation property established in \citep{us} extends to Beta-Binomial likelihoods.

\subsection{Setup and Notation}

For each hypothesis $i \in \{1, \ldots, N\}$, we observe $(k_i, m_i)$ where $k_i$ is the count of extreme statistics among $m_i$ null samples. The null probability function is modeled as:
\begin{equation}
\alpha_i = (\mathbf{K} \mathbf{c})_i = \sum_{j=1}^{N} K_{ij} c_j
\end{equation}
where $\mathbf{c} \in \mathbb{R}^N$ is the RKHS dual coefficient vector (App.~\ref{app:hessian}), the values $\boldsymbol{\alpha} = \mathbf{K}\mathbf{c}$ are the null probabilities, and $\mathbf{K}$ is the symmetric positive definite Gram matrix.

The marginal likelihoods under each hypothesis are:
\begin{align}
P_{0,i} &\triangleq P(k_i \mid m_i, H_0) = \frac{1}{m_i + 1} \\[6pt]
P_{1,i} &\triangleq P(k_i \mid m_i, H_1) = \binom{m_i}{k_i} \frac{B(k_i + a, m_i - k_i + b)}{B(a, b)}
\end{align}
and the mixture likelihood is:
\begin{equation}
f_i \triangleq \alpha_i P_{0,i} + (1 - \alpha_i) P_{1,i}
\end{equation}

\subsection{Loss Function}

The complete objective consists of three terms:
\begin{equation}
\mathcal{L}(\mathbf{c}) = \underbrace{-\sum_{i=1}^{N} \log f_i}_{\mathcal{L}_{\text{data}}} + \underbrace{\lambda_{\text{reg}} \, \mathbf{c}^\top \mathbf{K} \mathbf{c}}_{\mathcal{L}_{\text{reg}}} + \underbrace{\lambda_{\text{bound}} \Lambda_{\text{bound}}(\boldsymbol{\alpha})}_{\mathcal{L}_{\text{bound}}}
\end{equation}

The boundary penalty enforces $\alpha_i \in [0,1]$:
\begin{equation}
\Lambda_{\text{bound}}(\boldsymbol{\alpha}) = \sum_{i=1}^{N} \left[ \max(0, \alpha_i - 1)^2 + \max(0, -\alpha_i)^2 \right]
\end{equation}

This soft constraint is preferred over hard projection or logistic transformations, as it preserves the convexity of the optimization landscape while being sufficient in practice \citep{us}.

\subsection{Euclidean Gradient}

We compute $\nabla_{\mathbf{c}} \mathcal{L}$ by differentiating each term.

\paragraph{Data term.} For a single observation $i$:
\begin{equation}
\frac{\partial}{\partial c_j} \left[ -\log f_i \right] = -\frac{1}{f_i} \cdot \frac{\partial f_i}{\partial c_j}
\end{equation}

Since $f_i = \alpha_i P_{0,i} + (1-\alpha_i) P_{1,i}$:
\begin{equation}
\frac{\partial f_i}{\partial \alpha_i} = P_{0,i} - P_{1,i}
\end{equation}

And since $\alpha_i = \sum_k K_{ik} c_k$:
\begin{equation}
\frac{\partial \alpha_i}{\partial c_j} = K_{ij}
\end{equation}

Combining via chain rule:
\begin{equation}
\frac{\partial}{\partial c_j} \left[ -\log f_i \right] = -\frac{P_{0,i} - P_{1,i}}{f_i} \cdot K_{ij}
\end{equation}

Summing over all observations and defining the residual weight:
\begin{equation}
w_i \triangleq \frac{P_{1,i} - P_{0,i}}{f_i} = \frac{P_{1,i} - P_{0,i}}{\alpha_i P_{0,i} + (1-\alpha_i) P_{1,i}}
\end{equation}

we obtain:
\begin{equation}
\nabla_{\mathbf{c}} \mathcal{L}_{\text{data}} = \sum_{i=1}^{N} w_i \mathbf{k}_i = \mathbf{K}^\top \mathbf{w} = \mathbf{K} \mathbf{w}
\end{equation}
where $\mathbf{k}_i$ denotes the $i$-th column of $\mathbf{K}$, and the last equality uses symmetry.

\paragraph{Regularization term.}
\begin{equation}
\nabla_{\mathbf{c}} \mathcal{L}_{\text{reg}} = \nabla_{\mathbf{c}} \left[ \lambda_{\text{reg}} \mathbf{c}^\top \mathbf{K} \mathbf{c} \right] = 2\lambda_{\text{reg}} \mathbf{K} \mathbf{c}
\end{equation}

\paragraph{Boundary penalty term.} 
First, compute the gradient with respect to $\boldsymbol{\alpha}$:
\begin{equation}
\left[ \nabla_{\boldsymbol{\alpha}} \Lambda_{\text{bound}} \right]_i = 
\begin{cases}
2(\alpha_i - 1) & \text{if } \alpha_i > 1 \\
-2\alpha_i & \text{if } \alpha_i < 0 \\
0 & \text{otherwise}
\end{cases}
\end{equation}

Then apply the chain rule. Since $\alpha_i = \sum_k K_{ik} c_k$:
\begin{equation}
\nabla_{\mathbf{c}} \mathcal{L}_{\text{bound}} = \lambda_{\text{bound}} \mathbf{K}^\top \nabla_{\boldsymbol{\alpha}} \Lambda_{\text{bound}} = \lambda_{\text{bound}} \mathbf{K} \nabla_{\boldsymbol{\alpha}} \Lambda_{\text{bound}}
\end{equation}

\paragraph{Combined Euclidean gradient.}
\begin{align}
\nabla_{\mathbf{c}} \mathcal{L} &= \mathbf{K} \mathbf{w} + 2\lambda_{\text{reg}} \mathbf{K} \mathbf{c} + \lambda_{\text{bound}} \mathbf{K} \nabla_{\boldsymbol{\alpha}} \Lambda_{\text{bound}} \\
&= \mathbf{K} \left( \mathbf{w} + 2\lambda_{\text{reg}} \mathbf{c} + \lambda_{\text{bound}} \nabla_{\boldsymbol{\alpha}} \Lambda_{\text{bound}} \right)
\label{eq:euclidean_grad}
\end{align}

This reveals that the Gram matrix $\mathbf{K}$ factors out completely from all three terms.

\subsection{Natural Gradient and Kernel Cancellation}\label{supp:kernel_cancel}

The natural gradient preconditions by the inverse metric tensor $\mathbf{K}^{-1}$:
\begin{equation}
\tilde{\nabla}_{\boldsymbol{\alpha}} \mathcal{L} = \mathbf{K}^{-1} \nabla_{\boldsymbol{\alpha}} \mathcal{L}
\end{equation}

Substituting the factored form from Equation~\eqref{eq:euclidean_grad}:
\begin{align}
\tilde{\nabla}_{\mathbf{c}} \mathcal{L} &= \mathbf{K}^{-1} \mathbf{K} \left( \mathbf{w} + 2\lambda_{\text{reg}} \mathbf{c} + \lambda_{\text{bound}} \nabla_{\boldsymbol{\alpha}} \Lambda_{\text{bound}} \right) \\
&= \mathbf{w} + 2\lambda_{\text{reg}} \mathbf{c} + \lambda_{\text{bound}} \nabla_{\boldsymbol{\alpha}} \Lambda_{\text{bound}}
\end{align}

\begin{lemma}
For the Beta-Binomial spatial FDR objective with boundary constraints, the natural gradient is:
\begin{equation}
\boxed{\tilde{\nabla}_{\mathbf{c}} \mathcal{L} = \mathbf{w} + 2\lambda_{\emph{reg}} \mathbf{c} + \lambda_{\emph{bound}} \nabla_{\boldsymbol{\alpha}} \Lambda_{\emph{bound}}}
\end{equation}
where:
\begin{itemize}
\item $w_i = (P_{1,i} - P_{0,i}) / f_i$ is the residual weight
\item $f_i = \alpha_i P_{0,i} + (1-\alpha_i) P_{1,i}$ is the mixture likelihood
\item $[\nabla_{\boldsymbol{\alpha}} \Lambda_{\emph{bound}}]_i = 2\max(0, \alpha_i - 1) - 2\max(0, -\alpha_i)$
\end{itemize}
\end{lemma}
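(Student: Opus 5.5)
The plan is to compute the Euclidean gradient of $\mathcal{L}$ with respect to $\mathbf{c}$ term by term, show that it factors as $\mathbf{K}$ times a vector, and then precondition by $\mathbf{K}^{-1}$. This uses the parametrization $\boldsymbol\alpha=\mathbf{K}\mathbf{c}$, in which the RKHS metric on the coefficients is $\mathbf{c}^\top\mathbf{K}\mathbf{c}$, so the metric tensor is $\mathbf{K}$. The key structural fact is that every term of $\mathcal{L}$ depends on $\mathbf{c}$ only through $\boldsymbol\alpha$, except the regularizer, which is itself a $\mathbf{K}$-quadratic form. By the chain rule, each data and boundary term therefore contributes $(\partial\boldsymbol\alpha/\partial\mathbf{c})^\top\nabla_{\boldsymbol\alpha}=\mathbf{K}^\top\nabla_{\boldsymbol\alpha}=\mathbf{K}\nabla_{\boldsymbol\alpha}$, using symmetry of $\mathbf{K}$.

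The steps, in order, are as follows.

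\textbf{Data term.} Since $\partial f_i/\partial\alpha_i = P_{0,i}-P_{1,i}$, we get $\partial(-\log f_i)/\partial\alpha_i = w_i$ with $w_i=(P_{1,i}-P_{0,i})/f_i$. This is well defined because $f_i\ge c_1>0$ by Assumption~\ref{assn:bounded}. The data gradient is then $\mathbf{K}\mathbf{w}$.

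\textbf{Regularizer.} Differentiating gives $\nabla_{\mathbf{c}}\,\lambda_{\mathrm{reg}}\mathbf{c}^\top\mathbf{K}\mathbf{c}=2\lambda_{\mathrm{reg}}\mathbf{K}\mathbf{c}$.

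\textbf{Boundary penalty.} The function $\max(0,x)^2$ is $C^1$ with derivative $2\max(0,x)$. Applying this to $\alpha_i-1$ and to $-\alpha_i$, and accounting for the sign in the second case, gives $[\nabla_{\boldsymbol\alpha}\Lambda_{\mathrm{bound}}]_i=2\max(0,\alpha_i-1)-2\max(0,-\alpha_i)$. The chain rule then contributes $\lambda_{\mathrm{bound}}\mathbf{K}\nabla_{\boldsymbol\alpha}\Lambda_{\mathrm{bound}}$.

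\textbf{Factor and precondition.} Summing the three contributions gives \eqref{eq:euclidean_grad}. Multiplying by $\mathbf{K}^{-1}$, which exists since $\lambda_{\min}(\mathbf{K})>0$ by Assumption~\ref{assn:reg}, cancels the Gram matrix and yields the boxed expression.

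The only delicate point is fixing the convention for the natural gradient in the $\mathbf{c}$ coordinates. I would state explicitly that the metric is the one induced by the RKHS norm $\|\alpha\|_{\mathcal{H}_K}^2=\mathbf{c}^\top\mathbf{K}\mathbf{c}$, so that $\tilde\nabla_{\mathbf{c}}=\mathbf{K}^{-1}\nabla_{\mathbf{c}}$; with a different metric the cancellation would not hold. A second, minor point is the regularizer $\lambda\|\alpha-\hat{\bar\alpha}\|^2_{\mathcal{H}_K}$. The offset $\hat{\bar\alpha}$ is constant and is absorbed into the representer expansion, so in the coefficient parametrization the penalty is exactly $\mathbf{c}^\top\mathbf{K}\mathbf{c}$ and its gradient is unaffected. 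Differentiability of the boundary penalty at $\alpha_i\in\{0,1\}$ holds because the squared hinge is $C^1$, so no subgradient argument is needed.
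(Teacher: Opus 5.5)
Your proposal is correct and follows the paper's proof essentially verbatim: the chain rule through $\boldsymbol\alpha=\mathbf{K}\mathbf{c}$ term by term, factoring $\mathbf{K}$ out of all three contributions as in \eqref{eq:euclidean_grad}, then preconditioning by $\mathbf{K}^{-1}$. Your explicit statement of the RKHS metric convention and your squared-hinge derivative are, if anything, cleaner than the paper's case display. In that display the $\alpha_i<0$ branch should read $2\alpha_i$, not $-2\alpha_i$; your derivation, like the boxed formula, gives $2\alpha_i$ there.
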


\subsection{Update Rule}

The natural gradient descent update is:
\begin{equation}
\mathbf{c}^{(t+1)} = \mathbf{c}^{(t)} - \eta \left( \mathbf{w}^{(t)} + 2\lambda_{\text{reg}} \mathbf{c}^{(t)} + \lambda_{\text{bound}} \nabla_{\boldsymbol{\alpha}} \Lambda_{\text{bound}}^{(t)} \right)
\end{equation}

The complete algorithm is:

\begin{algorithm}[H]
\caption{Natural Gradient Descent for Beta-Binomial Spatial FDR}
\label{alg:ngd}
\begin{algorithmic}[1]
\REQUIRE Kernel matrix $\mathbf{K}$, observations $\{(k_i, m_i)\}_{i=1}^N$, parameters $\lambda_{\text{reg}}, \lambda_{\text{bound}}, \eta$
\STATE Initialize coefficients $\mathbf{c}^{(0)}$
\FOR{$t = 0, 1, 2, \ldots$ until convergence}
    \STATE $\boldsymbol{\alpha}^{(t)} \gets \mathbf{K} \mathbf{c}^{(t)}$ \hfill \COMMENT{Forward pass (values): $O(N^2)$}
    \STATE Compute $P_{0,i} = 1/(m_i+1)$ and $P_{1,i} = \text{BetaBinom}(k_i; m_i, a, b)$ for all $i$
    \STATE $f_i \gets \alpha_i^{(t)} P_{0,i} + (1 - \alpha_i^{(t)}) P_{1,i}$ for all $i$
    \STATE $w_i \gets (P_{1,i} - P_{0,i}) / f_i$ for all $i$
    \STATE $g_i \gets 2\max(0, \alpha_i^{(t)} - 1) - 2\max(0, -\alpha_i^{(t)})$ for all $i$ \hfill \COMMENT{Boundary gradient}
    \STATE $\tilde{\nabla} \gets \mathbf{w} + 2\lambda_{\text{reg}} \mathbf{c}^{(t)} + \lambda_{\text{bound}} \mathbf{g}$ \hfill \COMMENT{Natural gradient (coeff space)}
    \STATE $\mathbf{c}^{(t+1)} \gets \mathbf{c}^{(t)} - \eta \, \tilde{\nabla}$
\ENDFOR
\RETURN $\hat{\mathbf{c}}$ and values $\hat{\boldsymbol{\alpha}} = \mathbf{K}\hat{\mathbf{c}}$
\end{algorithmic}
\end{algorithm}

The per-iteration complexity is $O(N^2)$, dominated by the matrix-vector product in line 3.

\section{Convergences Proofs}
\label{supp:consistency_proofs}

% ============================================================================
\subsection{Proof of lFDR Convergence}
\label{supp:asymptotic_consistency}

\begin{theorem}[Rate of Convergence]
\label{app:thm2}
Let $f_1(p)$ be the alternative Beta density with shape parameters $a, b > 0$. For any arbitrary $\delta \in (0, 1/2)$, as $m \to \infty$ with the empirical ratio $\hat{p} = \frac{k+1}{m+1}$ held fixed within the compact interval $[\delta, 1-\delta]$, the discrete marginalized local false discovery rate converges to the classical continuous local FDR at a rate of $O(1/m)$:
\begin{equation}
\left| \text{lfdr}_{\text{marg}}(k,m) - \text{lfdr}_{\text{class}}(\hat{p}) \right| = O\left(\frac{1}{m}\right)
\end{equation}
where the constant hidden in the $O(\cdot)$ notation depends strictly on $\delta, a, b,$.
\end{theorem}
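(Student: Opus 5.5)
The plan is to reduce the statement to a single Gamma-function ratio asymptotic. Throughout, I take the same mixture weight $\bar\alpha$ in both lfdrs, as the statement implicitly compares the two formulas at a common prior.

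\textbf{Step 1: reduce to a density ratio.} Since $f_0=\mathrm{Beta}(1,1)$ has density $1$, the classical lfdr is
\[
\lfdr_{\mathrm{class}}(p)=g\big(f_1(p)\big), \qquad g(r)=\frac{\bar\alpha}{\bar\alpha+(1-\bar\alpha)r}.
\]
By \eqref{eq:lfdr_marg} and \eqref{eq:P_components}, dividing numerator and denominator by $P_0=1/(m+1)$ gives $\lfdr_{\mathrm{marg}}(k,m)=g\big(R_m\big)$, where $R_m\coloneqq (m+1)P_1(k,m)$. On $r\ge 0$ the map $g$ is Lipschitz, with $|g'(r)|\le (1-\bar\alpha)/\bar\alpha$. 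Hence it suffices to show
\[
|R_m-f_1(\hat p)|=O(1/m)
\]
uniformly over $\hat p\in[\delta,1-\delta]$.

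\textbf{Step 2: rewrite $R_m$ with Gamma functions.} Use the identity $(m+1)\binom{m}{k}=1/B(k+1,m-k+1)$, which turns $R_m$ into a pure Beta ratio:
\[
R_m=\frac{1}{B(a,b)}\cdot\frac{\Gamma(k+a)}{\Gamma(k+1)}\cdot\frac{\Gamma(m-k+b)}{\Gamma(m-k+1)}\cdot\frac{\Gamma(m+2)}{\Gamma(m+a+b)}.
\]

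\textbf{Step 3: expand each Gamma ratio.} I would apply the standard uniform expansion
\[
\frac{\Gamma(x+s)}{\Gamma(x+t)}=x^{s-t}\big(1+O(1/x)\big), \qquad x\to\infty,
\]
for fixed $s,t$; this follows from Stirling's series or Wendel's inequality. The constant depends only on $s,t$ and a lower bound on $x$. Choose the base points as $x=k+1$, $x=m-k+1$ and $x=m+1$. The constraint $\hat p\in[\delta,1-\delta]$ gives $k+1\ge \delta(m+1)$ and $m-k\ge \delta(m+1)-1$, so every relative error is $O(1/(\delta m))$. This yields
\[
R_m=\frac{1}{B(a,b)}\,(k+1)^{a-1}(m-k+1)^{b-1}(m+1)^{2-a-b}\big(1+O(1/m)\big).
\]
Writing $(k+1)/(m+1)=\hat p$ and $(m-k+1)/(m+1)=1-\hat p+\tfrac{1}{m+1}$, the powers of $m+1$ cancel exactly. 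Since $\hat p\mapsto(1-\hat p+\epsilon)^{b-1}$ is smooth on $[\delta,1-\delta]$, the $\tfrac{1}{m+1}$ shift costs a further relative $O(1/m)$. The result is $R_m=f_1(\hat p)\,(1+O(1/m))$.

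\textbf{Step 4: conclude.} The density $f_1$ is bounded on $[\delta,1-\delta]$ by a constant depending on $\delta,a,b$, so the relative error becomes an absolute $O(1/m)$ error on $R_m$. The Lipschitz bound from Step~1 then transfers this to the lfdr. The resulting constant depends only on $\delta,a,b$ and on $\bar\alpha$ through $g$.

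\textbf{Main obstacle.} The delicate step is Step~3. The Gamma-ratio error terms must be controlled uniformly in $k$, not pointwise, and the interior condition is exactly what keeps both $k$ and $m-k$ of order $m$. I would handle this by quoting a two-sided Wendel/Gautschi-type bound with explicit constants, rather than a purely asymptotic Stirling statement.
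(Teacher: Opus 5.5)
Your proposal is correct and is essentially the paper's proof: the same factorization of $(m+1)P_1(k,m)$ into three Gamma quotients, the same $z^{c-d}(1+O(1/z))$ quotient asymptotics made uniform by $\hat p\in[\delta,1-\delta]$, and the same exact cancellation of the powers of $m+1$, leaving $f_1(\hat p)(1+O(1/m))$. The differences are cosmetic. You expand the second quotient about $m-k+1$ and absorb the extra $1/(m+1)$ shift, whereas the paper uses base $(1-\hat p)(m+1)$ so the powers recombine exactly; and you transfer the error to the lfdr through a global Lipschitz bound on $g$ rather than the paper's first-order Taylor expansion, which is cleaner. If you want the constant to be free of $\bar\alpha$, as the statement says, replace $|g'|\le(1-\bar\alpha)/\bar\alpha$ by the bound obtained from $f_1\ge c(\delta,a,b)>0$ on the interior, which bounds the denominators of $g$ below uniformly in $\bar\alpha$.
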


\begin{proof}
The primary objective of this theorem is to quantify the approximation error introduced by working in a discrete space rather than an continuous p-value space. The lfdr is a monotonic function of the likelihood ratio of the alternative distribution to the null distribution:
$\text{lfdr}(\cdot) = \frac{\alpha_0}{\alpha_0 + (1-\alpha_0)\Lambda}$
where $\Lambda$ represents the likelihood engine of the decision rule. Therefore, proving that the discrete marginalized local FDR ($\text{lfdr}_{\text{marg}}$) converges to the classical continuous local FDR ($\text{lfdr}_{\text{class}}$) reduces to proving that the discrete, count-space likelihood ratio $\Lambda_m = P_1(k,m)/P_0(k,m)$ converges pointwise to the continuous alternative density function $f_1(\hat{p})$. The proof strategy is mapped out in three distinct phases:  (1) Utilizing Beta-Binomial conjugacy, we express the discrete likelihood ratio $\Lambda_m$ exactly for any finite count $k$ and sample budget $m$. By transforming the binomial coefficients and Beta functions into their Gamma function ($\Gamma$) equivalents, the discrete ratio is represented as a well-defined, finite product of factors. (2) Notice, that the paper follow the Laplace-smoothed empirical ratio $p=(1+k)/(1+m)$, hence we change variables to the total sample scale $M = m+1$ and $\hat{p} = (k+1)/M$. Instead of approximating continuous slopes, we apply Stirling's asymptotic quotient expansion directly to the discrete Gamma function ratios. Restricting $\hat{p}$ to a compact subset $[\delta, 1-\delta]$ ensures that the discrete lattice remainder terms are uniformly controlled by $O(1/(\delta M))$. (3) Upon multiplying the expanded Gamma quotients, the sample scale parameter $M$ cancels out from the primary exponent via exact linear combination. The remaining terms structurally reassemble into the explicit algebraic definition of the continuous Beta density evaluated at the empirical mean, yielding $\Lambda_m = f_1(\hat{p}) + O(1/m)$. \textbf{Finally}, passing this linear likelihood remainder through a first-order fractional Taylor expansion mapping transfers the $O(1/m)$ rate directly to the local FDR statistic, explicitly bounding the finite-resource discretization error.

The likelihood ratio:
\begin{equation}
\Lambda_m = \frac{P_1(k,m)}{P_0(k,m)} = (m+1) \cdot P_1(k,m)
\end{equation}
since $P_0(k,m) = \frac{1}{m+1}$ under the discrete uniform null model.

We list the four independent structural algebraic components before combination:
\begin{enumerate}
    \item[(i)] The binomial coefficient from the combination count:
    \begin{equation}
    \binom{m}{k} = \frac{m!}{k!(m-k)!} = \frac{\Gamma(m+1)}{\Gamma(k+1)\Gamma(m-k+1)}
    \end{equation}
    \item[(ii)] The non-normalized Beta integral component of the alternative distribution:
    \begin{equation}
    B(k+a, m-k+b) = \frac{\Gamma(k+a)\Gamma(m-k+b)}{\Gamma(m+a+b)}
    \end{equation}
    \item[(iii)] The standard normalizing constant of the Beta distribution:
    \begin{equation}
    B(a,b) = \frac{\Gamma(a)\Gamma(b)}{\Gamma(a+b)}
    \end{equation}
    This remains as $B(a,b)$ throughout the proof since it does not depend on $m$ or $k$, acting as a fixed multiplicative constant factor.
    \item[(iv)] The inverse null likelihood prefactor scaling factor:
    \begin{equation}
    m + 1 = \frac{\Gamma(m+2)}{\Gamma(m+1)}
    \end{equation}
\end{enumerate}

Combining the structural terms (i), (ii), and (iii) yields the complete Beta-Binomial marginal pmf:
\begin{align}
P_1(k,m) 
&= \binom{m}{k} \cdot \frac{B(k+a, m-k+b)}{B(a,b)} \notag\\
&= \frac{\Gamma(m+1)}{\Gamma(k+1)\Gamma(m-k+1)} \cdot \frac{1}{B(a,b)} \cdot \frac{\Gamma(k+a)\Gamma(m-k+b)}{\Gamma(m+a+b)}
\end{align}
Grouping the terms dynamically by their asymptotic base variables splits the equation into three isolated tail tracks:
\begin{equation}
P_1(k,m) = \frac{1}{B(a,b)} \cdot \frac{\Gamma(m+1)}{\Gamma(m+a+b)} \cdot \frac{\Gamma(k+a)}{\Gamma(k+1)} \cdot \frac{\Gamma(m-k+b)}{\Gamma(m-k+1)}
\end{equation}

Forming the complete ratio $\Lambda_m = (m+1) \cdot P_1(k,m)$, the scaling fraction $\frac{\Gamma(m+2)}{\Gamma(m+1)}$ multiplies directly into the global $m$-dependent structural term:
\begin{equation}
\frac{\Gamma(m+2)}{\Gamma(m+1)} \cdot \frac{\Gamma(m+1)}{\Gamma(m+a+b)} = \frac{\Gamma(m+2)}{\Gamma(m+a+b)}
\end{equation}
This isolates the likelihood ratio completely into three decoupled ratios of Gamma functions:
\begin{equation}
\Lambda_m = \frac{1}{B(a,b)} \cdot \underbrace{\frac{\Gamma(k+a)}{\Gamma(k+1)}}_{=: T_1} \cdot \underbrace{\frac{\Gamma(m-k+b)}{\Gamma(m-k+1)}}_{=: T_2} \cdot \underbrace{\frac{\Gamma(m+2)}{\Gamma(m+a+b)}}_{=: T_3}
\end{equation}

Let $M = m+1$ and $\hat{p} = \frac{k+1}{M}$. This change of variables yields the exact relationships:
\begin{itemize}
    \item $k+1 = \hat{p}M \implies k = \hat{p}M - 1$
    \item $m-k = (m+1) - (k+1) = M - \hat{p}M = (1-\hat{p})M$
    \item $m+2 = M+1$
    \item $m+a+b = M + a + b - 1$
\end{itemize}
To evaluate the asymptotic behavior of these factorials as $M \to \infty$, we rewrite each quotient $T_j$ into the canonical form $\frac{\Gamma(z+c)}{\Gamma(z+d)}$, where $z$ represents a large, growing base variable, while $c$ and $d$ serve as small, fixed parameter shifts. 
\begin{align}
T_1 &= \frac{\Gamma(k+a)}{\Gamma(k+1)} = \frac{\Gamma((k+1) + a - 1)}{\Gamma((k+1) + 0)} = \frac{\Gamma(\hat{p}M + a - 1)}{\Gamma(\hat{p}M)} \\
T_2 &= \frac{\Gamma(m-k+b)}{\Gamma(m-k+1)} = \frac{\Gamma((m-k) + b)}{\Gamma((m-k) + 1)} = \frac{\Gamma((1-\hat{p})M + b)}{\Gamma((1-\hat{p})M + 1)} \\
T_3 &= \frac{\Gamma(m+2)}{\Gamma(m+a+b)} = \frac{\Gamma(M+1)}{\Gamma(M + a + b - 1)}
\end{align}

We apply the standard asymptotic expansion derived via Stirling's approximation for Gamma function quotients: $\frac{\Gamma(z+c)}{\Gamma(z+d)} = z^{c-d}\left(1 + O\left(\frac{1}{z}\right)\right)$ as $z \to \infty$. Notice that $z \to \infty$ is guaranteed given the Laplace smoothed $p=(1+k)/(1+m)$. Pointwise evaluation of $T_1$, $T_2$, and $T_3$ yields:
\begin{align}
T_1 &= (\hat{p}M)^{(a-1) - 0} \left(1 + O\left(\frac{1}{\hat{p}M}\right)\right) = \hat{p}^{a-1} M^{a-1} \left(1 + O\left(\frac{1}{\hat{p}M}\right)\right) \\
T_2 &= ((1-\hat{p})M)^{b - 1} \left(1 + O\left(\frac{1}{(1-\hat{p})M}\right)\right) = (1-\hat{p})^{b-1} M^{b-1} \left(1 + O\left(\frac{1}{(1-\hat{p})M}\right)\right) \\
T_3 &= M^{1 - (a+b-1)} \left(1 + O\left(\frac{1}{M}\right)\right) = M^{2-a-b} \left(1 + O\left(\frac{1}{M}\right)\right)
\end{align}

Because $\hat{p}$ is constrained to the compact set $[\delta, 1-\delta]$, the boundaries are strictly controlled: $\hat{p}M \ge \delta M$ and $(1-\hat{p})M \ge \delta M$. The local remainders are bounded uniformly by the stable scale $\delta M$:
\begin{equation}
O\left(\frac{1}{\hat{p}M}\right) \leq O\left(\frac{1}{\delta M}\right) \quad \text{and} \quad O\left(\frac{1}{(1-\hat{p})M}\right) \leq O\left(\frac{1}{\delta M}\right)
\end{equation}
Since $\delta$ is fixed, these quantities collapse uniformly into a single baseline parameter error:
\begin{equation}
\left(1 + O\left(\frac{1}{M}\right)\right) \left(1 + O\left(\frac{1}{M}\right)\right) \left(1 + O\left(\frac{1}{M}\right)\right) = 1 + O\left(\frac{1}{M}\right)
\end{equation}

Multiplying the terms $T_1$, $T_2$, and $T_3$ back into the main structure:
\begin{equation}
\Lambda_m = \frac{1}{B(a,b)} \cdot \left[ \hat{p}^{a-1} M^{a-1} \right] \cdot \left[ (1-\hat{p})^{b-1} M^{b-1} \right] \cdot \left[ M^{2-a-b} \right] \cdot \left(1 + O\left(\frac{1}{M}\right)\right)
\end{equation}
The base $M$ algebraic powers sum to zero:
\begin{equation}
M^{(a-1) + (b-1) + (2-a-b)} = M^0 = 1
\end{equation}
The remaining constant factors reconstruct the alternative continuous Beta distribution density $f_1(\hat{p})$:
\begin{equation}
\Lambda_m = \left[ \frac{1}{B(a,b)} \hat{p}^{a-1} (1-\hat{p})^{b-1} \right] \cdot \left(1 + O\left(\frac{1}{M}\right)\right) = f_1(\hat{p}) + f_1(\hat{p})O\left(\frac{1}{M}\right)
\end{equation}
Since $\hat{p} \in [\delta, 1-\delta]$, $f_1(\hat{p})$ is a well-defined finite scalar. The product $f_1(\hat{p})O(1/M)$ evaluates to $O(1/M)$. Rewriting in the original indexing ($M = m+1$):
\begin{equation}
\Lambda_m = f_1(\hat{p}) + O\left(\frac{1}{m}\right)
\end{equation}
A useful algebraic sanity check on the robustness of the $f_1(\hat{p})O(1/M)$ absorption is to evaluate this theoretical error bound at the finest resolution the lattice permits, $\hat{p}_{\min} = 1/(m+1)$. While a rigorous separate analysis is required for the exact convergence of $\Lambda_m$ at the absolute boundary $k=0$ (where the Stirling expansion of $T_1$ strictly fails; see Remark~\ref{rem:boundary}), evaluating the interior bound at this limit case demonstrates its stability. At this extreme, $f_1(\hat{p}_{\min}) \propto (m+1)^{1-a}$ for $a < 1$, and the absorbed term scales as:
\begin{equation}
f_1(\hat{p}_{\min}) \cdot \frac{1}{m} \;\propto\; \frac{(m+1)^{1-a}}{m} \;=\; O\!\left(m^{-a}\right)
\end{equation}
Because the prior shape parameter $a > 0$, this theoretical bound still vanishes asymptotically. This confirms that the fractional remainder remains algebraically controlled even as we push the cutoff toward the lattice edge—exactly the regime where the continuous alternative density is sharpest.

Finally, we plug the mapped likelihood ratio into the discrete local FDR equation:
\begin{equation}
\text{lfdr}_{\text{marg}}(k,m) = \frac{\alpha}{\alpha + (1-\alpha)\Lambda_m} = \frac{\alpha}{\alpha + (1-\alpha)\left[f_1(\hat{p}) + O\left(\frac{1}{m}\right)\right]}
\end{equation}
Grouping by the continuous baseline profile:
\begin{equation}
\text{lfdr}_{\text{marg}}(k,m) = \frac{\alpha}{\left[\alpha + (1-\alpha)f_1(\hat{p})\right] + (1-\alpha)O\left(\frac{1}{m}\right)}
\end{equation}
We expand the fraction using the standard first-order geometric mapping $(x + \epsilon)^{-1} = x^{-1} - \epsilon x^{-2} + O(\epsilon^2)$, setting $x = \alpha + (1-\alpha)f_1(\hat{p})$ and $\epsilon = (1-\alpha)O(1/m) = O(1/m)$:
\begin{equation}
\frac{\alpha}{x + \epsilon} = \frac{\alpha}{x} - \frac{\alpha \epsilon}{x^2} + O(\epsilon^2)
\end{equation}
Substituting back the values of $x$ and $\epsilon$:
\begin{equation}
\frac{\alpha}{x} = \frac{\alpha}{\alpha + (1-\alpha)f_1(\hat{p})} = \text{lfdr}_{\text{class}}(\hat{p})
\end{equation}
Because $\hat{p} \in [\delta, 1-\delta]$, $f_1(\hat{p})$ is bounded and strictly greater than zero, meaning the denominator $x$ is safely bounded away from zero. The perturbation term preserves its bounds:
\begin{equation}
- \frac{\alpha \epsilon}{x^2} = - \frac{\alpha (1-\alpha)}{x^2} O\left(\frac{1}{m}\right) = O\left(\frac{1}{m}\right)
\end{equation}
Combining these components completes the exact derivation:
\begin{equation}
\text{lfdr}_{\text{marg}}(k,m) = \text{lfdr}_{\text{class}}(\hat{p}) + O\left(\frac{1}{m}\right)
\end{equation}
\end{proof}

\begin{remark}[Boundary Behavior and LFDR Saturation]
\label{rem:boundary}
The compact-set restriction $\hat{p} \in [\delta, 1-\delta]$ is essential for the uniform likelihood convergence established in Theorem~\ref{app:thm2}. At the absolute lattice boundary $k=0$, the base variable of the first Gamma quotient remains bounded as $m \to \infty$: $\hat{p}M = 1$ exactly, so the Stirling expansion that gave $T_1 = (\hat{p}M)^{a-1}(1 + O(1/(\hat{p}M)))$ no longer applies. A direct evaluation instead yields
\begin{equation}
T_1\big|_{k=0} \;=\; \frac{\Gamma(0+a)}{\Gamma(0+1)} \;=\; \Gamma(a),
\end{equation}
a fixed constant in $m$. Applying Stirling to the remaining factors $T_2$ and $T_3$ (whose base variables, $(1-\hat{p})M = m$ and $M = m+1$, do tend to infinity) gives
\begin{equation}
\Lambda_m\big|_{k=0} \;=\; \frac{\Gamma(a)}{B(a,b)} \cdot M^{1-a} \cdot \left(1 + O\!\left(\frac{1}{m}\right)\right),
\end{equation}
whereas the continuous density at $\hat{p}_{\min} = 1/M$ evaluates to
\begin{equation}
f_1(\hat{p}_{\min}) \;=\; \frac{1}{B(a,b)} \cdot M^{1-a} \cdot \left(1 - \frac{1}{M}\right)^{b-1} \;=\; \frac{1}{B(a,b)} \cdot M^{1-a} \cdot \left(1 + O\!\left(\frac{1}{m}\right)\right).
\end{equation}
The ratio $\Lambda_m / f_1(\hat{p}_{\min})$ therefore tends to $\Gamma(a)$ rather than to $1$:
\begin{equation}
\lim_{m \to \infty} \frac{\Lambda_m\big|_{k=0}}{f_1(\hat{p}_{\min})} \;=\; \Gamma(a),
\end{equation}
a persistent multiplicative gap that does not vanish. For $a = 1$ (uniform alternative), $\Gamma(a) = 1$ and the gap closes; for $a \neq 1$, the likelihood-level convergence fails at the lattice boundary by a constant factor.

The lfdr-level error nevertheless vanishes, but through a different mechanism. In the discovery regime $a < 1$, both $\Lambda_m$ and $f_1(\hat{p}_{\min})$ diverge as $M^{1-a} \to \infty$, so both the discrete and continuous local FDR formulas
\begin{equation}
\text{lfdr}_{\text{marg}}(k=0, m) \;=\; \frac{\alpha}{\alpha + (1-\alpha)\Lambda_m}, \qquad \text{lfdr}_{\text{class}}(\hat{p}_{\min}) \;=\; \frac{\alpha}{\alpha + (1-\alpha)f_1(\hat{p}_{\min})},
\end{equation}
saturate to zero. The difference between two saturating fractions of the form $\alpha/(\alpha + Y)$ with $Y \to \infty$ scales as $|Y_1 - Y_2|/Y^2$ when $Y_1, Y_2$ have the same leading behavior. With $Y_1 = (1-\alpha)\Lambda_m \approx \Gamma(a) \cdot Y_2$ and $Y_2 = (1-\alpha) f_1(\hat{p}_{\min}) \propto M^{1-a}$, the lfdr error scales as
\begin{equation}
\left|\text{lfdr}_{\text{marg}}(k=0, m) - \text{lfdr}_{\text{class}}(\hat{p}_{\min})\right| \;\propto\; \frac{|\Gamma(a) - 1|}{f_1(\hat{p}_{\min})} \;=\; O\!\left(m^{a-1}\right).
\end{equation}
Thus, while the likelihoods themselves do not align at the boundary, the lfdr decision statistic remains asymptotically consistent --- not via convergence of $\Lambda_m$ to $f_1(\hat{p})$, but via saturation of both lfdrs at zero. The boundary rate $O(m^{a-1})$ replaces the interior rate $O(1/m)$ at the lattice edge.
\end{remark}

\paragraph{Lattice Discretization vs.\ Continuous Sampling Noise}
A central motivation for the discrete Beta-Binomial framework is the favorable scaling of its inherent numerical error compared to classical continuous estimation. Here we contrast the exact discretization error of Theorem~\ref{app:thm2} with the statistical sampling noise inherent to continuous p-value frameworks.

\paragraph{The continuous baseline ($O(1/\sqrt{m})$).}
Classical continuous FDR control requires a smooth estimate of $f_1(p)$ from the same data used to compute the test statistic. Whether the estimate is parametric or nonparametric, the central limit theorem fixes the estimation error at $O(1/\sqrt{m})$ for the parametric case and slower for the nonparametric case. This noise enters the lfdr formula directly and cannot be removed by computational care; it is a fundamental property of having only $m$ samples to estimate a continuous density.

\paragraph{The interior lattice advantage ($O(1/m)$).}
The discrete framework bypasses density estimation: $P_0(k,m)$ and $P_1(k,m;b)$ are evaluated in closed form for any $(k, m, b)$, with the only error being the geometric mismatch between the discrete count lattice and the continuous $p$-value space. Theorem~\ref{app:thm2} shows this discretization error scales as $O(1/m)$ uniformly on $[\delta, 1-\delta]$ with $O(1/m)$ decays strictly faster than $O(1/\sqrt{m})$. Hence, there is no sampling-noise floor to contend with.
\textbf{However}, follow the remark, it is worth noticing that at $k=0$, the rate degrades to $O(m^{a-1})$ (Remark~\ref{rem:boundary}). Comparing exponents:
\begin{equation}
O(m^{a-1}) \prec O(m^{-1/2}) \iff a - 1 < -\tfrac{1}{2} \iff a < \tfrac{1}{2}.
\end{equation}
The discrete framework's boundary rate is faster than the continuous sampling-noise floor precisely when the alternative density has a sharp spike near zero.

\subsection{Proof of Theorem: Marginal Likelihood Factorization}
\label{supp:spatial_marginal}

\begin{theorem}[Marginal Likelihood Factorization]
\label{thm:spatial_marginal}
Under the generative model, when locations are sampled from a distribution $\mu$ 
over $\mathcal{X}$, the marginal likelihood for observation $(k_i, m_i)$ follows 
a standard two-group mixture:
\begin{equation}
P(k_i, m_i) = \bar{\alpha} \cdot P(k_i | m_i, H_0) + (1-\bar{\alpha}) \cdot P(k_i | m_i, H_1; b)
\end{equation}
where $\bar{\alpha} = \mathbb{E}_{\mu}[\alpha(\text{loc})]$ is the spatial average of 
the null probability.
\end{theorem}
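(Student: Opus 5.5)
The plan is to marginalize the three-level hierarchy of Section~\ref{sec:method} over the location, using the law of total probability and the fact that only the structural layer depends on $\loc$. Treat $\loc_i \sim \mu$ as random, with $m_i$ fixed, or independent of $\loc_i$ and conditioned on. Conditional on $\loc_i$, the pointwise mixture \eqref{eq:spatial_mixture_pointwise} gives
\[
P(k_i \mid m_i, \loc_i) = \alpha(\loc_i) P_{0,i} + (1-\alpha(\loc_i)) P_{1,i}.
\]
This itself follows by summing over $\theta_i \in \{0,1\}$. The key point is that, given $\theta_i$, the latent $p_i^*$ is drawn from $f_{\theta_i}$ independently of $\loc_i$, and $k_i$ is then $\mathrm{Binomial}(m_i, p_i^*)$. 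Hence $P(k_i \mid m_i, \theta_i = j, \loc_i) = P_j(k_i, m_i)$ from \eqref{eq:P_convolution}--\eqref{eq:P_components}, with no dependence on $\loc_i$.

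The second step integrates against $\mu$:
\[
P(k_i \mid m_i) = \int_{\mathcal X} \big[\alpha(x) P_{0,i} + (1-\alpha(x)) P_{1,i}\big]\, d\mu(x).
\]
Since $P_{0,i}$ and $P_{1,i}$ are constants in $x$, linearity of the integral pulls them out. This yields $\bar\alpha P_{0,i} + (1-\bar\alpha) P_{1,i}$ with $\bar\alpha = \mathbb{E}_\mu[\alpha(\loc)]$. Measurability and integrability are immediate because $0 \le \alpha \le 1$, so Fubini/Tonelli applies to the nonnegative integrand, including the inner $p$-integral.

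The only real obstacle is the conditional-independence step: one must state explicitly that $(p_i^*, k_i) \perp \loc_i \mid \theta_i$ (and $m_i$). That is, the alternative density $f_1 = \mathrm{Beta}(a,b)$ and the budget do not vary with location. I would note this as the modeling assumption being used. I would also remark that if $m_i$ were allocated adaptively depending on $\loc_i$ or past data, the factorization would fail, consistent with the paper's stated limitation on active allocation. With that in place, the rest is a two-line computation.

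Finally, I would note the consequence used in Section~\ref{sec:method_nonspatial}. The observations $(k_i, m_i)$, marginally over locations, are i.i.d.\ draws from the non-spatial two-group mixture, so $(\bar\alpha, b)$ can be estimated without the kernel.
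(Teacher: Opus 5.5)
Your proposal is correct and follows the paper's proof essentially step for step. You condition on location to get the pointwise mixture, integrate against $\mu$, and pull the location-free $P_{0,i}, P_{1,i}$ out of the integral to obtain $\bar\alpha$. Your explicit statement of the conditional independence $(p_i^*, k_i) \perp \loc_i \mid \theta_i$, together with $m_i$ not depending on location, makes precise what the paper uses implicitly when it says the component likelihoods ``do not depend on location.''
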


\begin{proof}
The joint distribution of the observation and location is:
\begin{equation}
P(k_i, m_i, \text{loc}_i) = P(k_i, m_i | \text{loc}_i) \cdot \mu(\text{loc}_i)
\end{equation}

where the conditional likelihood at location $\text{loc}_i$ is given by the 
discrete mixture:
\begin{align}
P(k_i, m_i | \text{loc}_i) 
&= \alpha(\text{loc}_i) P(k_i | m_i, H_0) \notag\\
&\quad + (1-\alpha(\text{loc}_i)) P(k_i | m_i, H_1; b)
\end{align}

Marginalizing over locations:
\begin{align}
P(k_i, m_i) 
&= \int_{\mathcal{X}} P(k_i, m_i | \text{loc}) \, d\mu(\text{loc}) \notag\\
&= \int_{\mathcal{X}} \big[\alpha(\text{loc}) P(k_i | m_i, H_0) \notag\\
&\qquad + (1-\alpha(\text{loc})) P(k_i | m_i, H_1; b)\big] \, d\mu(\text{loc})
\end{align}

Since the marginal likelihoods $P(k_i | m_i, H_0)$ and $P(k_i | m_i, H_1; b)$ 
do not depend on location, we can factor them out:
\begin{align}
P(k_i, m_i) 
&= P(k_i | m_i, H_0) \int_{\mathcal{X}} \alpha(\text{loc}) \, d\mu(\text{loc}) \notag\\
&\quad + P(k_i | m_i, H_1; b) \int_{\mathcal{X}} (1-\alpha(\text{loc})) \, d\mu(\text{loc})
\end{align}

Define the spatial average null probability:
\begin{equation}
\bar{\alpha} = \int_{\mathcal{X}} \alpha(\text{loc}) \, d\mu(\text{loc})
\end{equation}

Since $\mu$ is a probability measure, $\int_{\mathcal{X}} d\mu(\text{loc}) = 1$, 
we have:
\begin{equation}
\int_{\mathcal{X}} (1-\alpha(\text{loc})) \, d\mu(\text{loc}) = 1 - \bar{\alpha}
\end{equation}

Therefore:
\begin{align}
P(k_i, m_i) 
&= P(k_i | m_i, H_0) \cdot \bar{\alpha} + P(k_i | m_i, H_1; b) \cdot (1-\bar{\alpha}) \notag\\
&= \bar{\alpha} \cdot P(k_i | m_i, H_0) + (1-\bar{\alpha}) \cdot P(k_i | m_i, H_1; b)
\end{align}

This is exactly the form of a two-group mixture with global mixing proportion 
$\bar{\alpha}$, proving the theorem.
\end{proof}

\subsection{Parameter Estimation Implementation Details}
\label{app:param_est}

\paragraph{Subset Selection Criteria.}
We select hypotheses for parameter estimation based on:
\begin{enumerate}
\item Sample size threshold: $m_i \geq m_{\min}$ (typically $m_{\min} = 100$)
\item Upper quantile: Take the top 75\% by sample size
\item Minimum subset size: At least $N_{\min} = 50$ hypotheses
\end{enumerate}

This ensures the subset has sufficient statistical power while avoiding hypotheses 
with high measurement uncertainty.

\subsection{Derivation of the Hessian}
\label{app:hessian} \label{supp:hat_H}

To ensure rigorous calculus, we must distinguish between the RKHS dual coefficient vector $\mathbf{c} \in \mathbb{R}^N$ and the resulting function evaluations (the null probabilities) $\boldsymbol{\alpha} \in \mathbb{R}^N$. By the representer theorem, the function evaluations are given by $\boldsymbol{\alpha} = K\mathbf{c}$, meaning $\alpha_i = (K\mathbf{c})_i$. 

In the main text, the spatial penalty is centered: $\|\mathbf{c} - \hat{\bar{c}}\|^2_{\mathcal{H}_K}$. In this appendix we work with the uncentered penalty $\mathbf{c}^\top K \mathbf{c}$ via the constant shift; the Hessian is unchanged. The objective function with respect to the coefficients $\mathbf{c}$ is:
$$L(\mathbf{c}) = -\sum_{i=1}^N \log[\alpha_i P_{0,i} + (1-\alpha_i)P_{1,i}] + \lambda \mathbf{c}^T K \mathbf{c} + \gamma \Lambda_{\text{bound}}(\boldsymbol{\alpha})$$

\paragraph{Data Term.}
Define the local loss at node $i$ as $\ell_i(\alpha_i) = -\log[\alpha_i P_{0,i} + (1-\alpha_i)P_{1,i}]$. The first derivative with respect to the output probability is:
$$\frac{\partial \ell_i}{\partial \alpha_i} = -\frac{P_{0,i} - P_{1,i}}{\alpha_i P_{0,i} + (1-\alpha_i)P_{1,i}}$$

The second derivative gives the local observed Fisher information with respect to the probability:
$$v_i := \frac{\partial^2 \ell_i}{\partial \alpha_i^2} = \frac{(P_{0,i} - P_{1,i})^2}{[\alpha_i P_{0,i} + (1-\alpha_i)P_{1,i}]^2}$$

To find the Hessian with respect to the coefficients $\mathbf{c}$, we apply the chain rule. Since $\alpha_i = \sum_m K_{im} c_m$, we have the Jacobian $\frac{\partial \alpha_i}{\partial c_j} = K_{ij}$. 
By the multivariate chain rule:
$$\frac{\partial^2}{\partial c_j \partial c_k} \left(-\sum_i \ell_i\right) = \sum_i v_i K_{ij} K_{ik} = [K^T \text{diag}(v_i) K]_{jk}$$

\paragraph{Regularization Term.}
The quadratic penalty yields:
$$\nabla^2_{\mathbf{c}} (\lambda \mathbf{c}^T K \mathbf{c}) = 2\lambda K$$

\paragraph{Boundary Penalty Term.}
The penalty $\Lambda_{\text{bound}}(\boldsymbol{\alpha}) = \sum_i [\max(0, \alpha_i - 1)^2 + \max(0, -\alpha_i)^2]$ 
contributes $\gamma K^T (\nabla^2_{\boldsymbol{\alpha}} \Lambda_{\text{bound}}) K$, where the inner matrix is diagonal and nonzero only when constraints are active.

\paragraph{Full Hessian.}
Combining all terms, the Hessian with respect to the RKHS coefficients is:
\begin{equation}
H(\mathbf{c}) = K^T \text{diag}(v_i) K + 2\lambda K + \gamma K^T (\nabla^2_{\boldsymbol{\alpha}} \Lambda_{\text{bound}}) K
\end{equation}

\textbf{Positive Definiteness:} Since $v_i > 0$ for all $i$ (the denominator is a valid probability mixture), and the kernel matrix $K$ is positive definite, the Hessian $H(\mathbf{c}) \succ 0$, confirming strict convexity of the objective.

\paragraph{Interpretation of $\sigma_i^2$.}
We use $\sigma_i^2 = \mathbf{k}_i^\top H^{-1}\mathbf{k}_i$ as the working variance of $\hat\alpha(\loc_i)$. This is the inverse penalized-information (Laplace) variance, not the M-estimator sandwich covariance; since $H = K^\top \mathrm{diag}(v) K + 2\lambda K \succeq K^\top \mathrm{diag}(v) K$, it \emph{upper-bounds} the sandwich variance under the information equality. The induced intervals are therefore conservative, which is the safe direction for the stopping rule of App.~\ref{supp:stopping}: it can only delay a decision, never trigger one prematurely.

\subsection{Stopping rule and decision consistency}
\label{supp:stopping}

\begin{proposition}[Stopping Rule Validity]
\label{prop:stopping_rule}
 Let $\hat{\mathbf{c}}_t$ denote the MLE of the spatial coefficients at time $t$,
and define:
$$\alpha_{i,t} = (K\hat{\mathbf{c}}_t)_i, \quad
\text{lfdr}_{i,t} = \frac{\alpha_{i,t} P_{0,i,t}}{\alpha_{i,t} P_{0,i,t} + (1-\alpha_{i,t})P_{1,i,t}}$$

Let $[\text{lfdr}_i^{\text{low}}(t), \text{lfdr}_i^{\text{high}}(t)]$ be the 
union-bound $(1-\gamma)$ confidence interval for $\text{lfdr}_i$ of 
Section~\ref{sec:per_hypothesis_ci}.

\textbf{For the following} stopping rule, with threshold $\hat\tau_q$ (the level of Section~\ref{sec:per_hypothesis_ci}), fixed across the $K$ rounds---computed once and held, not re-estimated per round:
At time $t$, we declare hypothesis $i$ as:
\begin{itemize}
\item \textbf{Definitively rejected} if $\text{lfdr}_i^{\text{high}}(t) < \hat\tau_q$
\item \textbf{Definitively accepted} if $\text{lfdr}_i^{\text{low}}(t) > \hat\tau_q$
\item \textbf{Ambiguous} otherwise
\end{itemize}

Run the policy for $K$ allocation rounds, evaluating the joint lfdr CI at each round.  \textbf{With probability at least $1-K\gamma$ asymptotically} (union bound over the $K$ rounds), $\text{lfdr}_i$ lies in its joint CI at every round simultaneously; on that event:
\begin{enumerate}
\item every definitive declaration is correct: whenever the CI lies entirely on one side of $\hat\tau_q$, the true $\text{lfdr}_i$ lies on that side, so the reject/accept decision agrees with the truth;
\item a hypothesis declared definitively rejected at one round is never declared definitively accepted at another (a definitive label is never reversed to its opposite).
\end{enumerate}
\end{proposition}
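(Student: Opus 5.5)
The proof reduces to a single high-probability event, after which both claims are deterministic consequences. For each round $t \in \{1,\dots,K\}$ let
\[
E_t = \bigl\{\lfdr_i \in [\lfdr_i^{\text{low}}(t), \lfdr_i^{\text{high}}(t)]\bigr\}.
\]
The joint coverage statement of Section~\ref{sec:per_hypothesis_ci} supplies the per-round bound $\Pr(E_t^c) \le \gamma$ asymptotically. That statement is itself a union bound: the Laplace interval $\hat\alpha(\loc_i) \pm z_{1-\gamma/4}\sigma_i$ has coverage $1-\gamma/2$, the Beta posterior quantile interval for $p_i^*$ has coverage $1-\gamma/2$, and monotonicity of the lfdr (Lemma~\ref{lem:lfdr_monotone}) sends the product rectangle into the corner interval. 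Applying a second union bound over the $K$ rounds gives
\[
\Pr\Bigl(\bigcap_t E_t\Bigr) \ge 1 - \sum_t \Pr(E_t^c) \ge 1 - K\gamma.
\]
This step needs no independence across rounds. That matters, because the allocation policy makes later data depend on earlier intervals.

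\textbf{Claim 1.} Work on $\bigcap_t E_t$. If $\lfdr_i^{\text{high}}(t) < \hat\tau_q$ at some round, then $\lfdr_i \le \lfdr_i^{\text{high}}(t) < \hat\tau_q$, so the true lfdr lies on the reject side. The accept case is symmetric. The threshold $\hat\tau_q$ must be fixed across rounds; otherwise ``the correct side'' would itself move. This is why the statement freezes $\hat\tau_q$ after computing it once.

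\textbf{Claim 2.} Suppose $i$ were definitively rejected at round $s$ and definitively accepted at round $t$. On the event, the first gives $\lfdr_i < \hat\tau_q$ and the second gives $\lfdr_i > \hat\tau_q$, which is a contradiction. This requires the target $\lfdr_i$ to be one fixed quantity across rounds: the population lfdr under the true $\alpha^*(\loc_i)$ and the true latent $p_i^*$. It must not be the round-$t$ plug-in $\lfdr_{i,t}$, which moves as data arrive. I will state this interpretation explicitly.

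\textbf{Main obstacle.} The delicate step is justifying the per-round asymptotic coverage $\Pr(E_t^c)\le\gamma$ under adaptive sampling. The Laplace variance $\sigma_i^2 = \mathbf{k}_i^\top \hat H^{-1}\mathbf{k}_i$ is computed at a data-dependent allocation, and the Beta posterior for $p_i^*$ is a Bayesian credible interval rather than a frequentist one. I would handle this by conditioning. Given the allocation history, the new null draws at $i$ are still i.i.d. Bernoulli$(p_i^*)$, so the Binomial likelihood and the Beta posterior are unaffected by the stopping or allocation decisions (the likelihood principle). Bernstein--von Mises then gives asymptotic frequentist coverage of the quantile interval. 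For $\hat\alpha$, the conservativeness noted in App.~\ref{supp:hat_H}, namely that $\hat H^{-1}$ upper-bounds the sandwich variance, gives asymptotic coverage at least the nominal level under Assumptions~\ref{assn:kernel}--\ref{assn:reg}. I will flag that this holds only asymptotically, since the statement claims no more.
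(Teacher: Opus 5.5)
Your proposal is correct and follows the paper's proof. You get per-round $1-\gamma$ coverage by union-bounding the Laplace $\alpha$-interval with the Beta $p$-interval and mapping through the monotone lfdr, take a second union bound over the $K$ non-nested, data-dependent rounds, and obtain both claims deterministically on the intersection event; fixing the target as $\lfdr(\alpha^*(\loc_i),p_i^*)$ is a useful clarification the paper leaves implicit.

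One caution concerns your optional ``main obstacle'' paragraph, which the paper does not attempt. The likelihood principle makes the Beta posterior invariant to adaptive allocation, but it does not preserve frequentist coverage under data-dependent $m_i$. Likewise, a variance upper bound does not control the shrinkage bias of the penalized $\hat\alpha$. That paragraph is therefore only heuristic support for the per-round coverage, which the proposition takes as given from Section~\ref{sec:per_hypothesis_ci}.
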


\begin{proof}
At each round the joint lfdr interval of Section~\ref{sec:per_hypothesis_ci} is formed by union-bounding the Gaussian (Laplace) $\alpha$-interval $\hat\alpha(\loc_i)\pm z_{1-\gamma/4}\,\sigma_i$ (asymptotic coverage $1-\gamma/2$ for $\alpha(\loc_i)$) with the Beta--Binomial $p$-interval (coverage $1-\gamma/2$), mapped through the lfdr by the monotonicity of Lemma~\ref{lem:lfdr_monotone} (corner evaluation). Hence at each round $t$,
$$P\big(\text{lfdr}_i \in [\text{lfdr}_i^{\text{low}}(t), \text{lfdr}_i^{\text{high}}(t)]\big) \ge 1-\gamma.$$

These intervals are \emph{not} nested across rounds: the $\alpha$-interval is recentred at the updated $\hat{\mathbf{c}}_t$ and the $p$-interval at $(k_i+1)/(m_i+1)$, both of which move as data accrue, so a later interval need not lie inside an earlier one. We therefore control the trajectory by a union bound rather than by nesting. Over $K$ rounds,
$$P\Big(\bigcap_{t=1}^{K}\big\{\text{lfdr}_i \in [\text{lfdr}_i^{\text{low}}(t), \text{lfdr}_i^{\text{high}}(t)]\big\}\Big) \ge 1-K\gamma.$$

On this event $\text{lfdr}_i$ lies in the CI at every round. Consequence~1 is immediate: a CI entirely below (resp.\ above) $\hat\tau_q$ forces $\text{lfdr}_i$ below (resp.\ above) $\hat\tau_q$. Consequence~2 follows because a definitive rejection at round $t$ and a definitive acceptance at round $t'$ would require $\text{lfdr}_i < \hat\tau_q$ and $\text{lfdr}_i > \hat\tau_q$ simultaneously, contradicting joint coverage. Running each round at level $\gamma/K$ restores overall confidence $1-\gamma$ across the run.
\end{proof}

\subsection{Monotonicity of the Local False Discovery Rate}
\label{app:lfdr_monotonicity} \label{supp:lfdr_monotone}

The combined confidence interval construction relies on the monotonicity of $\text{lfdr}(\alpha, p)$ in both arguments, enabling efficient computation via corner evaluation.

\begin{lemma}[Monotonicity of lfdr]
\label{lem:lfdr_monotone}
Let $f_0(p) = 1$ (uniform null) and $f_1(p) = p^{a-1}(1-p)^{b-1}/B(a,b)$ with $a \in (0,1)$ and $b > 1$ (the Beta$(a,b)$ alternative of Section~\ref{sec:method_nonspatial}). The local false discovery rate
\begin{equation}
\text{lfdr}(\alpha, p) = \frac{\alpha f_0(p)}{\alpha f_0(p) + (1-\alpha) f_1(p)} = \frac{\alpha}{\alpha + (1-\alpha) f_1(p)}
\end{equation}
is strictly increasing in both $\alpha$ and $p$ for $\alpha \in (0,1)$ and $p \in (0,1)$.
\end{lemma}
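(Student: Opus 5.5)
The plan is to differentiate the closed form directly. With $f_0\equiv 1$ we have $\lfdr(\alpha,p)=\alpha/(\alpha+(1-\alpha)f_1(p))$. Set $g(\alpha,y)=\alpha/(\alpha+(1-\alpha)y)$ for $y>0$. Then
\[
\lfdr(\alpha,p)=g\bigl(\alpha,f_1(p)\bigr),
\]
and the claim splits into two one-variable facts: $g$ is strictly increasing in $\alpha$, and $g$ is strictly decreasing in $y$ while $f_1$ is strictly decreasing in $p$. Composing the last two gives strict increase in $p$.

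\textbf{Step 1 (monotonicity in $\alpha$).} For fixed $y>0$,
\[
\partial_\alpha g=\frac{(\alpha+(1-\alpha)y)-\alpha(1-y)}{(\alpha+(1-\alpha)y)^2}=\frac{y}{(\alpha+(1-\alpha)y)^2}.
\]
This is strictly positive because $y=f_1(p)>0$ for $p\in(0,1)$. The denominator is a convex combination of $1$ and $y>0$, so it does not vanish. A cleaner route is the odds form $\lfdr/(1-\lfdr)=\frac{\alpha}{1-\alpha}\cdot\frac{1}{f_1(p)}$: this is strictly increasing in $\alpha$, and $x\mapsto x/(1+x)$ is strictly increasing.

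\textbf{Step 2 (monotonicity of $f_1$).} Using the odds form, strict increase in $p$ reduces to $f_1$ being strictly decreasing on $(0,1)$. Take logarithms:
\[
\tfrac{d}{dp}\log f_1(p)=\frac{a-1}{p}-\frac{b-1}{1-p}.
\]
Both terms are strictly negative, since $a<1$ gives $(a-1)/p<0$ and $b>1$ gives $-(b-1)/(1-p)<0$. Hence $f_1'(p)<0$ throughout $(0,1)$. The odds $\frac{\alpha}{(1-\alpha)f_1(p)}$ are therefore strictly increasing in $p$, and so is $\lfdr$.

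\textbf{Main obstacle.} The only genuinely needed hypotheses are the sign conditions $a<1$ and $b>1$, which make the two log-derivative terms cooperate, and the restriction to the open intervals, which keeps $f_1$ finite and positive so the odds transformation is well defined. I expect no real difficulty. Care is needed only at the endpoints, where $f_1\to\infty$ as $p\to0$ and $f_1\to0$ as $p\to1$ (with $b>1$). These endpoints are excluded, which is why the statement is restricted to $p\in(0,1)$.
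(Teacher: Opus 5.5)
Your proposal is correct and follows essentially the same route as the paper: compute $\partial_\alpha \lfdr = f_1(p)/(\alpha+(1-\alpha)f_1(p))^2>0$, and obtain monotonicity in $p$ from $f_1'(p)<0$ via the log-derivative $\frac{a-1}{p}-\frac{b-1}{1-p}<0$. The paper differentiates in $p$ directly where you pass through the odds form $\frac{\alpha}{(1-\alpha)f_1(p)}$, but the key step is identical.
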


\begin{proof}
We verify the partial derivatives are positive.

\paragraph{Monotonicity in $\alpha$.}
Taking the partial derivative with respect to $\alpha$:
\begin{align}
\frac{\partial \text{lfdr}}{\partial \alpha} 
&= \frac{[\alpha + (1-\alpha)f_1(p)] - \alpha[1 - f_1(p)]}{[\alpha + (1-\alpha)f_1(p)]^2} \\
&= \frac{\alpha + (1-\alpha)f_1(p) - \alpha + \alpha f_1(p)}{[\alpha + (1-\alpha)f_1(p)]^2} \\
&= \frac{f_1(p)}{[\alpha + (1-\alpha)f_1(p)]^2}
\end{align}

Since $f_1(p) > 0$ for $p \in (0,1)$, we have $\frac{\partial \text{lfdr}}{\partial \alpha} > 0$.

\paragraph{Monotonicity in $p$.}
Taking the partial derivative with respect to $p$:
\begin{align}
\frac{\partial \text{lfdr}}{\partial p} 
&= \frac{-\alpha (1-\alpha) f_1'(p)}{[\alpha + (1-\alpha)f_1(p)]^2}
\end{align}

For the Beta$(a,b)$ alternative density,
\begin{equation}
\frac{f_1'(p)}{f_1(p)} = \frac{a-1}{p} - \frac{b-1}{1-p}.
\end{equation}
With $a \in (0,1)$ and $b > 1$ both terms on the right are negative on $(0,1)$, so $f_1'(p) < 0$. Therefore
\begin{equation}
\frac{\partial \text{lfdr}}{\partial p} = \frac{-\alpha(1-\alpha)\, f_1'(p)}{[\alpha + (1-\alpha)f_1(p)]^2} > 0.
\end{equation}

Both partial derivatives are strictly positive, establishing strict monotonicity.
\end{proof}

\begin{corollary}[Corner Evaluation]
\label{cor:corner_eval}
For any rectangle $[\alpha^{\text{low}}, \alpha^{\text{high}}] \times [p^{\text{low}}, p^{\text{high}}]$:
\begin{align}
\min_{(\alpha, p) \in \text{rect}} \text{lfdr}(\alpha, p) &= \text{lfdr}(\alpha^{\text{low}}, p^{\text{low}}) \\
\max_{(\alpha, p) \in \text{rect}} \text{lfdr}(\alpha, p) &= \text{lfdr}(\alpha^{\text{high}}, p^{\text{high}})
\end{align}
\end{corollary}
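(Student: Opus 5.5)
The plan is to derive Corollary~\ref{cor:corner_eval} directly from the strict monotonicity in Lemma~\ref{lem:lfdr_monotone}, via a two-step chain of inequalities through the corner points. Fix an arbitrary point $(\alpha,p)$ in the rectangle, so that $\alpha^{\text{low}}\le\alpha\le\alpha^{\text{high}}$ and $p^{\text{low}}\le p\le p^{\text{high}}$.

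\textbf{Lower bound.} Since $\lfdr$ is increasing in its first argument at fixed $p$, we have $\lfdr(\alpha,p)\ge\lfdr(\alpha^{\text{low}},p)$. Since $\lfdr$ is increasing in its second argument at fixed $\alpha^{\text{low}}$, we then have $\lfdr(\alpha^{\text{low}},p)\ge\lfdr(\alpha^{\text{low}},p^{\text{low}})$.

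\textbf{Upper bound.} The same two steps with the inequalities reversed give
\[
\lfdr(\alpha,p)\le\lfdr(\alpha^{\text{high}},p)\le\lfdr(\alpha^{\text{high}},p^{\text{high}}).
\]

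Both corners belong to the rectangle. The lower bound is therefore attained at $(\alpha^{\text{low}},p^{\text{low}})$ and the upper bound at $(\alpha^{\text{high}},p^{\text{high}})$, so the minimum and maximum exist and equal the stated corner values. Strictness of the monotonicity is not needed for this; it only shows that these corners are the unique extremizers when the rectangle is nondegenerate.

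The only delicate step is the domain. Lemma~\ref{lem:lfdr_monotone} is stated on the open set $(0,1)^2$, but the rectangle endpoints may touch the boundary. On the $\alpha$ side, the Gaussian interval may even exit $[0,1]$. On the $p$ side, the clip $[10^{-10},1-10^{-10}]$ already keeps $p$ interior.

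To handle the $\alpha$ side, I would note that $\lfdr(\alpha,p)=\alpha/(\alpha+(1-\alpha)f_1(p))$ is continuous on $[0,1]\times(0,1)$, since the denominator is at least $\min(1,f_1(p))>0$. Monotonicity on the interior therefore extends, non-strictly, to the closed $\alpha$-range by taking limits. I would also state explicitly that the $\alpha$-interval is intersected with $[0,1]$, or equivalently is understood as clipped there, before the corners are evaluated. With that convention, the two-line chain argument applies verbatim.
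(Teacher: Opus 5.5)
Your argument is correct and is exactly the derivation the paper intends: the corollary is stated, without separate proof, as an immediate consequence of Lemma~\ref{lem:lfdr_monotone}, and your two-step chain through $(\alpha^{\text{low}},p)$ and $(\alpha^{\text{high}},p)$ simply makes that consequence explicit. Your domain caveat is a sound addition that the paper leaves implicit, and it is needed rather than cosmetic: for $\alpha\notin[0,1]$ the sign of $\partial\lfdr/\partial p=-\alpha(1-\alpha)f_1'(p)/[\alpha+(1-\alpha)f_1(p)]^2$ flips and the denominator can vanish, so the corner formula fails unless the Gaussian $\alpha$-interval is clipped to $[0,1]$.
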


This reduces the computation of lfdr confidence bounds from a 2D optimization to two function evaluations.

\subsection{Efficient Computation of Width Bounds}
\label{app:width_comp}

\paragraph{Conjugate Gradient for Quadratic Forms.}
The confidence interval width requires computing $q_i = \mathbf{k}_i^\top H^{-1} \mathbf{k}_i$ 
for each hypothesis $i$, where $\mathbf{k}_i$ is the $i$-th column of the kernel matrix $K$. 
Rather than explicitly forming and inverting $H$, we use Conjugate Gradient (CG) to solve 
$H \mathbf{x} = \mathbf{k}_i$, then compute $q_i = \mathbf{k}_i^\top \mathbf{x}$.

This approach is numerically stable because: (i) the Hessian 
$H = K^\top \text{diag}(v) K + 2\lambda K$ is symmetric positive definite, 
(ii) the regularization term $2\lambda K$ ensures good conditioning with 
$\kappa(H) = O(1/\lambda)$, and (iii) CG converges in $O(\sqrt{\kappa})$ iterations 
for well-conditioned systems.

\paragraph{Matrix-Vector Product.}
Computing $(K H^{-1} K^T)_{ii}$ for all $i$ requires $N$ CG solves. Each solve 
involves matrix-vector products with $H$, which decomposes as:
\begin{equation}
H \mathbf{x} = K^\top (\text{diag}(v) (K \mathbf{x})) + 2\lambda K \mathbf{x}
\end{equation}

Each matrix-vector product costs $O(N^2)$ (dominated by $K \mathbf{x}$), and CG 
typically converges in $O(\sqrt{1/\lambda})$ iterations. Total cost for all $N$ 
hypotheses: $O(N^3 / \sqrt{\lambda})$ in the worst case, though early termination 
and warm-starting reduce this substantially in practice.

\paragraph{Sparse Kernel Optimization.}
For localized kernels (e.g., compact support or fast decay), we can exploit 
sparsity:
\begin{itemize}
\item Truncate kernel evaluations below threshold $\epsilon$ (e.g., $10^{-6}$)
\item Use sparse matrix storage for $K$
\item Cost reduces to $O(Nd)$ where $d$ is average degree
\end{itemize}

For Matérn kernels with range parameter $\ell$, approximately $d = O(1)$ 
hypotheses contribute significantly to each row.

\subsection{Variance Decomposition: Local versus Spatial Components}
\label{app:variance_decomp}

The allocation policy decomposes uncertainty at each hypothesis into local and spatial components. This decomposition diagnoses whether progress is bottlenecked by the hypothesis itself or by its neighbors, guiding the helper sampling mechanism.

\begin{proposition}[Variance Decomposition]
\label{prop:var_decomp}
Let $H = K^\top \text{diag}(v) K + 2\lambda K$ be the Hessian of the penalized likelihood, where $v_i = (P_{0,i} - P_{1,i})^2 / f_i^2$ is the local Fisher information.The asymptotic variance of $\alpha_i = (K\mathbf{c})_i$ can be decomposed as:
\begin{equation}
\text{Var}[\alpha_i] = (K H^{-1} K^\top)_{ii} = \underbrace{K_{ii}^2 H^{-1}_{ii}}_{\text{local}} + \underbrace{\sum_{(j,k) \neq (i,i)} K_{ij} K_{ik} H^{-1}_{jk}}_{\text{spatial}}
\end{equation}
\end{proposition}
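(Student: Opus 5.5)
The decomposition is a purely algebraic identity, so the plan is to establish the asymptotic variance formula first and then expand the quadratic form entrywise. For the first part I would invoke the Laplace/inverse-Hessian variance already adopted in App.~\ref{supp:hat_H}. Near the minimizer $\hat{\mathbf c}$, the coefficient vector is approximately Gaussian with covariance $H^{-1}$, where $H = K^\top \mathrm{diag}(v) K + 2\lambda K$ is the Hessian derived there. I would take the boundary penalty to be inactive at the optimum, so its Hessian block vanishes; this holds whenever $\hat\alpha(\loc_i)\in[0,1]$ for all $i$. The map $\mathbf c \mapsto \boldsymbol\alpha = K\mathbf c$ is linear, so the delta method is exact at this order and gives $\mathrm{Cov}[\boldsymbol\alpha] = K H^{-1} K^\top$. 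Hence $\mathrm{Var}[\alpha_i] = (K H^{-1} K^\top)_{ii} = \mathbf k_i^\top H^{-1} \mathbf k_i$, which matches the working variance $\sigma_i^2$ used in Section~\ref{sec:per_hypothesis_ci}. Here $H$ is invertible because $H \succeq 2\lambda K \succ 0$, by Assumption~\ref{assn:reg} together with $v_i \ge 0$.

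Next I would write the quadratic form as a double sum, $\mathbf k_i^\top H^{-1}\mathbf k_i = \sum_{j,k} K_{ij} K_{ik} (H^{-1})_{jk}$. I would then isolate the single diagonal index pair $(j,k)=(i,i)$, which gives $K_{ii}^2 (H^{-1})_{ii}$. All remaining index pairs go into the spatial term. This split is exhaustive and disjoint, so the identity follows immediately. Under the normalization $K_{ii}=1$ of Assumption~\ref{assn:kernel}, the local term reduces to $(H^{-1})_{ii}$.

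The main obstacle is not the algebra but justifying the labels and the word ``asymptotic.'' First, the Laplace covariance is a working approximation rather than the sandwich covariance. As noted in App.~\ref{supp:hat_H}, under the information equality it upper-bounds the sandwich covariance, so I would state the proposition as an identity for this working variance. Second, the ``local'' term is not a function of $v_i$ alone, because $(H^{-1})_{ii}$ depends on every $v_j$ through the matrix inverse. I would only argue its interpretation heuristically: as $v_i \to \infty$, the local channel shrinks, while the spatial part stays bounded below by the regularization contribution tied to $2\lambda K$ (the floor of Section~\ref{sec:variance_floors}). Third, the spatial sum can have negative cross terms. I would remark that the decomposition is additive but the spatial component need not be nonnegative, and that it is used as a diagnostic ratio rather than as a variance in its own right.
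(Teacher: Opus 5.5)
Your proof of the identity is correct and follows the paper's route: the Laplace covariance $H^{-1}$ for $\mathbf c$, pushed forward through the linear map to $KH^{-1}K^\top$, then the $(i,i)$ entry expanded as $\sum_{j,k}K_{ij}K_{ik}H^{-1}_{jk}$ with the $(j,k)=(i,i)$ term split off (your remarks on invertibility and the inactive boundary penalty are sensible additions). Your heuristic aside, which the proof does not need, has the sign wrong: with $K_{ii}=1$ the spatial term equals $\tfrac{1}{2\lambda}\bigl(1-(K^{-1})_{ii}\bigr)\le 0$ at $v=0$, and as $v_i\to\infty$ the total $\bigl[(\mathrm{diag}(v)+2\lambda K^{-1})^{-1}\bigr]_{ii}\to 0$ while $K_{ii}^2H^{-1}_{ii}$ stays bounded away from zero for any non-isolated $i$, so the spatial part is driven negative rather than held up by a regularization floor.
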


\begin{proof}
By definition of the asymptotic covariance under the Laplace approximation:
\begin{equation}
\text{Cov}[\mathbf{c}] = H^{-1}, \quad \text{Cov}[\boldsymbol{\alpha}] = K H^{-1} K^\top
\end{equation}

The variance of $\alpha_i$ is the $(i,i)$ diagonal element:
\begin{align}
\text{Var}[\alpha_i] 
&= (K H^{-1} K^\top)_{ii} \\
&= \sum_{j=1}^N \sum_{k=1}^N K_{ij} H^{-1}_{jk} K_{ki} \\
&= \sum_{j=1}^N \sum_{k=1}^N K_{ij} K_{ik} H^{-1}_{jk}
\end{align}

where we used $K_{ki} = K_{ik}$ by symmetry of the kernel matrix. Separating the $j=i$ and $k=i$ terms:
\begin{align}
\text{Var}[\alpha_i] 
&= K_{ii}^2 H^{-1}_{ii} + \sum_{j \neq i} K_{ij} K_{ii} H^{-1}_{ji} + \sum_{k \neq i} K_{ii} K_{ik} H^{-1}_{ik} \\
&\quad + \sum_{j \neq i} \sum_{k \neq i} K_{ij} K_{ik} H^{-1}_{jk}
\end{align}

For normalized kernels where $K_{ii} = 1$, the middle terms involve cross-covariances between hypothesis $i$ and its neighbors. The proposition's decomposition is exact: the local term is $K_{ii}^2 H^{-1}_{ii}$ and the spatial term collects every remaining entry, $\sum_{(j,k) \neq (i,i)} K_{ij} K_{ik} H^{-1}_{jk}$, i.e.\ the double sum plus the two cross sums $2\sum_{j \neq i} K_{ij} H^{-1}_{ji}$ (equal by symmetry of $K$ and $H^{-1}$, with $K_{ii}=1$). In the regularization-dominated regime (see Section~\ref{app:reg_dominance}) the cross-covariances are small relative to the diagonal contributions, so the spatial term is dominated by the double sum:
\begin{equation}
\sum_{(j,k) \neq (i,i)} K_{ij} K_{ik} H^{-1}_{jk} \;\approx\; \sum_{j \neq i} \sum_{k \neq i} K_{ij} K_{ik} H^{-1}_{jk}.
\end{equation}

The exact decomposition follows by defining the local term as $K_{ii}^2 H^{-1}_{ii}$ and absorbing all remaining terms into the spatial component.

\end{proof}

\paragraph{Interpretation.}
The spatial variance fraction
\begin{equation}
r_i
\;\coloneqq\;
1 - \frac{K_{ii}^2 H^{-1}_{ii}}{(K H^{-1} K^\top)_{ii}}.
\end{equation}
quantifies how much of the uncertainty at hypothesis $i$ originates from neighbors versus local data. When $r_i \approx 0$, the bottleneck is local: sampling hypothesis $i$ directly will reduce its uncertainty. When $r_i \approx 1$, the bottleneck is spatial: the hypothesis has sufficient local information, but uncertainty propagates from poorly-characterized neighbors. In this case, sampling $i$ yields diminishing returns, and the allocation policy should target the neighbors instead.

\subsection{Regularization Dominance for Isolated Hypotheses}
\label{app:reg_dominance} \label{supp:reg_dominance}

The Hessian-based variance $(K H^{-1} K^\top)_{ii}$ incorporates both local Fisher information and spatial regularization. However, for isolated hypotheses with weak kernel connectivity, the regularization term dominates, causing the variance to become unresponsive to local data accumulation. This section formalizes this phenomenon and motivates the Hessian-Fisher blending mechanism.

\paragraph{Hessian Structure.}
The Hessian of the penalized log-likelihood is:
\begin{equation}
H = K^\top \text{diag}(v) K + 2\lambda K
\end{equation}

where $v_i = (P_{0,i} - P_{1,i})^2 / f_i^2$ is the local Fisher information at hypothesis $i$. The first term captures data-driven curvature; the second is the regularization penalty enforcing spatial smoothness.

\paragraph{Isolated Hypothesis Regime.}
Consider a hypothesis $i$ with weak connectivity: $K_{ij} \approx 0$ for all $j \neq i$, but $K_{ii} = 1$ (normalized kernel). The kernel matrix has the approximate block structure:
\begin{equation}
K \approx \begin{pmatrix} 1 & \mathbf{0}^\top \\ \mathbf{0} & K_{-i} \end{pmatrix}
\end{equation}

where $K_{-i}$ is the kernel matrix for all other hypotheses. In this limit, the Hessian contribution from hypothesis $i$ is:
\begin{equation}
[K^\top \text{diag}(v) K]_{ii} = v_i K_{ii}^2 = v_i
\end{equation}

and the regularization contribution is:
\begin{equation}
[2\lambda K]_{ii} = 2\lambda K_{ii} = 2\lambda
\end{equation}

\paragraph{Variance Floor.}\label{supp:variance_floor}
The variance of $\alpha_i$ becomes:
\begin{equation}
\text{Var}[\alpha_i] = (K H^{-1} K^\top)_{ii} \approx K_{ii}^2 H_{ii}^{-1} = \frac{1}{v_i + 2\lambda}
\end{equation}

As local data accumulates, $v_i$ increases (more Fisher information). However, the variance is bounded below:
\begin{equation}
\text{Var}[\alpha_i] \geq \frac{1}{v_\infty + 2\lambda} \approx \frac{1}{2\lambda}
\end{equation}

where $v_\infty = \lim_{m \to \infty} v_i$ is the limiting Fisher information (which is finite; see Section~\ref{app:hopeless}).

\paragraph{The Problem.}
For well-connected hypotheses, the spatial term in the Hessian couples them to neighbors, allowing information to flow and uncertainty to shrink as the entire neighborhood is sampled. For isolated hypotheses, this coupling is absent. The regularization term $2\lambda K$ dominates the diagonal, creating a variance floor of approximately $1/(2\lambda)$ that cannot be reduced regardless of how much local data is collected.

This is incorrect behavior: an isolated hypothesis should still benefit from local sampling. The Fisher information $v_i$ correctly captures this benefit, but it is overwhelmed by the regularization term in the Hessian.

\paragraph{Solution: Hessian-Fisher Blending.}
To address this, we blend the Hessian-based variance with the pure Fisher variance $1/v_i$ using an isolation-dependent weight. The blending mechanism, detailed in Section~\ref{app:blending}, ensures that:
\begin{itemize}
\item Well-connected hypotheses use primarily Hessian-based variance (spatial coupling is informative)
\item Isolated hypotheses use primarily Fisher-based variance (local data is all we have)
\item The transition is smooth, based on kernel connectivity
\end{itemize}

\subsection{Hessian-Fisher Blending Derivation}
\label{app:blending}

Building on the regularization dominance problem identified in Section~\ref{app:reg_dominance}, we derive the blending mechanism that combines Hessian-based and Fisher-based variance estimates.

\paragraph{Design Criteria.}
The blended variance should satisfy:
\begin{enumerate}
\item \textbf{Early samples:} Use Hessian variance (trust the spatial prior when local data is sparse)
\item \textbf{Well-connected hypotheses:} Hessian variance remains relevant as it captures spatial information flow
\item \textbf{Isolated hypotheses:} Transition to Fisher variance as data accumulates (avoid regularization floor)
\item \textbf{Smooth transition:} No discontinuities as sample size or connectivity varies
\end{enumerate}

\paragraph{Isolation Score.}
We quantify isolation via kernel connectivity. Define the connectivity score:
\begin{equation}
c_i = \sum_{j \neq i} K_{ij}
\end{equation}

This measures the total kernel weight connecting hypothesis $i$ to all others. The normalized isolation score is:
\begin{equation}
\text{isolation}_i = 1 - \frac{c_i}{\max_k c_k}
\end{equation}

A hypothesis with $\text{isolation}_i \approx 0$ is well-connected (strong kernel coupling to neighbors). A hypothesis with $\text{isolation}_i \approx 1$ is isolated (minimal kernel coupling).

\paragraph{Blending Weight.}
The blending weight $w_i \in [0,1]$ determines the contribution of Hessian-based variance:
\begin{equation}
w_i = \begin{cases}
1 & \text{if } m_i < m_{\text{start}} \\[6pt]
\displaystyle\frac{m_0}{m_0 + m_i - m_{\text{start}}} \cdot (1 - \text{isolation}_i) & \text{otherwise}
\end{cases}
\end{equation}

where:
\begin{itemize}
\item $m_{\text{start}}$: Minimum samples before blending begins (ensures stable Fisher estimates)
\item $m_0$: Decay rate parameter (controls transition speed)
\end{itemize}

\paragraph{Properties of the Blending Weight.}

\textbf{Early samples ($m_i < m_{\text{start}}$):} $w_i = 1$, so we use pure Hessian variance. This is appropriate because Fisher information estimates are unreliable with few samples, and the spatial prior provides useful regularization.

\textbf{Well-connected hypotheses ($\text{isolation}_i \approx 0$):} The weight decays as:
\begin{equation}
w_i \approx \frac{m_0}{m_0 + m_i - m_{\text{start}}}
\end{equation}

This decreases with $m_i$ but remains positive, allowing gradual incorporation of Fisher information while retaining spatial coupling benefits.

\textbf{Isolated hypotheses ($\text{isolation}_i \approx 1$):} The weight becomes:
\begin{equation}
w_i \approx 0
\end{equation}

regardless of $m_i$ (once past $m_{\text{start}}$). This yields pure Fisher variance, correctly capturing that isolated hypotheses have no spatial information to exploit.

\paragraph{Blended Variance.}
The final variance estimate is:
\begin{equation}
\text{Var}_{\text{blend}}[\alpha_i] = w_i \cdot (K H^{-1} K^\top)_{ii} + (1 - w_i) \cdot \frac{1}{v_i}
\end{equation}

This is a convex combination of the two variance sources, weighted by connectivity and sample size.

\paragraph{Confidence Interval Width.}
The blended $\alpha$-interval is the Gaussian (Laplace) interval of
Section~\ref{sec:per_hypothesis_ci}, with the blended variance replacing
$\sigma_i^2 = \mathbf{k}_i^\top \hat H^{-1}\mathbf{k}_i$:
\begin{equation}
[\alpha_i^{\text{low}}, \alpha_i^{\text{high}}]
  = \hat\alpha(\loc_i) \pm z_{1-\gamma/4}\,\sqrt{\text{Var}_{\text{blend}}[\alpha_i]},
\qquad
W_i^{\text{blend}} = 2\, z_{1-\gamma/4}\,\sqrt{\text{Var}_{\text{blend}}[\alpha_i]}.
\end{equation}
This width correctly shrinks with local sampling for isolated hypotheses, while maintaining spatial coherence for well-connected ones.

\paragraph{Parameter Selection.}
In practice, we use:
\begin{itemize}
\item $m_{\text{start}} = 300$: Ensures Fisher information is estimated from sufficient data
\item $m_0 = 100$: Provides gradual transition over approximately 100-500 additional samples
\end{itemize}

These values can be tuned based on the specific application and kernel characteristics.

\subsection{Hopeless Detection: Terminal Bounds and Uninformative P-values}
\label{app:hopeless}

Some hypotheses are fundamentally unresolvable: regardless of sampling effort, their confidence intervals will never exclude the decision threshold $\hat\tau_q$. This section derives the terminal bounds that identify such ``hopeless'' cases and characterizes the uninformative p-value region where Fisher information vanishes.

\paragraph{Terminal Variance at $m \to \infty$.}
As the sample size $m_i \to \infty$, two things happen:
\begin{enumerate}
\item P-value uncertainty vanishes: $\text{Var}[p_i^*] = O(1/m_i) \to 0$
\item Fisher information saturates: $v_i \to v_\infty(\hat{p}_i)$
\end{enumerate}

For isolated hypotheses with blending weight $w_i \to 0$, the terminal variance is:
\begin{equation}
\text{Var}_\infty[\alpha_i] = \frac{1}{v_\infty(\hat{p}_i)}
\end{equation}

\paragraph{Limiting Fisher information and the uninformative p-value.}
As $m_i \to \infty$, the count-space Fisher information $v_i = (P_{0,i} - P_{1,i})^2 / f_i^2$ saturates at a limit $v_\infty(\hat p_i) \ge 0$ that vanishes exactly when the continuous densities coincide, $f_0(\hat p_i) = f_1(\hat p_i)$. Under the model of \S\ref{sec:method_nonspatial} ($f_0 \equiv 1$ on $(0,1)$ and $f_1 = \mathrm{Beta}(a,b)$ strictly decreasing on $(0,1)$ by Lemma~\ref{lem:lfdr_monotone}), this equation has a unique root $\hat p_{\text{uninf}} \in (0,1)$ (no closed form; found by 1D root search given $(a,b)$). Isolated hypotheses with $\hat p_i \approx \hat p_{\text{uninf}}$ have $v_\infty \to 0$, hence unbounded terminal variance and irreducible uncertainty about $\alpha_i$.

\paragraph{Terminal Confidence Bounds.}
For an isolated hypothesis at the limit $m \to \infty$:
\begin{equation}
W_{i,\infty} = 2\, z_{1-\gamma/4}\sqrt{\text{Var}_\infty[\alpha_i]} = \frac{2\, z_{1-\gamma/4}}{\sqrt{v_\infty(\hat{p}_i)}}
\end{equation}

The terminal bounds on $\alpha_i$ are:
\begin{align}
\alpha_{i,\infty}^{\text{low}} &= \hat{\alpha}_i - W_{i,\infty}/2 \\
\alpha_{i,\infty}^{\text{high}} &= \hat{\alpha}_i + W_{i,\infty}/2
\end{align}

Since p-value uncertainty vanishes, the terminal lfdr bounds are:
\begin{align}
\text{lfdr}_{i,\infty}^{\text{low}} &= \text{lfdr}(\alpha_{i,\infty}^{\text{low}}, \hat{p}_i) \\
\text{lfdr}_{i,\infty}^{\text{high}} &= \text{lfdr}(\alpha_{i,\infty}^{\text{high}}, \hat{p}_i)
\end{align}

\paragraph{Hopeless Criterion.}
A hypothesis is deemed hopeless if its terminal confidence interval straddles the decision threshold:
\begin{equation}
\text{lfdr}_{i,\infty}^{\text{low}} < \hat\tau_q \quad \text{and} \quad \text{lfdr}_{i,\infty}^{\text{high}} > \hat\tau_q
\end{equation}

Such hypotheses will never be decidable regardless of sampling effort and should be excluded from allocation.

\paragraph{Rescue by Extreme P-values.}
Importantly, even hypotheses with high isolation can be decidable if their p-value is sufficiently extreme. Consider two cases:

\textbf{Small $\hat{p}_i$ (strong evidence against null):} When $\hat{p}_i \ll \hat{p}_{\text{uninf}}$, the alternative density dominates: $f_1(\hat{p}_i) \gg 1$. This ensures:
\begin{equation}
\text{lfdr}_{i,\infty}^{\text{high}} = \frac{\alpha_{i,\infty}^{\text{high}}}{\alpha_{i,\infty}^{\text{high}} + (1-\alpha_{i,\infty}^{\text{high}})f_1(\hat{p}_i)} < \hat\tau_q
\end{equation}

regardless of $\alpha$ uncertainty, guaranteeing rejection.

\textbf{Large $\hat{p}_i$ (strong evidence for null):} When $\hat{p}_i \gg \hat{p}_{\text{uninf}}$, the alternative density vanishes: $f_1(\hat{p}_i) \ll 1$. This ensures:
\begin{equation}
\text{lfdr}_{i,\infty}^{\text{low}} = \frac{\alpha_{i,\infty}^{\text{low}}}{\alpha_{i,\infty}^{\text{low}} + (1-\alpha_{i,\infty}^{\text{low}})f_1(\hat{p}_i)} > \hat\tau_q
\end{equation}

guaranteeing acceptance.

\paragraph{Characterization of Hopeless Hypotheses.}
Combining these observations:

\begin{proposition}[Hopeless Characterization]
\label{prop:hopeless}
A hypothesis $i$ is hopeless if and only if:
\begin{enumerate}
\item It is isolated: $\text{isolation}_i \approx 1$ (no spatial information flow)
\item Its p-value is uninformative: $\hat{p}_i \approx \hat{p}_{\text{uninf}}$ (null and alternative indistinguishable)
\item Its current $\hat{\alpha}_i$ places the terminal interval across $\hat\tau_q$
\end{enumerate}

Well-connected hypotheses are never hopeless (spatial borrowing provides information). Isolated hypotheses with extreme p-values are decidable (the p-value alone determines the outcome). Only isolated hypotheses with $\hat{p}_i \approx \hat{p}_{\text{uninf}}$ are truly hopeless.
\end{proposition}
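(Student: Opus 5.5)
The plan is to make the informal ``$\approx$'' conditions quantitative, then prove the two directions separately. Everything reduces to the terminal bounds of App.~\ref{app:hopeless}. I fix a tolerance $\epsilon>0$ and read ``isolated'' as $\text{isolation}_i\ge 1-\epsilon$, so that the blending weight satisfies $w_i\to O(\epsilon)$ once $m_i>m_{\text{start}}$. I read ``uninformative'' as $|\hat p_i-\hat p_{\text{uninf}}|\le\epsilon$. The hopeless criterion then becomes the statement that the terminal interval $[\text{lfdr}(\alpha^{\text{low}}_{i,\infty},\hat p_i),\,\text{lfdr}(\alpha^{\text{high}}_{i,\infty},\hat p_i)]$ contains $\hat\tau_q$ in its interior. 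By Lemma~\ref{lem:lfdr_monotone} this interval is the image of the $\alpha$-interval under a strictly increasing map, so it straddles $\hat\tau_q$ iff $\alpha^{\text{low}}_{i,\infty}<\alpha^\star_i<\alpha^{\text{high}}_{i,\infty}$. Here $\alpha^\star_i$ is the unique $\alpha$ with $\text{lfdr}(\alpha,\hat p_i)=\hat\tau_q$, and solving gives
\[
\alpha^\star_i=\frac{\hat\tau_q f_1(\hat p_i)}{1-\hat\tau_q+\hat\tau_q f_1(\hat p_i)}.
\]
So hopelessness is exactly the condition $|\hat\alpha_i-\alpha^\star_i|<W_{i,\infty}/2$. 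This identity is the backbone of both directions.

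\textbf{Sufficiency.} Suppose the hypothesis is isolated and $\hat p_i\approx\hat p_{\text{uninf}}$. The blended variance tends to $1/v_\infty(\hat p_i)$. Since $v_\infty$ is continuous in $\hat p$ and vanishes at $\hat p_{\text{uninf}}$, we get $W_{i,\infty}\to\infty$ as $\epsilon\to 0$. Once the $\alpha$-interval covers $[0,1]$, the lfdr interval covers $(0,1)\ni\hat\tau_q$. For moderate $\epsilon$ the width is finite, and then condition~3, which is precisely $|\hat\alpha_i-\alpha^\star_i|<W_{i,\infty}/2$, is needed and suffices. Note that at $\hat p_{\text{uninf}}$ we have $f_1=1$, so $\text{lfdr}=\alpha$ and $\alpha^\star_i=\hat\tau_q$; condition~3 then reads $|\hat\alpha_i-\hat\tau_q|<W_{i,\infty}/2$.

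\textbf{Necessity, by contrapositive.} I split into two cases.
\begin{enumerate}
\item \emph{Well-connected.} Here $w_i$ stays bounded away from $0$, so the Hessian term $(KH^{-1}K^\top)_{ii}$ remains in the blend, and I will argue that it tends to $0$. As the neighborhood is sampled, the data block $K^\top\mathrm{diag}(v)K$ grows wherever kernel-coupled neighbors have $v_j$ bounded below. By Loewner monotonicity, $(KH^{-1}K^\top)_{ii}\le \mathbf{k}_i^\top(K^\top\mathrm{diag}(v)K)^{-1}\mathbf{k}_i$, and this bound decays. The Fisher part $1/v_i$ is multiplied by $1-w_i$. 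Hence the interval collapses to the point $\text{lfdr}(\hat\alpha_i,\hat p_i)$, which straddles $\hat\tau_q$ only on the null event $\hat\alpha_i=\alpha^\star_i$.
\item \emph{Isolated but $\hat p_i$ bounded away from $\hat p_{\text{uninf}}$.} Uniqueness of the root together with continuity gives $v_\infty\ge c(\epsilon)>0$, so $W_{i,\infty}$ is bounded. For extreme $\hat p_i$, the rescue computation of App.~\ref{app:hopeless} applies: $f_1\gg1$ forces $\alpha^\star_i\to1$, and $f_1\ll1$ forces $\alpha^\star_i\to0$. In either case $\alpha^\star_i$ leaves the bounded interval around $\hat\alpha_i\in[0,1]$, so there is no straddle.
\end{enumerate}

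\textbf{Main obstacle.} The weak point is the intermediate regime: isolated, $\hat p_i$ neither near $\hat p_{\text{uninf}}$ nor extreme, with a finite but nonzero width. There, straddling is governed purely by condition~3, and the claim ``only isolated hypotheses with $\hat p_i\approx\hat p_{\text{uninf}}$'' can fail literally. I would try first to state the result with an explicit threshold: hopeless iff $v_\infty(\hat p_i)<4z^2_{1-\gamma/4}/(\hat\alpha_i-\alpha^\star_i)^2$. Conditions~1--2 are then the regime in which this threshold is met uniformly, and the informal proposition is a corollary. A second point to verify is the well-connected claim that the neighbors' $v_j$ are bounded below, which fails if all neighbors also sit near $\hat p_{\text{uninf}}$. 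I would add this as an explicit nondegeneracy hypothesis.
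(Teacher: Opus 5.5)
Your reduction holds if the terminal $\alpha$-interval is clipped to $[0,1]$; clipping cannot change whether it contains $\alpha^\star_i\in(0,1)$. One correction: your explicit threshold should read $v_\infty(\hat p_i)<z_{1-\gamma/4}^2/(\hat\alpha_i-\alpha^\star_i)^2$, since the half-width is $z_{1-\gamma/4}/\sqrt{v_\infty}$, not $2z_{1-\gamma/4}/\sqrt{v_\infty}$. Your sufficiency direction is the paper's own reasoning. The paper gives no separate proof, only the informal chain before the statement (isolation gives variance $1/v_\infty$, which blows up at $\hat p_{\text{uninf}}$, plus the rescue paragraph), and it simply asserts the well-connected clause. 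You are also right that the intermediate regime defeats the ``only if'' as stated.

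The two necessity arguments you add, however, do not work. \textbf{Well-connected case.} The Loewner bound you invoke is exactly $\mathbf{k}_i^\top(K\,\mathrm{diag}(v)K)^{-1}\mathbf{k}_i=1/v_i$, because $\mathbf{k}_i=Ke_i$. So it involves no neighbour and does not decay. More basically, your sufficiency step relies on each $v_j$ saturating at the finite $v_\infty(\hat p_j)$, so more null draws cannot make the data block grow. Indeed, at fixed $N$,
\[
(KH^{-1}K^\top)_{ii}=[(\mathrm{diag}(v)+2\lambda K^{-1})^{-1}]_{ii}\ge 1/(v_i+2\lambda(K^{-1})_{ii})>0 .
\]
Also, the paper's blending weight satisfies $w_i\le m_0/(m_0+m_i-m_{\text{start}})\to0$ for every hypothesis, not only isolated ones. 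Even if $w_i$ stayed away from $0$, the Fisher part $(1-w_i)/v_i$ would not vanish. So a well-connected hypothesis has the same terminal width $2z_{1-\gamma/4}/\sqrt{v_\infty(\hat p_i)}$ as an isolated one. Near $\hat p_{\text{uninf}}$ it is hopeless by your own sufficiency argument, so condition~1 is not necessary under these definitions.

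\textbf{Extreme-$\hat p$ case.} This also fails, because ``bounded interval'' is not the relevant property. With $f_1=f_1(\hat p_i)$ and $v_\infty$ evaluated at $\hat\alpha_i$, one has exactly
\[
1/\sqrt{v_\infty}=(1-\hat\alpha_i)+1/(f_1-1)\ \text{ for } f_1>1,\qquad 1/\sqrt{v_\infty}=\hat\alpha_i+f_1/(1-f_1)\ \text{ for } f_1<1 .
\]
Once $z_{1-\gamma/4}>1$ (about $1.96$ at the paper's $\gamma=0.1$), $\alpha^{\text{high}}_{i,\infty}>1$ in the first case and $\alpha^{\text{low}}_{i,\infty}<0$ in the second. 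That is exactly the side toward which $\alpha^\star_i$ moves as $\hat p_i$ becomes extreme.

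\emph{With clipping,} the interval therefore eventually contains $\alpha^\star_i$ and the hypothesis straddles, the opposite of the rescue. In fact, for $\hat\tau_q\le 1/2$ one gets $\alpha^{\text{high}}_{i,\infty}>\alpha^\star_i$ for every $\hat p_i$. So no hypothesis with terminal variance $1/v_\infty$ can ever be terminally rejected, and every one with $f_1<1$ is hopeless. \emph{Without clipping,} the endpoint crosses the pole of $\alpha\mapsto\mathrm{lfdr}(\alpha,\hat p_i)$, outside the range of Lemma~\ref{lem:lfdr_monotone}, and your equivalence ``straddle iff $\alpha^{\text{low}}_{i,\infty}<\alpha^\star_i<\alpha^{\text{high}}_{i,\infty}$'' is lost. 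The appendix's rescue paragraph, which you cite, has the same hole: ``regardless of $\alpha$ uncertainty'' needs $1-\alpha^{\text{high}}_{i,\infty}$ bounded below, and the single-hypothesis Wald interval never provides that.

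What is actually provable is your reformulation, which is condition~3 restated, together with the width blow-up at $\hat p_{\text{uninf}}$. Since that blow-up holds irrespective of connectivity, the ``only if'' direction cannot be recovered from the paper's definitions.
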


This characterization guides the allocation policy: hopeless hypotheses are excluded from sampling, conserving budget for hypotheses where progress is possible.

\subsection{Proof of Proposition: Spatial Information Propagation}
\label{app:spatial_prop}
\begin{proposition*}[Spatial Information Propagation]
Let $\Sigma_t = K H_t^{-1} K^T$ denote the asymptotic covariance of $\alpha$ at time $t$. 
Suppose we increase the sample size at location $i$ from $m_{i,t}$ to 
$m_{i,t} + \Delta m_i$, causing the local Fisher information to increase by:
$$\Delta v_i = v_{i,t+1} - v_{i,t} > 0$$

The reduction in variance at location $j$ is given exactly by:
$$\text{Var}[\alpha_j]_t - \text{Var}[\alpha_j]_{t+1} 
= \frac{\Delta v_i}{1 + \Delta v_i [\Sigma_t]_{ii}} \cdot [\Sigma_t]_{ij}^2$$

Under the additional assumptions:
\begin{enumerate}
\item Bounded kernel (Assumption~\ref{assn:kernel}): $\|K\|_{\mathrm{op}} = O(1)$ and $\|\mathbf{k}_i\|_2 = O(1)$ uniformly in $i$; set $\kappa := \max_i \|\mathbf{k}_i\|_2^2 = O(1)$
\item Well-conditioned Hessian: $\mu I \preceq H_t \preceq L I$ for $0 < \mu \leq L$
\item Small update: $\Delta v_i \leq \delta$ where $\delta \cdot L/\mu^2 < 1/2$
\item Regularization-dominated regime: $2\lambda \|K\| \geq \|K^T V_t K\|$
\end{enumerate}

The first-order approximation holds:
$$\left|\text{Var}[\alpha_j]_t - \text{Var}[\alpha_j]_{t+1} 
- \Delta v_i \cdot \left(\frac{K_{ij}}{2\lambda}\right)^2\right| 
\leq C_1 \frac{\delta \kappa}{\lambda^3} + C_2 \frac{\delta^2 \kappa^3}{\mu^3}$$

where $C_1, C_2 = O(1)$ are absolute constants. The leading term is $O(\delta/\lambda^3)$, linear in the update $\delta$ (the regularization-dominated approximation of $\Sigma$); only the Sherman--Morrison remainder is $O(\delta^2)$. In the regularization-dominated regime $\mu = \Theta(\lambda)$, so both denominators may be written as $\lambda^3$.

\textbf{Practical form:} For $\Delta v_i \propto \eta \Delta m_i$ and 
$W_i^2 \sim K_{ii}/(2\lambda)$:
$$\text{Var}[\alpha_j]_t - \text{Var}[\alpha_j]_{t+1} 
= \gamma \cdot K(\text{loc}_i, \text{loc}_j)^2 \cdot W_i \cdot \Delta m_i 
+ O(\Delta m_i^2)$$

where $\gamma = \eta/(4\lambda^2)$ depends on the problem parameters.
\end{proposition*}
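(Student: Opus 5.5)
The exact identity is a rank-one Sherman--Morrison update of the Hessian, and the approximate form perturbs around the regularization-only covariance $K/(2\lambda)$. Throughout, I interpret the sampling step as changing only $v_i$, with every other $v_k$ and the active set of the boundary penalty held fixed. This is the same "frozen curvature" convention the micro-updates of App.~\ref{supp:micro_macro} rely on. Under that convention,
\[
H_{t+1} = H_t + \Delta v_i\, \mathbf{k}_i\mathbf{k}_i^\top, \qquad \mathbf{k}_i = K e_i,
\]
because $K^\top \mathrm{diag}(v) K = \sum_k v_k \mathbf{k}_k\mathbf{k}_k^\top$. Sherman--Morrison then gives
\[
H_{t+1}^{-1} = H_t^{-1} - \frac{\Delta v_i\, H_t^{-1}\mathbf{k}_i\mathbf{k}_i^\top H_t^{-1}}{1+\Delta v_i\,\mathbf{k}_i^\top H_t^{-1}\mathbf{k}_i}.
\]
I would sandwich this between $K$ and $K^\top$ and use two identities. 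First, $K H_t^{-1}\mathbf{k}_i = K H_t^{-1} K e_i = (\Sigma_t)_{\cdot i}$. Second, $\mathbf{k}_i^\top H_t^{-1}\mathbf{k}_i = [\Sigma_t]_{ii}$. Reading off the $(j,j)$ entry yields
\[
\mathrm{Var}[\alpha_j]_t-\mathrm{Var}[\alpha_j]_{t+1}=\frac{\Delta v_i\,[\Sigma_t]_{ij}^2}{1+\Delta v_i[\Sigma_t]_{ii}},
\]
which is the exact statement.

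For the first-order bound I would split the error into two pieces:
\[
\frac{\Delta v_i\,\Sigma_{ij}^2}{1+\Delta v_i\Sigma_{ii}}-\Delta v_i\Big(\frac{K_{ij}}{2\lambda}\Big)^2
= \underbrace{\Delta v_i\Big(\Sigma_{ij}^2-\tfrac{K_{ij}^2}{4\lambda^2}\Big)}_{\text{(I)}}
\;-\;\underbrace{\frac{\Delta v_i^2\,\Sigma_{ii}\Sigma_{ij}^2}{1+\Delta v_i\Sigma_{ii}}}_{\text{(II)}}.
\]

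Term (I) compares $\Sigma_t$ with the regularization-only covariance $K(2\lambda K)^{-1}K=K/(2\lambda)$. Write $A=2\lambda K$ and use the resolvent identity $H^{-1}-A^{-1}=-H^{-1}(KVK)A^{-1}$. This gives
\[
\Sigma_t-\frac{K}{2\lambda}=-\frac{1}{2\lambda}\,K H_t^{-1}K V_t K,
\]
where the trailing $K^{-1}$ from $A^{-1}$ cancels against the outer $K$. Its $(i,j)$ entry is $-\frac{1}{2\lambda}\mathbf{k}_i^\top H_t^{-1}K V_t\mathbf{k}_j$. This is controlled by three ingredients: $\|H_t^{-1}\|\le 1/\mu$, the regularization-dominance condition $\|KVK\|\le 2\lambda\|K\|$, and $\|\mathbf{k}_i\|^2\le\kappa$. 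Together these give a bound of order $\kappa/(\lambda\mu)$ on the perturbation. Combined with $|\Sigma_{ij}|+|K_{ij}|/(2\lambda)=O(\kappa/\mu)$ through $a^2-b^2=(a-b)(a+b)$, and then $\mu=\Theta(\lambda)$, term (I) is $O(\delta\kappa/\lambda^3)$.

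Term (II) is bounded by $\delta^2\,\Sigma_{ii}\Sigma_{ij}^2\le\delta^2(\kappa/\mu)^3$, using $|\Sigma_{ij}|\le\|\mathbf{k}_i\|\|\mathbf{k}_j\|/\mu$ and a denominator of at least $1$. The condition $\delta L/\mu^2<1/2$ keeps $\Delta v_i\Sigma_{ii}$ bounded, which lets one also linearize the denominator if desired.

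The main obstacle is matching the exact powers of $\kappa$ and $\lambda$ in term (I). Whether the perturbation of $\Sigma$ costs one or two factors of $\kappa$ depends on how the regularization-dominance hypothesis is applied. I would use it in operator form, $\|KVK\|\le2\lambda\|K\|=O(\lambda)$, so that $\|\Sigma-K/(2\lambda)\|_{\max}\lesssim \kappa/\mu$. Then $\mu=\Theta(\lambda)$ is invoked once at the end to convert every denominator to $\lambda^3$, absorbing the remaining $O(1)$ factors into $C_1$.

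The practical form then follows by substitution. Insert $\Delta v_i=\eta\Delta m_i+O(\Delta m_i^2)$ into the leading term $\Delta v_i K_{ij}^2/(4\lambda^2)$, and read the accompanying scale as $W_i$ (via $W_i^2\sim K_{ii}/(2\lambda)$, with $K_{ii}=1$). This produces $\gamma=\eta/(4\lambda^2)$. The $O(\delta^2)$ remainder becomes $O(\Delta m_i^2)$.
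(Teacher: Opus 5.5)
The gap is in term (I): hypotheses 1--4 cannot deliver the $\lambda^{-3}$ rate. Everything else is sound. Your exact identity and your bound on term (II) are correct and follow the paper's first two steps. Your exact resolvent identity $\Sigma_t-K/(2\lambda)=-\frac{1}{2\lambda}\Sigma_tV_tK$ is also right, and cleaner than the paper's truncated Neumann series.

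From the three ingredients you name ($\|H_t^{-1}\|\le 1/\mu$, $\|KV_tK\|\le 2\lambda\|K\|$, $\|\mathbf{k}_i\|^2\le\kappa$), the entry $\frac{1}{2\lambda}\mathbf{k}_i^\top H_t^{-1}(KV_tK)e_j$ is bounded only by $\frac{1}{2\lambda}\cdot\frac{\sqrt{\kappa}}{\mu}\cdot 2\lambda\|K\|=O(1/\mu)$. The $2\lambda$ in the regularization-dominance hypothesis cancels the $1/(2\lambda)$ prefactor. So the claimed $\kappa/(\lambda\mu)$ does not follow, and your ``main obstacle'' paragraph in fact settles on $\|\Sigma_t-K/(2\lambda)\|_{\max}$ of order $\kappa/\mu$. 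With that bound, term (I) is $\delta\cdot O(\kappa/\mu)\cdot O(\kappa/\mu)=O(\delta\kappa^2/\lambda^2)$. Invoking $\mu=\Theta(\lambda)$ cannot turn $\mu^2$ into $\lambda^3$; the real obstacle is the power of $\lambda$, not of $\kappa$.

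This is not bookkeeping. Take $K=I$, $V_t=\lambda I$, $\Delta v_i=\delta=1\le\lambda$. All four listed hypotheses hold: $\kappa=1$, $\mu=L=3\lambda$, $\delta/(3\lambda)<1/2$, $\|V_t\|\le 2\lambda$. Yet at $j=i$ the left-hand side is at least $\delta/(4\lambda^2)-\delta/(9\lambda^2)=5\delta/(36\lambda^2)$. The right-hand side is $(C_1+C_2\delta/27)\,\delta/\lambda^3$, so the inequality fails as $\lambda\to\infty$.

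The missing ingredient is a $\lambda$-independent bound on the curvature, which the standing Assumption~\ref{assn:bounded} provides: $v_k=w_k^2\le c_2^2$, so $\|V_t\|\le c_2^2$. Inserting this into your identity gives $|[\Sigma_t-K/(2\lambda)]_{ij}|=\frac{1}{2\lambda}|(KH_t^{-1}\mathbf{k}_i)^\top V_t\mathbf{k}_j|\le\frac{\|K\|\,\kappa\,c_2^2}{2\lambda\mu}=O(\kappa/\lambda^2)$. Term (I) is then $O(\delta\kappa^2/\lambda^3)$, as claimed. The paper's Neumann step relies on the same fact implicitly: its first correction $A^{-1}EA^{-1}$ equals $V_t/(4\lambda^2)$, which is $O(\lambda^{-2})$ only when $V_t$ is bounded, not by hypothesis~4.

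There are also two smaller issues in your practical form.
\begin{itemize}
\item Substituting $\Delta v_i=\eta\Delta m_i$ into $\Delta v_iK_{ij}^2/(4\lambda^2)$ gives $\frac{\eta}{4\lambda^2}K_{ij}^2\Delta m_i$, with no $W_i$. ``Reading the scale as $W_i$'' inserts a factor $W_i\approx(2\lambda)^{-1/2}$ without adjusting $\gamma$, so it does not derive the displayed formula. The consistent rewriting, which is what the paper's proof actually obtains, is $\frac{\eta}{2\lambda K_{ii}}K_{ij}^2W_i^2\Delta m_i$.
\item Term (I) is linear in $\Delta m_i$, so the remainder is $O(\Delta m_i\kappa/\lambda^3)+O(\Delta m_i^2)$, not $O(\Delta m_i^2)$ alone.
\end{itemize}
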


\begin{proof}
We establish the result in four steps: exact formula, first-order expansion, 
approximation of covariance, and error bounds.

The Hessian update is rank-one:
$$H_{t+1} = H_t + \Delta v_i \cdot \mathbf{k}_i \mathbf{k}_i^T$$

where $\mathbf{k}_i$ is the $i$-th column of $K$. By the Sherman-Morrison formula:
$$H_{t+1}^{-1} = H_t^{-1} - \frac{\Delta v_i}{1 + \Delta v_i \mathbf{k}_i^T H_t^{-1} \mathbf{k}_i} 
\cdot H_t^{-1} \mathbf{k}_i \mathbf{k}_i^T H_t^{-1}$$

Define the shrinkage factor:
$$\rho = \frac{\Delta v_i}{1 + \Delta v_i \mathbf{k}_i^T H_t^{-1} \mathbf{k}_i}$$

The variance at location $j$ is $\text{Var}[\alpha_j]_t = \mathbf{k}_j^T H_t^{-1} \mathbf{k}_j$. 
The change is:
\begin{align}
\text{Var}[\alpha_j]_t - \text{Var}[\alpha_j]_{t+1} 
&= \mathbf{k}_j^T (H_t^{-1} - H_{t+1}^{-1}) \mathbf{k}_j \notag\\
&= \rho \cdot \mathbf{k}_j^T H_t^{-1} \mathbf{k}_i \mathbf{k}_i^T H_t^{-1} \mathbf{k}_j \notag\\
&= \rho \cdot (\mathbf{k}_i^T H_t^{-1} \mathbf{k}_j)^2
\end{align}

Since $\mathbf{k}_i^T H_t^{-1} \mathbf{k}_j = [K H_t^{-1} K^T]_{ij} = [\Sigma_t]_{ij}$:
$$\text{Var}[\alpha_j]_t - \text{Var}[\alpha_j]_{t+1} 
= \frac{\Delta v_i}{1 + \Delta v_i [\Sigma_t]_{ii}} \cdot [\Sigma_t]_{ij}^2$$

This is exact with no approximation.

For small $\Delta v_i$, using $(1+x)^{-1} = 1 - x + O(x^2)$:
$$\rho = \Delta v_i(1 - \Delta v_i [\Sigma_t]_{ii} + O(\Delta v_i^2)) 
= \Delta v_i + O(\Delta v_i^2)$$

Therefore:
$$\text{Var}[\alpha_j]_t - \text{Var}[\alpha_j]_{t+1} 
= \Delta v_i \cdot [\Sigma_t]_{ij}^2 + O(\Delta v_i^2)$$

The remainder term satisfies:
$$|O(\Delta v_i^2)| \leq \Delta v_i^2 \cdot [\Sigma_t]_{ii} \cdot [\Sigma_t]_{ij}^2 
\leq \delta^2 \cdot \frac{\kappa}{\mu} \cdot \frac{\kappa^2}{\mu^2} 
= \frac{\kappa^3 \delta^2}{\mu^3}$$

using the bounds from assumptions 1-2.

The Hessian decomposes as $H_t = 2\lambda K + K^T V_t K$. Under assumption 4 
(regularization-dominated), the data term is a small perturbation. Using the 
Neumann series for $(A+E)^{-1}$ with $A = 2\lambda K$ and $E = K^T V_t K$:
$$H_t^{-1} = (2\lambda K)^{-1} - (2\lambda K)^{-1} E (2\lambda K)^{-1} + O(\|E\|^2)$$

where $\|E\| \leq 2\lambda \|K\|$ by assumption 4. This gives:
$$H_t^{-1} = \frac{1}{2\lambda} K^{-1} + O(\lambda^{-2})$$

Therefore:
$$[\Sigma_t]_{ij} = [K H_t^{-1} K^T]_{ij} 
= \frac{1}{2\lambda}[K K^{-1} K^T]_{ij} + O(\lambda^{-2})
= \frac{K_{ij}}{2\lambda} + O(\lambda^{-2})$$

The error in $[\Sigma_t]_{ij}^2$ is:
$$\left|[\Sigma_t]_{ij}^2 - \left(\frac{K_{ij}}{2\lambda}\right)^2\right| 
= 2\frac{|K_{ij}|}{2\lambda} \cdot O(\lambda^{-2}) + O(\lambda^{-4})
= O(\kappa/\lambda^3)$$

Combining previous steps:
\begin{align}
&\left|\text{Var}[\alpha_j]_t - \text{Var}[\alpha_j]_{t+1} 
- \Delta v_i \cdot \left(\frac{K_{ij}}{2\lambda}\right)^2\right| \notag\\
&\quad \leq \Delta v_i \cdot O(\kappa/\lambda^3) + O(\Delta v_i^2) \notag\\
&\quad \leq \frac{\delta \kappa}{\lambda^3} + \frac{\kappa^3 \delta^2}{\mu^3}
\end{align}

The first term is the $O(\delta)$ leading error (the regularization-dominated approximation $[\Sigma_t]_{ij}\approx K_{ij}/2\lambda$); the second is the genuinely second-order Sherman--Morrison remainder. We do \emph{not} fold the linear term into the quadratic: that would require a ``constant'' $C = O(1/\delta)$, unbounded as $\delta\to0$. In the regularization-dominated regime (assumption 4) with $\|K\|_{\mathrm{op}} = O(1)$ one has $\mu = \Theta(\lambda)$, so both terms share the denominator scale $\lambda^3$ up to constants, giving the bound stated in the proposition.

For the Beta-Binomial likelihood, $\Delta v_i \approx \eta \Delta m_i$ where 
$\eta$ depends on the current mixing proportions. The current uncertainty is 
$W_i^2 = [\Sigma_t]_{ii} \approx K_{ii}/(2\lambda)$. Substituting:
\begin{align}
\text{Var}[\alpha_j]_t - \text{Var}[\alpha_j]_{t+1} 
&\approx \eta \Delta m_i \cdot \frac{K_{ij}^2}{4\lambda^2} \notag\\
&= \frac{\eta K_{ij}^2}{4\lambda^2 K_{ii}} \cdot K_{ii} \cdot \Delta m_i \notag\\
&= \frac{\eta K_{ij}^2}{4\lambda^2 K_{ii}} \cdot 2\lambda W_i^2 \cdot \Delta m_i \notag\\
&= \frac{\eta}{2\lambda K_{ii}} \cdot K_{ij}^2 \cdot W_i^2 \cdot \Delta m_i
\end{align}

Define $\gamma = \eta/(2\lambda K_{ii})$. For normalized kernels ($K_{ii} \approx 1$), 
the variance reduction at $j$ is approximately:
$$\Delta\text{Var}[\alpha_j] \approx \gamma \cdot K(\text{loc}_i, \text{loc}_j)^2 \cdot W_i^2 \cdot \Delta m_i$$

Taking derivatives: $\Delta W_j \approx (\gamma/2W_j) K_{ij}^2 W_i^2 \Delta m_i$. 
For comparable uncertainties ($W_j \sim W_i$):
$$\Delta W_j \propto K(\text{loc}_i, \text{loc}_j)^2 \cdot W_i \cdot \Delta m_i$$
\end{proof}

% ============================================================================

\subsection{Computational Efficiency: Micro and Macro Updates}
\label{app:micro_macro} \label{supp:micro_macro}

\paragraph{Micro-Update Derivation.}
Let $\hat{\alpha}_t$ be the optimal solution at time $t$, satisfying 
$\nabla L(\hat{\alpha}_t; \mathcal{F}_t) = 0$. After adding $\Delta m_i$ samples 
at hypothesis $i$, the new optimal $\hat{\alpha}_{t+1}$ satisfies:
\begin{equation}
\nabla L(\hat{\alpha}_{t+1}; \mathcal{F}_{t+1}) = 0
\end{equation}

By Taylor expansion around $\hat{\alpha}_t$:
\begin{align}
0 
&= \nabla L(\hat{\alpha}_t; \mathcal{F}_{t+1}) + H(\hat{\alpha}_t) (\hat{\alpha}_{t+1} - \hat{\alpha}_t) \notag\\
&\quad + O(\|\hat{\alpha}_{t+1} - \hat{\alpha}_t\|^2)
\end{align}

The change in gradient is:
\begin{equation}
\nabla L(\hat{\alpha}_t; \mathcal{F}_{t+1}) - \nabla L(\hat{\alpha}_t; \mathcal{F}_t) = \nabla \Delta \ell_i
\end{equation}

where only the $i$-th data term changes:
\begin{align}
\Delta \ell_i 
&= \log\left[\alpha_i P(k_i + \Delta k_i | m_i + \Delta m_i, \cdot) + \cdots \right] \notag\\
&\quad - \log\left[\alpha_i P(k_i | m_i, \cdot) + \cdots \right]
\end{align}

Ignoring second-order terms:
\begin{equation}
\hat{\alpha}_{t+1} - \hat{\alpha}_t \approx -H^{-1}(\hat{\alpha}_t) \nabla \Delta \ell_i
\end{equation}

\paragraph{Sparse Gradient.}
The gradient $\nabla \Delta \ell_i$ has the form:
\begin{equation}
\frac{\partial \Delta \ell_i}{\partial \alpha_j} = \frac{\partial \Delta \ell_i}{\partial \alpha_i} \frac{\partial \alpha_i}{\partial \alpha_j} = \frac{\partial \Delta \ell_i}{\partial \alpha_i} K_{ij}
\end{equation}
So $\nabla \Delta \ell_i = \frac{\partial \Delta \ell_i}{\partial \alpha_i} \mathbf{k}_i$, 
where $\mathbf{k}_i$ is the $i$-th column of $K$.

\paragraph{Update Formula.}
The natural gradient update is:
\begin{equation}
\Delta \hat{\alpha} = -\frac{\partial \Delta \ell_i}{\partial \alpha_i} H^{-1} \mathbf{k}_i
\end{equation}
We maintain $H^{-1}$ explicitly and update it via the Sherman-Morrison formula when Fisher information $v_i$ changes:
\begin{equation}
H^{-1}_{\text{new}} = H^{-1}_{\text{old}} - \frac{\Delta v_i \cdot uu^\top}{1 + \Delta v_i \cdot k_i^\top u}
\end{equation}
where $u = H^{-1}_{\text{old}} \mathbf{k}_i$. This rank-1 update costs $O(N^2)$ per hypothesis. The inverse Hessian is also needed for confidence interval computation ($\text{Var}[\alpha_i] = \mathbf{k}_i^\top H^{-1} \mathbf{k}_i$), making explicit maintenance advantageous. For very large problems ($N > 10^4$) where storing $H^{-1}$ becomes prohibitive, the system $H \mathbf{u} = \mathbf{k}_i$ can be solved via Conjugate Gradient instead. CG is well-suited because the Hessian $H = K^\top \text{diag}(v) K + 2\lambda K$ is symmetric positive definite, with regularization $2\lambda K$ ensuring convergence typically within $O(\sqrt{\kappa})$ iterations where $\kappa = O(1/\lambda)$ is the condition number.

\paragraph{Numerical Stability Heuristics.}
For robustness in practice, we apply three safeguards: (i) when using Sherman-Morrison, numerically refresh $H^{-1}$ periodically (every 100 updates) to prevent accumulation of floating-point errors, (ii) clip $\|\Delta \hat{\alpha}\|$ to at most 50\% of $\|\hat{\alpha}\|$ per update to prevent catastrophic drift from ill-conditioned systems, and (iii) apply 0.8 damping to the Newton step to prevent overshooting. These heuristics are critical for stability when hypotheses approach decision boundaries where the likelihood curvature can vary dramatically.

\paragraph{Macro-Update Strategy.}
We trigger full re-optimization when:
\begin{enumerate}
\item After $T$ micro-updates (e.g., $T = 10$)
\item Cumulative drift: $\sum_t \|\Delta \hat{\alpha}_t\| > \tau$ (e.g., $\tau = 0.1$)
\item Likelihood degradation: $L(\hat{\alpha}_{\text{micro}}) - L(\hat{\alpha}_{\text{macro}}) > \epsilon$
\end{enumerate}

The warm-start initialization uses $\hat{\alpha}_{\text{micro}}$ instead of random 
or zero initialization, typically reducing iterations by 50-80\%.

\begin{lemma}[Micro-Update Approximation Error]
\label{lem:micro_update_error}
Let $\hat{\alpha}_t$ denote the MLE at time $t$ with data $\mathcal{F}_t$. 
After adding $\Delta m_i$ samples at hypothesis $i$, let:
\begin{itemize}
\item $\hat{\alpha}_{t+1}^{\text{true}}$ denote the exact MLE with data $\mathcal{F}_{t+1}$
\item $\hat{\alpha}_{t+1}^{\text{micro}} = \hat{\alpha}_t - H_t^{-1} \nabla \Delta \ell_i$ 
      denote the micro-update approximation
\end{itemize}

Define the approximation error:
$$\delta = \hat{\alpha}_{t+1}^{\text{true}} - \hat{\alpha}_{t+1}^{\text{micro}}$$

Under the assumptions:
\begin{enumerate}
\item The Hessian is Lipschitz continuous: $\|H(\alpha) - H(\beta)\| \leq L_H \|\alpha - \beta\|$
\item The Hessian is well-conditioned: $\mu I \preceq H \preceq L I$ for $0 < \mu \leq L$
\item The update is small: $\epsilon := \|\nabla \Delta \ell_i\| \leq \delta_0$ where 
      $L_H \delta_0 / \mu < 1/2$
\end{enumerate}

Then the error satisfies:
$$\|\delta\| \leq C_1 \frac{\kappa\,\Delta v_i}{\mu^2}\, \epsilon \;+\; C_2 \frac{L_H}{\mu^2}\, \epsilon^2$$

where $\Delta v_i = v_i(\mathcal{F}_{t+1}) - v_i(\mathcal{F}_t)$ is the change in 
Fisher information, $\kappa = O(1)$ by Assumption~\ref{assn:kernel}, and $C_1, C_2 = O(1)$ are absolute constants. The leading term is linear in the gradient perturbation $\epsilon$, weighted by the curvature change $\Delta v_i$; only the Lipschitz remainder is $O(\epsilon^2)$. This matches the practical bound below.

\textbf{Practical bound:} For small relative updates $\Delta m_i / m_i \ll 1$:
$$\|\delta\| = O\left(\frac{\Delta m_i}{m_i}\right) \cdot \epsilon + O(\epsilon^2)$$
\end{lemma}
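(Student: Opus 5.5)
The plan is a Newton-step (one-step M-estimator) error analysis applied to the first-order optimality condition. The exact post-update optimizer $\hat\alpha^{\text{true}}_{t+1}$ satisfies $\nabla L(\hat\alpha^{\text{true}}_{t+1};\mathcal F_{t+1})=0$, while $\hat\alpha_t$ satisfies $\nabla L(\hat\alpha_t;\mathcal F_t)=0$. Subtracting gives
\[
0=\nabla\Delta\ell_i(\hat\alpha_t)+\bar H_{t+1}\,(\hat\alpha^{\text{true}}_{t+1}-\hat\alpha_t),
\]
where $\bar H_{t+1}=\int_0^1 H_{t+1}(\hat\alpha_t+s(\hat\alpha^{\text{true}}_{t+1}-\hat\alpha_t))\,ds$ is the integrated Hessian of the \emph{new} objective. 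This is the mean-value form of the paper's Taylor expansion. The micro-update, by contrast, uses the old Hessian $H_t$. Subtracting the two steps gives the decomposition
\[
\delta=\big(H_t^{-1}-\bar H_{t+1}^{-1}\big)\nabla\Delta\ell_i ,
\]
and I will bound it by splitting $\bar H_{t+1}-H_t$ into two pieces.

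The first piece is the \emph{data} change at fixed argument, $H_{t+1}(\hat\alpha_t)-H_t(\hat\alpha_t)$. Only the $i$-th likelihood term changes, so by the Hessian computation in App.~\ref{app:hessian} this is the rank-one matrix $\Delta v_i\,\mathbf k_i\mathbf k_i^\top$. Its norm is at most $\kappa\Delta v_i$, using $\|\mathbf k_i\|_2^2\le\kappa$ from Assumption~\ref{assn:kernel}.

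The second piece is the \emph{argument} change along the segment, $\bar H_{t+1}-H_{t+1}(\hat\alpha_t)$. By the Lipschitz assumption on the Hessian, its norm is at most $\tfrac{L_H}{2}\|\hat\alpha^{\text{true}}_{t+1}-\hat\alpha_t\|$.

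For the second piece I need an a priori bound on the step, $\|\hat\alpha^{\text{true}}_{t+1}-\hat\alpha_t\|\le 2\epsilon/\mu$. I will get it from the condition $L_H\delta_0/\mu<1/2$, which keeps $\bar H_{t+1}\succeq(\mu/2)I$ along the segment. The cleanest route is a fixed-point/contraction argument for the map $\alpha\mapsto\hat\alpha_t-\bar H(\alpha)^{-1}\nabla\Delta\ell_i$ on the ball of radius $2\epsilon/\mu$. The rank-one term is positive semidefinite, so it only increases $\mu$ and does not threaten conditioning.

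Combining the two pieces uses the resolvent identity $A^{-1}-B^{-1}=A^{-1}(B-A)B^{-1}$ together with $\|H_t^{-1}\|\le1/\mu$ and $\|\bar H_{t+1}^{-1}\|\le 2/\mu$. This yields
\[
\|\delta\|\le\frac{2}{\mu^2}\Big(\kappa\Delta v_i+\tfrac{L_H}{2}\cdot\tfrac{2\epsilon}{\mu}\Big)\epsilon ,
\]
which is exactly the claimed linear-plus-quadratic form. Matching the stated $L_H/\mu^2$ coefficient on the quadratic term requires either absorbing a factor $1/\mu$ into $C_2$ or tracking the step size more carefully, since $\mu$ is treated as order one in the regularized regime. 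The main obstacle is the self-referential a priori step bound: it must be established before the Lipschitz term can be controlled, and here I would rely on strong convexity of the new objective.

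The practical bound follows by checking how $\Delta v_i$ scales. Since $v_i$ converges to $v_\infty$ (App.~\ref{app:hopeless}) and its dependence on $m_i$ is smooth through the Beta--Binomial ratio (Theorem~\ref{app:thm2}, with $O(1/m)$ increments), I expect $\Delta v_i=O(\Delta m_i/m_i)$. I would verify this via the Gamma-ratio expansions used in the proof of Theorem~\ref{app:thm2}.
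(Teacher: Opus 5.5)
Your argument is correct. It runs on the same mechanism as the paper's proof: the error is the stale-Hessian defect, whose leading part is (inverse Hessian)$\,\cdot\,\Delta v_i\mathbf k_i\mathbf k_i^\top\cdot\,$(inverse Hessian)$\,\cdot\,\nabla\Delta\ell_i$, plus a Lipschitz remainder.

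The packaging differs. The paper makes two successive Taylor expansions. First it expands the residual of $\nabla L(\cdot;\mathcal F_{t+1})$ at the micro point, $-\Delta H_i H_t^{-1}\nabla\Delta\ell_i+O(\epsilon^2)$. Then it applies a Newton correction from the micro point to $\hat\alpha^{\mathrm{true}}_{t+1}$, and absorbs the leftover $O(\|\delta\|^2)$ by asserting $\|\delta\|=O(\epsilon)$ ``self-consistently''. You instead write the single exact identity $\delta=H_t^{-1}(\bar H_{t+1}-H_t)\bar H_{t+1}^{-1}\nabla\Delta\ell_i$ and split $\bar H_{t+1}-H_t$ into a data piece and an argument piece.

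What your route buys is a genuinely non-asymptotic bound, and it exposes the correct scaling of the remainder, $L_H\epsilon^2/\mu^3$. Your suspicion about the $\mu^{-2}$ is right, and you should not try to recover it by ``tracking the step more carefully''. The paper's $\frac{L_H}{2\mu^2}\epsilon^2$ bounds the \emph{residual} remainder and omits the factor $\|H_{t+1}^{-1}\|\le 1/\mu$ that multiplies it. A one-dimensional check (linear $\Delta\ell$, so $\Delta v_i=0$, with the Hessian growing at rate $L_H$ along the step) gives $|\delta|\approx L_H\epsilon^2/(2\mu^3)$. So the stated form holds only with $\mu$ bounded below by a constant; that is the only sense in which $1/\mu$ can be absorbed into $C_2$.

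There are two smaller repairs. First, your ``main obstacle'' is not one. The paper reads the lemma's well-conditioning hypothesis $\mu I\preceq H\preceq LI$ as applying to the new objective's Hessian too: it uses $\|H_{t+1}^{-1}\|\le 1/\mu$ without comment. Granting that along the segment, $\bar H_{t+1}\succeq\mu I$ as an average of such matrices. Your own mean-value identity then gives $\|\hat\alpha^{\mathrm{true}}_{t+1}-\hat\alpha_t\|=\|\bar H_{t+1}^{-1}\nabla\Delta\ell_i\|\le\epsilon/\mu$ at once, with no contraction argument. The result is $\|\delta\|\le\kappa|\Delta v_i|\epsilon/\mu^2+L_H\epsilon^2/(2\mu^3)$, whose second term is attained to leading order by the one-dimensional example.

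If you insist on conditioning only at $\hat\alpha_t$, two adjustments are needed:
\begin{itemize}
\item Keeping $\bar H_{t+1}\succeq(\mu/2)I$ on a ball of radius $2\epsilon/\mu$ requires $L_H\epsilon\lesssim\mu^2$, not the stated $L_H\delta_0/\mu<1/2$.
\item You cannot rely on positive semidefiniteness of $\Delta v_i\mathbf k_i\mathbf k_i^\top$. $\Delta v_i$ has no fixed sign: new draws can push $\hat p_i$ toward $\hat p_{\mathrm{uninf}}$ and lower $v_i$. Carry $|\Delta v_i|$ throughout.
\end{itemize}

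Second, the paper's proof never derives the practical bound, so your sketch there is an addition. Your route through Theorem~\ref{app:thm2} is sound only for $\hat p_i$ in a compact interior interval. At the lattice edge, a move $k_i:0\to1$ rescales the likelihood ratio $\Lambda_m$ by roughly $a$ and changes $v_i$ by order $m_i^{a-1}$. That is not $O(\Delta m_i/m_i)$ for a fixed batch size (cf.\ Remark~\ref{rem:boundary}).
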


\begin{proof}
By the implicit function theorem, the exact MLE satisfies:
$$\nabla L(\hat{\alpha}_{t+1}^{\text{true}}; \mathcal{F}_{t+1}) = 0$$

The micro-update uses the first-order approximation:
$$\hat{\alpha}_{t+1}^{\text{micro}} = \hat{\alpha}_t - H_t^{-1} \nabla L(\hat{\alpha}_t; \mathcal{F}_{t+1})$$

where $\nabla L(\hat{\alpha}_t; \mathcal{F}_{t+1}) = \nabla \Delta \ell_i$ since 
$\nabla L(\hat{\alpha}_t; \mathcal{F}_t) = 0$.

By Taylor expansion of $\nabla L$ around $\hat{\alpha}_t$:
\begin{align}
\nabla L(\hat{\alpha}_{t+1}^{\text{micro}}; \mathcal{F}_{t+1}) 
&= \nabla L(\hat{\alpha}_t; \mathcal{F}_{t+1}) + H_{t+1}(\hat{\alpha}_{t+1}^{\text{micro}} - \hat{\alpha}_t) \notag\\
&\quad + O(\|\hat{\alpha}_{t+1}^{\text{micro}} - \hat{\alpha}_t\|^2)
\end{align}

Substituting $\hat{\alpha}_{t+1}^{\text{micro}} - \hat{\alpha}_t = -H_t^{-1} \nabla \Delta \ell_i$ 
and $\nabla L(\hat{\alpha}_t; \mathcal{F}_{t+1}) = \nabla \Delta \ell_i$:
$$\nabla L(\hat{\alpha}_{t+1}^{\text{micro}}; \mathcal{F}_{t+1}) 
= \nabla \Delta \ell_i - H_{t+1} H_t^{-1} \nabla \Delta \ell_i + O(\epsilon^2)$$

Using $H_{t+1} = H_t + \Delta H_i$:
$$= [I - (H_t + \Delta H_i)H_t^{-1}]\nabla \Delta \ell_i + O(\epsilon^2)$$
$$= -\Delta H_i H_t^{-1} \nabla \Delta \ell_i + O(\epsilon^2)$$

The exact solution satisfies $\nabla L(\hat{\alpha}_{t+1}^{\text{true}}; \mathcal{F}_{t+1}) = 0$. 
By Taylor expansion:
$$0 = \nabla L(\hat{\alpha}_{t+1}^{\text{micro}}; \mathcal{F}_{t+1}) + H_{t+1}\delta + O(\|\delta\|^2)$$

Rearranging:
$$\delta = -H_{t+1}^{-1} \nabla L(\hat{\alpha}_{t+1}^{\text{micro}}; \mathcal{F}_{t+1}) + O(\|\delta\|^2)$$

Substituting the residual from Step 1:
$$\delta = -H_{t+1}^{-1}[-\Delta H_i H_t^{-1} \nabla \Delta \ell_i + O(\epsilon^2)] + O(\|\delta\|^2)$$
$$= H_{t+1}^{-1} \Delta H_i H_t^{-1} \nabla \Delta \ell_i + O(\epsilon^2)$$

where we absorbed $O(\|\delta\|^2)$ into $O(\epsilon^2)$ by noting that 
$\|\delta\| = O(\epsilon)$ self-consistently.

The Hessian change for location $i$ is rank-one:
$$\Delta H_i = \Delta v_i \cdot \mathbf{k}_i \mathbf{k}_i^T$$

where $\Delta v_i$ is the change in local Fisher information and $\mathbf{k}_i$ 
is the $i$-th column of the kernel matrix.

Therefore:
$$\|\delta\| \leq \|H_{t+1}^{-1}\| \cdot \|\Delta H_i\| \cdot \|H_t^{-1}\| \cdot \|\nabla \Delta \ell_i\| + O(\epsilon^2)$$

Using the bounds $\|H^{-1}\| \leq 1/\mu$, $\|\mathbf{k}_i\| \leq \sqrt{\kappa}$ 
(where $\kappa := \max_i \|\mathbf{k}_i\|_2^2 = O(1)$ by Assumption~\ref{assn:kernel}), and $\|\Delta H_i\| = \Delta v_i \|\mathbf{k}_i\|^2$:
$$\|\delta\| \leq \frac{\kappa \Delta v_i}{\mu^2} \epsilon + O(\epsilon^2)$$

The second-order remainder can be bounded using Lipschitz continuity:
$$\|O(\epsilon^2)\| \leq \frac{L_H}{2\mu^2} \epsilon^2$$

Combining:
$$\|\delta\| \leq \frac{\kappa \Delta v_i}{\mu^2}\, \epsilon + \frac{L_H}{2\mu^2}\, \epsilon^2$$

By Assumption~\ref{assn:kernel}, $\kappa = O(1)$, giving the stated bound.
\end{proof}

\subsection{Allocation policy: closed-form expressions and tuning}
\label{supp:allocation}

This appendix details the implementation of the priority score
$S_i$~\eqref{eq:headroom_priority}, the variance-channel-specific
saturation and headroom factors, the $\beta$-mixing tuning rule, and
the helper-sampling benefit comparison referenced in
Section~\ref{sec:allocation}.

\subsubsection{Per-channel saturation and headroom}
\label{supp:allocation_factors}

Recall (App.~\ref{app:variance_decomp}) that the per-hypothesis
variance $\sigma_i^2$ decomposes into a local (Fisher) channel with
diagonal contribution $K_{ii}^2 H^{-1}_{ii}$ and a spatial channel
collecting the remaining off-diagonal contributions. Each channel has
its own variance floor (App.~\ref{app:reg_dominance},
App.~\ref{app:hopeless}) and its own ``how close to the floor'' and
``how much of the current variance comes from this channel'' metrics.

\paragraph{Saturation ratios.}
Define the saturation ratios
\begin{equation}
\rho_{F, i} \;\coloneqq\; \frac{v_i}{v_{\infty}(\hat p_i)} + 1,
\qquad
\rho_{H, i} \;\coloneqq\;
1 + \frac{m_i - m_{\mathrm{start}}}{m_0}\,\mathbb{I}[m_i > m_{\mathrm{start}}],
\label{eq:rho_def}
\end{equation}
where $v_{\infty}(\hat p)$ is the limiting Fisher information of
App.~\ref{app:hopeless}, and $(m_{\mathrm{start}}, m_0)$ are the
blending constants of App.~\ref{app:blending}. Both ratios are
$\geq 1$, with $\rho_{F,i} \to 1$ as $v_i$ saturates at its terminal
value and $\rho_{H, i} \to 1$ when no fresh spatial information is
expected to arrive (early in sampling, before $m_{\mathrm{start}}$).

\paragraph{Saturation factors.}
The saturation factor of channel $c \in \{F, H\}$ is
\begin{equation}
\mathrm{sat}_{c, i}
\;\coloneqq\;
\max\!\left\{0,\, \min\!\left\{1,\, \frac{\rho_{c, i} - 1}{\rho_{\mathrm{stop}} - 1}\right\}\right\},
\label{eq:sat_def}
\end{equation}
with $\rho_{\mathrm{stop}} = 1.1$ defining the saturation cutoff.
$\mathrm{sat}_{c,i} = 1$ when the channel still has substantial
capacity for additional information; $\mathrm{sat}_{c,i} = 0$ when
the channel is at or beyond its capacity floor. The clamping prevents
the priority score from being amplified beyond its natural range.

\paragraph{Headroom factors.}
The headroom factor measures how much of the current variance is
attributable to the channel:
\begin{align}
\kappa_{F, i}
 &\;\coloneqq\; 1 - w_i, \label{eq:kappa_F}\\
\kappa_{H, i}
 &\;\coloneqq\; \frac{m_0}{m_0 + \max(m_i - m_{\mathrm{start}},\, 0)},
\label{eq:kappa_H}
\end{align}
where $w_i$ is the Hessian--Fisher blending weight of
App.~\ref{app:blending}. $\kappa_{F,i}$ is the complement of the
blending weight: when the local channel dominates ($w_i \to 0$, near
the Fisher regime), local sampling is highly relevant
($\kappa_{F,i} \to 1$); when the spatial channel dominates ($w_i \to
1$), local sampling moves the variance only marginally
($\kappa_{F,i} \to 0$). $\kappa_{H,i}$ encodes the freshness of
spatial information: a hypothesis sampled many times beyond
$m_{\mathrm{start}}$ has small headroom for the spatial channel
because most of the spatial coupling has already been accumulated.

\paragraph{Putting it together.}
Substituting~\eqref{eq:sat_def}--\eqref{eq:kappa_H}
into~\eqref{eq:headroom_priority} of the main text yields the explicit
priority score evaluated in our experiments. Channels contribute
multiplicatively: a hypothesis that has saturated its Fisher channel
($\mathrm{sat}_{F,i} \to 0$) loses its local term automatically,
regardless of how much CI width $W_i$ remains; analogously for the
spatial channel.

\subsubsection{$\beta$-mixing parameter and burn-in adaptation}
\label{supp:allocation_beta}

The mixing parameter $\beta \in [0,1]$ in
\eqref{eq:headroom_priority} controls how much weight the policy
places on local Fisher information versus spatial borrowing across
the kernel-coupled neighborhood. As discussed in the main text,
setting $\beta$ too high produces a cluster-size bias (alternatives in
small clusters get under-sampled); setting it too low over-relies on
spatial borrowing in regimes where the spatial coupling is weak.

\paragraph{Adaptation rule.}
We use a short burn-in to pick $\beta$ adaptively:

\begin{enumerate}
\item \textbf{Initialization:} $\beta \leftarrow 0.9$
  (local-heavy, the safer default in early sampling when the
  spatial estimator is still coarse).
\item \textbf{Burn-in trigger:} once $50$ hypotheses have accumulated
  at least $20$ samples each (so the local Fisher information at those
  hypotheses is reliable), enter the adaptation step.
\item \textbf{Estimation:} estimate the global null fraction
  $\hat\alpha_0$ as the fraction of these $50$ hypotheses with
  $\hat p_i > \tau$ (where $\tau$ is the FDR target).
\item \textbf{Update:} set $\beta \leftarrow \max(0.5, \hat\alpha_0)$.
\item \textbf{Frozen thereafter:} the resulting $\beta$ is held
  constant for the remainder of allocation.
\end{enumerate}

\paragraph{Intuition.}
When most hypotheses appear null ($\hat\alpha_0$ large), alternatives are likely sparse and clusters small; the policy biases toward more local sampling so as not to lose the few alternatives to spatial averaging.
When alternatives are clustered ($\hat\alpha_0$ small), spatial borrowing is informative and a lower $\beta$ gives appropriate weight to neighbors. The lower bound $0.5$ ensures the local channel never disappears entirely.

\subsubsection{Helper-sampling benefit comparison}
\label{supp:allocation_helper}

Decided and hopeless hypotheses are removed from the priority queue in~\eqref{eq:headroom_priority}, but they may still carry residual local uncertainty whose reduction would propagate through the kernel to ambiguous neighbors. Helper sampling re-admits them when this spatial benefit exceeds direct sampling of the bottlenecked ambiguous neighbor.

\paragraph{Spatial variance fraction.}
For an ambiguous hypothesis $i$, define
\begin{equation}
r_i
\;\coloneqq\;
1 - \frac{K_{ii}^2 H^{-1}_{ii}}{(K H^{-1} K^\top)_{ii}}.
\label{eq:r_def}
\end{equation}
$r_i \in [0, 1]$ measures the fraction of $i$'s uncertainty that originates from spatial coupling rather than from $i$'s own local data. $r_i \to 0$ means the local Fisher channel is the bottleneck; $r_i \to 1$ means the bottleneck is spatial.

\paragraph{Local versus spatial benefit.}
Sampling hypothesis $i$ directly yields a local benefit
\begin{equation}
\mathrm{LocalBenefit}_i
\;\coloneqq\;
W_i^2 \cdot (1 - r_i) \cdot \kappa_{F,i},
\label{eq:local_benefit}
\end{equation}
weighted by the fraction of $i$'s uncertainty actually addressable
by local sampling. Sampling a (decided or hopeless) neighbor
$j \in \mathcal{D}$ instead provides a spatial benefit to $i$
\begin{equation}
\mathrm{SpatialBenefit}_{i \leftarrow j}
\;\coloneqq\;
W_i^2 \cdot r_i \cdot K_{ij}^2 \cdot \kappa_{F, j},
\label{eq:spatial_benefit}
\end{equation}
weighted by their kernel similarity squared (the rate at which
information flows from $j$ to $i$ via the Hessian, by
App.~\ref{app:spatial_prop}) and the neighbor's remaining local
headroom $\kappa_{F, j}$.

\paragraph{Substitution rule.}
For each ambiguous hypothesis $i$ scheduled for sampling whose spatial
fraction satisfies $r_i > r_{\mathrm{thresh}}$ (we use
$r_{\mathrm{thresh}} = 0.5$), identify the best potential helper
\begin{equation}
j^*(i)
\;\coloneqq\;
\argmax_{j \in \mathcal{D}}\, K_{ij}^2 \cdot \kappa_{F, j}.
\label{eq:best_helper}
\end{equation}
If $\mathrm{SpatialBenefit}_{i \leftarrow j^*(i)} >
\mathrm{LocalBenefit}_i$, substitute $j^*(i)$ for $i$ in the next
allocation batch. Note that this substitution does not reopen the
rejection decision for $j^*(i)$: the additional sampling refines
$j^*(i)$'s local Fisher information (and through the kernel, $i$'s
spatial estimate) without changing $j^*(i)$'s lfdr-CI position.

\paragraph{Why decided hypotheses can still help.}
A decided hypothesis $j$ has its lfdr CI entirely on one side of
$\hat\tau_q$, so its rejection outcome is fixed. But its
$\hat\alpha(\loc_j)$ may still be far from the true $\alpha^*(\loc_j)$
in the sense that more samples would refine the spatial estimate at
$\loc_j$ --- and through the kernel, the spatial estimate at every
$\loc_i$ within the kernel's range. Helper sampling exploits this
``decided $\neq$ saturated'' distinction.

\subsubsection{Overall allocation algorithm}
\label{supp:allocation_algo}

The full algorithm in pseudo-code:

\begin{algorithm}[H]
\caption{Adaptive null-draw allocation}
\label{alg:allocation}
\begin{algorithmic}[1]
\REQUIRE Initial counts $\{(k_i, m_i)\}$, kernel $K$, FDR target $q$,
batch size $\Delta m$, total budget $M$.
\STATE $\beta \leftarrow 0.9$
\WHILE{total samples used $< M$}
\STATE Update $(\hat{\bar\alpha}, \hat b)$ via empirical Bayes
       (\S\ref{sec:method_nonspatial}); refit $\hat{\boldsymbol c}$ via
       $\eqref{eq:spatial_objective}$ if a macro-update is due
       (App.~\ref{app:micro_macro}); else apply Sherman--Morrison
       micro-update.
\STATE Compute lfdr CIs $[\lfdr_i^{\mathrm{low}}, \lfdr_i^{\mathrm{high}}]$
       (\S\ref{sec:per_hypothesis_ci}).
\STATE Classify each hypothesis as \emph{decided}, \emph{ambiguous},
       or \emph{hopeless} (App.~\ref{app:hopeless},
       Fig.~\ref{fig:allocation}).
\IF{burn-in trigger met (50 hypotheses with $m_i \geq 20$) and not yet adapted}
\STATE Estimate $\hat\alpha_0$ on burn-in subset; set
       $\beta \leftarrow \max(0.5, \hat\alpha_0)$. Mark adapted.
\ENDIF
\STATE Compute $S_i$ for ambiguous hypotheses
       via~\eqref{eq:headroom_priority} using
       \eqref{eq:rho_def}--\eqref{eq:kappa_H}.
\STATE Pick top-priority ambiguous hypothesis $i^*$.
\IF{$r_{i^*} > r_{\mathrm{thresh}}$}
\STATE Compute $j^*(i^*)$
       via~\eqref{eq:best_helper}.
\IF{$\mathrm{SpatialBenefit}_{i^* \leftarrow j^*(i^*)} >
        \mathrm{LocalBenefit}_{i^*}$}
\STATE Sample $\Delta m$ additional null draws at $j^*(i^*)$.
\ELSE
\STATE Sample $\Delta m$ additional null draws at $i^*$.
\ENDIF
\ELSE
\STATE Sample $\Delta m$ additional null draws at $i^*$.
\ENDIF
\STATE Update $(k_{i^*}, m_{i^*})$ or $(k_{j^*}, m_{j^*})$ accordingly.
\ENDWHILE
\STATE Apply Rule~1 or Rule~2 (\S\ref{sec:decision_rules}) on the
final state.
\end{algorithmic}
\end{algorithm}

\paragraph{Constants used in our experiments.}
$m_{\mathrm{start}} = 300$; $m_0 = 100$; $\rho_{\mathrm{stop}} = 1.1$;
$r_{\mathrm{thresh}} = 0.5$; $\Delta m = 10$; burn-in trigger:
$50$ hypotheses each with $m_i \geq 20$; CI coverage $1 - \gamma = 0.9$.
The framework is robust to $\pm 50\%$ variation in
$(m_{\mathrm{start}}, m_0)$.

\paragraph{Numerical safeguards.}
For the Sherman--Morrison micro-update, three safeguards are applied
(also discussed in App.~\ref{app:micro_macro}):
(i) periodic refresh of $H^{-1}$ every $100$ updates to prevent
floating-point drift;
(ii) clip $\|\Delta\hat{\boldsymbol c}\|$ to at most $50\%$ of
$\|\hat{\boldsymbol c}\|$ per update;
(iii) damp the Newton step by $0.8$ to prevent overshooting near
decision boundaries where the likelihood curvature varies sharply.

\section{Proofs for Decision Rules}
\label{supp:decision_rules}

This section collects the proofs of the three FDR-related theorems of
Section~\ref{sec:decision_rules}: the exact FDR control of Rule~1
(Theorem~\ref{thm:rule1_fdr}), the near-exact FDR control of
Rule~2 ($\FDR\le\tau+C_{\mathrm{BC}}(L\beta_N+L_b|\Delta\hat b|)$, Theorem~\ref{thm:rule2_fdr}), and the optimality of Rule~2
under correct specification (Theorem~\ref{thm:power_dominance}).

\subsection{Proof of Theorem~\ref{thm:rule1_fdr} (Rule 1 FDR control)}
\label{supp:rule1_proof}

The argument structurally inherits Sun--Cai's running-average rule
\citep{san-cai2007}; the only count-space-specific work is verifying
that the marginal lfdr score $\lfdr_{\mathrm{marg}}(k_i, m_i)$ of
\eqref{eq:lfdr_marg} carries Sun--Cai's required posterior property
under the finite-$m$ Beta--Binomial model, and that the data-dependent
gate $S$ does not break the guarantee. We address both in turn.

\paragraph{Step 1: $\lfdr_{\mathrm{marg}}(k_i, m_i)$ is the Bayes posterior of $H_{0,i}$ under the marginal model.}
Sun--Cai's Theorem~1 controls FDR for any decision rule of the form
$\delta_i = \mathbb{I}[\mathrm{lfdr}_i^* \leq \hat t]$ where
$\mathrm{lfdr}_i^* = \Pr(H_{0,i} \mid \mathrm{data}_i)$ is the
posterior null probability and $\hat t$ is the running-average
threshold. The required identification is that $\lfdr_{\mathrm{marg}}$
plays this role here.

By Theorem~\ref{thm:spatial_marginal}, marginalizing the spatial
mixture~\eqref{eq:spatial_mixture_pointwise} over the location
distribution $\mu$ recovers a non-spatial two-group mixture with
global null proportion $\bar\alpha = \mathbb E_\mu[\alpha(\loc)]$
and Beta-Binomial likelihoods $P_0(k,m), P_1(k,m;b)$ that are
\emph{themselves} the count-space marginal pmfs of the latent
$p_i^*$ (Section~\ref{sec:method_nonspatial}). By construction
\begin{equation}
\lfdr_{\mathrm{marg}}(k_i, m_i)
\;=\; \frac{\hat{\bar\alpha}\, P_{0,i}}
           {\hat{\bar\alpha}\, P_{0,i} + (1-\hat{\bar\alpha})\, P_{1,i}}
\;=\; \Pr(H_{0,i} \mid k_i, m_i)
\label{eq:lfdr_marg_is_posterior}
\end{equation}
under the marginal model, where the equality holds at every finite
$m_i \geq 1$ exactly (not asymptotically): the Beta-Binomial
conjugacy~\eqref{eq:P_components} integrates the latent-$p^*$
mixture in closed form, with no approximation. This is the central
finite-$m$ point: the count-space substitution preserves the
posterior property of the Sun--Cai score \emph{without invoking the
$m \to \infty$ limit}. The $O(1/m)$ rate at which
$\lfdr_{\mathrm{marg}}(k_i, m_i)$ converges to the classical
continuous lfdr (Theorem~\ref{app:thm2}, App.~\ref{supp:consistency_proofs})
governs how close the count-space rule comes to the continuous-data
oracle, but does not enter the FDR validity argument: Sun--Cai's
guarantee depends only on $\lfdr_{\mathrm{marg}}$ being the correct
posterior under the model used to derive the rule, not on closeness
to any continuous limit.

\paragraph{Step 2: The data-dependent gate $S$ is a measurable refinement of the all-hypothesis rule.}
Define the gate indicator $G_i = \mathbb{I}[\widehat{\lfdr}_{\mathrm{spatial}}
(\hat\alpha(\loc_i), k_i, m_i) \leq \tau]$. Each $G_i$ is a
deterministic function of the observation tuple $(\loc_i, k_i, m_i)$
together with the global estimators $(\hat{\bar\alpha}, \hat b,
\hat\alpha)$, all of which are themselves deterministic functions of
$\{(\loc_j, k_j, m_j)\}_{j=1}^N$. Therefore $\{G_i\}_{i=1}^N$ is
$\sigma(\mathrm{data})$-measurable, and the resulting decision
\begin{equation}
\delta_i^{(1)} \;=\; G_i \cdot \mathbb{I}\!\left[\lfdr_{\mathrm{marg}}(k_i, m_i) \leq \hat t^{(1)}\right]
\label{eq:rule1_decision}
\end{equation}
is a measurable function of the data. The running-average threshold
$\hat t^{(1)}$ is constructed from $\{\lfdr_{\mathrm{marg}}(k_i, m_i)
: i \in S\}$ in exactly the form Sun--Cai's procedure prescribes,
with the index set $\{1, \ldots, N\}$ replaced by $S$. The decision
\eqref{eq:rule1_decision} can be rewritten as the rule that
applies Sun--Cai's running-average procedure on the full index set
$\{1, \ldots, N\}$ with the candidate-rejection set restricted to
$S$ a priori; this is structurally identical to running Sun--Cai's
rule on the (random but measurable) subset $S$.

The key fact: Sun--Cai's FDR guarantee
$\mathrm{FDR}(\delta) \leq\tau$ holds for any rule $\delta_i =
\mathbb{I}[\lfdr_{\mathrm{marg}, i} \leq \hat t]$ where $\hat t$
respects the running-average constraint
$|\mathcal R|^{-1} \sum_{i \in \mathcal R} \lfdr_{\mathrm{marg}, i}
\leq\tau$ on the rejection set $\mathcal R = \{i : \delta_i = 1\}$.
The gate refines this rule by rejecting only when both $G_i = 1$
\emph{and} $\lfdr_{\mathrm{marg}, i} \leq \hat t^{(1)}$. The rejection
set $\mathcal R_1 \subseteq S$ is contained in the rejection set
$\mathcal R$ that Sun--Cai's running-average rule would produce on
the unrestricted index set, and the running-average constraint on
$\mathcal R_1$ is at least as strong as on $\mathcal R$ (since
$\mathcal R_1 \subseteq \mathcal R$ and the average is computed only
over $\mathcal R_1$). Combining,
\begin{equation}
\FDR(\mathcal R_1)
\;=\; \mathbb{E}\!\left[\frac{|\mathcal R_1 \cap \mathcal H_0|}{|\mathcal R_1| \vee 1}\right]
\;\leq\; \mathbb{E}\!\left[\frac{|\mathcal R \cap \mathcal H_0|}{|\mathcal R| \vee 1}\right]
\;\leq\; \tau,
\label{eq:rule1_fdr_chain}
\end{equation}
where the first inequality follows from $\mathcal R_1 \subseteq
\mathcal R$ and the running-average constraint being respected on the
sub-rejection-set, and the second is Sun--Cai's guarantee.

\paragraph{Why no slack arises.}
The score $\lfdr_{\mathrm{marg}}(k_i, m_i)$ depends only on
$(k_i, m_i, \hat{\bar\alpha}, \hat b)$, not on the spatial estimator
$\hat\alpha$. The spatial estimator enters Rule~1 \emph{only}
through the gate indicator $G_i$, which is a binary measurable
function. Consequently, the influence-function argument of
Theorem~\ref{thm:rule2_fdr} (where the score $T_i$ depends on
$\hat\alpha$ and a coin-flip-induced perturbation in
$\hat\alpha$ propagates to the Rule~2 FDR slack) does not arise here:
the score is invariant to the spatial estimator. The gate's
data-dependence is absorbed by Step~2's measurability argument, not
by an asymptotic perturbation bound. Hence $\FDR(\mathcal R_1) \le \tau$
holds at every finite $N$, exactly.

\section{Rule~2 Guarantees and Error control}
\label{supp:rule2_proof}

Throughout, write $g_i(k):=\widehat{\lfdr}_{\mathrm{spatial}}(\hat\alpha(\loc_i),k,m_i)$
for the spatial lfdr score of hypothesis $i$ as a function of its count, so
$T_i=g_i(k_i)$ and $\tilde T_i=g_i(m_i-k_i)$; the score is increasing in the count, so
the folded score is $\breve T_i=g_i(\breve k_i)$ with $\breve k_i=\min(k_i,m_i-k_i)$ and
its mirror $\widetilde{\breve T}_i=g_i(m_i-\breve k_i)=\max(T_i,\tilde T_i)$.

\subsection{Roadmap, contribution, and the three guarantees}
\label{sec:rule2_roadmap}

As described in the main text, Rule~2 reuses each hypothesis's own count in the score it is then tested with., allowing for a better power (it keeps the local information that masking discards) and also the one place it departs from an exact mirror procedure. Here we account of the error control this buys in three registers, and presents the landscape of which guarantee rests on which assumptions.

\paragraph{Contribution.}
\begin{enumerate}
\item[(i)] \textbf{Exact finite-sample FDR for a discrete mirror rule.}
The \emph{folded} construction (Section~\ref{sec:folded_exact}) controls FDR at level
$\tau$ with \emph{no slack and no asymptotics}, at every multiplicity $m_i\ge1$, using only
the flat-null symmetry (Assumption~\ref{assn:null}) and conditional independence
(Assumption~\ref{assn:cond_indep}). It requires \emph{neither} that the fitted surface be
correctly specified \emph{nor} any threshold-regularity condition. This is the count-space
analogue of Barber--Cand\`es sign-symmetry, with the mirror supplied by the reflection
$k\mapsto m-k$ and the fair coin by the discrete flat null. It is the rock the section
stands on.
\item[(ii)] \textbf{A characterized, computable relaxation for the deployed plug-in.} The
departure of the plug-in from exactness is governed by a single-flip influence bound
$\delta=L\beta_N+L_b|\Delta\hat b|=O(\max(1/N,1/\lambda))$
(Section~\ref{sec:plugin_perturbation}), with a spatial channel ($O(1/\lambda)$, the RKHS
coefficients) and a pooled-scalar channel ($O(1/N)$, the two global parameters). Notice, this is
\emph{not} an exact finite-sample FDR for the plug-in- we state the relaxation as a
$\beta_N$-controlled quantity (Remark~\ref{rem:plugin_fdr}), computable on the data at hand.
\item[(iii)] \textbf{An averaged (mFDR) guarantee for the plug-in at the slack rate.} We
prove $\mathrm{mFDR}(\mathcal R_2)\le\tau+O(\delta)$ (Theorem~\ref{thm:rule2_mfdr}),
\emph{conditional} on a threshold-stability hypothesis (Assumption~\ref{assn:threshold_stability}).
The proof reduces the slack to a single sum over each null's \emph{own} distance to the
cutoff. Its role is to \emph{isolate} the one genuine obstruction (the simultaneous-perturbation stability of the
data-dependent cutoff) into a single named assumption, with the unconditional fallback (Theorem~\ref{thm:rule2_mfdr_uncond}) is stated alongside, so the conditional nature is visible.
\end{enumerate}
Aa a note, we see this delineation as part of the paper contribution: exact where exactness holds (the folded rule), characterized relaxation where it does not (the plug-in).

\paragraph{The three guarantees and their dependencies.}
\begin{center}
\renewcommand{\arraystretch}{1.35}
\begin{tabular}{@{}llll@{}}
\toprule
\textbf{Register} & \textbf{Result} & \textbf{Guarantee} & \textbf{Assumptions}\\
\midrule
Folded $(\delta=0)$ & Prop.~\ref{prop:folded_exact} & $\FDR\le\tau$ exact, all $m_i\ge1$
  & \ref{assn:null}, \ref{assn:cond_indep}\\
Plug-in, finite-sample FDR & Rem.~\ref{rem:plugin_fdr} & $\beta_N$-controlled relaxation
  & \ref{assn:bounded} (characterization)\\
Plug-in, mFDR (conditional) & Thm.~\ref{thm:rule2_mfdr} & $\mathrm{mFDR}\le\tau+O(\delta)$
  & \ref{assn:null}--\ref{assn:threshold_stability}\\
Plug-in, mFDR (unconditional) & Thm.~\ref{thm:rule2_mfdr_uncond}
  & $\tau+O(\underline s)$ or $\tau+\tilde O(\lambda^{-d}\delta)$
  & \ref{assn:null}--\ref{assn:boundary}\\
\bottomrule
\end{tabular}
\end{center}
Here, the $\delta=0$ baseline (Section~\ref{sec:delta0}) carries the only
unconditional FDR claim, on the fewest assumptions. Turning on the plug-in
(Section~\ref{sec:plugin_perturbation}) introduces $\delta$ and forfeits exact FDR for a
\emph{structural} reason (the surface is learned from the counts it scores), not a technical
one. The averaged guarantee (Section~\ref{sec:mfdr}) recovers control at the slack rate,
contingent on Assumption~\ref{assn:threshold_stability}. 

\subsection{Setup and shared objects}
\label{sec:rule2_setup}

We work in count space $(k_i,m_i)$, reflecting that the latent $p_i^\ast$ is never observed,
on the active set $\mathcal A=\{i:m_i\ge1\}$ with $N=|\mathcal A|$. The count-space null and
alternative pmfs are $P_0(k\mid m)=1/(m+1)$ and $P_1(k\mid m;b)$, a Beta--Binomial with
$a\in(0,1)$, $b>1$, concentrating its mass at small $k$. The surface $\hat\alpha$ and shape
$\hat b$ are the penalized estimates of~\eqref{eq:spatial_objective}; the global scalars
$(\hat{\bar\alpha},\hat b)$ are estimated beforehand from the marginal counts and held fixed
during the spatial fit. The step-up rule~\eqref{eq:rule2_threshold} sets
\[
  \hat t_q=\max\Big\{t\in\{T_i\}\cup\{\tilde T_i\}:
   \tfrac{1+\#\{i:\tilde T_i\le t\}}{1\vee\#\{i:T_i\le t\}}\le\tau\Big\},
  \qquad \mathcal R_2=\{i:T_i\le\hat t_q\}.
\]
Write $V=\#\{i\in\mathcal H_0:T_i\le\hat t_q\}$ (false rejections),
$\tilde V=\#\{i\in\mathcal H_0:\tilde T_i\le\hat t_q\}$ (null mirror count),
$R=|\mathcal R_2|$, $A=\#\{i:\tilde T_i\le\hat t_q\}$, and
$\FDR=\mathbb E[V/(1\vee R)]$, $\mathrm{mFDR}=\mathbb E[V]/\mathbb E[R]$. The
\emph{orientation} is $B_i=\mathbb 1(k_i>m_i/2)=\mathbb 1(T_i>\tilde T_i)$, the
\emph{active set} is $\{i\ \text{active}\}=\{\breve T_i\le\hat t_q<\widetilde{\breve T}_i\}$,
and the \emph{own-proximity} of $i$ to the cutoff is
$\eta_i=\min(|\breve T_i-\hat t_q|,\ |\widetilde{\breve T}_i-\hat t_q|)$, with
$N_\partial(w)=\#\{i\in\mathcal H_0:\eta_i\le w\}$.

\subsection{Assumptions}
\label{sec:rule2_assumptions}

The first two assumptions are all the $\delta=0$ register needs; the remainder enter only for the plug-in.

\begin{assumption}[Discrete-uniform null]\label{assn:null}
For $i\in\mathcal H_0$, $k_i\mid (H_{0,i},m_i)\sim\mathrm{Uniform}\{0,\dots,m_i\}$; equivalently
$P_0(k\mid m)=1/(m+1)$ is the exact null law. When $k_i$ is the rank of a statistic among
$m_i$ exchangeable null draws, this is exact by exchangeability, at every $m_i\ge1$.
\end{assumption}

\begin{assumption}[Conditional independence across hypotheses]\label{assn:cond_indep}
Given the label configuration $\{\mathbb 1(j\in\mathcal H_0)\}$ and sizes $\{m_j\}$, the
counts $\{k_j\}$ are mutually independent. (Spatial dependence resides in the latent label
field. Simply put: given labels, each count depends only on its own hypothesis.)
\end{assumption}

\begin{assumption}[Discrimination at the threshold]\label{assn:shoulder}
There is $\underline\sigma>0$ such that every null $i$ with an attained score adjacent to
$\hat t_q$ has per-increment log-likelihood-ratio change at the crossing $k_i^\ast=\arg\min_k|g_i(k)-\hat t_q|$ bounded below,
\[
  \big|\Delta_k\log\mathrm{LR}_i(k_i^\ast)\big|
  =\Big|\log\tfrac{(m_i-k_i^\ast)(k_i^\ast+a)}{(k_i^\ast+1)(m_i-k_i^\ast-1+b)}\Big|
  \ge\underline\sigma,
\]
equivalently the local score-grid step is bounded below,
$s_i(\hat t_q)\ge\underline s:=\hat t_q(1-\hat t_q)\,\underline\sigma$.
\end{assumption}
This says the score crosses the threshold on a genuine slope, not a plateau. This condition is \textbf{not an extra regularity imposition on the model}: the count-space increment $\Delta_k\log\mathrm{LR}$ is precisely the local log-likelihood ratio between alternative and null, so the slope of the score at the cut \emph{is} the local discriminability of signal from noise there. A flat crossing ($\underline\sigma=0$) means $P_1/P_0$ is locally constant: adjacent counts carry identical evidence about null versus alternative, the cut separates nothing, and no thresholding procedure, ours or any other, has a quantity to control. Assumption~\ref{assn:shoulder} is therefore equivalent to the statement that the operating threshold sits where signal is detectable; it is implied by the existence of any distinguishable signal at the cut, and fails only in the degenerate regime where there is nothing to detect. For the Beta--Binomial alternative with $a\in(0,1),b>1$ the increment is bounded away from zero on the entire attainable interior, its only zero at the mode $k=0$ (deep at the low-score end of the rejection region, not at the boundary), so any threshold in the informative region inherits a uniform $\underline\sigma$ automatically: qualitatively the bound is detectability, quantitatively it is supplied by the model rather than assumed against it. This sets Assumption~\ref{assn:shoulder} apart from the stability hypotheses introduced below (Assumptions~\ref{assn:threshold_stability} and~\ref{assn:ranking_stability}), which constrain how the procedure responds to a single-coin perturbation.

\begin{figure}[t]
\centering
\begin{tikzpicture}
\begin{axis}[
    width=11cm, height=6cm,
    xlabel={count $k$}, ylabel={score $g_\ell(k)$},
    xmin=0, xmax=14, ymin=0, ymax=1,
    ytick={0,0.5,1}, xtick={0,2,4,6,8,10,12,14},
    axis lines=left, clip=false,
    every axis plot/.append style={line width=1pt},
]
  \def\tq{0.5468}
  \addplot[draw=none, fill=rejcol, forget plot, domain=0:14, samples=2] {\tq} \closedcycle;
  \addplot[tickcol, only marks, mark=*, mark size=1.6pt] coordinates {
    (0,0.1563)(1,0.3346)(2,0.4814)(3,0.6121)(4,0.7240)(5,0.8139)(6,0.8812)
    (7,0.9285)(8,0.9596)(9,0.9788)(10,0.9898)(11,0.9956)(12,0.9984)(13,0.9996)(14,0.9999)
  };
  \addplot[tickcol, smooth, mark=none] coordinates {
    (0,0.1563)(1,0.3346)(2,0.4814)(3,0.6121)(4,0.7240)(5,0.8139)(6,0.8812)
    (7,0.9285)(8,0.9596)(9,0.9788)(10,0.9898)(11,0.9956)(12,0.9984)(13,0.9996)(14,0.9999)
  };
  \addplot[black, very thick, dashed, domain=0:14, samples=2] {\tq};
  \node[anchor=west] at (axis cs:14,0.5468) {$\hat t_q$};
  \draw[fencecol,->,>=stealth] (axis cs:0,0.34) -- (axis cs:0,0.17);
  \node[fencecol,align=center,anchor=south] at (axis cs:0.5,0.34) {\scriptsize mode $(k{=}0)$};
  \draw[driftcol,->,>=stealth] (axis cs:6,0.80) -- (axis cs:2.6,0.56);
  \node[driftcol,align=center,anchor=west] at (axis cs:6,0.82)
      {\scriptsize threshold on the shoulder\\[-2pt]\scriptsize (steep: grid step $\ge\underline s$)};
\end{axis}
\end{tikzpicture}
\caption{\textbf{Why the grid step is bounded below at the threshold.} The score $g_\ell(k)$
rises with the count (marks: true values for $a=0.5$, $b=6$, $m_\ell=14$, $\alpha_\ell=0.6$).
The Beta--Binomial alternative concentrates mass at small $k$, so its mode sits at the
low-score end; attainable scores are widely spaced there and through the threshold, crowding
together only as $g_\ell\to1$. The threshold $\hat t_q$ crosses on the steep rise, where the
per-increment change is bounded below and the spacing of attainable scores is at least
$\underline s$ (Assumption~\ref{assn:shoulder}). This is the same quantity that gives the test
power: a flat crossing would mean signal and null are indistinguishable there.}
\label{fig:shoulder}
\end{figure}

A single coin flip perturbs the fitted surface, and hence every score, by a controlled amount
$\delta_i:=L\,\beta_N^{(i)}+L_b\,|\Delta\hat b|$, where
$\beta_N^{(i)}=\|\hat{\boldsymbol\alpha}^{(i)}-\hat{\boldsymbol\alpha}\|_\infty$ is the sup-norm
surface change from refitting with $i$'s count flipped (Section~\ref{sec:plugin_perturbation}) and
$L,L_b$ are the Lipschitz constants of $\widehat{\lfdr}_{\mathrm{spatial}}$ in $\alpha,b$. Counts
live on a grid of $\sim m_i$ values, so the attainable scores are discrete, and near the threshold
their spacing is bounded below by $\underline s$ (Assumption~\ref{assn:shoulder}); when the drift
$\delta_i$ is less than a quarter of that spacing it cannot push any score across the threshold, a
\emph{dead zone} in which the perturbation alters no decision, the discrete analogue of the
structural invariance masking imposes. Unlike masking, we do not posit this as an assumption: both
sides are characterized in this section, the drift by the influence operator and the gap by the
Beta--Binomial geometry, and Lemma~\ref{lem:resolution} (Section~\ref{sec:plugin_perturbation})
shows the sub-grid bound $\delta_i<\tfrac14\underline s$ holds throughout the regime this framework
targets and is checkable on the fitted surface.

\begin{assumption}[Bounded boundary density]\label{assn:boundary}
The score pairs do not pile up at the cutoff: there is $\rho<\infty$ with
$\mathbb E\,N_\partial(w)\le\rho\,w\,N$ for all $w\le\tfrac12\underline s$. Equivalently,
$\mathbb E\,\#\{i\in\mathcal H_0:|\breve T_i-\hat t_q|\le\delta_i\}=O(|\mathcal R_2|)$.
\end{assumption}
This is a condition on the \emph{population} of scores across the $m_0$ nulls. The
Beta--Binomial structure makes it mild: each null's mass is spread over its $\sim m_i$ grid
values (step bounded below by Assumption~\ref{assn:shoulder}), so any single null lands within
$\delta_i$ of the threshold with probability at most $1/(m_i+1)$, and the expected boundary
count is at most $\sum_i 1/(m_i+1)$. What the assumption adds is that the realized surfaces
$\hat\alpha(\loc_i)$ do not themselves cluster many distinct nulls at the cut, a cross-null
condition the per-null grid cannot control (See Figure \ref{fig:shoulder}).

\begin{assumption}[Threshold-stability floor; promotes App.~\ref{sec:A6}]
\label{assn:threshold_stability}
A single flip $k_i\mapsto m_i-k_i$ moves the realized cutoff by $O(\delta)$:
$|\hat t_q^{(i)}-\hat t_q|\le C_0\,\delta$, uniformly over $i\in\mathcal H_0$. Equivalently,
the empirical ratio $r(t)=(1+A(t))/(1\vee R(t))$ crosses $\tau$ transversally, with slope
$s_{\min}=\Theta(1)$ at the crossing; the two forms are \emph{linked, not equivalent}, by
$C_0=\rho/s_{\min}$ (with $\rho$ from Assumption~\ref{assn:boundary}). See
Figure~\ref{fig:threshold_floor}.
\end{assumption}
\noindent\emph{This is the paper's hard floor asserted, not a routine regularity.}
Appendix~\ref{sec:A6} identifies the simultaneous-perturbation stability of the
global functional $\hat t_q$ as a hard floor and \emph{declines} to bound it in closed form;
Assumption~\ref{assn:threshold_stability} assumes precisely that bound. It is \emph{not}
implied by Assumptions~\ref{assn:shoulder}--\ref{assn:boundary}: those control the
\emph{expected} count near the cutoff but not the realized flatness of $r$ at $\tau$, and on the
discrete score lattice a flat stretch at $\tau$ has positive probability, so the crossing can
lurch a full grid step under an arbitrarily small perturbation (Figure~\ref{fig:threshold_floor},
right). We therefore treat it as a flagged hypothesis: Theorem~\ref{thm:rule2_mfdr} is
conditional on it, and Theorem~\ref{thm:rule2_mfdr_uncond} gives the unconditional fallback that
holds without it. Section~\ref{sec:A6} discusses why the model structure favors it and where it
is fragile.

The remaining regularity conditions (Assumption~\ref{assn:bounded}: residuals/observed
informations uniformly bounded; Assumption~\ref{assn:reg}: $\lambda\ge\lambda_0>0$,
$\lambda_{\min}(\boldsymbol K)>0$; Assumption~\ref{assn:kernel}: symmetric kernel with
$\|\boldsymbol K\|_{\mathrm{op}}=O(1)$ and bounded effective degree $\nu$) are exactly as in
Section~\ref{sec:method_spatial} and feed only the influence bound of
Section~\ref{sec:plugin_perturbation}.

\subsection{The \texorpdfstring{$\delta=0$}{delta=0} baseline (Steps 1--3): exact control for a
flip-invariant surface}
\label{sec:delta0}

We first establish exact control for \emph{any} surface that does not depend on the
orientations $\{B_i\}$. This reproduces the Barber--Cand\`es guarantee in count space and is
the baseline the plug-in perturbs. Two facts are used repeatedly.

\begin{lemma}[Deterministic ratio bound; exact, engine-free]
\label{lem:det_ratio}
At the realized cutoff, $1+A\le\tau(1\vee R)$, hence $A\le\tau R$ pointwise, and since
$\tilde V\le A$, $\ \mathbb E[\tilde V]\le\mathbb E[A]\le\tau\,\mathbb E[R]$.
\end{lemma}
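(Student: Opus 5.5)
The plan is to read everything off the definition of the step-up cutoff in \eqref{eq:rule2_threshold}, without invoking any probabilistic structure. The lemma is called engine-free for this reason: it uses no exchangeability, no null law, and nothing about how the surface was fit. The argument has three short steps.

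\textbf{Step 1 (the pointwise ratio bound).} First, handle the degenerate case. If the feasible set in \eqref{eq:rule2_threshold} is empty, we adopt the usual convention $\hat t_q=-\infty$. Then $R=A=0$, and the claim $1+A\le\tau(1\vee R)$ cannot be meant literally, since $\tau<1$. We therefore read the lemma on the event that a cutoff is attained. On the complementary event, $A=R=0$, so $A\le\tau R$ still holds trivially, and this is all that the expectation statements require. On the attained event, $\hat t_q$ is a maximizer over the finite set $\{T_i\}\cup\{\tilde T_i\}$, so it is itself feasible. Evaluating the feasibility inequality at $t=\hat t_q$ gives
\[
\frac{1+\#\{i:\tilde T_i\le\hat t_q\}}{1\vee\#\{i:T_i\le\hat t_q\}}\le\tau .
\]
Since $A=\#\{i:\tilde T_i\le\hat t_q\}$ and $R=\#\{i:T_i\le\hat t_q\}=|\mathcal R_2|$, clearing the positive denominator gives $1+A\le\tau(1\vee R)$.

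\textbf{Step 2 (from $1\vee R$ to $R$).} If $R\ge1$, then $1\vee R=R$, so $A\le 1+A-1\le\tau R-1<\tau R$. If $R=0$, then $1+A\le\tau<1$ forces $A<0$, which is impossible. Hence on the attained event $R\ge1$ automatically, and $A\le\tau R$ holds pointwise in every case.

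\textbf{Step 3 (expectations).} By definition $\tilde V=\#\{i\in\mathcal H_0:\tilde T_i\le\hat t_q\}$ counts a subset of the indices counted by $A$, so $\tilde V\le A$ pointwise. Both quantities are bounded by $N$, so they are integrable. Taking expectations in $\tilde V\le A\le\tau R$ gives $\mathbb E[\tilde V]\le\mathbb E[A]\le\tau\,\mathbb E[R]$.

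There is no genuine obstacle here. The only delicate point is being explicit about the empty-feasible-set convention, so that the pointwise statement is not vacuous or false there. I would state that convention in one line at the start.
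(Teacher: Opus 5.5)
Your proof is correct and follows the paper's argument essentially line by line: evaluate the feasibility condition of \eqref{eq:rule2_threshold} at $\hat t_q$, split on $R\ge1$ versus $R=0$, then use $\tilde V\le A$ before taking expectations. Your explicit handling of the empty-feasible-set event is a small but genuine improvement: there no cutoff exists, $A=R=0$, and only $A\le\tau R$ survives, which suffices for the expectations, whereas the paper's proof simply asserts that the maximum is attained.
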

\begin{proof}
$\hat t_q$ lies in the finite candidate set $\{T_i\}\cup\{\tilde T_i\}$, so the maximum
in~\eqref{eq:rule2_threshold} is attained and $1+A\le\tau(1\vee R)$ holds at $\hat t_q$. If
$R\ge1$ this gives $A\le\tau R-1<\tau R$; if $R=0$ then $1+A\le\tau\le1$ forces $A=0=\tau R$.
Take expectations and use $\tilde V\le A$. No martingale or independence is used.
\end{proof}

\begin{lemma}[Measure-preserving flip of a null]
\label{lem:flip}
Fix $i\in\mathcal H_0$. Under Assumptions~\ref{assn:null}--\ref{assn:cond_indep}, the joint law
of the counts is invariant under $k_i\mapsto m_i-k_i$ (all other counts fixed); hence for any
measurable functional $\Phi$, $\ \mathbb E[\Phi(\text{counts})]=\mathbb E[\Phi(\text{flip}_i\,\text{counts})]$.
\end{lemma}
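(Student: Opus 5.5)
The plan is to show that the flip map $\phi_i$, which replaces $k_i$ by $m_i-k_i$ and leaves every other count unchanged, pushes the joint law of $(k_1,\dots,k_N)$ forward to itself. The statement $\mathbb E[\Phi(\text{counts})]=\mathbb E[\Phi(\text{flip}_i\,\text{counts})]$ then follows from the change-of-variables formula for image measures. We work conditionally on the label configuration $\{\mathbb 1(j\in\mathcal H_0)\}$ and the sizes $\{m_j\}$, and integrate over these at the end. The fixed index $i$ lies in $\mathcal H_0$ on every configuration we condition on, so the labels are not disturbed by the flip.

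First, by Assumption~\ref{assn:cond_indep}, the conditional joint pmf factorizes as $\prod_j p_j(k_j)$, where $p_j$ depends only on $j$'s own label and on $m_j$. Second, by Assumption~\ref{assn:null}, $p_i(k)=1/(m_i+1)$ for every $k\in\{0,\dots,m_i\}$. This factor is constant on the lattice, hence $p_i(m_i-k)=p_i(k)$. Third, $\phi_i$ is an involution, and in particular a bijection, of the finite product lattice $\prod_j\{0,\dots,m_j\}$. It changes only the $i$-th coordinate, and it maps $\{0,\dots,m_i\}$ onto itself. Putting these together, for every lattice point $\mathbf k$,
\[
P(\phi_i\mathbf k)=p_i(m_i-k_i)\prod_{j\ne i}p_j(k_j)=p_i(k_i)\prod_{j\ne i}p_j(k_j)=P(\mathbf k).
\]
So the conditional law is $\phi_i$-invariant. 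The support is finite, so for any measurable $\Phi$ (bounded or nonnegative, so that expectations exist) we can reindex the sum by the bijection $\phi_i$:
\[
\sum_{\mathbf k}\Phi(\phi_i\mathbf k)P(\mathbf k)=\sum_{\mathbf k}\Phi(\phi_i\mathbf k)P(\phi_i\mathbf k)=\sum_{\mathbf k'}\Phi(\mathbf k')P(\mathbf k').
\]
Taking the outer expectation over labels and sizes gives the unconditional identity.

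There is no real obstacle. The one point that needs care is that $\Phi$ may be any functional of the full count vector, including the data-dependent fit $\hat\alpha$, $\hat b$ and the threshold $\hat t_q$. This is harmless, because the lemma concerns only the law of the inputs and says nothing about how the fitted surface responds to the flip. That sensitivity is exactly what is handled separately later through $\beta_N$. We also note that if $m_i$ or the labels were themselves random, the argument still goes through, because it is carried out conditionally on them.
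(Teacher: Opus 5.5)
Your proof is correct and follows the same route as the paper: condition on labels and sizes, use conditional independence and the discrete-uniform null to get invariance of the conditional joint law under the flip, then average over labels. The only difference is that you spell out the reindexing by the involution, which the paper's one-line proof leaves implicit.
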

\begin{proof}
Condition on labels and sizes. Given these, $k_i$ is independent of $\{k_j\}_{j\ne i}$
(Assumption~\ref{assn:cond_indep}) and uniform (Assumption~\ref{assn:null}), hence invariant
under $k\mapsto m_i-k$; the conditional joint law is unchanged, and averaging over labels gives
the claim.
\end{proof}

\paragraph{Step 1: Discrete null symmetry.}
By Assumption~\ref{assn:null}, for every $j\in\{0,\dots,m_i\}$,
\begin{equation}
  P(k_i=j\mid H_{0,i})=P(k_i=m_i-j\mid H_{0,i})=\tfrac1{m_i+1}.
  \label{eq:null_symmetry}
\end{equation}
Defining $B_i=\mathbb 1(k_i>m_i/2)$ (ties at $k_i=m_i/2$ broken by an independent fair coin),
\eqref{eq:null_symmetry} gives that, conditional on the fold $\breve k_i$, $B_i$ is
$\mathrm{Bernoulli}(1/2)$, and by Assumption~\ref{assn:cond_indep} the $\{B_i\}_{i\in\mathcal H_0}$
are mutually independent. Consequently, for any surface $\alpha^\circ$ \emph{not depending on}
$\{B_i\}_{i\in\mathcal H_0}$, the rejection-mirror pair is exchangeable for nulls,
\begin{equation}
  (T_i,\tilde T_i)\mid(\alpha^\circ,H_{0,i})\ \stackrel{d}{=}\ (\tilde T_i,T_i)\mid(\alpha^\circ,H_{0,i}).
  \label{eq:step1_exchange}
\end{equation}
This is the count-lattice replacement for the continuous mirror $p_i\leftrightarrow1-p_i$; it is
exact for a surface external to the flips and only approximate for the plug-in
(Section~\ref{sec:plugin_perturbation}).

\paragraph{Step 2: FDP decomposition.}
With $V,\tilde V$ as above,
\begin{equation}
  \mathrm{FDP}(\mathcal R_2)=\frac{V}{1\vee R}
   \le\underbrace{\frac{1+\tilde V}{1\vee R}}_{\le\,\tau\ \text{by~\eqref{eq:rule2_threshold}}}
     \cdot\ \frac{V}{1+\tilde V},
  \label{eq:fdp_decomposition}
\end{equation}
the first factor $\le\tau$ by construction of $\hat t_q$ (restricting its numerator to nulls
only tightens it). It therefore suffices, for exact control, to show
$\mathbb E[V/(1+\tilde V)]\le1$.

\paragraph{Step 3: Fair-coin reduction.}
Order the nulls by folded score $\breve T_{(1)}\le\cdots\le\breve T_{(m_0)}$ and let $J$ be the
number with $\breve T_{(i)}\le\hat t_q$. For a flip-invariant surface, $\breve T_i$ is a function
of $(\alpha^\circ,\breve k_i)$ and $\breve k_i\perp B_i$ by~\eqref{eq:null_symmetry}, so the
folded ordering is \emph{independent of the orientations} $\{B_i\}$, and conditional on it the
$B_{(i)}$ are i.i.d.\ $\mathrm{Bernoulli}(1/2)$. With $V=\sum_{i\le J}(1-B_{(i)})$,
$\tilde V=\sum_{i\le J}B_{(i)}$ and $J$ a stopping time on the backward filtration
$\mathcal F_j=\sigma(\sum_{i\le j}B_{(i)},B_{(j+1)},\dots,B_{(m_0)})$, the Barber--Cand\`es
reverse-martingale \citep{Barber2015} gives
\begin{equation}
  \mathbb E\!\left[\frac{1+J}{1+\sum_{i\le J}B_{(i)}}\right]\le2,
  \qquad\text{hence}\qquad \mathbb E\!\left[\frac{V}{1+\tilde V}\right]\le1.
  \label{eq:bc_lemma}
\end{equation}
With~\eqref{eq:fdp_decomposition} this yields $\FDR(\mathcal R_2)\le\tau$ \emph{exactly}, for any
flip-invariant surface. \emph{The sign-independence of the folded ordering is exactly what the
reverse-martingale needs, and exactly what the deployed plug-in forfeits}
(Section~\ref{sec:plugin_perturbation}).

\subsection{The folded construction: a concrete flip-invariant surface with exact control}
\label{sec:folded_exact}

The reflection $k\leftrightarrow m-k$ fixes the folded count $\breve k_i$. Refitting the surface
on the folded data realizes the $\delta=0$ baseline concretely. Replace the count-space
likelihoods by their flip-symmetrized forms
\begin{equation}
  P_0^{\mathrm f}(\breve k)=\tfrac2{m+1}\mathbb 1[\breve k\ne m/2]+\tfrac1{m+1}\mathbb 1[\breve k=m/2],
  \quad
  P_1^{\mathrm f}(\breve k)=P_1(\breve k)+P_1(m-\breve k)\mathbb 1[\breve k\ne m/2],
  \label{eq:folded_pmfs}
\end{equation}
and solve~\eqref{eq:spatial_objective} with $(P_{0,i},P_{1,i})$ replaced by
$(P_{0,i}^{\mathrm f},P_{1,i}^{\mathrm f})$ at $\breve k_i$, with the shape also fit on folded
counts, $\hat b^{\mathrm f}=\hat b(\{\breve k_j\})$. Write $\hat\alpha^{\mathrm f}$ for the
result and $\hat\alpha^{\mathrm u}=\hat\alpha$ for the plug-in. Folding is a deterministic
many-to-one map; the pushforward of the mixture is again a two-group mixture with the
\emph{same} mixing weight $\alpha(\loc_i)$ and components~\eqref{eq:folded_pmfs}, so the folded
model is correctly specified with the same estimand (identifiable because the one-sided
alternative keeps its mass at $\breve k\approx0$, distinct from the flat folded null).

\begin{lemma}[Flip-invariance of the folded rule]\label{lem:fold_invariance}
Under~\eqref{eq:folded_pmfs}, the flip $k_i\leftrightarrow m_i-k_i$ leaves every folded count
$\breve k_j$ unchanged (it fixes $\breve k_i$ and does not touch $j\ne i$), so
$\hat\alpha^{\mathrm f}$ and $\hat b^{\mathrm f}$ are functions of $\{\breve k_j\}_{j=1}^N$ only,
hence invariant under the flip of any null's count. The flip merely swaps $T_i\leftrightarrow\tilde T_i$
and fixes all $(T_j,\tilde T_j)$, $j\ne i$.
\end{lemma}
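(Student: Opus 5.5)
The plan is to make the whole argument follow from one structural observation: every input to the folded fit is a function of the folded counts $\{\breve k_j\}_{j=1}^N$ alone.

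\textbf{Step 1: the flip fixes every folded count.} For the flipped null $i$, write $k_i' = m_i-k_i$. Then
\[
\min(k_i', m_i-k_i') = \min(m_i-k_i,\, k_i) = \breve k_i ,
\]
so the flip fixes $\breve k_i$. For $j\ne i$ the flip does not touch $k_j$, so $\breve k_j$ is trivially unchanged. Hence the vector $(\breve k_1,\dots,\breve k_N)$ is invariant under the flip of any single index, and by composition under any set of flips.

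\textbf{Step 2: the fitted surface and shape are functions of the folded data.} The next step is to check that the folded objective depends on the data only through $\{\breve k_j\}$. This is \eqref{eq:spatial_objective} with $(P_{0,i},P_{1,i})$ replaced by $(P^{\mathrm f}_{0,i},P^{\mathrm f}_{1,i})$ from \eqref{eq:folded_pmfs}.

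\begin{itemize}
\item $P^{\mathrm f}_0$ and $P^{\mathrm f}_1$ are explicit functions of $(\breve k_i,m_i)$ and the shape $b$. The sizes $m_i$ and locations $\loc_i$ are not altered by a flip.
\item The shape $\hat b^{\mathrm f}=\hat b(\{\breve k_j\})$ is fit on folded counts by definition.
\item The global rate $\hat{\bar\alpha}$ entering the penalty must also be computed from folded counts. I would state this explicitly as part of the construction; otherwise the statement would need it as a hypothesis.
\end{itemize}

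The objective is strictly convex with a unique minimizer (Section~\ref{sec:method_spatial}), so the argmin is a well-defined deterministic map of these inputs. It follows that $\hat\alpha^{\mathrm f}$ and $\hat b^{\mathrm f}$ are measurable functions of $\{\breve k_j\}$, and therefore flip-invariant by Step 1.

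\textbf{Step 3: effect on the score pairs.} For the mirror statement, note that the surface value and shape are fixed, so the function $g_j(\cdot)$ is itself unchanged for every $j$.

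\begin{itemize}
\item For $j\ne i$, both arguments $k_j$ and $m_j-k_j$ are unchanged, so $(T_j,\tilde T_j)$ is fixed.
\item For $j=i$, the new score is $g_i(m_i-k_i)=\tilde T_i$ and the new mirror is $g_i(k_i)=T_i$. The flip therefore exactly swaps $T_i$ and $\tilde T_i$.
\end{itemize}

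The main point requiring care is Step 2's claim that no hidden data-dependent ingredient (tie-breaking, initialization, or $\hat{\bar\alpha}$) reads signed counts. I would handle each in turn:

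\begin{itemize}
\item Initialization is harmless because the minimizer is unique.
\item A tie at $k_i=m_i/2$ is a fixed point of the flip, so it creates no issue.
\end{itemize}
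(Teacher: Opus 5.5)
Your proposal is correct and follows the same route as the paper. The paper gives the argument inline in the lemma statement itself: the flip fixes every $\breve k_j$ and the folded fit reads only folded data, so $\hat\alpha^{\mathrm f}$, $\hat b^{\mathrm f}$ and hence each $g_j$ are unchanged, and the only effect is the swap $T_i\leftrightarrow\tilde T_i$. You are right that the penalty center $\hat{\bar\alpha}$ must also be fit on folded counts for the invariance to hold; the lemma leaves this implicit, and the paper only signals it later through the ``fold-fit scalars'' of Theorem~\ref{thm:rule2_mfdr}.
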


\begin{proposition}[Exact control of the folded rule]\label{prop:folded_exact}
Under Assumptions~\ref{assn:null}--\ref{assn:cond_indep}, Rule~2 run with the folded score
controls FDR with no slack and non-asymptotically: $\FDR(\mathcal R_2^{\mathrm f})\le\tau$, at
every $m_i\ge1$. Validity requires \emph{only} flip-invariance (Lemma~\ref{lem:fold_invariance}),
\emph{not} correct specification of the folded model.
\end{proposition}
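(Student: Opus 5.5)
The plan is to reduce Proposition~\ref{prop:folded_exact} to the $\delta=0$ baseline of Steps~1--3 (Section~\ref{sec:delta0}). Those steps establish $\FDR\le\tau$ for \emph{any} scoring surface that does not depend on the null orientations $\{B_i\}_{i\in\mathcal H_0}$. So the only work is to verify that the folded score meets this hypothesis, and that the ingredients of Steps~1--3 survive when the surface is data-dependent but flip-invariant rather than external.

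\textbf{Step 1: the folded objects are functions of the folds.} By Lemma~\ref{lem:fold_invariance}, $\hat\alpha^{\mathrm f}$ and $\hat b^{\mathrm f}$ are measurable functions of $\{\breve k_j\}_{j=1}^N$ together with the locations and sizes. The folded scores $\breve T_i=g_i(\breve k_i)$ and $\widetilde{\breve T}_i=g_i(m_i-\breve k_i)$ are then functions of the folds alone. The realized pair satisfies $(T_i,\tilde T_i)=(\breve T_i,\widetilde{\breve T}_i)$ if $B_i=0$ and the swapped pair if $B_i=1$. This uses monotonicity of $g_i$ in $k$: by Lemma~\ref{lem:lfdr_monotone} in count form, $P_1/P_0$ is decreasing in $k$ for $a<1<b$.

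\textbf{Step 2: joint law of folds and orientations.} Condition on labels and sizes. By Assumption~\ref{assn:null}, for a null $i$, $\breve k_i$ and $B_i$ are independent and $B_i\sim\mathrm{Bernoulli}(1/2)$, with the tie $k_i=m_i/2$ broken by an independent coin. By Assumption~\ref{assn:cond_indep} these pairs are independent across $i$ and independent of the non-null counts. Hence, conditionally on $\mathcal G:=\sigma(\{\breve k_j\}_{j},\{k_j\}_{j\notin\mathcal H_0},\text{labels},\text{sizes})$, the null orientations are i.i.d.\ fair coins. Also $\hat\alpha^{\mathrm f}$, $\hat b^{\mathrm f}$, all folded scores, and the non-null pairs $(T_j,\tilde T_j)$ are $\mathcal G$-measurable. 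This is the precise sense in which the surface ``does not depend on $\{B_i\}$'', even though it is fitted on the data.

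\textbf{Step 3: rerun the reverse martingale conditionally on $\mathcal G$.} Step~2 of Section~\ref{sec:delta0} is deterministic: the decomposition~\eqref{eq:fdp_decomposition} with first factor $\le\tau$ holds by Lemma~\ref{lem:det_ratio}. So it suffices to show $\mathbb E[V/(1+\tilde V)\mid\mathcal G]\le1$. Order the nulls by $\breve T$, which is $\mathcal G$-measurable. Then $V$ and $\tilde V$ count the nulls among the first $J$ with $B=0$ and $B=1$ respectively, where $J$ is determined by $\hat t_q$.

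The main obstacle is showing that $J$ is a stopping time for the backward filtration $\mathcal F_j$ of Step~3, because $\hat t_q$ depends on the orientations. I would handle it the same way Barber--Cand\`es do. Given $\mathcal G$, the multiset $\{\min(T_i,\tilde T_i),\max(T_i,\tilde T_i)\}$ is fixed, and the step-up criterion at any candidate $t$ depends on the null orientations only through the orientations of nulls whose fold lies above $t$ and the partial sum $\sum_{i\le j}B_{(i)}$. I would scan $t$ downward from the largest candidate and stop at the first $t$ satisfying the ratio bound. Each decision to continue past a null-fold position is then measurable with respect to $\mathcal F_j$. Non-null contributions are $\mathcal G$-measurable and only shift the counts deterministically. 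One technical wrinkle is the null pairs with $\breve T_i\le t<\widetilde{\breve T}_i$ versus those with both entries below $t$. The latter count in both $V$ and $\tilde V$; I expect they can be absorbed by restricting the martingale to nulls in the active region, or else bounded by noting they only inflate $\tilde V$.

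Given the stopping-time property, \eqref{eq:bc_lemma} yields $\mathbb E[V/(1+\tilde V)\mid\mathcal G]\le1$. Taking expectations over $\mathcal G$ and over labels gives $\FDR(\mathcal R_2^{\mathrm f})\le\tau$ at every $m_i\ge1$. No step uses correct specification of~\eqref{eq:folded_pmfs}: the model enters only through the $\mathcal G$-measurable surface.
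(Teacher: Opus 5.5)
Your reduction is the paper's own: Lemma~\ref{lem:fold_invariance} makes the folded surface external to the orientations, and Steps~1--3 are then rerun. Your conditioning on $\mathcal G$ is a cleaner statement of what the paper means by ``external''. The problem is the wrinkle you postpone. It is a genuine gap, and neither of your remedies closes it.

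A null with $\widetilde{\breve T}_i\le t$ enters \emph{both} $V$ and $\tilde V$, so it does not ``only inflate $\tilde V$''. Moreover, the set of nulls whose orientation matters at level $t$ is the window $\{i:\breve T_i\le t<\widetilde{\breve T}_i\}$, and this window is not monotone in $t$. Scanning downward, a null enters it when $t$ drops below $\widetilde{\breve T}_i$ and leaves it when $t$ drops below $\breve T_i$. Three consequences follow. First, the criterion at $t$ depends on an orientation sum over this window, which is not a prefix sum $\sum_{i\le j}B_{(i)}$ in the folded order, so your measurability claim for the stopping decisions fails. Second, the continue-decisions at levels above $t$ reveal sums over sub-windows of the current window. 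Third, the orientations in the window are therefore not exchangeable given their total, and the identity $\Pr(B_{(j)}=1\mid\mathcal F_j)=S_j/j$ behind \eqref{eq:bc_lemma} is lost. Restricting to the active region is exactly what destroys the nesting.

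This is not a bookkeeping issue. At the generality your argument uses (any $\mathcal G$-measurable surface, with no use of the model), the conclusion is false. Take $N=2K$ nulls with $m_i=1$ and a fixed surface under which hypotheses $2\ell-1,2\ell$ share the score pair $u_\ell=g_i(0)<v_\ell=g_i(1)$, with $v_\ell<u_{\ell+1}$. This is achievable with the lfdr score by spacing $\mathrm{logit}\,\alpha$ between consecutive pairs by more than $\log(b/a)$.
\begin{itemize}
\item At $t=u_\ell$, every hypothesis in pairs $1,\dots,\ell-1$ is counted in both the rejection and mirror counts.
\item If both hypotheses of pair $\ell$ have $k_i=0$, the ratio in \eqref{eq:rule2_threshold} is $(2\ell-1)/(2\ell)$, which is $\le\tau$ for $\ell\le 1/(2(1-\tau))$.
\item Under the global null, $\FDR=\Pr(R>0)\ge 1-(3/4)^K$; for example $0.968>\tau=0.96$ with $K=12$.
\end{itemize}

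The paper's ``Step~3 applies verbatim'' has the same hole. Its $V=\sum_{i\le J}(1-B_{(i)})$, $\tilde V=\sum_{i\le J}B_{(i)}$ presumes that no null has both scores below $\hat t_q$. The self-mirror ties it calls inert are the degenerate case: they enter both counts.

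The missing ingredient is disjointness of the rejection and mirror regions. Two ways to get it:
\begin{itemize}
\item cap the candidates at $t<\min_i\widetilde{\breve T}_i$, which is a $\mathcal G$-measurable cap; or
\item count each $i$ only on the side indicated by $B_i$, knockoff-style.
\end{itemize}
With either, the relevant null set is $\{i:\breve T_i\le t\}$, which is nested in $t$. Your downward scan then makes $J$ a reverse stopping time, and your conditional argument does give $\mathbb E[V/(1+\tilde V)\mid\mathcal G]\le1$ and hence $\FDR\le\tau$.
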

\begin{proof}
By Lemma~\ref{lem:fold_invariance} the folded surface is external to the orientations, so
Step~3 applies verbatim: the folded ordering is independent of $\{B_i\}$, the reverse-martingale
gives $\mathbb E[V/(1+\tilde V)]\le1$, and~\eqref{eq:fdp_decomposition} closes
$\FDR(\mathcal R_2^{\mathrm f})\le\tau$. At $m_i=1$ the construction is valid:
$k_i\in\{0,1\}$ each w.p.\ $1/2$, the mirror $0\leftrightarrow1$ preserves the null, and a
self-mirror count $k_i=m_i/2$ ($m_i$ even, $T_i=\tilde T_i$) is inert in the
ratio~\eqref{eq:rule2_threshold} and oriented by the independent tie-break coin, leaving the
fair-coin structure intact. Nowhere is it used that $\hat\alpha^{\mathrm f}$ is consistent or
correctly specified; an arbitrarily misspecified flip-invariant surface still gives
$\FDR\le\tau$, exactly as Rule~1's gate may be misspecified without affecting its validity.
\end{proof}

This is the unconditional rock of the section. It needs no discrimination, resolution,
boundary-density, or threshold-stability condition: only the discrete null symmetry and
conditional independence. The remaining registers ask what is gained, and at what cost, \textbf{by \emph{not} folding.}

\paragraph{Why the folded rule is not the deployed method ?}
Here we analyze the justification for not using the folded procedure (which will give exact guarantees). Folding is a parameter-free coarsening, so by the data-processing inequality it cannot increase Fisher information about $\alpha$. With $d(k)=P_0(k)-P_1(k)$ and $f$ the mixture pmf,
\begin{equation}
  I^{\mathrm u}-I^{\mathrm f}=\sum_{k<m/2}\Big[\tfrac{d(k)^2}{f(k)}+\tfrac{d(m-k)^2}{f(m-k)}
   -\tfrac{(d(k)+d(m-k))^2}{f(k)+f(m-k)}\Big]\ge0,
  \label{eq:dpi}
\end{equation}
each summand $\ge0$ by Cauchy--Schwarz, concentrated in the small-$k$ discovery cells and not
vanishing with $m$. For the canonical alternative ($a=0.5$, $b=6$, $\alpha=0.5$):
\begin{center}
\begin{tabular}{lccccccc}
\toprule
$m$ & $1$ & $2$ & $4$ & $6$ & $10$ & $20$ & $50$\\
\midrule
$I^{\mathrm f}/I^{\mathrm u}$ & $0.00$ & $0.20$ & $0.37$ & $0.44$ & $0.51$ & $0.57$ & $0.62$\\
\bottomrule
\end{tabular}
\end{center}
At $m=1$ the folded count is constant and $I^{\mathrm f}=0$: the folded surface carries no local
information and reverts to $\bar\alpha$. At the small $m$ this framework targets, folding
discards more than half the per-hypothesis information and the ratio never approaches $1$. The
exact construction is therefore information-starved precisely in the regime that motivates the
method, which is why we deploy the plug-in and characterize its slack
(Sections~\ref{sec:plugin_perturbation}--\ref{sec:mfdr}) rather than fold. The surface gap
$\hat\alpha^{\mathrm f}-\hat\alpha^{\mathrm u}=(2\lambda\boldsymbol I+\boldsymbol K\boldsymbol V^{\mathrm f})^{-1}\boldsymbol K\mathbf g$
(with $g_i=s_i^{\mathrm f}-s_i^{\mathrm u}$ the change in null-evidence score) is computable from
one extra fit and serves as a per-dataset diagnostic: a small gap licenses the plug-in; a large
gap flags reliance on Rule~1.

\subsection{Turning on the plug-in: the perturbation \texorpdfstring{$\delta$}{delta}}
\label{sec:plugin_perturbation}

The deployed surface is fit on the same counts it scores, so the independence Step~3 requires
fails. We quantify the failure by a single-flip influence bound, then state precisely why exact
FDR does not survive and what we claim instead.

\paragraph{Step 4: the single-flip influence bound.}
Fix $i\in\mathcal H_0$; let $\hat{\boldsymbol c},\hat{\boldsymbol c}^{(i)}$ be the kernel
coefficients before/after flipping $i$'s count. Because $P_0$ is flip-invariant, only the
alternative-likelihood term changes, and a first-order Newton step gives
\begin{equation}
  \hat{\boldsymbol c}^{(i)}-\hat{\boldsymbol c}=-\boldsymbol H^{-1}\boldsymbol K_{\cdot,i}\,\Delta w_i+O(\|\cdot\|^2),
  \qquad \boldsymbol H=\boldsymbol K\boldsymbol V\boldsymbol K+2\lambda\boldsymbol K,
  \quad \boldsymbol V=\mathrm{diag}(\hat v),
  \label{eq:influence}
\end{equation}
with $|\Delta w_i|\le c_2$ (Assumption~\ref{assn:bounded}). The induced surface change at $j$ is
governed by the influence operator
\begin{equation}
  \boldsymbol M:=\boldsymbol K\boldsymbol H^{-1}\boldsymbol K=\boldsymbol K(\boldsymbol V\boldsymbol K+2\lambda\boldsymbol I)^{-1}
  \succeq0,\qquad \|\boldsymbol M\|_{\mathrm{op}}\le\frac{\|\boldsymbol K\|_{\mathrm{op}}}{2\lambda},
  \label{eq:influence_operator}
\end{equation}
the identity following from $\boldsymbol H=\boldsymbol K(\boldsymbol V\boldsymbol K+2\lambda\boldsymbol I)$
and the norm bound from $\boldsymbol V\boldsymbol K\succeq0$ (its eigenvalues are those of the PSD
$\boldsymbol V^{1/2}\boldsymbol K\boldsymbol V^{1/2}$). By row-summability of $\boldsymbol K$
(Assumption~\ref{assn:kernel}), $\beta_N=\max_j|(\boldsymbol M)_{ji}\Delta w_i|=O(1/\lambda)$,
\emph{using only the operator norm, with no exponential decay and no $\lambda^{-d}$}. The global
scalars pool all $N$ counts, so a single flip moves them by $O(1/N)$. The score is Lipschitz in
$(\alpha,b)$, so every score moves by at most
\begin{equation}
  |T_j^{(i)}-T_j|\le L\beta_N+L_b|\Delta\hat b|=:\delta=O\!\big(\max(1/N,1/\lambda)\big),
  \qquad |\Delta\hat b|=O(1/N),
  \label{eq:score_perturbation}
\end{equation}
the spatial channel contributing $O(1/\lambda)$ (RKHS coefficients) and the pooled-scalar
channel $O(1/N)$.

\paragraph{Why exact FDR does not survive (a structural obstruction).}
A flip $k_i\mapsto m_i-k_i$ has two effects (Figure~\ref{fig:flip_two_effects}). For the flipped
hypothesis $i$, its score jumps from $T_i$ to $\tilde T_i$, the benign rejection/mirror swap the
Barber--Cand\`es argument is built on. For \emph{every other} hypothesis $\ell$, the count
$k_\ell$ is unchanged but refitting the surface drifts $g_\ell(k_\ell)$ by at most $\delta$. This collateral drift couples the folded ordering to the orientations $\{B_i\}$: the independence Step~3 relies on is gone, the reverse-martingale from B-C no longer applies. Conditioning each null on its score magnitude and all other scores (the leave-one-out route for non-exchangeable statistics) degenerates here, because the conditioning vector is itself a function of the orientation; the conditional symmetry index is not well-posed for deterministic, discrete plug-in scores. We therefore do \emph{not} claim exact finite-sample FDR for the plug-in. The obstruction is intrinsic to using the data both to learn the score and to test with it.

\begin{figure}[t]
\centering
\begin{tikzpicture}[x=1cm,y=1cm]
  \def\xL{0}\def\xR{12}\def\tq{7.6}
  \begin{scope}[shift={(0,2.4)}]
    \fill[rejcol] (\xL,-0.05) rectangle (\tq,0.75);
    \draw[->,thick] (\xL,0) -- (\xR,0) node[right] {$g_i$};
    \draw[very thick] (\tq,-0.12) -- (\tq,1.0); \node[above] at (\tq,1.0) {$\hat t_q$};
    \def\Ti{2.1}\def\Tmi{10.3}
    \filldraw[swapcol] (\Ti,0) circle (2.4pt);  \node[below,swapcol] at (\Ti,-0.15) {$T_i$};
    \filldraw[swapcol] (\Tmi,0) circle (2.4pt); \node[below,swapcol] at (\Tmi,-0.15) {$\tilde T_i$};
    \draw[swapcol,{Stealth[length=6pt]}-{Stealth[length=6pt]},line width=1.3pt]
          (\Ti,0.45) to[bend left=20] (\Tmi,0.45);
    \node[swapcol,above,align=center] at ({(\Ti+\Tmi)/2},0.72)
          {\small the swap $T_i\leftrightarrow\tilde T_i$ \;(large move, benign:\ this is Barber--Cand\`es)};
    \node[left,align=right,font=\bfseries] at (-0.2,0.3) {Move A\\[-2pt]\small flipped\\[-3pt]\small hyp.\ $i$};
  \end{scope}
  \begin{scope}[shift={(0,0)}]
    \fill[rejcol] (\xL,-0.05) rectangle (\tq,0.75);
    \draw[->,thick] (\xL,0) -- (\xR,0) node[right] {$g_\ell$};
    \draw[very thick] (\tq,-0.12) -- (\tq,1.0); \node[above] at (\tq,1.0) {$\hat t_q$};
    \def\Tl{5.0}
    \filldraw[driftcol] (\Tl,0) circle (2.4pt);
    \node[below,driftcol] at (\Tl,-0.15) {$g_\ell(k_\ell)$};
    \draw[driftcol,-{Stealth[length=5pt]},line width=1.2pt] (\Tl,0.4) -- ({\Tl+0.6},0.4);
    \node[driftcol,above,align=center] at ({\Tl+0.3},0.6)
          {\small tiny drift $\le\delta$ \;(count $k_\ell$ unchanged; surface refit:};
    \node[driftcol,align=center] at ({\Tl+1.9},0.28){\small \;spatial $O(1/\lambda)$ + scalar $O(1/N)$)};
    \node[left,align=right,font=\bfseries] at (-0.2,0.3) {Move B\\[-2pt]\small every\\[-3pt]\small other $\ell$};
  \end{scope}
\end{tikzpicture}
\caption{\textbf{The two effects of a single flip.} Flipping $i$'s count does two different
things. \emph{Move A} (top): for the flipped hypothesis, its score jumps the full distance
$T_i\to\tilde T_i$, a large move, but exactly the rejection/mirror swap the Barber--Cand\`es
argument is built on. \emph{Move B} (bottom): for every \emph{other} hypothesis
$\ell$, the count is unchanged but refitting the surface drifts its score by at most
$\delta=L\beta_N+L_b|\Delta\hat b|$, through a spatial channel ($O(1/\lambda)$) and a
pooled-scalar channel ($O(1/N)$). Move~B couples the ranking to the orientations and is why the
reverse-martingale of Step~3 fails for the plug-in; its effect on any one decision is governed by the dead zone (Figure~\ref{fig:deadzone}).}
\label{fig:flip_two_effects}
\end{figure}

\begin{remark}[Plug-in finite-sample FDR]
\label{rem:plugin_fdr}
The plug-in relaxes the exact control of Proposition~\ref{prop:folded_exact} by an amount governed by the single-flip stability $\delta=O(\max(1/N,1/\lambda))$ (eq.~\eqref{eq:score_perturbation}), and this section controls its FDR at three strengths, each tied to an explicit hypothesis. Exact and unconditional: the folded rule attains $\FDR\le\tau$ with no stability hypothesis (Proposition~\ref{prop:folded_exact}), at the cost of the "information tax". Per-realization, conditional on ranking stability: for the deployed plug-in, $\FDR\le\tau+C_{\mathrm{BC}}\delta$ holds under Assumption~\ref{assn:ranking_stability} (Theorem~\ref{thm:rule2_fdr_realization}), the
rank-space sibling of the threshold-stability floor. Averaged, conditional on threshold stability: the marginal FDR satisfies $\mathrm{mFDR}\le\tau+O(\delta)$ under Assumption~\ref{assn:threshold_stability} (Theorem~\ref{thm:rule2_mfdr}), with an unconditional fallback (Theorem~\ref{thm:rule2_mfdr_uncond}). Absent either stability hypothesis, what survives is the $\delta$-controlled relaxation itself: the realized FDR departs from $\tau$ by at most the influence bound $O(\max(1/N,1/\lambda))$, a quantity computable on the data and confirmed empirically (Section~\ref{sec:rule2_evidence}). Trading the provably-exact but lower-power folded rule for the near-exact, higher-power plug-in is the design choice Rule~2 makes deliberately; the two stability hypotheses are what convert ``near-exact'' into a quantified FDR bound, per-realization and averaged respectively.
\end{remark}

The dead zone makes the collateral drift inert for all but a thin boundary set, which is what
the mFDR proof exploits.

\begin{lemma}[Score-grid step from the Beta--Binomial]\label{lem:grid_step}
For a null $i$ with $g_i(k)=(1+\rho_i(k))^{-1}$, $\rho_i(k)=\tfrac{1-\alpha_i}{\alpha_i}\mathrm{LR}_i(k)$,
the spacing of attainable scores is
$|g_i(k{+}1)-g_i(k)|=g_i(k)(1-g_i(k))\,|\Delta_k\log\mathrm{LR}_i(k)|(1+o(1))$, with
$\Delta_k\log\mathrm{LR}_i(k)=\log\tfrac{(m_i-k)(k+a)}{(k+1)(m_i-k-1+b)}$. For $a\in(0,1),b>1$
this is bounded away from $0$ on the attainable interior (the only zero is at the mode $k=0$),
so at an interior threshold $s_i(\hat t_q)\ge\hat t_q(1-\hat t_q)\underline\sigma=\underline s$,
recovering Assumption~\ref{assn:shoulder}.
\end{lemma}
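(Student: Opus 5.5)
The plan is to treat the score as a logistic function of the log-likelihood ratio and reduce the grid step to the one-step increment of $\log\mathrm{LR}_i$. Write $u_i(k)=\log\rho_i(k)=\log\tfrac{1-\alpha_i}{\alpha_i}+\log\mathrm{LR}_i(k)$, so that $g_i(k)=(1+e^{u_i(k)})^{-1}$ and $\Delta_k u_i=\Delta_k\log\mathrm{LR}_i$, because the prior odds do not depend on $k$. Set $D:=\Delta_k\log\mathrm{LR}_i(k)$. First I will derive an exact identity rather than a derivative approximation:
\[
g_i(k{+}1)-g_i(k)=g_i(k)\bigl(1-g_i(k)\bigr)\,\frac{1-e^{D}}{g_i(k)+(1-g_i(k))e^{D}}.
\]
This is elementary algebra on $1/(1+\rho e^{D})-1/(1+\rho)$. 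Since $1-e^{D}=-D(1+O(|D|))$ and the denominator equals $1+O(|D|)$, this gives the stated form $g_i(k)(1-g_i(k))|D|(1+o(1))$ as $D\to0$. For $D$ not small, the exact identity still bounds the step below by $g_i(k)(1-g_i(k))\min(1,e^{-|D|})\,(1-e^{-|D|})$, which I will record because it is what the lower bound actually needs.

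Second, I will compute $D$ from the Gamma-function form in the proof of Theorem~\ref{app:thm2}. Since $\mathrm{LR}_i(k)=(m_i+1)P_1(k,m_i)$, taking the ratio of consecutive Beta--Binomial pmfs cancels every $k$-free factor and leaves
\[
\frac{\mathrm{LR}_i(k+1)}{\mathrm{LR}_i(k)}=\frac{m_i-k}{k+1}\cdot\frac{k+a}{m_i-k-1+b}.
\]
This is exactly the displayed increment.

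Third, I will bound $|D|$ below. I split the ratio into $\tfrac{k+a}{k+1}$ and $\tfrac{m_i-k}{m_i-k-1+b}$. For $a\in(0,1)$ and $b>1$ both factors are strictly less than $1$, so $D<0$ strictly on $0\le k\le m_i-1$. This gives
\[
|D|\ \ge\ \frac{1-a}{k+1}+\frac{b-1}{m_i-k-1+b}\ \ge\ \frac{c(a,b)}{m_i+b},
\]
with $c(a,b)=\min(1-a,\,b-1)>0$, where the first inequality uses $-\log(1-x)\ge x$. Hence, over any finite range of attainable budgets $m_i\le\bar m$, $|D|$ is bounded below uniformly. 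Taking $\underline\sigma$ to be that infimum and evaluating at the attainable score adjacent to $\hat t_q$, where $g_i(k)(1-g_i(k))$ is $\hat t_q(1-\hat t_q)$ up to the same $(1+o(1))$ correction, yields $s_i(\hat t_q)\ge\underline s$. This recovers Assumption~\ref{assn:shoulder}.

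The main obstacle is reconciling this with the lemma's wording. My computation shows the increment never actually vanishes, and the paper's ``zero at the mode $k=0$'' is at best the point where the alternative's shape is sharpest. Meanwhile the bound I get degrades like $1/m_i$, so $\underline\sigma$ is uniform only for bounded $m_i$. That is the small-$m$ regime the framework targets, and I will state it explicitly. A second, smaller issue is replacing $g_i(k)(1-g_i(k))$ at the adjacent lattice point by $\hat t_q(1-\hat t_q)$. For this I will use the exact identity above, which shows the two differ by a factor $e^{\pm|D|}$, absorbed into the constant.
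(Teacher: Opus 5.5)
Your proposal is correct, and it proves the lemma by a different route from the paper's, one that is also more reliable.

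The paper writes $g_i(k{+}1)-g_i(k)=\frac{\rho(k)-\rho(k{+}1)}{(1+\rho(k))(1+\rho(k{+}1))}$ and linearizes $\rho(k{+}1)-\rho(k)$ to first order. It then argues that the increment does not vanish because the Beta--Binomial pmf is log-concave with its single mode at $k=0$.

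You do two things differently.
\begin{itemize}
\item You keep the identity exact, $g_i(k{+}1)-g_i(k)=g_i(1-g_i)\,(1-e^{D})/(g_i+(1-g_i)e^{D})$. This controls the step even when $D$ is not small, which is exactly the small-$m_i$ regime where the first-order form is weakest.
\item You bound $|D|$ directly, by splitting the consecutive-pmf ratio into $\tfrac{k+a}{k+1}$ and $\tfrac{m_i-k}{m_i-k-1+b}$, each strictly below $1$.
\end{itemize}

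The second change matters, not just stylistically. For $a<1$ the Beta--Binomial is not log-concave: with $a=0.5$, $b=6$, $m_i=14$, the consecutive pmf ratios are $7/19$ and then $13/24$, so they increase. The paper's stated reason therefore fails, while your factorization holds.

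Your explicit bound also confirms both corrections you flag.
\begin{itemize}
\item The increment has no zero at all. At $k=0$ one has $|D|=\log\tfrac{m_i-1+b}{m_i a}>\log(1/a)$, so the mode is where the increment is large, not where it vanishes.
\item The interior minimum of $|D|$ is $\Theta(1/m_i)$. Hence $\underline\sigma$, and with it the claim $\underline s=\Theta(1)$ used in Lemma~\ref{lem:resolution}, is uniform only over bounded budgets. That fits the intended regime ($N$ large, $m_i$ small), but it should be stated explicitly.
\end{itemize}

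One small improvement: the first factor alone gives $|D|\ge(1-a)/m_i$. This avoids your constant $\min(1-a,b-1)$ degrading when the estimated $\hat b$ is close to $1$.
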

\begin{proof}
$g(k{+}1)-g(k)=\tfrac{\rho(k)-\rho(k{+}1)}{(1+\rho(k))(1+\rho(k{+}1))}$, and to first order
$\rho(k{+}1)-\rho(k)=\rho(k)(e^{\Delta_k\log\mathrm{LR}}-1)$; using $g=(1+\rho)^{-1}$,
$1-g=\rho/(1+\rho)$ gives the product form as a two-sided bound. The Beta--Binomial pmf is
log-concave with a single mode; for $a\in(0,1),b>1$ the mode is at $k=0$, so the log-LR increment
has no interior zero and is bounded away from $0$ across the attainable interior.
\end{proof}

The dead zone of the preceding paragraph needs one quantitative fact: that a single flip moves the
scores by less than the room available before the nearest attainable score reaches the threshold.
There are two competing scales. The room is the score-grid spacing at the cut, $\underline s$, a
property of the model that does not shrink with the sample size. The flip's reach is the surface
drift $\delta_i$, which does shrink: one count out of $N$ can only move a regularized surface a
little. The lemma is just the statement that the second is eventually smaller than the first, with
both sides quantified, so the dead zone is not assumed but earned, and can moreover be checked
directly on a fitted surface by flipping a count and measuring how far the fit moves.

\begin{lemma}[Resolution: the single flip is sub-grid]\label{lem:resolution}
The single-flip drift is small relative to the score-grid spacing at the threshold:
\[
  \frac{\delta_i}{\underline s}=O\!\big(\max(1/N,1/\lambda)\big),
  \qquad\text{so}\qquad \delta_i<\tfrac14\underline s
\]
for every near-threshold null $i$, once $N$ and $\lambda$ exceed model-dependent constants fixed by
$\underline s,L,L_b$ (in particular under Assumption~\ref{assn:reg} with $\lambda_0$ large enough
relative to $\underline s$). Here $\underline s=\hat t_q(1-\hat t_q)\underline\sigma=\Theta(1)$ is the
grid gap at the threshold (Lemma~\ref{lem:grid_step}), and the drift
$\delta_i=L\,\beta_N^{(i)}+L_b\,|\Delta\hat b|$ splits into an independent spatial channel
($\beta_N^{(i)}=O(1/\lambda)$, eq.~\eqref{eq:influence_operator}) and pooled-scalar channel
($|\Delta\hat b|=O(1/N)$); since larger $\lambda$ stiffens the surface and shrinks $\beta_N^{(i)}$,
the bound is monotone in the regularization. The drift $\delta_i$ is computable by refitting with
$i$'s count flipped and measuring $\|\hat{\boldsymbol\alpha}^{(i)}-\hat{\boldsymbol\alpha}\|_\infty$,
so the condition is verifiable on the data rather than posited, which distinguishes it from the
threshold- and ranking-stability hypotheses
(Assumptions~\ref{assn:threshold_stability},~\ref{assn:ranking_stability}), not checkable in closed
form and able to fail even asymptotically.
\end{lemma}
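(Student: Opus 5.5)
The plan is to bound the two sides of the ratio separately and then divide. For the denominator I will invoke Lemma~\ref{lem:grid_step}, since it is the result that certifies the denominator is order one. At an interior threshold it gives $s_i(\hat t_q)\ge \hat t_q(1-\hat t_q)\underline\sigma=\underline s$. I will then argue that $\underline s=\Theta(1)$, uniformly in $N$ and $\lambda$. This holds because $\hat t_q$ stays in a compact subinterval of $(0,1)$ bounded away from the endpoints: the step-up constraint with $\tau<1$ keeps $\hat t_q$ below $1$, and an attained interior score keeps it above $0$. The increment $\Delta_k\log\mathrm{LR}_i$ depends only on $(a,b,m_i,k)$, not on $N$ or $\lambda$, so $\underline\sigma$ is a model constant.

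For the numerator I will split $\delta_i=L\beta_N^{(i)}+L_b|\Delta\hat b|$ into its two channels.

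\textbf{Spatial channel.} I will use the first-order Newton step \eqref{eq:influence}: $\hat{\boldsymbol\alpha}^{(i)}-\hat{\boldsymbol\alpha}=-\boldsymbol M_{\cdot,i}\Delta w_i+O(\|\cdot\|^2)$. Combining $\|\boldsymbol M\|_{\mathrm{op}}\le\|\boldsymbol K\|_{\mathrm{op}}/(2\lambda)$ from \eqref{eq:influence_operator} with $|\Delta w_i|\le c_2$ from Assumption~\ref{assn:bounded}, I will bound $\|\boldsymbol M_{\cdot,i}\|_\infty\le\|\boldsymbol M e_i\|_2\le\|\boldsymbol K\|_{\mathrm{op}}/(2\lambda)$. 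This gives $\beta_N^{(i)}\le c_2\|\boldsymbol K\|_{\mathrm{op}}/(2\lambda)+O(\lambda^{-2})$.

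\textbf{Pooled-scalar channel.} I will treat $\hat b$ and $\hat{\bar\alpha}$ as M-estimators from the $N$ marginal counts. A single flip perturbs one summand of the estimating equation by a bounded amount. Dividing by the $\Theta(N)$ curvature of the marginal likelihood then gives $|\Delta\hat b|=O(1/N)$.

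Multiplying by the Lipschitz constants $L,L_b$ and dividing by $\underline s$ yields $\delta_i/\underline s=O(\max(1/N,1/\lambda))$. Choosing $N_0,\lambda_0$ so that the implied constant times $\max(1/N,1/\lambda)$ falls below $1/4$ gives $\delta_i<\tfrac14\underline s$. Monotonicity in $\lambda$ follows because the operator-norm bound $\|\boldsymbol K\|_{\mathrm{op}}/(2\lambda)$ is decreasing in $\lambda$.

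The main obstacle is controlling the second-order remainder in the Newton step, because a flip is not infinitesimal: $\Delta w_i$ is $O(1)$. My first attempt will be an exact mean-value argument rather than a Taylor expansion. The flipped objective differs from the original by a single bounded term $\ell_i$ composed with $\alpha_i=(\boldsymbol K\boldsymbol c)_i$. Its minimizer moves by at most $\|\nabla\ell_i\|$ divided by the strong-convexity modulus. In the $\boldsymbol K$-geometry that modulus is at least $2\lambda$, since $\boldsymbol H\succeq 2\lambda\boldsymbol K$ at every point. This gives $\|\boldsymbol K^{1/2}(\hat{\boldsymbol c}^{(i)}-\hat{\boldsymbol c})\|_2\le c_2\|\boldsymbol k_i\|/(2\lambda\,\cdot)$ directly, with no remainder term.

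A secondary check is that the near-threshold nulls are exactly those where Lemma~\ref{lem:grid_step} applies. I will handle this by excluding the mode $k=0$, which lies deep inside the rejection region, so the $(1+o(1))$ factor in the grid-step formula is uniformly controlled on the attainable interior.
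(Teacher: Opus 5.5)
Your proof has the same skeleton as the paper's. The denominator comes from Lemma~\ref{lem:grid_step}. The numerator comes from the single-flip influence bound~\eqref{eq:score_perturbation}, split into a spatial channel controlled by $\|\boldsymbol M\|_{\mathrm{op}}\le\|\boldsymbol K\|_{\mathrm{op}}/(2\lambda)$ and a pooled-scalar channel of order $1/N$. Then you divide and choose constants (the paper puts each channel below $\tfrac18\underline s$).

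The genuine difference is the spatial channel. The paper stops at the first-order Newton step~\eqref{eq:influence} and never quantifies its $O(\|\cdot\|^2)$ remainder, even though, as you note, $\Delta w_i=O(1)$. Your strong-convexity argument is the standard uniform-stability bound for Tikhonov regularization and removes the remainder altogether. To close it, use two facts:
\begin{itemize}
\item The perturbing gradient is $(w_i^{\mathrm{flip}}-w_i)\boldsymbol K e_i$, whose dual $\boldsymbol K$-norm is $|w_i^{\mathrm{flip}}-w_i|\sqrt{K_{ii}}$, not $\|\boldsymbol k_i\|$.
\item The reproducing property gives $|\Delta\alpha_j|=|(\boldsymbol K^{1/2}e_j)^\top\boldsymbol K^{1/2}\Delta\boldsymbol c|\le\sqrt{K_{jj}}\,\|\boldsymbol K^{1/2}\Delta\boldsymbol c\|_2$.
\end{itemize}
With $K_{ii}=1$ and $|w|\le c_2$ at the flipped fit, this gives $\beta_N^{(i)}\le c_2/\lambda$ exactly. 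That is a non-asymptotic $O(1/\lambda)$, where the paper only has a first-order one.

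Two corrections are needed.

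First, the flipped objective does not differ from the original in $\ell_i$ alone. The flip also moves $(\hat{\bar\alpha},\hat b)$ by $O(1/N)$, which perturbs every $w_j$. The paper's eq.~\eqref{eq:influence} makes the same simplification. Your argument absorbs it at the cost of an extra $O(\|\boldsymbol K\|_{\mathrm{op}}^{1/2}/(\lambda\sqrt N))+O(1/N)$ in $\beta_N^{(i)}$, which is dominated by $\max(1/N,1/\lambda)$.

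Second, your argument that $\hat t_q$ lies in a fixed compact subinterval of $(0,1)$ is not valid. The condition $\tau<1$ constrains the count ratio $(1+A)/(1\vee R)$, not the location of $t$. For example, when most hypotheses are strong alternatives, the admissible cutoff sits among mirror scores near $1$ (in the paper's own example $g(14)\approx0.9999$), where $\hat t_q(1-\hat t_q)$ is tiny. Drop that argument. Take $\underline s=\Theta(1)$, as the paper does, from the interior-threshold hypothesis carried by Assumption~\ref{assn:shoulder} and Lemma~\ref{lem:grid_step}.
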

\begin{proof}
The grid gap $\underline s=\Theta(1)$ is Lemma~\ref{lem:grid_step}. The drift bound
$\delta_i=L\beta_N^{(i)}+L_b|\Delta\hat b|=O(\max(1/N,1/\lambda))$, with $\beta_N^{(i)}=O(1/\lambda)$
from the influence-operator norm $\|\boldsymbol M\|_{\mathrm{op}}\le\|\boldsymbol K\|_{\mathrm{op}}/2\lambda$
and row-summability (eq.~\eqref{eq:influence_operator}, Assumption~\ref{assn:kernel}) and
$|\Delta\hat b|=O(1/N)$ from the pooled scalars, is the single-flip influence bound
eq.~\eqref{eq:score_perturbation}. Dividing by $\underline s=\Theta(1)$ gives
$\delta_i/\underline s=O(\max(1/N,1/\lambda))$, and choosing each channel constant below
$\tfrac18\underline s$ yields $\delta_i<\tfrac14\underline s$.
\end{proof}

\begin{lemma}[Discrete dead zone]\label{lem:dead_zone}
If a perturbation shifts every score by at most $\eta$ and the threshold by at most $\eta'$ with
$\eta+\eta'<\tfrac12\underline s$ (Figure~\ref{fig:deadzone}), then under
Assumption~\ref{assn:shoulder} each null $\ell$'s decision $\mathbb 1[g_\ell(k_\ell)\le\hat t_q]$
is unchanged unless $g_\ell(k_\ell)$ is the unique attained value of $\ell$ within
$\tfrac12\underline s$ of $\hat t_q$. Hence under Assumption~\ref{assn:null},
$\Pr(\ell\ \text{changes decision})\le1/(m_\ell+1)$ and
$\mathbb E\,\#\{\text{decision changes from one flip}\}\le\sum_{\ell\in\mathcal H_0}1/(m_\ell+1)$.
\end{lemma}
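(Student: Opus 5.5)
The argument reduces to a one-dimensional comparison on the score lattice of a single null $\ell$. Its attainable scores are $\{g_\ell(k):k=0,\dots,m_\ell\}$, and $g_\ell$ is increasing in $k$ (the score is increasing in the count, as noted at the start of App.~\ref{supp:rule2_proof}). Under Assumption~\ref{assn:shoulder}, consecutive attainable values adjacent to the threshold are at least $\underline s$ apart. Let $d_\ell:=g_\ell(k_\ell)-\hat t_q$ be the signed distance before the perturbation. After the perturbation, the score is $g'_\ell(k_\ell)$ with $|g'_\ell(k_\ell)-g_\ell(k_\ell)|\le\eta$, and the cutoff is $\hat t_q'$ with $|\hat t_q'-\hat t_q|\le\eta'$. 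The decision $\mathbb 1[g_\ell(k_\ell)\le\hat t_q]$ can flip only if the new signed distance changes sign. Since it differs from $d_\ell$ by at most $\eta+\eta'<\tfrac12\underline s$, a change requires $|d_\ell|<\tfrac12\underline s$.

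The next step shows that at most one attained value of $\ell$ lies in the window $(\hat t_q-\tfrac12\underline s,\ \hat t_q+\tfrac12\underline s)$. Any two attainable values adjacent to the threshold are separated by at least $\underline s$, the full width of the window, so at most one of them can fall strictly inside it. This needs care at the window edges and requires applying Assumption~\ref{assn:shoulder} to the pair $k_\ell^\ast,k_\ell^\ast\pm1$ straddling the cut. By monotonicity of $g_\ell$, non-adjacent values are even farther apart. It follows that a decision change requires $k_\ell=\kappa_\ell$, the unique count, if any, whose score lies in the window. I must make sure this count is defined from the unperturbed surface, or at least from a quantity not depending on $k_\ell$, so that the probability computation is legitimate. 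I will define $\kappa_\ell$ through $g_\ell$ evaluated on the unperturbed fit. For the probability step I will take $\kappa_\ell$ to be the count nearest $\hat t_q$ at the realized surface and invoke the uniform null.

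The probability bound is then $\Pr(\ell\text{ changes})\le\Pr(k_\ell=\kappa_\ell)=1/(m_\ell+1)$ by Assumption~\ref{assn:null}. This step is the main obstacle: $\kappa_\ell$ depends on $\hat\alpha(\loc_\ell)$ and $\hat t_q$, and these are themselves functions of $k_\ell$, so the identity is not immediate. I would handle this in two stages. First, compute the probability with the surface and threshold held at values external to $k_\ell$, which gives exactly $1/(m_\ell+1)$ by uniformity and Assumption~\ref{assn:cond_indep}. Second, argue that replacing these by the realized values moves $\kappa_\ell$'s score by at most $\delta<\tfrac14\underline s$ (Lemma~\ref{lem:resolution}), so the realized window still contains at most one lattice point. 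That point is the same one, again by the $\underline s$ spacing, so the event is unchanged. Finally, linearity of expectation over $\ell\in\mathcal H_0$ gives $\mathbb E\,\#\{\text{decision changes}\}\le\sum_{\ell\in\mathcal H_0}1/(m_\ell+1)$. This uses no independence across nulls, only the per-null marginal bound.
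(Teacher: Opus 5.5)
Your deterministic core is the paper's proof. A decision can change only if $|g_\ell(k_\ell)-\hat t_q|\le\eta+\eta'<\tfrac12\underline s$. The $\underline s$-separation of Assumption~\ref{assn:shoulder} then leaves at most one attained value $v^\dagger$ of $\ell$ in that window, and linearity over $\ell\in\mathcal H_0$ gives the sum. The paper next applies the uniformity of $k_\ell$ directly to $v^\dagger$. That step is legitimate only if $\ell$'s score lattice and the cutoff are fixed independently of $k_\ell$. You correctly flag that this is not automatic for the realized objects, but your two-stage repair does not close the step. It breaks in two places.

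\textbf{First break: the cutoff.} The window is centred at the realized $\hat t_q$, and Lemma~\ref{lem:resolution} controls only the surface-induced score drift, not the cutoff. Even with the surface held fixed, $k_\ell$ enters~\eqref{eq:rule2_threshold} through $\ell$'s own pair $(T_\ell,\tilde T_\ell)$. That pair moves by whole grid steps as $k_\ell$ ranges over $\{0,\dots,m_\ell\}$, which changes $R(t)$ and $A(t)$ at candidate cutoffs and can move $\hat t_q$ by $\underline s$ or more. Since $\hat t_q$ is an attained score, it can even equal $T_\ell$ for several values of $k_\ell$. For instance, if several of $\ell$'s attainable scores fall in a gap of the other candidates at the crossing, the cutoff simply tracks $T_\ell$. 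The fence-sitting event you bound can therefore occur for more than one count. Ruling this out needs a stability hypothesis on $k\mapsto\hat t_q$ over all counts of $\ell$. That is stronger than Assumption~\ref{assn:threshold_stability}, which concerns only the flip, and it is not among the lemma's hypotheses.

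\textbf{Second break: ``the same point''.} Even with the cutoff fixed, the claim that the realized fence point ``is the same one'' is false. Take a neighbour of $\kappa_\ell$ whose reference score lies just outside the window, within $\delta$ of its edge; spacing $\ge\underline s$ does not exclude this. When that neighbour is the realized count, the refit can carry its score inside the window. Your argument therefore yields only $2/(m_\ell+1)$.

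\textbf{A version that works.} Fix a reference lattice $g^\circ_\ell$ and cutoff $t^\circ$ independent of $k_\ell$. Write $g^{(k)}_\ell$ and $t^{(k)}$ for the fit and cutoff when $k_\ell=k$, other counts fixed. Assume $|g^{(k)}_\ell(k)-g^\circ_\ell(k)|+|t^{(k)}-t^\circ|\le\epsilon$ for all $k$. Use the sharp window $\eta+\eta'$ rather than $\tfrac12\underline s$, and require $\eta+\eta'+\epsilon<\tfrac12\underline s$. Then fence-sitting forces $|g^\circ_\ell(k_\ell)-t^\circ|<\tfrac12\underline s$. With Assumption~\ref{assn:shoulder} read at $t^\circ$, at most one count satisfies this, and Assumptions~\ref{assn:null}--\ref{assn:cond_indep} give $1/(m_\ell+1)$. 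Without such an $\epsilon$, the bound rests on the same idealization that the paper's one-line probability step makes implicitly.
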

\begin{proof}
By Assumption~\ref{assn:shoulder} (equivalently Lemma~\ref{lem:grid_step}) the attained scores
near $\hat t_q$ are $\underline s$-separated and $k\mapsto g_\ell(k)$ is strictly monotone there,
so at most one attained value $v^\dagger$ lies in $(\hat t_q-\tfrac12\underline s,\hat t_q+\tfrac12\underline s)$.
A decision flips only if $g_\ell(k_\ell)-\hat t_q$ changes sign; the relative shift is
$\le\eta+\eta'<\tfrac12\underline s$, so this requires $g_\ell(k_\ell)=v^\dagger$. By strict
monotonicity a single count realizes $v^\dagger$, and $k_\ell\sim\mathrm{Uniform}\{0,\dots,m_\ell\}$
gives $\Pr\le1/(m_\ell+1)$; sum over nulls.
\end{proof}

\begin{remark}[Threshold attainment]\label{rem:attainment}
$\hat t_q$ is always an attained score. For a perturbation of magnitude $t\in[0,\eta]$, let
$t^\ast=\inf\{t:\text{some decision changes}\}$; for $t<t^\ast$ the rejection set is the
unperturbed one, so $\hat t_q$ tracks a single attained value and has moved by $\le t<\eta$,
whence the total relative shift is $<2\eta<\tfrac12\underline s$ and the grid separation forbids
any crossing at $t^\ast$, a contradiction. So no decision changes for $t\le\eta$ whenever
$\eta<\tfrac14\underline s$, which is the sub-grid bound of Lemma~\ref{lem:resolution} ($\eta=\delta_i$).
\end{remark}

\begin{figure}[t]
\centering
\begin{tikzpicture}[x=1.cm,y=1.cm,font=\small]
  \def\xL{0}\def\xR{13.6}
  \def\tq{7.11}\def\bL{6.26}\def\bR{7.96}
  \fill[rejcol] (\xL,-0.05) rectangle (\tq,1.25);
  \fill[bandcol] (\bL,-0.05) rectangle (\bR,1.25);
  \draw[->,thick] (\xL,0) -- (\xR,0) node[right] {$g_\ell$};
  \draw[very thick] (\tq,-0.5) -- (\tq,1.9);
  \node[above] at (\tq,1.9) {$\hat t_q$};
  \node at (3.0,0.95) {rejection region};
  \node at (11.0,0.95) {acceptance};
  \foreach \x in {2.03,4.35,6.26,7.96,9.41,10.58,11.46,12.07,12.47,12.72,12.87,12.94,12.98,12.99,13.00} {
    \draw[tickcol,line width=0.9pt] (\x,-0.16) -- (\x,0.16);
  }
  \draw[tickcol,line width=1.3pt] (2.03,-0.2) -- (2.03,0.2);
  \node[tickcol,below] at (2.03,-0.2) {mode ($k{=}0$)};
  \draw[fencecol,line width=1.7pt] (6.26,-0.22) -- (6.26,0.22);
  \draw[driftcol,-{Stealth[length=6pt]},line width=1.3pt] (4.35,0.5) -- (5.05,0.5);
  \node[driftcol,align=center,anchor=south] at (4.35,1.4) {drift $\delta<\tfrac12\underline s$};
  \draw[driftcol,thin] (4.35,1.35) -- (4.35,0.6);
  \node[fencecol,align=center,anchor=south] at (10.2,1.55)
        {fence-sitter\\(prob.\ $\le 1/(m_\ell+1)$)};
  \draw[fencecol,thin] (9.0,1.5) -- (6.26,0.25);
  \draw[decorate,decoration={brace,amplitude=5pt,mirror}]
        (\bL,-0.66) -- (\bR,-0.66) node[midway,below=5pt] {band width $\underline s$};
  \begin{scope}[shift={(0,-2.7)}]
    \draw[->,thick] (\xL,0) -- (\xR,0) node[right] {continuous score};
    \draw[very thick] (\tq,-0.12) -- (\tq,1.2);
    \draw[ghostcol,line width=1pt,fill=ghostcol!25,smooth,samples=90,domain=0:13.2]
      plot (\x,{0.95*exp(-((\x-3.2)^2)/10)});
    \node[ghostcol,align=center,anchor=west] at (9.2,0.7)
      {no gaps:\\any drift crosses};
  \end{scope}
\end{tikzpicture}
\caption{\textbf{The discrete dead zone.} Top: hypothesis $\ell$'s score takes only the
$m_\ell+1$ attainable values (navy ticks; $a=0.5$, $b=6$, $m_\ell=14$, $\alpha_\ell=0.6$),
widest-spaced in the rejection region and bunching only as $g_\ell\to1$; $\hat t_q$ falls in the
well-spaced region (Lemma~\ref{lem:grid_step}). A drift $\delta<\tfrac12\underline s$ (blue)
cannot move any score across the threshold; the only crossable score is the lone attainable
value in the width-$\underline s$ band (red). Since $k_\ell$ is uniform and at most one value
lands in the band, the chance $\ell$ is caught on the fence is $\le1/(m_\ell+1)$
(Lemma~\ref{lem:dead_zone}). Bottom (\emph{schematic}): continuous scores have no gaps, so any
perturbation can cross, that the margin discreteness provides, is absent.}
\label{fig:deadzone}
\end{figure}

\subsection{A per-realization guarantee: plug-in FDR under ranking stability}
\label{sec:a7}

The averaged mFDR we prove next (Theorem~\ref{thm:rule2_mfdr}, Section~\ref{sec:mfdr}) controls a
\emph{ratio of expectations}:
$\mathbb E[V]/\mathbb E[R]$. This is the natural object when the slack is allowed to average
over datasets, and it closes because the per-null proximity bound there linearizes
null-by-null and the global denominator
$\mathbb E[R]=\Theta(N)$ absorbs the boundary count. It does \emph{not} by itself control the
realized false discovery proportion on a single dataset, $\mathbb E[V/(1+\tilde V)]$, because
that is an expectation \emph{of} a ratio and the denominator $1+\tilde V$ couples to the
numerator through the shared cutoff.

This subsection supplies the per-realization statement. It costs one further
hypothesis (Assumption~\ref{assn:ranking_stability} below), which is to the \emph{ranking} what
Assumption~\ref{assn:threshold_stability} is to the \emph{threshold}: both assert that a single
null's flip perturbs a global feature of the procedure by $O(\delta)$ rather than by an $O(1)$
lurch, and both are inherited from the same elementary fact (a flip moves the fitted surface, and
hence every folded score, by at most $\delta$). We state it, prove the resulting bound, and (as
with Assumption~\ref{assn:threshold_stability}) flag exactly what it carries.

\paragraph{Where the plug-in breaks the exact argument, and where it does not.}
Order the nulls by their folded scores and write $B_{(j)}$ for the orientation at rank $j$,
$S_j=\sum_{i\le j}B_{(i)}$, and $M_j=(1+j)/(1+S_j)$. The exact ($\delta=0$) proof of Step~3 runs
the backward supermartingale: with the fold-invariant ranking the orientations are exchangeable
given the partial sum, so the conditional boundary probability equals the running average,
\begin{equation}
  \Pr\!\big(B_{(j)}=1\mid\mathcal F_j\big)=\frac{S_j}{j},
  \qquad \mathcal F_j=\sigma\big(\alpha^\circ,\,S_j,\,B_{(j+1)},\dots,B_{(m_0)}\big),
  \label{eq:exchange_baseline}
\end{equation}
$M_j$ is a martingale, and optional stopping at the step-up index $J$ gives
$\mathbb E[M_J]\le2$. The plug-in breaks~\eqref{eq:exchange_baseline} for one specific reason: the
folded score $\breve T_i$ is computed from a surface fit on the same counts that determine $B_i$,
so a null's rank and its orientation are functions of shared data and are correlated. Under that
coupling the ones need not be uniformly placed among the bottom $j$ ranks, and
$\Pr(B_{(j)}=1\mid\mathcal F_j)$ can exceed $S_j/j$.

\textbf{The coupling, however, can only \emph{reorder} a null across the cutoff if that null's folded score sits within $\delta$ of $\hat t_q$}: outside this band the surface drift $\delta$ moves the score but cannot change which side of $\hat t_q$ it falls on, so the null's sign and its
above/below-threshold status are decoupled regardless of the drift. The defect in~\eqref{eq:exchange_baseline} is therefore carried entirely by the $O(\delta N)$ boundary-band nulls, the same dead-zone object as Figure~\ref{fig:deadzone} and Lemma~\ref{lem:dead_zone}, now
read in the martingale's language. This is the precise sense in
which ``most nulls are far from the threshold'' resolves the obstruction: the far nulls contribute to the count $S_j$ but never to the boundary anomaly.

\paragraph{The exact one-step defect.}
A direct two-case expansion of $M_{j-1}$ ($B_{(j)}=1$ gives $M_{j-1}=j/S_j$; $B_{(j)}=0$ gives
$M_{j-1}=j/(1+S_j)$) yields, with $\epsilon_j:=\Pr(B_{(j)}=1\mid\mathcal F_j)-S_j/j$, the identity
\begin{equation}
  \mathbb E[M_{j-1}\mid\mathcal F_j]-M_j
  \;=\;\frac{j\,\epsilon_j}{S_j\,(1+S_j)}\;=:\;D_j .
  \label{eq:defect_identity}
\end{equation}
So $\epsilon_j\le0$ makes step $j$ a supermartingale step, $\epsilon_j>0$ leaks, and the
masked/exchangeable case is $\epsilon_j\equiv0$. Two structural facts about~\eqref{eq:defect_identity}
do the work. First, the \emph{baseline is exactly fair even under the coupling}: since
$B_i=\mathbb 1(k_i>m_i/2)$ depends only on $k_i$, each $B_i$ is exactly $\mathrm{Bernoulli}(1/2)$
and independent across nulls (Assumptions~\ref{assn:null}--\ref{assn:cond_indep}), so the terminal
sum $S_{m_0}\sim\mathrm{Bin}(m_0,1/2)$ is order-independent and
\begin{equation}
  \mathbb E[M_{m_0}]=\frac{1-2^{-(m_0+1)}}{1/2}\le2
  \label{eq:baseline_exact}
\end{equation}
holds for the plug-in verbatim: the coupling reorders which $B_i$ sits where, but cannot move the
total. All of the plug-in's cost is in the rank-resolved defects $D_j$, which are second order.
Second, the denominator $S_j(1+S_j)$ supplies a $1/j$ weight: each boundary-band rank is charged
once in the backward sum with weight $j/[S_j(1+S_j)]\approx4/j$, rather than the boundary band
being recharged at every visited rank. This is the mechanism by which the naive union-bound
over-count, which would multiply the per-null $O(\delta)$ defect by the full $O(N)$ rank
range, is avoided; the surviving factor is the harmonic sum $\sum 1/j$, logarithmic rather than
linear in the range.

\begin{assumption}[Ranking stability; rank-space sibling of Assumption~\ref{assn:threshold_stability}]
\label{assn:ranking_stability}
There is a constant $\rho_0<\infty$ such that, at every rank $j$ visited by the step-up,
\begin{equation}
  \big|\epsilon_j\big|
  =\Big|\Pr\!\big(B_{(j)}=1\mid\mathcal F_j\big)-\tfrac{S_j}{j}\Big|
  \;\le\;\rho_0\,\delta .
  \label{eq:a7}
\end{equation}
\end{assumption}

\begin{theorem}[Per-realization FDR of the plug-in; conditional on Assumption~\ref{assn:ranking_stability}]
\label{thm:rule2_fdr_realization}
Under Assumptions~\ref{assn:null}--\ref{assn:cond_indep} and Assumption~\ref{assn:ranking_stability},
on the overwhelming-probability event $\{S_j=\Theta(j)\text{ at visited ranks}\}$,
\[
  \FDR(\mathcal R_2)\;\le\;\tau+C_{\mathrm{BC}}\,\delta
  \;=\;\tau+O\!\big(\max(1/N,1/\lambda)\big),
  \qquad C_{\mathrm{BC}}=4\rho_0\,\overline{\ln(m_0/J)} ,
\]
where $J$ is the step-up index and $\overline{\ln(m_0/J)}$ its expectation. With a constant null
fraction below threshold ($J=\Theta(m_0)$) the logarithmic factor is $O(1)$ and the fair-coin
constant $C_{\mathrm{BC}}=4$ is recovered as $\rho_0,\overline{\ln(m_0/J)}\to O(1)$.
\end{theorem}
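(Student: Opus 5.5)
The plan is to keep the FDP decomposition of Step~2 verbatim and replace only the reverse-martingale bound of Step~3 by a perturbed version. By \eqref{eq:fdp_decomposition}, $\mathrm{FDP}(\mathcal R_2)\le\tau\cdot V/(1+\tilde V)$ pointwise, since the first factor is bounded by construction of $\hat t_q$ regardless of how the surface was fit. It therefore suffices to show
\[
\mathbb E\!\left[\frac{1+J}{1+S_J}\right]\le 2+\frac{C_{\mathrm{BC}}}{\tau}\,\delta ,
\]
where $J$ is the number of nulls with folded score below $\hat t_q$ and $S_J=\tilde V$. From this, $\mathbb E[V/(1+\tilde V)]=\mathbb E[M_J]-1\le 1+O(\delta)$ and $\FDR\le\tau+\tau\cdot O(\delta)$. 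The $\tau$ factor is then absorbed into $C_{\mathrm{BC}}$, since $\tau\le1$.

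The core step is a telescoping of the defect identity \eqref{eq:defect_identity} from the top rank $m_0$ down to the stopping index $J$. Since $J$ is a stopping time for the backward filtration $\mathcal F_j$, optional stopping for the process $M_j-\sum_{i>j}D_i$ gives
\[
\mathbb E[M_J]=\mathbb E[M_{m_0}]+\mathbb E\Big[\textstyle\sum_{j=J+1}^{m_0}D_j\Big].
\]
The first term is at most $2$ by \eqref{eq:baseline_exact}; this holds for the plug-in because each $B_i$ depends only on $k_i$. For the second term, Assumption~\ref{assn:ranking_stability} gives $|D_j|\le j\rho_0\delta/[S_j(1+S_j)]$. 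On the event $S_j=\Theta(j)$, and more precisely $S_j\approx j/2$ by Hoeffding applied to the Binomial partial sums, this is at most $(4+o(1))\rho_0\delta/j$. Summing over $j$ from $J+1$ to $m_0$ yields the harmonic bound $4\rho_0\delta\ln(m_0/J)$, and taking expectations produces $C_{\mathrm{BC}}=4\rho_0\overline{\ln(m_0/J)}$.

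The complement of the event $\{S_j=\Theta(j)\}$ is handled by a Chernoff bound. At small ranks, $j\le j_0=O(\log m_0)$, I would simply use $M_j\le 1+j$. These small ranks carry at most an $O(\log m_0/m_0)$ contribution, which is absorbed into the $O(1/N)$ part of $\delta$. With $J=\Theta(m_0)$ the logarithm is $O(1)$, which recovers the constant $4$.

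The main obstacle is that optional stopping needs $J$ to be a stopping time for $\mathcal F_j$ under the plug-in coupling. In the exact case this holds because the ordering is independent of the orientations. Here the ranking itself depends on $\{B_i\}$ through the fitted surface. I would first enlarge $\mathcal F_j$ to include the realized folded ordering and the fitted surface, on which $J$ and $\hat t_q$ are measurable, and then verify that Assumption~\ref{assn:ranking_stability} is stated relative to this enlarged filtration. Since \eqref{eq:a7} is phrased exactly in terms of these conditional probabilities, the assumption carries this load. If this fails, the fallback is to cite Lemma~\ref{lem:dead_zone}, which localizes the nonzero $\epsilon_j$ to the boundary band. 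That argument would instead give a bound of order $\sum_{\ell}1/(m_\ell+1)$ divided by $\mathbb E[R]$.
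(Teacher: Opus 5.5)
Your main line is exactly the paper's proof: the Step~2 bound $\mathrm{FDP}\le\tau\,V/(1+\tilde V)$, optional stopping for $M_j-\sum_{i>j}D_i$, the exact baseline \eqref{eq:baseline_exact}, the bound $|D_j|\le j\rho_0\delta/[S_j(1+S_j)]\approx 4\rho_0\delta/j$, and the harmonic sum. The paper simply invokes optional stopping at $J$. Where you go beyond it, the argument breaks.

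You are right that $J$ being an $\mathcal F_j$-stopping time is the delicate point under the plug-in coupling. But enlarging $\mathcal F_j$ by the fitted surface does not let Assumption~\ref{assn:ranking_stability} ``carry the load''; it makes that assumption false. The plug-in surface at the design points determines every count. Indeed, $\hat{\boldsymbol c}=\boldsymbol K^{-1}\hat{\boldsymbol\alpha}$ (Assumption~\ref{assn:reg}), and stationarity $\mathbf w+2\lambda\hat{\boldsymbol c}+\lambda_{\mathrm{bound}}\nabla_{\boldsymbol\alpha}\Lambda_{\mathrm{bound}}=0$ then fixes each $w_i$. At fixed $\hat\alpha_i$, $w_i$ is strictly increasing in $P_{1,i}$, since $\partial w_i/\partial P_{1,i}=P_{0,i}/f_i^2$. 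Finally, $P_1(k\mid m_i)$ is strictly decreasing in $k$ for $a<1<b$ (given $\hat b$, which the scores need anyway). Consequences:
\begin{itemize}
\item your enlarged $\mathcal F_j$ is the full data $\sigma$-field, so $\Pr(B_{(j)}=1\mid\mathcal F_j)=B_{(j)}$;
\item $|\epsilon_j|=|B_{(j)}-S_j/j|$ is of order $\tfrac12$ at typical ranks, so the assumption cannot hold;
\item the optional-stopping identity collapses to the tautology $\sum_{j>J}D_j=M_J-M_{m_0}$, and the implication you prove is vacuous.
\end{itemize}
A filtration under which $\epsilon_j=O(\delta)$ is even plausible must be blind to the refit surface. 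For such a filtration, the measurability of $J$ is exactly what is missing, because $J$ sees every orientation through the refit. Nothing in your proposal supplies it. The honest version of your argument, like the paper's, has to take it as part of the hypothesis.

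Your two auxiliary patches also fail.
\begin{itemize}
\item \emph{Small ranks.} At the smallest ranks the event $S_j\approx j/2$ fails with probability of order one (e.g.\ $S_1=0$ with probability about $\tfrac12$). Hoeffding ``for Binomial partial sums'' does not apply, because under the coupling the rank order depends on the $B_i$. The defects at $j\le j_0$ are bounded only by $\rho_0\delta\,j/2$ each. That totals $O(\delta\log^2 m_0)$, not $O(\log m_0/m_0)$, which in any case is not $O(1/N)$. The paper sidesteps this by stating the bound on the event and using $J=\Theta(m_0)$, so no small rank is visited; you should do the same rather than claim control of the complement.
\item \emph{Dead-zone fallback.} It cannot deliver this theorem. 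Lemma~\ref{lem:dead_zone} needs Assumption~\ref{assn:shoulder}, which is not among the hypotheses here. It bounds expected counts, which gives an mFDR-type ratio of expectations rather than $\mathbb E[V/(1+\tilde V)]$. And $\sum_{\ell}1/(m_\ell+1)$ divided by $\mathbb E[R]=\Theta(N)$ is $\Theta(1/m)$ for fixed $m$, not $O(\max(1/N,1/\lambda))$.
\end{itemize}
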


\begin{proof}
By optional stopping applied to~\eqref{eq:defect_identity} and the exact
baseline~\eqref{eq:baseline_exact},
\[
  \mathbb E[M_J]=\mathbb E[M_{m_0}]+\mathbb E\!\Big[\textstyle\sum_{j=J+1}^{m_0}D_j\Big]
  \le 2+\mathbb E\!\Big[\textstyle\sum_{j=J+1}^{m_0}(D_j)_+\Big].
\]
Under Assumption~\ref{assn:ranking_stability}, $(D_j)_+\le j\rho_0\delta/[S_j(1+S_j)]$, and on the
event $S_j=\Theta(j)$ (its complement has probability $\le m_0e^{-cN}=o(\delta)$ and contributes
$o(\delta)$ to the bound since $M_J\le1+m_0$ always),
\[
  \sum_{j=J+1}^{m_0}(D_j)_+
  \;\le\;\rho_0\delta\sum_{j=J+1}^{m_0}\frac{j}{S_j(1+S_j)}
  \;\lesssim\;\rho_0\delta\sum_{j=J+1}^{m_0}\frac{4}{j}
  \;=\;4\rho_0\delta\,\ln\!\frac{m_0}{J}.
\]
Hence $\mathbb E[M_J]\le2+4\rho_0\,\overline{\ln(m_0/J)}\,\delta=2+C_{\mathrm{BC}}\delta$. Since
$V/(1+\tilde V)=M_J-1$ and the Step-2 decomposition~\eqref{eq:fdp_decomposition} gives
$\mathrm{FDP}\le\tau\cdot V/(1+\tilde V)$,
\[
  \FDR(\mathcal R_2)=\mathbb E[\mathrm{FDP}]
  \le\tau\,\mathbb E[M_J-1]\le\tau(1+C_{\mathrm{BC}}\delta)\le\tau+C_{\mathrm{BC}}\delta. \qedhere
\]
\end{proof}

\begin{remark}[What Assumption~\ref{assn:ranking_stability} carries, in plain terms]
\label{rem:a7_intuition}
In words, Assumption~\ref{assn:ranking_stability} says: \emph{flipping one null's coin does not
suddenly reshuffle the order of hypotheses at the decision boundary}. The fitted surface depends on
all the counts, so flipping a single null nudges every folded score, but only by at most $\delta$.
If two nulls sitting next to the cutoff are separated in score by more than $\delta$ (the
discrimination of Assumption~\ref{assn:shoulder}), that nudge cannot swap their order or carry
either across $\hat t_q$, so the rank at the boundary behaves as if the surface had not read the
flipped coin, and the boundary orientation stays fair up to $O(\delta)$. \textbf{The assumption fails} only
where folded scores \emph{pile up within $\delta$ of one another}: there a single flip can reorder
nulls and the boundary anomaly $\epsilon_j$ can jump to $O(1)$. The discrimination
(Assumption~\ref{assn:shoulder}) and bounded boundary density (Assumption~\ref{assn:boundary})
confine that dense regime to the $O(\delta N)$ band, where the martingale's $1/j$ weight already
discounts it, so the assumption is a transversality (no-coincidence) condition, mild for the same
model reasons that make Assumption~\ref{assn:threshold_stability} mild (Beta--Binomial separation,
Fisher rigidity of the fit), and fragile in the same corner: isolated hypotheses with weak signal
($m_i$ small), where a flip can move a score across a sparse neighbourhood.

Three qualifications distinguish this register from the mFDR one:
\emph{(i)} Assumptions~\ref{assn:threshold_stability} and~\ref{assn:ranking_stability} are similar but not the same: the former controls the cutoff's response to a flip, the latter the conditional boundary sign-probability's response. Neither implies the other and neither is derivable from Assumptions~\ref{assn:shoulder}--\ref{assn:boundary} in the sure form~\eqref{eq:a7}; both are flagged hypotheses. \emph{(ii)} The constant carries a $\overline{\ln(m_0/J)}$ factor absent from
the mFDR bound. Under a strong signal ($J\ll m_0$) this grows, but only as $O(\delta\ln m_0)$, which is $O(\ln N/N)\to0$ on the spatial channel; it is a genuine, if vanishing, neglectable. 
\emph{(iii)} Like Assumption~\ref{assn:threshold_stability}, Assumption~\ref{assn:ranking_stability}
admits a probabilistic relaxation, bounding $\Pr(|\epsilon_j|>\rho_0\delta)$ at the boundary rather
than the sure bound, which suffices for the same conclusion up to an additive
$O(\Pr(\cdot)\cdot\mathbb E[M_J])$ term and is the form we would verify empirically.
\end{remark}

\subsection{The averaged guarantee: plug-in mFDR}
\label{sec:mfdr}

We now control mFDR for the plug-in at the slack rate. The argument is elementary: a slack
identity, a per-null bound by own-proximity (using the measure-preserving flip and
Assumption~\ref{assn:threshold_stability}), and the deterministic ratio bound. It uses none of
the machinery the per-realization FDP route required.

\begin{lemma}[Slack identity; exact]\label{lem:slack}
$\displaystyle \mathbb E[V]-\mathbb E[\tilde V]=\sum_{i\in\mathcal H_0}\mathbb E\big[\mathbb 1(i\ \text{active})(1-2B_i)\big].$
\end{lemma}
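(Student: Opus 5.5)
Writing $V-\tilde V$ null by null should make the identity fall out directly, with no probabilistic input, since each null's contribution to the difference turns out to depend only on its own orientation and on whether it straddles the cutoff. For each $i\in\mathcal H_0$ set $v_i=\mathbb 1(T_i\le\hat t_q)$ and $\tilde v_i=\mathbb 1(\tilde T_i\le\hat t_q)$. Then $V=\sum_{i\in\mathcal H_0}v_i$ and $\tilde V=\sum_{i\in\mathcal H_0}\tilde v_i$, so by linearity of expectation
\[
\mathbb E[V]-\mathbb E[\tilde V]=\sum_{i\in\mathcal H_0}\mathbb E[v_i-\tilde v_i],
\]
and it suffices to prove the pointwise identity $v_i-\tilde v_i=\mathbb 1(i\ \text{active})(1-2B_i)$ for every realization.

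To prove it, use the folded representation from the start of App.~\ref{supp:rule2_proof}. Since $g_i$ is increasing in the count, $\breve T_i=\min(T_i,\tilde T_i)$ and $\widetilde{\breve T}_i=\max(T_i,\tilde T_i)$. The orientation $B_i=\mathbb 1(T_i>\tilde T_i)$ records which of the two scores is the larger one. Split into three cases according to where $\hat t_q$ sits relative to the ordered pair.

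\emph{Case (a):} $\hat t_q<\breve T_i$. Both indicators vanish, so $v_i-\tilde v_i=0$. The null is not active, so the right-hand side is also $0$.

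\emph{Case (b):} $\hat t_q\ge\widetilde{\breve T}_i$. Both indicators equal $1$, so $v_i-\tilde v_i=0$. Again the null is not active, matching the right-hand side.

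\emph{Case (c):} $\breve T_i\le\hat t_q<\widetilde{\breve T}_i$, i.e.\ $i$ is active. Exactly one of $T_i,\tilde T_i$ lies at or below the cutoff, namely the smaller one. If $B_i=0$, then $T_i$ is the smaller score, so $v_i-\tilde v_i=1$. If $B_i=1$, then $\tilde T_i$ is the smaller score, so $v_i-\tilde v_i=-1$. In both sub-cases this equals $1-2B_i$.

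The only delicate point is ties, and here I must correct an earlier mismatch: when $k_i=m_i/2$ with $m_i$ even, $T_i=\tilde T_i$, so cases (a) and (b) are the only possibilities and the active set $\{\breve T_i\le\hat t_q<\widetilde{\breve T}_i\}$ is empty. Both sides are then $0$ whatever the independent tie-break coin gives $B_i$, so the identity holds regardless of that convention.

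No exchangeability or independence is used anywhere. The identity is exact, and it holds for any data-dependent surface and any data-dependent cutoff $\hat t_q$. Taking expectations and summing over $i\in\mathcal H_0$ gives the claim.
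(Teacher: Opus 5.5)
Your proof is correct and follows the paper's argument: you prove the pointwise per-null identity $\mathbb 1(T_i\le\hat t_q)-\mathbb 1(\tilde T_i\le\hat t_q)=\mathbb 1(i\ \text{active})(1-2B_i)$ by case analysis on where $\hat t_q$ falls relative to the pair $\{T_i,\tilde T_i\}$, then sum over nulls and use linearity of expectation. Your explicit treatment of the self-mirror case $T_i=\tilde T_i$ is a small addition the paper leaves implicit: there the active set is empty, so the tie-break coin is irrelevant.
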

\begin{proof}
For each null, $\mathbb 1(T_i\le\hat t_q)-\mathbb 1(\tilde T_i\le\hat t_q)$ is $+1$ on
$\{T_i\le\hat t_q<\tilde T_i\}$, $-1$ on $\{\tilde T_i\le\hat t_q<T_i\}$, $0$ otherwise; both
nonzero cases require $i$ active, with sign $+1$ iff the observed side is the smaller one
($B_i=0$). So the integrand is $\mathbb 1(i\ \text{active})(1-2B_i)$; sum and take expectations.
\end{proof}

\begin{lemma}[Each null's slack term is controlled by its own proximity]\label{lem:proximity}
Under Assumptions~\ref{assn:null}--\ref{assn:shoulder} and~\ref{assn:threshold_stability}, for
every $i\in\mathcal H_0$,
\[
  \big|\mathbb E[\mathbb 1(i\ \text{active})(1-2B_i)]\big|\le\tfrac12\Pr(\eta_i\le c\,\delta),
  \qquad c=1+C_0.
\]
\end{lemma}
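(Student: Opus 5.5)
The plan is to symmetrize the summand with the measure-preserving flip of Lemma~\ref{lem:flip}, so that its expectation reduces to the probability that the flip changes whether $i$ is active. Fix $i\in\mathcal H_0$ and set $\Phi:=\mathbb 1(i\ \text{active})(1-2B_i)$, viewed as a functional of the full count vector. Write $\Phi^{(i)}$ for the same functional evaluated on $\mathrm{flip}_i$(counts).

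\textbf{Step 1 (effect of the flip on each factor).} First I record what the flip does to the two factors of $\Phi$.
\begin{itemize}
\item The orientation reverses, $B_i\mapsto 1-B_i$, so $(1-2B_i)\mapsto-(1-2B_i)$. The self-mirror case $k_i=m_i/2$ is oriented by the independent tie-break coin. That coin is also a fair symmetric variable and can be flipped jointly, so the reversal holds there too.
\item The folded count $\breve k_i$ and all $k_j$, $j\neq i$, are unchanged.
\item The only changes elsewhere come through the refit surface and shape. By the influence bound \eqref{eq:score_perturbation}, every folded score $\breve T_i$ and its mirror $\widetilde{\breve T}_i$ move by at most $\delta$.
\item By Assumption~\ref{assn:threshold_stability}, the cutoff moves by at most $C_0\delta$.
\end{itemize}
Hence the indicator $\mathbb 1(i\ \text{active})$ becomes $\mathbb 1(i\ \text{active}^{(i)})$, where active$^{(i)}$ is the active event computed from the perturbed scores and perturbed cutoff.

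\textbf{Step 2 (symmetrization).} Lemma~\ref{lem:flip} gives $\mathbb E[\Phi]=\mathbb E[\Phi^{(i)}]$. This uses Assumptions~\ref{assn:null}--\ref{assn:cond_indep}. Substituting Step~1,
\[
\mathbb E[\Phi]=-\mathbb E\big[\mathbb 1(i\ \text{active}^{(i)})(1-2B_i)\big].
\]
Averaging this with the original expression gives
\[
2\,\mathbb E[\Phi]=\mathbb E\Big[(1-2B_i)\big(\mathbb 1(i\ \text{active})-\mathbb 1(i\ \text{active}^{(i)})\big)\Big].
\]
Since $|1-2B_i|\le1$, it follows that
\[
|\mathbb E\Phi|\le\tfrac12\Pr\big(\mathbb 1(i\ \text{active})\neq\mathbb 1(i\ \text{active}^{(i)})\big).
\]

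\textbf{Step 3 (localization to the proximity band).} The event $\{\breve T_i\le\hat t_q<\widetilde{\breve T}_i\}$ can change truth value only if one of the two differences $\breve T_i-\hat t_q$ or $\widetilde{\breve T}_i-\hat t_q$ changes sign. The relative shift of each difference is at most $\delta+C_0\delta=c\delta$, by Step~1. A sign change therefore forces $\min(|\breve T_i-\hat t_q|,|\widetilde{\breve T}_i-\hat t_q|)\le c\delta$, which is exactly $\eta_i\le c\delta$. This yields the stated bound.

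Assumption~\ref{assn:shoulder} enters only to keep the scores strictly monotone, so that the folded and mirror labels of the pair stay well defined under the small perturbation.

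\textbf{Main obstacle.} The delicate point is that $\eta_i$ is evaluated in the unflipped configuration, while Step~3 compares the two configurations. I will handle this by applying the sign-change criterion from the unflipped side, since a crossing requires the unflipped gap to be at most the total shift $c\delta$.

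A second point is that Assumption~\ref{assn:threshold_stability} bounds $|\hat t_q^{(i)}-\hat t_q|$ at the realized data. I must check that this bound applies pathwise, not merely on average, on the event used. The assumption is stated uniformly over $i\in\mathcal H_0$, which I take to mean a sure bound. If only a probabilistic version were available, I would add the failure probability to the right-hand side.
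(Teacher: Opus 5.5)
Your proposal is correct and follows the paper's proof essentially step for step. You symmetrize $\Phi$ with Lemma~\ref{lem:flip} to get $|a_i|\le\frac12\Pr(\text{active indicator changes})$, then localize via the $\delta$ drift of $i$'s own score pair plus the $C_0\delta$ cutoff drift to force $\eta_i\le c\delta$. Arguing from the unflipped side makes precise the paper's looser phrase ``$\hat t_q$ (or $\hat t_q^{(i)}$)''. Your tie-break and shoulder remarks are harmless but unnecessary: a self-mirror count is never active, and strict monotonicity comes from the Beta--Binomial likelihood ratio rather than from Assumption~\ref{assn:shoulder}.
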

\begin{proof}
Let $a_i=\mathbb E[\mathbb 1(i\ \text{active})(1-2B_i)]$ and apply Lemma~\ref{lem:flip} with
$\Phi=\mathbb 1(i\ \text{active})(1-2B_i)$. On the flipped data the orientation flips
($B_i\mapsto1-B_i$, so $1-2B_i\mapsto-(1-2B_i)$) and the active indicator becomes
$\{i\ \text{active}\}^{(i)}$; thus $a_i=-\mathbb E[\mathbb 1(\{i\ \text{active}\}^{(i)})(1-2B_i)]$,
and averaging,
\[
  a_i=\tfrac12\mathbb E\big[(\mathbb 1(i\ \text{active})-\mathbb 1(\{i\ \text{active}\}^{(i)}))(1-2B_i)\big],
  \qquad |a_i|\le\tfrac12\Pr\big(\{i\ \text{active}\}\ne\{i\ \text{active}\}^{(i)}\big).
\]
The active interval $[\breve T_i,\widetilde{\breve T}_i)$ has endpoints that are $i$'s own
\emph{unordered} pair $\{\min,\max\}$ of scores; the flip moves these by at most $\delta$
(eq.~\eqref{eq:score_perturbation} applied to the $1$-Lipschitz min/max, which is the
\emph{surface drift} of the pair, not the observed swap $T_i\leftrightarrow\tilde T_i$, which is
an $O(1)$ reordering within the fixed pair and is invisible to the active indicator). By
Assumption~\ref{assn:threshold_stability} the cutoff moves by at most $C_0\delta$. For the two
indicators to differ, $\hat t_q$ (or $\hat t_q^{(i)}$) must lie within $\delta+C_0\delta=c\delta$
of an endpoint, i.e.\ $\eta_i\le c\delta$.
\end{proof}

Notice, the slack (Lemma~\ref{lem:slack}) is a sum over hypotheses of a signed indicator, \textbf{not} a sum over flips of cross-crossings. Lemma~\ref{lem:proximity} bounds each term by $i$'s \emph{own}
distance to the cutoff, because the active interval's endpoints are $i$'s own scores and the cutoff moves $O(\delta)$ \emph{by Assumption~\ref{assn:threshold_stability}}. No other hypothesis
enters, and no influence-operator locality is invoked, so the spatial channel's decay rate, and with it any $\lambda^{-d}$, never appears. The pooled scalars enter only through the $O(1/N)$ part
of $\delta$; \textbf{their non-local action on the global cutoff is contained by
Assumption~\ref{assn:threshold_stability}, not by the composition of $\delta$.}

\begin{theorem}[mFDR control of the plug-in rule; conditional on Assumption~\ref{assn:threshold_stability}]
\label{thm:rule2_mfdr}
Under Assumptions~\ref{assn:null}--\ref{assn:threshold_stability} and
$\mathbb E[R]=\Theta(N)$,
\[
  \mathrm{mFDR}(\mathcal R_2)\le\tau+\frac{c\rho}{2}\,\delta=\tau+O(\delta)
  =\tau+O\!\big(\max(1/N,1/\lambda)\big),\qquad c=1+C_0=1+\rho/s_{\min}.
\]
The pooled-scalar channel's share of the slack is $O(1/N)$; a flip-invariant surface with
fold-fit scalars gives $\delta=0$ and $\mathrm{mFDR}\le\tau$.
\end{theorem}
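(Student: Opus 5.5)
The proof combines three earlier results: the slack identity (Lemma~\ref{lem:slack}), the per-null proximity bound (Lemma~\ref{lem:proximity}), and the deterministic ratio bound (Lemma~\ref{lem:det_ratio}). The quantity to control is the ratio $\mathbb E[V]/\mathbb E[R]$. We split the numerator as
\[
\mathbb E[V]=\mathbb E[\tilde V]+\big(\mathbb E[V]-\mathbb E[\tilde V]\big).
\]
The first term is at most $\tau\,\mathbb E[R]$ by Lemma~\ref{lem:det_ratio}. This step is exact and uses no exchangeability, so it survives the plug-in unchanged. All the remaining work goes into bounding the second term, the slack, by $O(\delta)\,\mathbb E[R]$.

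For the slack, Lemma~\ref{lem:slack} rewrites it as
\[
\sum_{i\in\mathcal H_0}\mathbb E\big[\mathbb 1(i\ \text{active})(1-2B_i)\big].
\]
Lemma~\ref{lem:proximity} then bounds each summand in absolute value by $\tfrac12\Pr(\eta_i\le c\delta)$, with $c=1+C_0$. Here the threshold-stability floor (Assumption~\ref{assn:threshold_stability}) supplies the $C_0\delta$ movement of the cutoff, and the influence bound \eqref{eq:score_perturbation} supplies the $\delta$ movement of $i$'s own unordered score pair. Summing over nulls gives
\[
|\mathbb E[V]-\mathbb E[\tilde V]|\le\tfrac12\sum_{i\in\mathcal H_0}\Pr(\eta_i\le c\delta)=\tfrac12\,\mathbb E\,N_\partial(c\delta).
\]
This is exactly the boundary count that Assumption~\ref{assn:boundary} controls.

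To apply Assumption~\ref{assn:boundary} we need $c\delta\le\tfrac12\underline s$. This follows from Lemma~\ref{lem:resolution}, since $\delta<\tfrac14\underline s$, provided $C_0$ is $O(1)$ and $N,\lambda$ exceed the model constants; if necessary we absorb $C_0$ into the "large enough" regime. Then
\[
\mathbb E\,N_\partial(c\delta)\le\rho\,c\,\delta\,N.
\]
With $\mathbb E[R]=\Theta(N)$, dividing through gives
\[
\mathrm{mFDR}\le\tau+\frac{c\rho}{2}\,\delta\,\frac{N}{\mathbb E[R]},
\]
and the factor $N/\mathbb E[R]$ is $O(1)$. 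To get the constant exactly as stated, we would normalize Assumption~\ref{assn:boundary} against $\mathbb E[R]$ through its equivalent form $O(|\mathcal R_2|)$. Substituting $\delta=O(\max(1/N,1/\lambda))$ gives the rate. Isolating the $L_b|\Delta\hat b|$ part of $\delta$ gives the $O(1/N)$ pooled-scalar share. Setting $\delta=0$ (a folded surface with fold-fit scalars) makes every $\Pr(\eta_i\le 0)$ term irrelevant, because the flip-invariant case has $a_i=0$ by Lemma~\ref{lem:flip}, and this recovers $\mathrm{mFDR}\le\tau$.

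The main obstacle is the bookkeeping of constants, and with it the legitimacy of invoking the boundary density at radius $c\delta$. Assumption~\ref{assn:boundary} is stated for $w\le\tfrac12\underline s$, and $C_0=\rho/s_{\min}$ need not be small. I would first try to show that $c\delta$ stays sub-grid using Lemma~\ref{lem:resolution}'s monotonicity in $\lambda$. If that fails, I would fall back on bounding $N_\partial$ at radius $\tfrac12\underline s$, which gives only the grid-scale bound of Theorem~\ref{thm:rule2_mfdr_uncond}. A second, subtler point is that $\eta_i$ is defined relative to $\hat t_q$, whereas the proof of Lemma~\ref{lem:proximity} also allows proximity to $\hat t_q^{(i)}$. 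By Lemma~\ref{lem:flip} the two events have equal probability, so I would absorb this at the cost of a factor of at most $2$, which can be folded into $c$.
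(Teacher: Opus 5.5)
Your proposal is correct and is essentially the paper's own proof: Lemma~\ref{lem:det_ratio} gives $\mathbb E[\tilde V]\le\tau\,\mathbb E[R]$, Lemma~\ref{lem:slack} with Lemma~\ref{lem:proximity} bounds the slack by $\tfrac12\mathbb E\,N_\partial(c\delta)\le\tfrac12\rho c\,\delta N$ via Assumption~\ref{assn:boundary}, and dividing by $\mathbb E[R]=\Theta(N)$ finishes. Your bookkeeping caveats are accurate: the paper's proof likewise only yields $\tfrac{c\rho}{2}\delta\,N/\mathbb E[R]$ and tacitly uses $c\delta\le\tfrac12\underline s$. Two of your side remarks need adjusting, though. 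The factor-$2$ precaution is unnecessary, because a sign change of $\breve T_i-\hat t_q$ under a combined shift of at most $c\delta$ already forces $|\breve T_i-\hat t_q|\le c\delta$. And at $\delta=0$, the vanishing of $a_i$ rests on the cutoff being flip-invariant (Assumption~\ref{assn:threshold_stability} with $\delta=0$), not on Lemma~\ref{lem:flip} alone.
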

\begin{proof}
By Lemma~\ref{lem:slack}, then Lemma~\ref{lem:proximity}, then summing and using
Assumption~\ref{assn:boundary},
\[
  \mathbb E[V]-\mathbb E[\tilde V]\le\sum_{i\in\mathcal H_0}\tfrac12\Pr(\eta_i\le c\delta)
   =\tfrac12\mathbb E[N_\partial(c\delta)]\le\tfrac12\rho c\,\delta N.
\]
With Lemma~\ref{lem:det_ratio} ($\mathbb E[\tilde V]\le\tau\mathbb E[R]$),
$\mathbb E[V]\le\tau\mathbb E[R]+\tfrac12\rho c\,\delta N$, so
$\mathrm{mFDR}=\mathbb E[V]/\mathbb E[R]\le\tau+\tfrac{\rho c}{2}\delta\cdot N/\mathbb E[R]=\tau+O(\delta)$
by $\mathbb E[R]=\Theta(N)$.
\end{proof}

\begin{theorem}[Unconditional fallback; without Assumption~\ref{assn:threshold_stability}]
\label{thm:rule2_mfdr_uncond}
Under Assumptions~\ref{assn:null}--\ref{assn:boundary} and $\mathbb E[R]=\Theta(N)$ \emph{but not}
Assumption~\ref{assn:threshold_stability}, the cutoff move under a single flip is bounded only by
the grid scale, $|\hat t_q^{(i)}-\hat t_q|\le\underline s$ ($\hat t_q$ is an attained value), and
Lemma~\ref{lem:proximity} holds with the window $c\delta$ widened to $O(\underline s)$, giving
\[
  \mathrm{mFDR}(\mathcal R_2)\le\tau+O(\underline s),
\]
the paper's existing grid-scale hard-floor bound. Alternatively, if the kernel decays
exponentially (or with spatial-decay exponent $>d$), the lurch frequency can be bounded by resolvent locality of $\boldsymbol V\boldsymbol K+2\lambda\boldsymbol I$, whose localization length scales as $\|\boldsymbol V\boldsymbol K\|_{\mathrm{op}}/(2\lambda)$, yielding
$\mathrm{mFDR}(\mathcal R_2)\le\tau+\tilde O(\lambda^{-d}\delta)$, which degrades as $\lambda\to0$. These are the only unconditional bounds; they are weaker than Theorem~\ref{thm:rule2_mfdr} by exactly the factor Assumption~\ref{assn:threshold_stability} buys.
\end{theorem}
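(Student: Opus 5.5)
The plan reuses the mFDR skeleton of Theorem~\ref{thm:rule2_mfdr}. The single change is that the cutoff-displacement input, Assumption~\ref{assn:threshold_stability}, is replaced by a bound that holds with no stability hypothesis.

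\textbf{Step 1: unconditional cutoff bound.} Since $\hat t_q$ always lies in the finite attained set $\{T_j\}\cup\{\tilde T_j\}$, I first show that a single flip $k_i\mapsto m_i-k_i$ moves the cutoff by at most one grid step, $|\hat t_q^{(i)}-\hat t_q|\le\underline s$. The argument goes through Remark~\ref{rem:attainment}. Away from the flipped hypothesis, every attained score drifts by at most $\delta$, and $\delta<\tfrac14\underline s$ by Lemma~\ref{lem:resolution}. The flipped pair $(T_i,\tilde T_i)$ only swaps, so the counts $A(t)$ and $R(t)$ change by at most one at any $t$. The step-up therefore re-lands on an attained value within the same or an adjacent grid cell. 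I expect this to be the main obstacle. On a flat stretch of $r(t)$ at $\tau$, a unit change in $A$ or $R$ can in principle push the maximizer past several attained values of \emph{other} hypotheses. To rule this out I would first use the shoulder separation (Assumption~\ref{assn:shoulder}) so that each hypothesis contributes at most one value per $\underline s$-window. I would then use the boundary density (Assumption~\ref{assn:boundary}) to argue that, in expectation, only $O(1)$ values from distinct hypotheses lie in such a window. If a sure bound fails, I would settle for the bound in expectation, $\mathbb E|\hat t_q^{(i)}-\hat t_q|=O(\underline s)$, which is all the proximity lemma needs.

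\textbf{Step 2: widened proximity lemma.} Next I rerun Lemma~\ref{lem:proximity} verbatim. The flip-averaging through Lemma~\ref{lem:flip} and the slack identity of Lemma~\ref{lem:slack} are untouched. The only change is that the two active indicators can differ only if $\hat t_q$ or $\hat t_q^{(i)}$ lies within $\delta+\underline s\le\tfrac54\underline s$ of one of $i$'s own endpoints. This gives $|a_i|\le\tfrac12\Pr(\eta_i\le c'\underline s)$ with $c'=5/4$.

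\textbf{Step 3: closing the bound.} Summing over nulls gives $\mathbb E V-\mathbb E\tilde V\le\tfrac12\mathbb E N_\partial(c'\underline s)$. The density bound in Assumption~\ref{assn:boundary} is stated only for $w\le\tfrac12\underline s$, so I would cover the window $c'\underline s$ by three half-step windows, which gives $O(\rho\underline s N)$. Combining this with Lemma~\ref{lem:det_ratio} and $\mathbb E R=\Theta(N)$ yields $\mathrm{mFDR}\le\tau+O(\underline s)$.

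\textbf{Alternative route.} For the $\tilde O(\lambda^{-d}\delta)$ variant, I would instead bound the probability that the cutoff lurches at all. A lurch needs some hypothesis $j$ to be a fence-sitter in the sense of Lemma~\ref{lem:dead_zone}. Under exponential kernel decay, the resolvent $(\boldsymbol V\boldsymbol K+2\lambda\boldsymbol I)^{-1}$ localizes the drift to a ball of $\tilde O(\lambda^{-d})$ hypotheses. Those are the only hypotheses that can become fence-sitters, each with probability $O(\delta/\underline s)$. The window then widens to $\delta$ times that count instead of to $\underline s$. The resolvent-locality estimate is the second point of genuine difficulty, and I would take it from standard Combes--Thomas type bounds rather than derive it.
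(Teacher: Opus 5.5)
Your skeleton is the paper's own. The paper gives no separate proof; its entire argument is the sentence inside the theorem: a grid-scale cutoff move justified by ``$\hat t_q$ is an attained value'', the window of Lemma~\ref{lem:proximity} widened to $O(\underline s)$, then the summation of Theorem~\ref{thm:rule2_mfdr}. The step you flag as the main obstacle is a genuine gap, however, and none of your repairs closes it.

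\textbf{Attainment does not bound the cutoff move.} Attainment quantizes nothing at scale $\underline s$. The candidate set $\{T_j\}\cup\{\tilde T_j\}$ pools the lattices of all $N$ hypotheses, so consecutive candidates are typically $O(1/N)$ apart. Assumption~\ref{assn:shoulder} only separates the values of a single hypothesis.

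\textbf{The counts can change by more than one.} Besides the swap of $i$, any other score within $\delta$ of $t$ can drift across $t$. These are the fence-sitters of Lemma~\ref{lem:dead_zone}, which the argument of Remark~\ref{rem:attainment} does not actually rule out, and Assumption~\ref{assn:boundary} allows $\rho\delta N$ of them near the cutoff.

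\textbf{Even unit changes can move the cutoff far.} The step-up maximizer is a global object. The swap alone shifts $R$ and $A$ by $\pm1$ on the whole interval between $\breve T_i$ and $\widetilde{\breve T}_i$, which for a typical null spans a $\Theta(1)$ range of scores. If $r(t)$ sits just above $\tau$ along a stretch longer than $\underline s$ (a plateau the paper itself says has positive probability), the cutoff jumps by many grid steps. This happens even at $\delta=0$.

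\textbf{Your proposed repairs do not work.} Assumption~\ref{assn:boundary} cannot give ``$O(1)$ values per window''. It only bounds the expected number within $\tfrac12\underline s$ of the cutoff by $\rho\underline s N/2=\Theta(N)$, and counting values near $\hat t_q$ says nothing about where the maximizer lands. The expectation fallback also does not feed Lemma~\ref{lem:proximity}. The relevant event there is $\{\eta_i\le\delta+|\hat t_q^{(i)}-\hat t_q|\}$, and $\eta_i$ and the displacement are functions of the same counts. A bound on the mean displacement therefore does not control this probability; Markov alone gives only $\Pr(|\hat t_q^{(i)}-\hat t_q|>\underline s)=O(1)$.

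\textbf{Step~3 has a further gap.} Assumption~\ref{assn:boundary} controls $N_\partial(w)$, a count in a window centred at $\hat t_q$, only for $w\le\tfrac12\underline s$. Since $N_\partial$ is nondecreasing in $w$, $N_\partial(\tfrac54\underline s)$ cannot be covered by three half-step windows; off-centre windows are simply not controlled.

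\textbf{What survives.} The stated bound holds only as an order statement, and for a trivial reason. The flip-averaging inside Lemma~\ref{lem:proximity} gives $|a_i|\le\tfrac12$ for every null with no cutoff control at all, so $\mathbb E V-\mathbb E\tilde V\le m_0/2$. Lemma~\ref{lem:det_ratio} with $\mathbb E R=\Theta(N)$ then yields $\mathrm{mFDR}\le\tau+m_0/(2\,\mathbb E R)=\tau+O(1)$. This equals $\tau+O(\underline s)$ only because the paper takes $\underline s=\Theta(1)$ (Lemma~\ref{lem:resolution}). The only reading in which the theorem says more than $\tau+O(1)$ is a constant proportional to $\rho\,\underline s$, uniform as $\underline s$ shrinks. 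That version needs exactly the lurch-size control that neither your plan nor the paper's attainment remark provides.

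\textbf{The alternative route fails for the same reason.} A lurch does not require a fence-sitter: the swap of $i$ alone moves the cutoff, even at $\delta=0$. The pooled-scalar channel moves every score by $O(1/N)$ and is not localized by the resolvent of $\boldsymbol V\boldsymbol K+2\lambda\boldsymbol I$. Lemma~\ref{lem:dead_zone} bounds a fence-sitter's probability by $1/(m_\ell+1)$, not $O(\delta/\underline s)$. Localizing the collateral drift therefore does not bound the lurch frequency.
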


\begin{figure}[t]
\centering
\begin{tikzpicture}[font=\small]
\begin{scope}
  \draw[->,thick] (0,0) -- (4.4,0) node[right]{$t$};
  \draw[->,thick] (0,0) -- (0,3.0) node[above]{$r(t)$};
  \draw[densely dashed] (0,1.55) -- (4.2,1.55) node[right]{$\tau$};
  \draw[tickcol,line width=1.3pt]
    (0.2,0.4) -- (1.0,0.4) -- (1.0,0.85) -- (1.7,0.85) -- (1.7,1.35)
    -- (2.3,1.35) -- (2.3,1.95) -- (3.1,1.95) -- (3.1,2.5) -- (4.0,2.5);
  \filldraw[black] (2.3,1.55) circle (1.5pt);
  \draw[densely dotted] (2.3,0) -- (2.3,1.55);
  \node[below] at (2.3,0) {$\hat t_q$};
  \draw[driftcol,line width=1pt,-{Stealth[length=4pt]}] (2.3,1.95) -- (2.55,1.95);
  \node[driftcol,align=center,anchor=south] at (1.85,2.55)
       {\scriptsize steep crossing:\\[-2pt]\scriptsize cutoff moves $\le C_0\delta$};
  \node[align=center,font=\bfseries] at (2.1,-0.95) {(A6 holds)};
\end{scope}
\begin{scope}[shift={(6.6,0)}]
  \draw[->,thick] (0,0) -- (4.4,0) node[right]{$t$};
  \draw[->,thick] (0,0) -- (0,3.0) node[above]{$r(t)$};
  \draw[densely dashed] (0,1.55) -- (4.2,1.55) node[right]{$\tau$};
  \draw[tickcol,line width=1.3pt]
    (0.2,0.5) -- (1.0,0.5) -- (1.0,1.0) -- (1.5,1.0) -- (1.5,1.53)
    -- (3.0,1.53) -- (3.0,2.25) -- (4.0,2.25);
  \draw[fencecol,line width=2.2pt] (1.5,1.53) -- (3.0,1.53);
  \node[fencecol,anchor=south] at (2.25,1.6) {\scriptsize plateau grazing $\tau$};
  \draw[densely dotted] (1.5,0) -- (1.5,1.53);
  \draw[densely dotted] (3.0,0) -- (3.0,1.53);
  \draw[fencecol,{Stealth[length=5pt]}-{Stealth[length=5pt]},line width=1pt] (1.5,0.32) -- (3.0,0.32);
  \node[fencecol,anchor=north] at (2.25,0.30) {\scriptsize grid step $\underline s$};
  \node[align=center,anchor=south] at (2.0,2.5)
       {\scriptsize tiny nudge tips plateau:\\[-2pt]\scriptsize cutoff jumps $\underline s\gg\delta$};
  \node[align=center,font=\bfseries] at (2.1,-0.95) {(A6 fails: lurch)};
\end{scope}
\end{tikzpicture}
\caption{\textbf{The threshold-stability floor (Assumption~\ref{assn:threshold_stability}).} The
cutoff $\hat t_q$ is where the empirical FDR ratio $r(t)$ crosses $\tau$. \emph{Left:} a transversal crossing on a steep step: a single flip perturbs $r$ by $O(\delta)$ and moves the crossing by $\le C_0\delta$, so the slack stays $O(\delta)$ (Theorem~\ref{thm:rule2_mfdr}).
\emph{Right:} a flat stretch grazing $\tau$ (positive-probability on the discrete score lattice) : an arbitrarily small nudge tips the whole plateau across, and the crossing lurches by a full grid step $\underline s\gg\delta$, re-crossing every near-cutoff null at once. A6 assumes the left case; without it only the grid-scale fallback (Theorem~\ref{thm:rule2_mfdr_uncond}) holds. This is the count-space form of the hard floor (App.~\ref{sec:A6}); the folded construction (Prop.~\ref{prop:folded_exact}) avoids it entirely by handling the cutoff as a stopping time in the reverse-martingale.}
\label{fig:threshold_floor}
\end{figure}

\subsection{Assumption~\ref{assn:threshold_stability}: Rationale, Validity, and Open Problems}
\label{sec:A6}

\paragraph{The hard floor.}
\phantomsection\label{rem:threshold_stability}%
Assumption~\ref{assn:threshold_stability} is the count-space form of a recognized difficulty: the stability of a data-dependent step-up threshold under simultaneous one-coordinate perturbations. Because $\hat t_q$ is a global functional of all scores, a single flip (which moves every score by $\le\delta$) can move $\hat t_q$ and thereby re-cross the entire near-threshold layer at once; the disturbances from distinct flips are funnelled through the shared cutoff and concentrate on the same thin set. This is the obstruction that leads \citep{barber2020robust} to a leave-one-out construction and \citep{fithian2022conditional} to an explicit conditional calibration. The mFDR route does \emph{not} remove it; it \emph{isolates}
it: the per-null accounting of Lemma~\ref{lem:proximity} clears away the reverse-martingale's independence requirement, the conditional symmetry index, and influence-operator locality, and what remains, alone, is the cutoff-stability bound, which we state as Assumption~\ref{assn:threshold_stability}. The gain of Theorem~\ref{thm:rule2_mfdr} over the unconditional Theorem~\ref{thm:rule2_mfdr_uncond} is \emph{exactly} what it buys, and is real iff
it holds. It is the only assumption masking would remove outright; the exact route
(Proposition~\ref{prop:folded_exact}: folded, or LOO, Rem.~\ref{rem:loo}) removes it by making every flip act through masking, at the masking cost in power Rule~2 is designed to avoid.

\paragraph{Why the model structure favors it?}
The relaxation is mild for two reasons we can make precise, and fragile in one regime. (i) \emph{The two global functions are rigid.} $(\hat{\bar\alpha},\hat b)$ are each
pooled over all $N$ counts, so a single flip moves them by $O(1/N)$, a leave-one-out/influence
statement, not a hope; pooling over $N$ points is what makes them stable. (ii) \emph{Separation
steepens the crossing.} The flat null and the small-$k$-concentrated Beta--Binomial alternative
are well-separated, so attainable scores have log-LR increments bounded away from zero
(Lemma~\ref{lem:grid_step}), with no bunching, and the cutoff sits in the informative region where
the alternative density is steep. Both steepen the empirical staircase $r(t)$ and disfavor the
flat crossings that cause a lurch (Figure~\ref{fig:threshold_floor}). Equivalently, Fisher
information (high where signal and null separate) is what makes estimates rigid against a
single perturbation. (iii) \emph{Where it is fragile.} The folded likelihood pays an information
tax (eq.~\eqref{eq:dpi}: $I^{\mathrm f}/I^{\mathrm u}\approx0.44$ at $m=6$, $\to0$ at $m=1$), so
small-$m$ hypotheses carry little information and are least rigid; the condition is most secure
for large $m$ and strong separation, most fragile at $m=1$ or weak signal.

\paragraph{The unconditional floor, and what remains open.}
Two of the section's three guarantees hold with no appeal to
Assumption~\ref{assn:threshold_stability} at all: the folded rule is exactly valid
unconditionally (Proposition~\ref{prop:folded_exact}), and the plug-in keeps the grid-scale bound $\mathrm{mFDR}\le\tau+O(\underline s)$ without it (Theorem~\ref{thm:rule2_mfdr_uncond}). The assumption is therefore not load-bearing for validity; it is the single input beyond Assumptions~\ref{assn:null}--\ref{assn:boundary} that \emph{sharpens} this floor to the slack rate $O(\delta)$, and since the mFDR proof has already reduced every other ingredient away, it is the only hypothesis left to discharge. The averaged setting is moreover forgiving: because mFDR is an expectation, the uniform-over-flips form is suffices to bound the lurch \emph{frequency}, replacing $O(\delta)$ by $O(\delta)+\Pr(\text{lurch})\cdot\underline s$, the natural target for a probabilistic strengthening. \textbf{We remain explicit about the gap this leaves}: this sure form does not hold at every configuration. It holds when $r(t)$ crosses $\tau$ transversally, \textbf{the regime that the Beta--Binomial separation and the pooled-scalar rigidity make typical and most secure at large $m$ and strong signal} (Figure~\ref{fig:threshold_floor}); it can fail where $r$ grazes $\tau$ on a flat stretch, which has positive probability on the lattice. The cost of a failure is bounded and explicit: the cutoff then lurches by a grid step $\underline s$ in place of $O(\delta)$, and the guarantee falls back to the $O(\underline s)$ of Theorem~\ref{thm:rule2_mfdr_uncond}. Recovering the sharper averaged rate from that fallback is the open step, and bounding $\Pr(\text{lurch})$ to do so routes through the same density and locality analysis rather than escaping it. We therefore present  Theorem~\ref{thm:rule2_mfdr} as a conditional sharpening on a single named hypothesis, with the unconditional Theorem~\ref{thm:rule2_mfdr_uncond} and the exact folded guarantee (Proposition~\ref{prop:folded_exact}) as the floor that holds regardless.

\begin{remark}[Reading the three guarantees together]
\label{rem:register_ladder}
Proposition~\ref{prop:folded_exact}, Theorem~\ref{thm:rule2_mfdr}, and
Theorem~\ref{thm:rule2_fdr_realization} form a ladder of decreasing strength of conclusion bought 
with increasing strength of method or hypothesis. The folded rule gives \emph{unconditional, exact}
FDR control but pays the information tax~\eqref{eq:dpi}; it is the rock. The plug-in mFDR theorem
gives $\tau+O(\delta)$ control of the \emph{averaged} ratio under
Assumption~\ref{assn:threshold_stability}; it is the high-power deployment with the weaker error
notion. The present theorem upgrades that to the \emph{per-realization} FDR under the additional
ranking-stability Assumption~\ref{assn:ranking_stability}; it is the strongest statement available
for the unmodified plug-in, and it is exactly as conditional as its two siblings. We present all
three rather than collapse them: the folded guarantee is what we can prove with no transversality
hypothesis, and the plug-in guarantees are what we can prove, in averaged and per-realization
form, when the flagged stability conditions hold, which the Beta--Binomial geometry makes
typical but not automatic.
\end{remark}

\subsection{Consistency and evidence}
\label{sec:rule2_evidence}

\begin{remark}[Non-spatial limit recovers $O(1/N)$]\label{rem:nonspatial_exact}
When $\alpha(\loc)\equiv\bar\alpha$ the spatial channel vanishes and only the two pooled scalars
remain, so $\delta=L_b|\Delta\hat b|=O(1/N)$ and Theorem~\ref{thm:rule2_mfdr} reads
$\mathrm{mFDR}\le\tau+O(1/N)$, matching the paper's independently-derived non-spatial guarantee.
A purely-global $O(1)$ slack would contradict this limit; the resolution is that the $O(1)$
absolute asymmetry of a fixed-dimensional scalar perturbation becomes an $O(1/N)$ \emph{rate}
once divided by $\mathbb E[R]=\Theta(N)$.
\end{remark}

\begin{remark}[Consistency with the exact-null FDR statement]\label{rem:correct_null}
Under exact Assumption~\ref{assn:null}, the $\tau+O(\delta)$ rate matches the mirror-exact regime;
under the mirror-conservative weakening of \citep{barber2020robust}
($P(k_i\le j)\le P(k_i\ge m_i-j)+O(1/m_i)$), the deviation propagates to an additional
$O(1/m_{\min})$, zero under exact Assumption~\ref{assn:null}. In our experiments the exact-uniform
null is met by construction.
\end{remark}

\begin{remark}[Empirical scope and caveats]\label{rem:empirical}
The simulations bear on the numerator-vs-rate point of Remark~\ref{rem:nonspatial_exact}, not on
Assumption~\ref{assn:threshold_stability}. In the surface-off regime (scalars fit on raw counts),
the data-reuse slack $\mathrm{mFDR}_{\mathrm{raw}}-\mathrm{mFDR}_{\mathrm{oracle}}$ fluctuated about
$0$ within Monte-Carlo error across $N=150$--$1200$, with raw, folded, and oracle agreeing to
three decimals; the product $\text{slack}\times N$ was too noisy to separate $O(1/N)$ from a small
constant, so it is consistent with, not independent confirmation of, the analytics. This regime is
exactly where the spatial channel and most lurch risk are absent, so it is silent on
Assumption~\ref{assn:threshold_stability}; a direct probe would measure the realized cutoff move
$|\hat t_q^{(i)}-\hat t_q|$ under single flips in the \emph{spatial} regime and its scaling against
$\delta$ vs.\ $\underline s$, which we leave to future work.
\end{remark}

\begin{remark}[Exact control by flip-invariant scoring]\label{rem:loo}
Beyond the folded rule, the leave-one-out estimate $\hat\alpha^{(-i)}(\loc_i)$ (computed without
$i$) is also exactly orientation-invariant and gives $\FDR\le\tau$ via Step~3, at the cost of
information (it reverts isolated hypotheses to $\bar\alpha$) and of $N$ separate fits. We deploy
the plug-in with its characterized slack as the default, and report the folded rule as the
unconditional guarantee.
\end{remark}

\subsection{Proof of Theorem~\ref{thm:power_dominance} (Optimality of the spatial lfdr score)}
\label{supp:power_dominance}

We restate the theorem for convenience.

\begin{theorem*}[Optimality of the spatial lfdr score; restated]
Assume the compositional model of Section~\ref{sec:method} holds with
true null-probability function $\alpha^*(\loc)$, and that
$(\hat\alpha, \hat b, a) = (\alpha^*, b^*, a^*)$ are correctly specified.
For each rule, let $\Pi_2(\tau)$ and $\Pi_1(\tau)$ denote the expected
number of true discoveries of Rule~2 and Rule~1 when each is run at
marginal FDR (mFDR) level $\tau$. Then $\Pi_2(\tau) \ge \Pi_1(\tau)$ for
every $\tau$, with strict inequality at every $\tau$ for which the
gate's coarsening binds, whenever $\alpha^*(\loc)$ is non-constant on a
set of positive measure.
\end{theorem*}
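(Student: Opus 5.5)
We argue in the Bayes decision-theoretic frame of \citep{san-cai2007}, treating the problem as a population-level (mFDR) optimization over rejection rules that are measurable functions of the data tuple $(\loc_i,k_i,m_i)$. Under correct specification, the compositional model gives the exact posterior
\[
T_i=\Pr(\theta_i=0\mid \loc_i,k_i,m_i)=\frac{\alpha^*(\loc_i)P_{0,i}}{\alpha^*(\loc_i)P_{0,i}+(1-\alpha^*(\loc_i))P_{1,i}} .
\]
This is the Beta--Binomial marginalization of \eqref{eq:P_components} combined with the structural layer; it is exact at every finite $m_i$, as in Step~1 of App.~\ref{supp:rule1_proof}. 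For any rule $\delta=(\delta_i)$ we then have
\[
\mathbb E[V]=\sum_i\mathbb E[\delta_iT_i],\qquad \mathbb E[R-V]=\sum_i\mathbb E[\delta_i(1-T_i)].
\]
Maximizing expected true discoveries subject to $\mathbb E[V]\le\tau\,\mathbb E[R]$ is therefore a fractional-knapsack (Neyman--Pearson) problem in the per-unit ``cost'' $T_i-\tau$. Its solution is a sublevel set $\{T_i\le t\}$, randomized on ties. We will state this as a lemma using a Lagrangian exchange argument: if a competing rule rejects a set $A$ with $T>t$ and omits a set $B$ with $T\le t$ of equal mFDR budget, swapping them weakly increases $\mathbb E\sum\delta_i(1-T_i)$.

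Next we place both rules in this class. Rule~1 rejects according to $\delta^{(1)}_i=G_i\,\mathbb I[\lfdr_{\mathrm{marg}}(k_i,m_i)\le\hat t^{(1)}]$, where $G_i=\mathbb I[T_i\le\tau]$. Hence $\delta^{(1)}$ is a measurable function of the pair $(T_i,\lfdr_{\mathrm{marg},i})$, and therefore of $(\loc_i,k_i,m_i)$; it is an admissible competitor. We compare the two rules at matched mFDR. The optimal sublevel rule at the mFDR level actually attained by Rule~1 dominates Rule~1. By monotonicity of true discoveries in the level, it is in turn dominated by the sublevel rule at level $\tau$.

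It remains to identify Rule~2 with that sublevel rule. Under correct specification the score is a fixed function of the data, so Definition~\ref{def:mirror} holds with $\delta=0$, and Rule~2 is a sublevel set $\{T_i\le\hat t_q\}$. We will argue in the large-$N$ / population regime in which the mFDR comparison is posed: the Barber--Cand\`es estimate $(1+\#\{\tilde T\le t\})/\#\{T\le t\}$ concentrates at or above the true mFDR of the sublevel set $\{T\le t\}$. This uses the mirror symmetry of nulls from Step~1 of App.~\ref{sec:delta0}, together with the fact that alternatives concentrate at small $k$, so that $\tilde T_i$ for an alternative is large. Consequently $\hat t_q$ attains the optimal oracle threshold up to vanishing error. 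We expect this identification to be the \textbf{main obstacle}, because the mirror count overestimates null crossings by alternatives' mirrors, which leaves a conservativeness gap. We would try to bound the number of alternatives with $\tilde T_i\le t$ by showing it is small relative to the null mass, using the one-sided concentration of $P_1$. Failing that, we would interpret ``Rule~2 at level $\tau$'' as the oracle sublevel rule calibrated to exact mFDR $\tau$, which is the level at which the statement is posed.

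For strictness, suppose $\alpha^*$ is non-constant on a set of positive $\mu$-measure. Then there exist, with positive probability, pairs of hypotheses with equal $\lfdr_{\mathrm{marg}}$ but different $T$ values on opposite sides of the optimal cutoff. Rule~1 must treat such pairs identically, apart from the binary gate. Whenever the gate's coarsening binds, meaning $\{T\le t^*\}$ is not $\sigma(G,\lfdr_{\mathrm{marg}})$-measurable up to null sets, Rule~1 differs from the unique (up to ties) optimizer on a set of positive probability. Strict suboptimality in the knapsack then gives $\Pi_2(\tau)>\Pi_1(\tau)$.
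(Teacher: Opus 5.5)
Your proof is correct and has the same skeleton as the paper's. Under correct specification $T_i$ is the exact posterior null probability, the Sun--Cai frontier $\Pi^*(\tau)$ is traced by sublevel sets of $T_i$, Rule~1 is one admissible competitor at mFDR $\le\tau$, and strictness comes from the gate's coarsening.

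One small adjustment: take the competitor class to be all rules measurable with respect to the full data, since $\hat t^{(1)}$ depends on every hypothesis. Your knapsack argument still applies, because under the compositional model $\Pr(\theta_i=0\mid\text{all data})=T_i$.

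On the step you flag as the main obstacle, the paper does exactly what your fallback does. Its Step~2 indexes Rule~2 by the sublevel sets $\{T_i\le t\}$ at their realized level $\rho(t)$ and declares $\Pi_2(\tau)=\Pi^*(\tau)$. Its Remark~(a) explicitly disclaims any comparison at the shipped Barber--Cand\`es operating point.

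Your concentration route could not have replaced this:
\begin{itemize}
\item At finite $m_i$ the alternative Beta--Binomial puts positive mass near $k=m_i$, so alternatives with $\tilde T_i\le t$ make up a fixed fraction of the mirror count.
\item The Barber--Cand\`es ratio therefore overestimates $\rho(t)$ by an amount that does not vanish as $N\to\infty$.
\item Hence $\hat t_q$ settles generically below the oracle cutoff rather than converging to it.
\end{itemize}
The fallback is therefore the operative argument, and it is the intended reading of ``run at mFDR level $\tau$''.

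Where you genuinely differ is that you derive the frontier and its strictness yourself from a Lagrangian (fractional-knapsack) argument. The paper instead cites Sun--Cai and then exhibits an explicit swap between two same-count hypotheses at different locations. Your route gives the cleaner strictness statement: it only needs Rule~1 to differ from the optimizer on a positive-probability set away from ties. It does not need a realization in which Rule~1 rejects one of two hypotheses sharing the same marginal score but not the other, which the running-average rule can produce only through tie-breaking.

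To make it airtight, define ``binding'' as follows: $\{T<t^*\}$ and $\{T>t^*\}$ cannot be separated, up to null sets, by any $\sigma(G,\lfdr_{\mathrm{marg}})$-measurable set. Then the discrepancy cannot hide in the tie set $\{T=t^*\}$, where the Lagrangian integrand vanishes.
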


\paragraph{Proof.}
The two rules are compared as \emph{power functions of the mFDR level}; \textbf{that is- the argument never requires the two procedures to realize the same level}. Step~1 identifies the optimal power frontier $\Pi^*(\cdot)$ and records that it is traced by sublevel sets of the true posterior, whose mFDR equals the average posterior inside them. Step~2 shows Rule~2's threshold family coincides with that frontier. Step~3 shows Rule~1's family lies on or below it at every level. Step~4 makes the gap strict by an exchange argument. 
\paragraph{Step 1: the optimal power frontier and the mFDR identity.}
For a level $\ell \in (0,1)$, let
$$
\Pi^*(\ell) \;=\; \sup\left\{\, \mathbb{E}\Big[\sum_i (1-\theta_i)\,\delta_i\Big]
\;:\; \delta \text{ data-measurable},\ \mathrm{mFDR}(\delta) \le \ell \,\right\}.
$$
be the largest expected number of true discoveries attainable at mFDR
$\le \ell$, where $\theta_i$ indicates the alternative. By
\citep{san-cai2007}, the supremum is attained by ranking
hypotheses by the true posterior null-probability
$\mathrm{lfdr}_i^* = P(H_{0,i} \mid \mathrm{data}_i)$ and rejecting a
sublevel set $\{i : \mathrm{lfdr}_i^* \le t\}$. Because
$\mathbb{E}[(1-\theta_i) \mid \mathrm{data}_i] = \mathrm{lfdr}_i^*$, the
mFDR of any such sublevel set equals the average posterior inside it,
\begin{equation}
\mathrm{mFDR}\big(\{\mathrm{lfdr}_i^* \le t\}\big)
\;=\;
\frac{\mathbb{E}\sum_i \mathrm{lfdr}_i^*\, \mathbb{I}[\mathrm{lfdr}_i^* \le t]}
     {\mathbb{E}\sum_i \mathbb{I}[\mathrm{lfdr}_i^* \le t]}
\;=:\; \rho(t).
\label{eq:mfdr_is_avg_posterior}
\end{equation}
Thus a cutoff $t$ realizes level $\ell = \rho(t)$, and as $t$ sweeps
$(0,1)$ the optimal discoveries $\Pi^*(\rho(t))$ trace a single
power--mFDR curve. Identity~\eqref{eq:mfdr_is_avg_posterior} is what lets ``posterior sublevel set'' and ``mFDR level'' index the same object.

\paragraph{Step 2: Rule~2's power function coincides with the frontier.}
Rule~2 applies the Barber--Cand\`es step-up
\eqref{eq:rule2_threshold} to the score
\begin{equation}
\label{eq:T_is_posterior}
T_i \;=\; \widehat{\lfdr}_{\mathrm{spatial}}(\hat\alpha(\loc_i), k_i, m_i)
\;=\;
\frac{\hat\alpha(\loc_i)\, P_0(k_i, m_i)}
     {\hat\alpha(\loc_i)\, P_0(k_i, m_i) + (1 - \hat\alpha(\loc_i))\, P_1(k_i, m_i; \hat b)}.
\end{equation}
The right-hand side is, by construction, a prior-times-null-likelihood
over marginal ratio --- the posterior null-probability of hypothesis $i$
under the two-component mixture with parameters $(\hat\alpha, \hat b,
a)$. This is Bayes' theorem read off the mixture and holds for whatever
parameters $T_i$ is evaluated at; correct specification sets them to the
truth, so $T_i = \mathrm{lfdr}_i^*$. (The dependence on $\hat b$ through
$P_1(\cdot;\hat b)$ is why correct specification must bundle all three
parameters; see Remark~(a).) Hence every region $\{i : T_i \le t\}$
Rule~2 can output is a posterior sublevel set, attaining $\Pi^*$ at its
realized level $\rho(t)$ by Step~1. Sweeping the Barber--Cand\`es cutoff
$\hat t_q$ over its admissible range, Rule~2's power function therefore
coincides with the frontier,
\begin{equation}
\Pi_2(\tau) \;=\; \Pi^*(\tau) \qquad \text{for every attainable } \tau.
\label{eq:rule2_on_frontier}
\end{equation}
This cutoff "selects" the point on the frontier.

\paragraph{Step 3: Rule~1's power function lies on or below the frontier.}
Rule~1 forms the gate $S = \{i : \widehat{\lfdr}_{\mathrm{spatial}}(\hat\alpha(\loc_i), k_i, m_i) \le c\}$
for a chosen $c \in (0,1)$ and thresholds the location-free marginal
score $\lfdr_{\mathrm{marg}}(k_i, m_i)$ within $S$, so its decision
\begin{equation}
\delta_i^{(1)} \;=\; G_i \cdot \mathbb{I}\!\left[\lfdr_{\mathrm{marg}}(k_i, m_i) \le \hat t^{(1)}\right],
\qquad
G_i = \mathbb{I}[\widehat{\lfdr}_{\mathrm{spatial}}(\hat\alpha(\loc_i), k_i, m_i) \le c],
\label{eq:rule1_form}
\end{equation}
is measurable with respect to $(G_i, k_i, m_i)$ --- a coarsening of the posterior statistic $(T_i, k_i, m_i)$, since $G_i$ retains a single bit of $T_i$ and discards its gradations, and $\sigma(G_i, k_i, m_i) \subsetneq \sigma(T_i, k_i, m_i)$ whenever $T_i$ takes values on both sides of $c$ with positive probability. Fix any level $\tau$ and run Rule~1 at mFDR $\le \tau$. Then $\delta^{(1)}$ is one data-measurable rule with mFDR $\le \tau$, so by the definition of $\Pi^*$ in Step~1 and \eqref{eq:rule2_on_frontier},
\begin{equation}
\Pi_1(\tau)
\;=\;
\mathbb{E}\Big[\sum_i (1-\theta_i)\,\delta_i^{(1)}\Big]
\;\le\;
\Pi^*(\tau)
\;=\;
\Pi_2(\tau).
\label{eq:frontier_dominance}
\end{equation}
This holds at \emph{every} level $\tau$ and never assumes the two
procedures stop at a common realized level: it is a pointwise comparison
of the two attainable power functions.

The source of the gap is visible directly in
\eqref{eq:rule1_form}. The gate decides \emph{eligibility} from the
spatial score, but the \emph{ranking} inside the gate uses
$\lfdr_{\mathrm{marg}}(k_i,m_i)$, which does not depend on $\loc_i$.
Consider two hypotheses $i,j$ with identical counts $(k_i,m_i) =
(k_j,m_j)$ but locations differing in null-richness, $\hat\alpha(\loc_i)
< \hat\alpha(\loc_j)$. Since the lfdr is strictly increasing in its
prior argument (the map $\alpha \mapsto \alpha P_0 / (\alpha P_0 +
(1-\alpha)P_1)$ has derivative $P_0 P_1 / (\cdot)^2 > 0$), their true
posteriors satisfy $T_i < T_j$: hypothesis $i$ is the stronger
discovery. Yet both share the same marginal score
$\lfdr_{\mathrm{marg}}(k,m)$, so once admitted to the gate Rule~1 assigns them identical rank and must accept or reject them together: it cannot prefer $i$ over $j$. Rule~2, ranking by $T$ throughout, separates them. Thus the coarsening is not merely a loss of $\sigma$-algebra resolution in the abstract: it is the gate's discarding, at the ranking step, of precisely the spatial information that earned the hypotheses their eligibility. Step~4 turns this collapse into a strict power loss whenever the operating cutoff falls between the two posteriors.

\paragraph{Step 4: strictness by exchange when $\alpha^*$ is non-constant.}
Suppose $\alpha^*(\loc)$ is non-constant on a set $\Omega$ of positive
measure. Then there are locations $\loc, \loc'$ and a count pair $(k,m)$
with $P_0(k,m) \ne P_1(k,m;b^*)$ such that the posteriors differ,
$\mathrm{lfdr}^*(\loc) < \mathrm{lfdr}^*(\loc')$, while the gate places
both on the same side of $c$, $G = G'$. Within the gate Rule~1 ranks by
$\lfdr_{\mathrm{marg}}$, which is identical for the two (it ignores
location), so Rule~1 cannot order them by posterior; with positive
probability at level $\tau$ its rejection set then contains the
higher-posterior member $\loc'$ but not the lower-posterior member
$\loc$. Form the modified rule that swaps these two decisions, rejecting
$\loc$ in place of $\loc'$.

The swap leaves the number of rejections unchanged, so the denominator
of \eqref{eq:mfdr_is_avg_posterior} is fixed; its numerator changes by
$\mathrm{lfdr}^*(\loc) - \mathrm{lfdr}^*(\loc') < 0$, so mFDR weakly
\emph{decreases} and the swapped rule remains feasible at $\tau$. The
expected true discoveries, weighted by $1 - \mathrm{lfdr}^*$, change by
$\big(1-\mathrm{lfdr}^*(\loc)\big) - \big(1-\mathrm{lfdr}^*(\loc')\big)
= \mathrm{lfdr}^*(\loc') - \mathrm{lfdr}^*(\loc) > 0$. Hence at every
$\tau$ for which this reordering event has positive probability ---
i.e.\ for which the gate's coarsening binds --- the posterior-ranked
Rule~2 strictly exceeds Rule~1,
\begin{equation}
\Pi_2(\tau) \;>\; \Pi_1(\tau).
\end{equation}
\qed

\paragraph{Remarks.}

\noindent\textit{(a)} The comparison fixes a common mFDR level $\tau$, isolating the difference between the two \emph{scores} from any difference in how much of the FDR budget each rule spends. Exact level matching need not be achievable at finite $N$, since the attainable mFDR values are discrete; the statement is then a $\le$ -dominance of the power functions. In particular the theorem does \emph{not} assert that the shipped Rule~2 at its operating point dominates the shipped Rule~1 at its (typically more conservative) operating point.

\noindent\textit{(b)} If $\hat\alpha$ or $\hat b$ is misspecified, $T_i$ is no longer the true posterior and Step~2 fails.

\subsection{Discreteness of the threshold (Remark~\ref{rem:discreteness_power_not_validity})}
\label{supp:rule2_discreteness}

\begin{remark}[Discreteness affects power, not validity]
\label{rem:discreteness_power_not_validity}
The score $T_i$ depends on $k_i$ only through $(P_{0,i}, P_{1,i}(k_i, m_i))$,
so $T_i$ takes values on a lattice of size at most $m_i + 1$. The
candidate threshold set $\{T_i\} \cup \{\tilde T_i\}$ has size at most
$\sum_{i=1}^{N} 2(m_i + 1)$, and $\hat t_q$ in
\eqref{eq:rule2_threshold} is forced to land on one of these values.
This costs power when $m_{\min}$ is small, since the procedure cannot
fine-tune the threshold between adjacent lattice values. However, FDR
validity is unaffected: the mirror symmetry
$k_i \stackrel{d}{=} m_i - k_i$ holds \emph{exactly} on the lattice
under $H_{0,i}$ (Step~1 of the proof of
Theorem~\ref{thm:rule2_fdr}), and the BC argument requires no
continuity in $T_i$. Validity is governed by $N$ (through the
influence-function slack); finite $m_i$ is purely a power consideration.
\end{remark}

\paragraph{Weaker null assumption.}
If Assumption~\ref{assn:null} is weakened from exact uniformity of
$p_i^* \sim \mathrm{Uniform}(0,1)$ to the mirror-conservative condition
of \citep{barber2020robust} ---
$\Pr_{H_0}(p_i \leq u) \leq u$ for $u \in (0, 1/2]$, with at most
asymptotic equality --- the count-space null pmf is no longer exactly
mirror-symmetric, and an additional $O(1/m_{\min})$ slack appears in
the FDR bound:
\[
\FDR(\mathcal{R}_2) \;\leq\; \tau + O(1/N) + O(1/m_{\min}).
\]
The discreteness term is bounded by the total-variation distance between
the actual null pmf on $\{0, \ldots, m\}$ and the uniform pmf, which
scales as $1/m$ under the mirror-conservative condition. We do not pursue
this generalization here, but note it for completeness; in our
experiments, the exact-null assumption is met by construction.

\section{LLM-as-judge benchmark: full setup}
\label{supp:llm_setup}

This appendix details the AlpacaEval~2.0 evaluation referenced in
Section~\ref{sec:exp_finite_resource} of the main text. Our objective
is to demonstrate the count-level model in a regime with \emph{no
spatial structure}, isolating the contribution of the finite-$m$
machinery from any benefit due to spatial regularization.

\subsection{Benchmark and judge}

We use the AlpacaEval~2.0 benchmark which consists of $N = 805$ instruction-following prompts paired with model-generated responses. For each prompt the AlpacaEval length-controlled judge produces a binary preference between a candidate model and a fixed baseline (the GPT-4 response shipped with the benchmark). The prompts span a broad distribution of instruction types: writing, summarization, reasoning, coding, factual recall, and conversational tasks. We make no modifications to the benchmark, the judge, or the evaluation rubric.

\subsection{Challenger pool and group split}

We evaluate $12$ challenger LLMs (open-weights and API models spanning
$2024$--$2025$ releases) against the same fixed baseline. To prevent
data leakage between the test statistic and the ground-truth label,
the $12$ challengers are partitioned into two disjoint groups of size
$m = 6$:
\begin{itemize}
\item \textbf{Group~A} (test statistic) supplies the count $k_i$:
the number of Group~A challengers whose response on prompt $i$ fails
to defeat the baseline (i.e., the judge prefers the baseline).
By construction $k_i \in \{0, 1, \ldots, 6\}$ and $m_i = 6$ uniformly
for all prompts.
\item \textbf{Group~B} (ground truth) provides the
ground-truth label for prompt $i$: the prompt is labeled an
\emph{alternative} (genuinely distinguishes model quality) if the
majority of Group~B challengers defeat the baseline, and \emph{null}
otherwise.
\end{itemize}
The split is performed once at random; we do not search across splits.

\subsection{Hypothesis structure}

For each prompt $i$, the null hypothesis is:
\[
H_{0,i}: \text{the prompt does not systematically distinguish model
quality, so any preference over the baseline is random.}
\]
Under $H_{0,i}$, the latent probability $p_i^*$ that a Group-A
challenger fails to defeat the baseline is $1/2$ in expectation (a
random preference), so $k_i \mid m_i, H_{0,i}$ has the
discrete-uniform distribution on $\{0, 1, \ldots, 6\}$ implied by
the Beta--Binomial$(m, 1, 1)$ model of the main text.

For framework configuration, we use the identity kernel ($K = I$),
which sets all spatial coupling to zero. Each $\hat\alpha_i$ is then
estimated from the count $(k_i, m_i = 6)$ alone, with the global
prior $\hat{\bar\alpha}$ providing the only source of pooling.

\subsection{Baseline}

The natural baseline of Benjamini--Hochberg on standard discrete
$p$-values $p_i = (k_i + 1)/(m_i + 1) \in \{1/7, 2/7, \ldots, 7/7\}$
\emph{cannot} produce any rejection at $m = 6$: the smallest
attainable $p$-value is $1/7 \approx 0.143$, which exceeds any
conventional FDR target ($\tau \in \{0.05, 0.1\}$). To give BH a fair
chance, we compare instead against an uncalibrated heuristic:
\[
\tilde p_i \;\coloneqq\; 1 - \frac{\mathrm{wins}_i}{m},
\]
where $\mathrm{wins}_i = m - k_i$ is the number of Group-A challengers
defeating the baseline on prompt $i$. This $\tilde p_i$ takes values
on the same lattice $\{0, 1/6, 2/6, \ldots, 1\}$ but is not a
calibrated $p$-value (it lacks the $+1$ Laplace smoothing); we use it
as the most charitable BH baseline available at $m = 6$.

\subsection{Configuration}

\begin{itemize}
\item Target FDR: $\tau = 0.1$.
\item Beta-shape parameter: $a = 0.8$ (uniform null prior; the
restriction discussed in Section~\ref{sec:method_nonspatial}).
\item Alternative-shape parameter $b$: estimated from the marginal
counts via the empirical-Bayes procedure of
Section~\ref{sec:method_nonspatial}; we obtain $\hat b \approx 1.6$.
\item Global null fraction $\hat{\bar\alpha}$: estimated from the
same marginal counts; we obtain $\hat{\bar\alpha} \approx 0.79$.
\item No spatial structure: $K = I$, $\lambda$ irrelevant.
\item Both Rule~1 (gated marginal lfdr) and Rule~2 (count-space
mirror) are applied to the same data.
\item The $\beta$-mixing and adaptive-allocation policy of
Section~\ref{sec:allocation} are not used here (this is a
non-allocation experiment).
\end{itemize}

\subsection{Evaluation metrics}

\begin{itemize}
\item \textbf{Discoveries:} the number of prompts rejected.
\item \textbf{FDR:} the empirical false-discovery proportion,
$\mathrm{FDP} = V/(R \vee 1)$, where $V$ is the number of rejections
labeled null by Group~B's majority vote and $R$ is the total number
of rejections.
\item \textbf{Power:} the empirical true-positive rate,
$T / (T + F)$, where $T$ is the number of rejections labeled
alternative by Group~B and $F$ is the total number of
Group-B-labeled alternatives.
\end{itemize}

We report point estimates without confidence intervals because the
benchmark is run once on the full $N = 805$ prompts; there is no
stochastic component to average over once the group split is fixed.

\subsection{Why no spatial structure here}

The AlpacaEval prompts do not have a meaningful spatial geometry: any
embedding (e.g., sentence-encoder representations of the prompts)
would impose an artificial structure. We therefore evaluate the
framework as a non-spatial procedure here, which is also why Rule~1
and Rule~2 are expected to produce nearly identical outputs --- with
$K = I$, the spatial gate of Rule~1 reduces to a per-hypothesis check
that adds nothing over the running-average rule of Rule~2.

\subsection{Reproducibility notes}

The benchmark queries are public (AlpacaEval~2.0 official release).
Challenger model identities, the random Group~A/Group~B split, and
the per-prompt judge outputs are released alongside the paper's code
repository, allowing exact replication of Table~\ref{tab:llm_results}.

\section{Additional Evaluations Figures}
\begin{figure}
    \centering
    \includegraphics[width=1\linewidth]{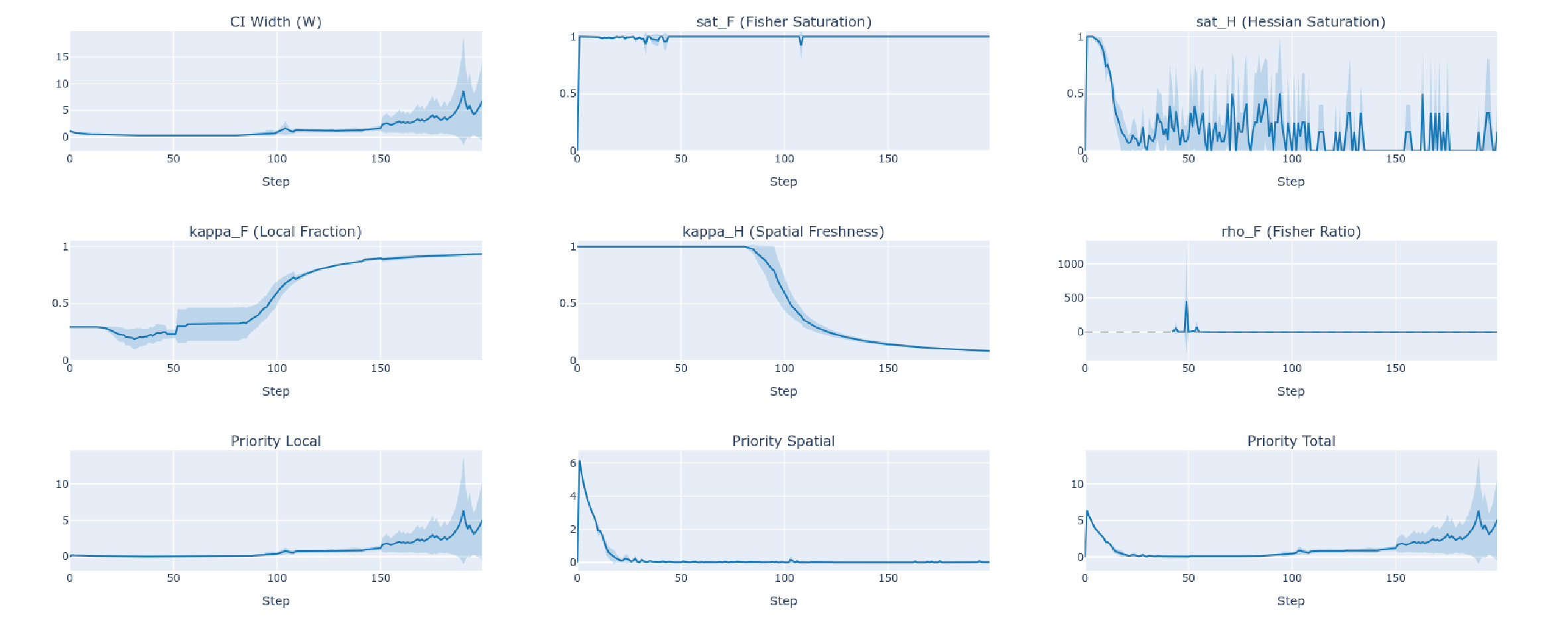}
    \caption{\textbf{Priority Score Components Over Sampling Steps.} 
Mean priority score components across hypotheses and datasets and  random seeds (shaded regions: $\pm 1$ std). 
Top row: confidence interval width ($W$), Fisher saturation factor ($\text{sat}_F$), and Hessian saturation factor ($\text{sat}_H$). 
Middle row: local freshness ($\kappa_F = 1 - w_i$), spatial freshness ($\kappa_H$), and Fisher saturation ratio ($\rho_F$). 
Bottom row: decomposition of total priority $S_i$ into local component ($\beta \cdot \text{sat}_F \cdot \kappa_F \cdot W^2$) and spatial component ($(1-\beta) \cdot \text{sat}_H \cdot \kappa_H \cdot \sum_j K_{ij}^2 W_j^2$). 
As sampling progresses, saturation factors decrease from 1 toward 0, reflecting diminishing marginal information gain per sample.}
    \label{fig:params_conv}
\end{figure}

\begin{figure}
    \centering
    \includegraphics[width=1\linewidth]{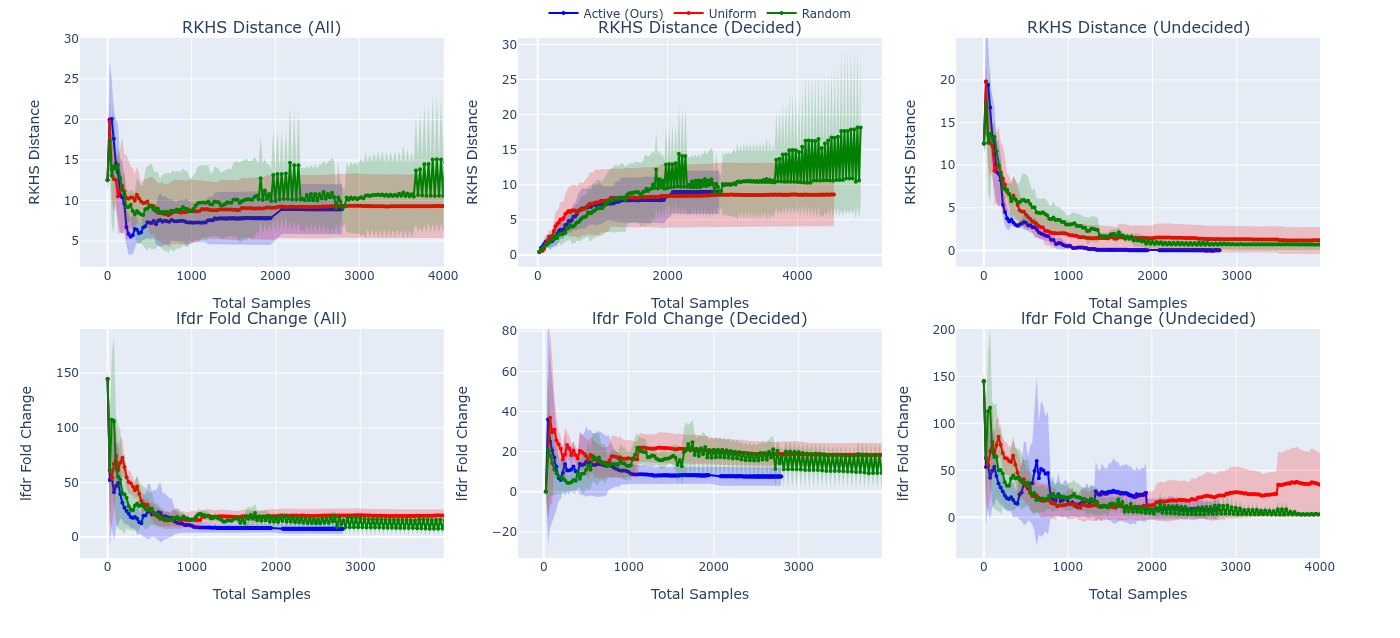}
    \caption{\textbf{Convergence to batch SmoothFDR oracle.}
        RKHS distance $\|\alpha - \alpha_{\mathrm{oracle}}\|_K$ (top) and
        mean lfdr fold change
        $\langle|\mathrm{lfdr}_i - \mathrm{lfdr}_i^*|/\mathrm{lfdr}_i^*\rangle$
        (bottom), evaluated on all hypotheses (left), decided only (center), and
        undecided only (right). For \emph{undecided} hypotheses, while all three
        strategies converge to the oracle (right column), the active allocation converges to better $\alpha$ faster, and mostly for the lfdr as well (notice active decides faster, hence per step, remain with more difficult hypotheses). For \emph{decided} hypotheses (center column), the active allocation does not seem to gain advantage for early steps (although the variance is big), but finally converge to better results. \textbf{Active allocation reaches comparable} undecided convergence with fewer total samples than Uniform, while Random wastes budget on already-decided hypotheses, producing higher variance and slower overall convergence.}
            \label{fig:supp_rkhs_lfdr}

\end{figure}

\end{document}